\def\preamble{}
\tikzset{
    -Latex,auto,node distance =1 cm and 1 cm,semithick,
    state/.style ={ellipse, draw, minimum width = 0.7 cm},
    point/.style = {circle, draw, inner sep=0.04cm,fill,node contents={}},
    bidirected/.style={Latex-Latex,dashed},
    el/.style = {inner sep=2pt, align=left, sloped}
}
\definecolor{myRed}{rgb}{0.75,0,0}
\definecolor{myGreen}{rgb}{0,0.75,0}
\declaretheorem{lemma}
\declaretheorem{fact}
\declaretheorem{corollary}
\declaretheorem{proposition}
\newenvironment{customprop}[1]
  {\innercustomprop}
  {\endinnercustomprop}
\theoremstyle{definition}
\declaretheorem{definition}
\declaretheorem{example}
\newcommand{\Parents}{Pa}
\DeclareSymbolFont{symbolsC}{U}{txsyc}{m}{n}
\DeclareMathSymbol{\boxright}{\mathrel}{symbolsC}{128}
\newcommand{\norm}[1]{\left\lVert#1\right\rVert}
\def\*#1{\mathbf{#1}}
\newcommand{\tildepai}[1]{\widetilde{\*{pa}}_{#1}}
\newcommand{\pai}[1]{\*{pa}_{#1}}
\newcommand{\Pai}[1]{\*{Pa}_{#1}}
\newcommand{\ui}[1]{\*{u}_{#1}}
\newcommand{\Ui}[1]{\*{U}_{#1}}
\theoremstyle{definition}
\DeclareMathOperator{\rst}{Rst}
\DeclareMathOperator{\unif}{Unif}
\DeclareMathOperator{\bern}{Bernoulli}
\DeclareMathOperator{\dirich}{Dirichlet}
\newcommand{\xdasharrow}[2][->]{
\tikz[baseline=-\the\dimexpr\fontdimen22\textfont2\relax]{
\node[anchor=south,font=\scriptsize, inner ysep=1.5pt,outer xsep=8pt](x){#2};
\draw[shorten <=3.4pt,shorten >=3.4pt,dashed,#1](x.south west)--(x.south east);
}
}
\def\ddefloop#1{\ifx\ddefloop#1\else\ddef{#1}\expandafter\ddefloop\fi}
\def\ddef#1{\expandafter\def\csname bb#1\endcsname{\ensuremath{\mathbb{#1}}}}
\def\ddef#1{\expandafter\def\csname c#1\endcsname{\ensuremath{\mathcal{#1}}}}
\def\ddef#1{\expandafter\def\csname h#1\endcsname{\ensuremath{\widehat{#1}}}}
\def\ddef#1{\expandafter\def\csname v#1\endcsname{\ensuremath{\boldsymbol{#1}}}}
\def\ddef#1{\expandafter\def\csname v#1\endcsname{\ensuremath{\boldsymbol{\csname #1\endcsname}}}}
\newcommand*{\indep}{%
  \mathbin{%
    \mathpalette{\@indep}{}%
  }%
}
\newcommand*{\nindep}{%
  \mathbin{
    \mathpalette{\@indep}{\not}
  }%
}
\newcommand*{\@indep}[2]{%
  \sbox0{$#1\perp\m@th$}
  \sbox2{$#1=$}
  \sbox4{$#1\vcenter{}$}
  \rlap{\copy0}
  \dimen@=\dimexpr\ht2-\ht4-.2pt\relax
  \kern\dimen@
  {#2}%
  \kern\dimen@
  \copy0 
} 
\title{Neural Causal Abstractions}
\author{
    Kevin Xia {\normalfont and} Elias Bareinboim
}
\begin{document}

\maketitle

\begin{abstract}
The abilities of humans to understand the world in terms of cause and effect relationships, as well as to compress information into abstract concepts, are two hallmark features of human intelligence. 
These two topics have been studied in tandem in the literature under the rubric of causal abstractions theory. 
In practice, it remains an open problem how to best leverage abstraction theory in real-world causal inference tasks, where the true mechanisms are unknown and only limited data is available. 
In this paper, we develop a new family of causal abstractions by clustering variables and their domains. This approach refines and generalizes previous notions of abstractions to better accommodate individual causal distributions that are spawned by Pearl's causal hierarchy. 
We show that such abstractions are learnable in practical settings through Neural Causal Models \citep{xia:etal21}, enabling the use of the deep learning toolkit to solve  various challenging causal inference tasks -- identification, estimation, sampling --  at different levels of granularity. 
Finally, we integrate these results with representation learning to create more flexible abstractions, moving these results closer to practical applications.
Our experiments support the theory and illustrate how to scale causal inferences to high-dimensional settings involving image data.
\end{abstract}

\section{Introduction} \label{sec:intro}

Humans understand the world around them through the use of abstract notions. Biologists can study the function of the liver without understanding the interactions between its subatomic particles studied by physicists. Economists find it more practical to consider macro-level behavior through concepts like aggregate supply and demand rather than studying the purchasing behavior of individuals. At home, we choose to interpret the object in the television as a dog or a car as opposed to a collection of photons or pixels. Humans are highly capable of learning through interacting with the environment and understanding cause and effect between different concepts. Understanding causality is considered a hallmark of human intelligence and allows humans to plan a course of action, determine blame and responsibility, and generalize across environments. It follows that the ability to abstract concepts and study them causally is a key ability expected from modern intelligent systems.

AI systems are built on a foundation of generative models, which are representations of the underlying processes from which data is collected. Standard generative models simply model some joint density of a set of variables of interest, while \emph{causal} generative models further model distributions involving causal interventions and counterfactual relations. In this paper, we study the problem of learning a causal generative model from data, which can be useful for many purposes such as sampling novel causally-consistent data points (i.e.~from interventional or counterfactual distributions). One major challenge is that data is often provided in complex low level forms (e.g., pixels), while it would be more useful in applications to focus on higher level concepts (e.g., dog or car). We would therefore like to learn a more abstract causal generative model at a higher level of granularity, while guaranteeing that the queries from the coarser model match the ground truth.

To formalize this problem, we build on the semantics of a class of generative models called structural causal models (SCMs) \citep{pearl:2k}. An SCM $\cM^*$ describes a collection of mechanisms and distribution over unobserved factors. Each SCM induces three qualitatively different sets of distributions related to the human concepts of ``seeing'' (called observational), ``doing'' (interventional), and ``imagining'' (counterfactual), collectively known as the Ladder of Causation or the Pearl Causal Hierarchy (PCH) \citep{pearl:mackenzie2018,bareinboim:etal20}. The PCH is a containment hierarchy in which each of these distribution sets can be put into increasingly refined layers, where observational distributions go in layer 1 ($\cL_1$), interventional in layer 2 ($\cL_2$), and counterfactual in layer 3 ($\cL_3$). In typical tasks of causal inference, the goal is to obtain a quantity from a higher layer when given data only from lower layers (e.g. inferring interventional quantities from observational data). Still, it is understood that this is generally impossible without additional assumptions since higher layers are underdetermined by lower layers \citep{bareinboim:etal20, ibeling2020probabilistic}.

Generative models can often be implemented in practice as neural networks. Deep learning models have achieved promising success in a variety of applications such as computer vision \citep{NIPS2012_c399862d}, speech recognition \citep{pmlr-v32-graves14}, and game playing \citep{volodymr:etal13}. Many of these successes are attributed to \emph{representation learning} \citep{10.1109/TPAMI.2013.50}, in which the learned representation can be thought of as an abstraction of the data. Further, there has also been growing interest in the idea of incorporating causality into deep models\footnote{Many successful approaches have been developed to estimate causal effects from observational data under the backdoor or conditional  ignorability conditions \citep{shalit2017estimating, louizos2017causal,  NIPS2017_b2eeb736, johansson2018learning, NEURIPS2018_a50abba8, yoon2018ganite,  kallus2018deepmatch, shi2019adapting,  du2020adversarial, Guo_2020}, and also to answer causal queries through neural-parameterized SCMs \citep{kocaoglu2018causalgan, goudet2018learning}.}. Prior work introduced one such model, the Neural Causal Model (NCM), which incorporates the same causal assumptions encoded in a causal diagram to identify and estimate interventional and counterfactual distributions \citep{xia:etal21, xia:etal23}.
\footnote{The literature also includes non-neural approaches for such problems, including estimators with stronger statistical properties such as double robustness and convergence guarantees, for example, \citep{jung2020estimating,jung2020werm,jung2021dml}.  }
Despite the soundness of this approach in theory, current NCM-based methods face challenges when applied to complex real-world settings for various reasons: (1) optimization is difficult when scaled to high dimensions, (2) unprocessed data can come in complicated forms (e.g. images, text, etc.), and (3) the causal diagram is difficult to fully specify in some high-dimensional settings. In this paper, we address these challenges by studying how representation learning and causal reasoning are related to each other and by building on this understanding to develop a neural framework for causal abstraction learning.

Existing works that study causal abstractions set a solid foundation by defining various mathematical notions of abstractions \citep{rubenstein:etal17-causalsem, beckers2019abstracting, Beckers2019-BECACA-8, geiger2023causal, pmlr-v213-massidda23a}. In App.~\ref{sec:related-work}, we explain some of the foundational results and discuss their drawbacks. In particular, we note that existing definitions are declarative; that is, if the lower and higher level models are given, one can use the definition to decide whether the higher level model is indeed an abstraction of the lower level one. However, neither models are available in practice, and one would want to use limited lower level data to learn a higher level causal abstraction. We will expand on the current generation of causal abstractions in two ways. First, given that the true SCM is almost never available in practice, nor entirely learnable from data, we introduce a relaxed notion of abstractions that applies on the layers of the PCH. Second, we develop algorithms to systematically learn abstractions in practice given some structural information about the data, which can then be used for downstream inferential tasks such as causal identification, estimation, or sampling.

\begin{figure}
\centering
\includegraphics[width=\linewidth]{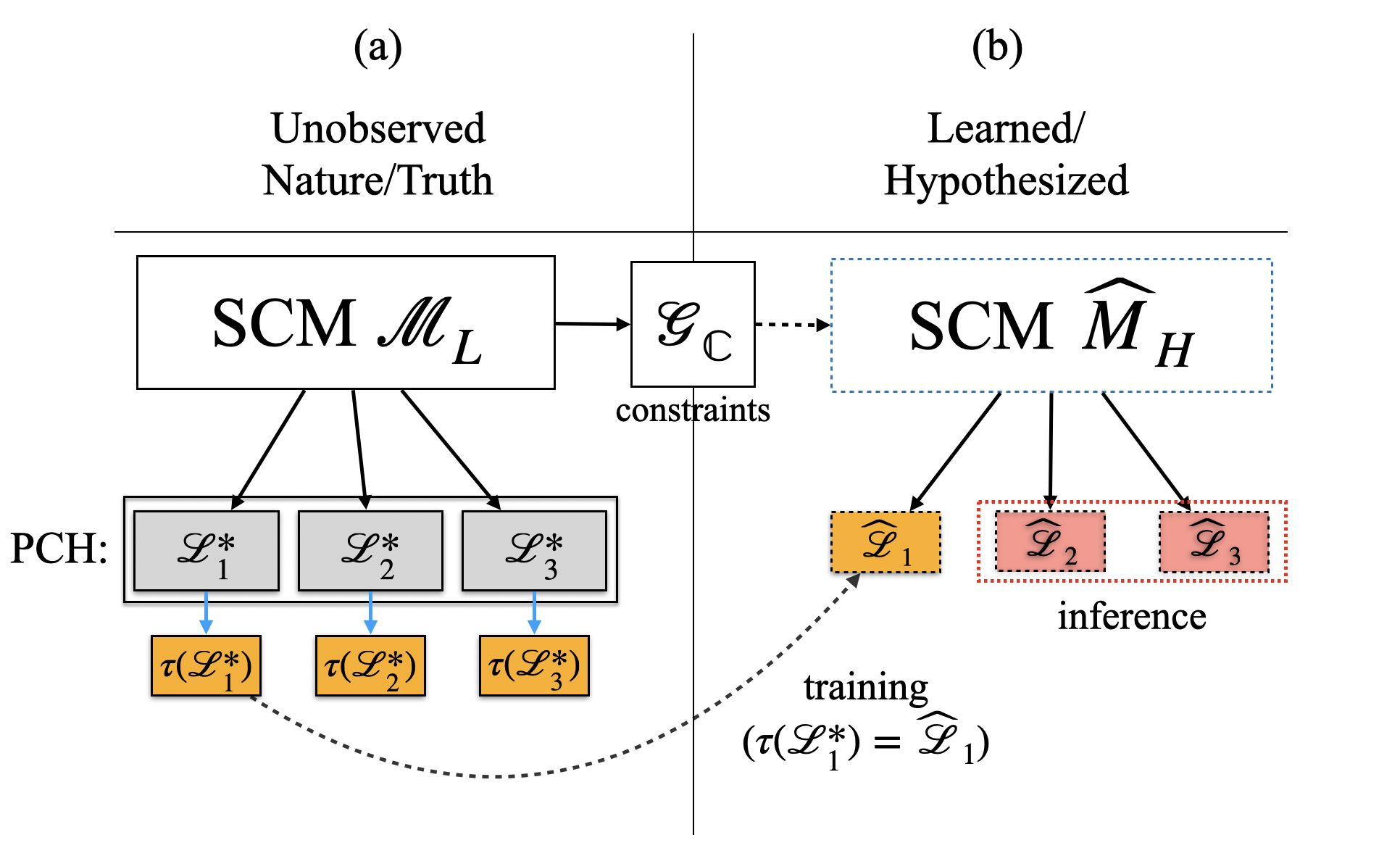}
\caption{Overview of this paper. High-level SCM $\widehat{M}_H$ (right) is trained on available data to serve as an abstract proxy of the true, unobserved, low-level SCM $\cM_L$ (left).}
\label{fig:summary}
\end{figure}

The general problem tackled by this paper is summarized in Fig.~\ref{fig:summary}. The ground truth model $\cM_L$ (left) is defined over low level variables $\*V_L$ (e.g., pixels), while it may be practical to work in their high level abstract counterparts $\*V_H$ (e.g., dog or car). $\cM_L$ induces distributions from the three layers of the PCH (i.e.~$\cL^*_1$, $\cL^*_2$, $\cL^*_3$), defined over $\*V_L$. In this work, we introduce a new type of abstraction function $\tau$ that maps distributions over $\*V_L$ to ones over $\*V_H$ (i.e.~$\tau(\cL_1^*), \tau(\cL_2^*), \tau(\cL_3^*)$). Furthermore, $\cM_L$ is unobserved, and only limited data is given (e.g., observational data from $\cL^*_1$). The goal is to learn a high-level SCM $\hM_H$ (right) over the high-level variables $\*V_H$ that encodes the given causal constraints ($\cG_{\bbC}$ in the figure) and matches $\cM_L$ on the available data across $\tau$ (e.g.~$\widehat{\mathcal{L}}_1 = \tau(\cL_1^*)$). Then, we investigate when and how the resulting model $\hM_H$ can be used as a surrogate, allowing one to make interventional and counterfactual inferences about the higher layers of $\cM_L$ through the higher layers of $\hM_H$.

As an example, suppose an economist is studying the effects of spending trends of various countries on their average income $Y$. In addition to $Y$, she has collected observational data on several variables of spending trends, such as consumer spending $C$, investments $I$, government spending $G$, imports $M$, and exports $E$. She wants to understand the causal effect of increasing general spending on average income of the population, and one way to do this is to study the causal effect of collectively increasing $C$, $I$, $G$, $M$, and $E$ on $Y$. However, the economist notes that $C$, $I$, $G$, $M$, and $E$ can be aggregated together into a single abstract variable called gross domestic product (GDP). The tools that we introduce in this paper allow her to proceed by constructing a high-level model $\widehat{M}_H$ over the variables GDP and $Y$, encoding the required causal assumptions, and training the model over the given observational data. Despite the high-level variables not matching the original low-level variables (i.e., $C$, $I$, $G$, $M$, $E$), the causal effect of GDP on $Y$ can be queried from the model $\widehat{M}_H$ to solve the problem.

More specifically, our contributions are as follows: In Sec.~\ref{sec:abstract-ncm}, we define a new class of abstractions based on clusters of variables (intervariable) and their domains (intravariable). Building on this new class, we define a notion of abstraction consistency on the layers of the PCH. We then show how to systematically construct an abstraction consistent with all three layers of the PCH and then relate these abstractions to existing definitions. In Sec.~\ref{sec:learning-abs}, we show how to leverage NCM machinery to perform interventional (layer 2) and counterfactual (layer 3) inferences across these abstractions when the true SCM is unavailable.  In Sec.~\ref{sec:applications}, we introduce a variant of the NCM that learns representations of each variable and encodes causal assumptions on the representation level, allowing us to learn abstractions even in settings where the assumption of the availability of clusters is relaxed. Experiments in Sec.~\ref{sec:experiments} corroborate with the theory.

\subsection{Preliminaries}

We now introduce the notation and definitions used throughout the paper. We use uppercase letters ($X$) to denote random variables and lowercase letters ($x$) to denote corresponding values. Similarly, bold uppercase ($\*X$) and lowercase ($\*x$) letters denote sets of random variables and values respectively. We use $\cD_{X}$ to denote the domain of $X$ and $\cD_{\mathbf{X}} = \cD_{X_1} \times \dots \times \cD_{X_k}$ for the domain of $\mathbf{X} = \{X_1, \dots, X_k\}$. We denote $P(\*X = \*x)$ (often shortened to $P(\*x)$) as the probability of  $\*X$ taking the values $\*x$ under the distribution $P(\*X)$.

We utilize the basic semantic framework of structural causal models (SCMs) \citep{pearl:2k}, following the presentation in \citet{bareinboim:etal20}. 
\begin{definition}[Structural Causal Model (SCM)]
    \label{def:scm}
    A structural causal model $\cM$ is a 4-tuple $\langle \*U, \*V, \cF, P(\*U) \rangle$, where
    \begin{itemize}
        \item $\*U$ is a set of background (exogenous) variables that are determined by factors outside the model;
        \item $\*V$ is a set $\{V_1, V_2, \dots, V_n\}$ of variables, called endogenous, that are determined by other variables in the model -- that is, variables in $\*U \cup \*V$;
        \item $\cF$ is a set of functions $\{f_{V_1}, f_{V_2}, \dots, f_{V_n}\}$ such that each $f_{V_i}$ is a mapping from exogenous parents $\Ui{V_i} \subseteq \*U$ and endogenous parents $\Pai{V_i} \subseteq \*V \setminus V_i$ to $V_i$;
        \item $P(\*U)$ is a probability function defined over $\cD_{\*U}$.
        \hfill $\blacksquare$
    \end{itemize}
\end{definition}

\begin{definition}[Causal Diagram {\citep[Def.~13]{bareinboim:etal20}}]
    \label{def:cg}
    Each SCM $\cM$ induces a causal diagram $\cG$, constructed as follows:
    \begin{enumerate}
        \item add a vertex for each $V_i \in \*V$;
        \item add a directed arrow $(V_j \rightarrow V_i)$ for every $V_i \in \*V$ and $V_j \in \Pai{V_i}$; and
        \item add a dashed-bidirected arrow $(V_j  \dashleftarrow \dashrightarrow V_i)$ for every pair $V_i, V_j \in \*V$ such that $\Ui{V_i}$ and $\Ui{V_j}$ are not independent (Markovianity is not assumed).
        \hfill $\blacksquare$
    \end{enumerate}
\end{definition}

Our treatment is constrained to \emph{recursive} SCMs, which implies acyclic causal diagrams, with finite discrete domains over endogenous variables $\mathbf{V}$. 

Counterfactual (and also interventional and observational) quantities can be computed from SCM $\cM$ as follows: 
\begin{definition}[Layer 3 Valuation {\citep[Def.~7]{bareinboim:etal20}}] 
\label{def:l3-semantics}
An SCM $\cM$ induces layer $\cL_3(\cM)$, a set of distributions over $\*V$, each with the form $P(\*Y_*) = P(\*Y_{1[\*x_1]}, \*Y_{2[\*x_2], \dots})$ such that
\begin{align}
    \label{eq:def:l3-semantics}
    & P^{\cM}(\*y_{1[\*x_1]}, \*y_{2[\*x_2]}, \dots) = \nonumber \\
    & \int_{\cD_{\mathbf{U}}} \mathbf{1}\left[\*Y_{1[\*x_1]}(\*u)=\*y_1, \*Y_{2[\*x_2]}(\*u) = \*y_2, \dots \right] dP(\*u)
\end{align}
where ${\*Y}_{i[\*x_i]}(\*u)$ is evaluated under 
 $\mathcal{F}_{\*x_i}\! :=\! \{f_{V_j}\! :\! V_j \in \*V \setminus \*X_i\} \cup \{f_X \leftarrow x\! :\! X \in \*X_i\}$. $\cL_2$ is the subset of $\cL_3$ for which all $\*x_i$ are equal, and $\cL_1$ is the subset for which all $\*X_i = \emptyset$.
 \hfill $\blacksquare$
\end{definition}
Each $\*Y_i$ corresponds to a set of variables in a world where the original mechanisms $f_X$ are replaced with constants $\*x_i$ for each $X \in \*X_i$; this is also known as the mutilation procedure. This procedure corresponds to interventions, and we use subscripts to denote the intervening variables (e.g. $\*Y_{\*x}$) or subscripts with brackets when the variables are indexed (e.g. $\*Y_{1[\*x_1]}$). For instance, $P(y_x, y'_{x'})$ is the probability of the joint counterfactual event $Y=y$ had $X$ been $x$ and $Y=y'$ had $X$ been $x'$. 

We use the notation $\cL_i(\cM)$ to denote the set of $\cL_i$ distributions from $\cM$. We use $\bbZ$ to denote a set of quantities from Layer 2 (i.e. $\bbZ = \{P(\*V_{\*z_k})\}_{k=1}^{\ell}$), and $\bbZ(\cM)$ denotes those same quantities induced by SCM $\cM$ (i.e. $\bbZ(\cM) = \{P^{\cM}(\*V_{\*z_k})\}_{k=1}^{\ell}$).

We also build on Neural Causal Models (NCMs), in particular for performing causal inferences:
\begin{definition}[$\cG$-Constrained Neural Causal Model ($\cG$-NCM) {\citep[Def.~7]{xia:etal21}}]
    \label{def:gncm}
    Given a causal diagram $\cG$, a $\cG$-constrained Neural Causal Model (for short, $\cG$-NCM) $\widehat{M}(\bm{\theta})$ over variables $\*V$ with parameters $\bm{\theta} = \{\theta_{V_i} : V_i \in \*V\}$ is an SCM $\langle \widehat{\*U}, \*V, \widehat{\cF}, P(\widehat{\*U}) \rangle$ such that
    \begin{itemize}
        \item $\widehat{\*U} = \{\widehat{U}_{\*C} : \*C \in \bbC(\cG)\}$, where $\bbC(\cG)$ is the set of all maximal cliques over bidirected edges of $\cG$;
        
        \item $\widehat{\cF} = \{\hat{f}_{V_i} : V_i \in \*V\}$, where each $\hat{f}_{V_i}$ is a feedforward neural network parameterized by $\theta_{V_i} \in \bm{\theta}$ mapping values of $\Ui{V_i} \cup \Pai{V_i}$ to values of $V_i$ for $\Ui{V_i} = \{\widehat{U}_{\*C} : \widehat{U}_{\*C} \in \widehat{\*U} \text{ s.t. } V_i \in \*C\}$ and $\Pai{V_i} = \Parents_{\cG}(V_i)$;
        
        \item $P(\widehat{\*U})$ is defined s.t.\ $\widehat{U} \sim \unif(0, 1)$ for each $\widehat{U} \in \widehat{\*U}$.
        \hfill $\blacksquare$
    \end{itemize}
\end{definition}

\section{Abstractions of the Pearl Causal Hierarchy} \label{sec:abstract-ncm}

The discussion of abstractions begins with defining causal variables. In many established causal inference tasks, it is typically assumed that there is a well-specified and known set of endogenous variables of interest $\*V$, and nature is modeled by a collection of mechanisms that assign values to each of these variables. However, in practice, the definition of $\*V$ may not always be clear. In particular, the variables of interest may not align with the features of the data. For example, in an economic system, perhaps data on each individual consumer is collected, but the variable of interest is an aggregate measure like gross domestic product (GDP). In image data, perhaps the pixel values are collected, but the variables of interest are related to the objects of the image, not the individual pixels.

Acknowledging that the data is not always provided in the best choice of granularity, the causal abstraction literature typically defines two sets of variables, $\*V_L$ and $\*V_H$, which describe the lower level and higher level settings, respectively. For example, $\*V_L$ might describe the pixels of an image, while $\*V_H$ might describe its structural content. They are typically modeled by corresponding causal models $\cM_L$ and $\cM_H$, respectively.

In this section, we study on the distinction between low level variables $\*V_L$ (e.g.~pixels) and their higher level counterparts $\*V_H$ (e.g.~image) from the perspective of individual distributions of the PCH. We consider nature's underlying SCM $\cM_L$ defined over low level variables, $\*V_L$, and the goal is to reason about the higher level variables $\*V_H$ given data on $\*V_L$\footnote{For concreteness, we assume that $\cM_L$ is an SCM, but the underlying generative model can be left implicit as explained in Appendix \ref{app:cons-hierarchy}.}.

\subsection{Constructive Abstraction Functions}

The connection between $\*V_H$ and $\*V_L$ can be described through a mapping between their domains, $\tau: \cD_{\*V_L} \rightarrow \cD_{\*V_H}$. Here, we consider a family of abstraction functions where $\tau$ is based on clusters of the variables and values of $\*V_L$:

\begin{definition}[Inter/Intravariable Clusterings]
    \label{def:var-clusterings}
    Let $\cM$ be an SCM over variables $\*V$.
    \begin{enumerate}
        \item A set $\bbC$ is said to be an intervariable clustering of $\*V$ if $\bbC = \{\*C_1, \*C_2, \dots \*C_n\}$ is a partition of a subset of $\*V$. $\bbC$ is further considered admissible w.r.t.~$\cM$ if for any $\*C_i \in \bbC$ and any $V \in \*C_i$, no descendent of $V$ outside of $\*C_i$ is an ancestor of any variable in $\*C_i$. That is, there exists a topological ordering of the clusters of $\bbC$ relative to the functions of $\cM$.
        \item A set $\bbD$ is said to be an intravariable clustering of variables $\*V$ w.r.t.~$\bbC$ if $\bbD = \{\bbD_{\*C_i} : \*C_i \in \bbC\}$, where $\bbD_{\*C_i} = \{\cD^{1}_{\*C_i}, \cD^{2}_{\*C_i}, \dots, \cD^{m_i}_{\*C_i}\}$ is a partition (of size $m_i$) of the domains of the variables in $\*C_i$, $\cD_{\*C_i}$ (recall that $\cD_{\*C_i}$ is the Cartesian product $\cD_{V_1} \times \cD_{V_2} \times \dots \times \cD_{V_k}$ for $\*C_i = \{V_1, V_2, \dots, V_k\}$, so elements of $\cD_{\*C_i}^j$ take the form of tuples of the value settings of $\*C_i$).
        \hfill $\blacksquare$
    \end{enumerate}
\end{definition}

In words, $\*V$ is divided into $n$ subsets or clusters $\*C_1, \dots, \*C_n$ (variables that are not put into one of the clusters are projected away), and they are called \emph{intervariable} clusters because the variables themselves are divided apart. Admissibility implies that the recursivity assumption of SCMs is retained through the intervariable clusters. Then, the joint \emph{domains} of each of these $n$ clusters are further partitioned. For example, for a specific intervariable cluster $\*C_i$, the domain $\cD_{\*C_i}$ contains the set of all tuples of values of $\*C_i$, and $\bbD_{\*C_i}$ describes a partition $\cD_{\*C_i}^1, \dots, \cD_{\*C_i}^{m_i}$ of size $m_i$ over this set of values (i.e.\ each $\cD_{\*C_i}^{j} \subseteq \cD_{\*C_i}$). The \emph{intravariable} clusters are the set of the value partitions over each intervariable cluster, and the term ``intravariable'' denotes that the clustering is within the variable domains. Intuitively speaking, intervariable clusters partition the low level variables to describe each high level variable as a collection of low level variables. Intravariable clusters then describe the domains of these high level variables by partitioning the corresponding value spaces of these intervariable clusters.

\begin{example}
\label{ex:bmi}
Consider a study on the effects of certain food dishes on body mass index (BMI), inspired by nutrition studies like \citet{gamba_schuchter_rutt_seto_2014}. Data is collected on individuals eating at restaurants, including the restaurant ($R$), dish ordered ($D$), the amount of carbohydrates ($C$), fat ($F$), and protein ($P$) in the dish, and the BMI of the customer ($B$). That is, $\*V_L = \{R, D, C, F, P, B\}$. One food scientist argues that any nutritional impact of the food on BMI could be abstracted based on how many calories are in each dish. One may then be tempted to cluster the variables $C$, $F$, and $P$ together into one variable, named calories, labeled $Z$. This is an example of intervariable clustering. 

To denote this formally, we may choose $\bbC = \{\*C_1 = \{B\}, \*C_2 = \{C, F, P\}, \*C_3 = \{D\}\}$ as the intervariable clusters. In this case, $B$ and $D$ are placed in their own clusters, $\*C_1$ and $\*C_3$, respectively. $C$, $F$, and $P$ are all clustered together into $\*C_2$. $R$ is not included and is abstracted away, which may be desirable if $R$ is not relevant to the study. Collectively, $\*C_1$, $\*C_2$, and $\*C_3$ form a partition of the subset of $\*V_L$ without $R$. Each of the clusters of $\bbC$ will correspond to a high level variable of $\*V_H$. In this case, for example, let $Z$ denote the high level variable corresponding to cluster $\*C_2$, interpreted as calories. This is shown at the top of Fig.~\ref{fig:cluster-example} (red).

The domain of $\*C_2$ contains every tuple of $C$, $F$, and $P$, but the domain of $Z$ can be simplified. After all, the computation of calories can be specified as $Z = 4C + 9F + 4P$, which means that two sets of values, $(c_1, f_1, p_1), (c_2, f_2, p_2)$ are considered equivalent if $4c_1 + 9f_1 + 4p_1 = 4c_2 + 9f_2 + 4p_2$. This clustering of domain values is an example of intravariable clustering, shown at the bottom of Fig.~\ref{fig:cluster-example} (blue). More formally, the intervariable clusters would be denoted $\bbD = \{\bbD_{\*C_1}, \bbD_{\*C_2}, \bbD_{\*C_3}\}$, where each $\bbD_{\*C_i}$ is a partition of $\cD_{\*C_i}$. In the case of $\bbD_{\*C_2}$, we may define $\bbD_{\*C_2} = \{\cD_{\*C_2}^1, \cD_{\*C_2}^2, \dots\}$, where each $\cD_{\*C_2}^j$ is a collection of tuples $(c, f, p) \in \cD_{\*C_2}$ corresponding to some specific value $4c + 9f + 4p$. In Fig.~\ref{fig:cluster-example} for example, $\cD_{\*C_2}^1 = \{(c, f, p) : 4c + 9f + 4p = 200, (c, f, p) \in \cD_{\*C_2}\}$. Each of the intravariable clusters correspond to a domain value of the high level variable. For example, $\cD_{\*C_2}^1$ corresponds to a value of $Z = 200$.
\hfill $\blacksquare$
\end{example}

\begin{figure}
\centering
\includegraphics[width=1.0\linewidth]{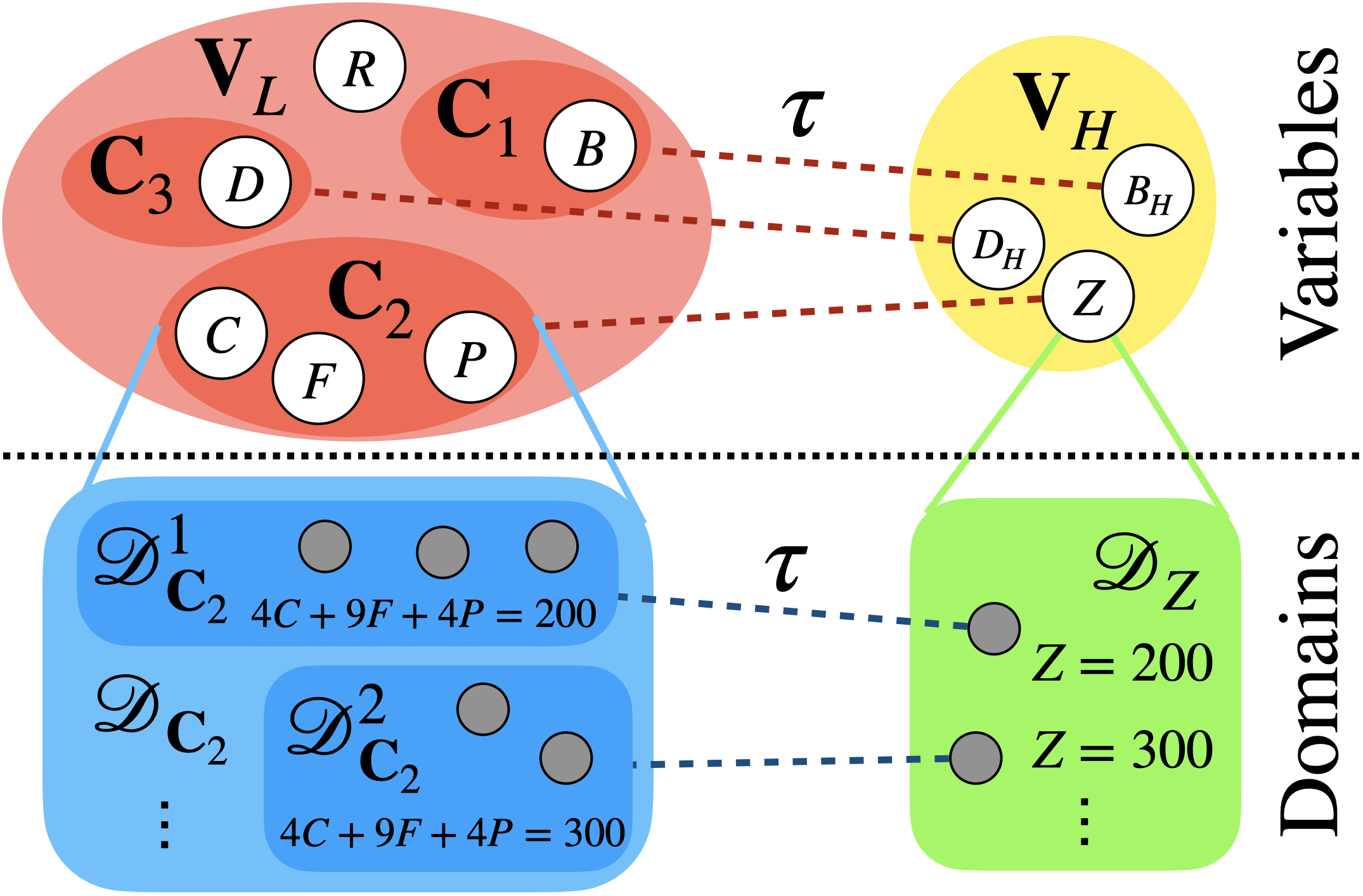}
\caption{Example of a constructive abstraction function $\tau$ w.r.t.\ corresponding inter/intravariable clusters. \textbf{Top (intervariable):} The low-level variables, dish ($D$) and BMI ($B$), are in their own clusters while restaurant ($R$) is abstracted away. Carbohydrates ($C$), fat ($F$), and protein ($P$) are clustered together and are mapped to a single variable, calories ($Z$). \textbf{Bottom (intravariable):} The intravariable clustering for $\*C_2 = \{C, F, P\}$ is shown. Calories $Z$ can be computed from $C, F, P$ using the formula $Z = 4C + 9F + 4P$. This means that the domain is partitioned such that two different values, $(c_1, f_1, p_1), (c_2, f_2, p_2)$ are in the same intravariable cluster if $4c_1 + 9f_1 + 4p_1 = 4c_2 + 9f_2 + 4p_2$.}
\label{fig:cluster-example}
\end{figure}

For the remainder of this paper, we consider settings where the intervariable clusters are admissible. Collectively, given an intervariable clustering $\bbC$ and intravariable clustering $\bbD$ of $\*V_L$, an abstraction function $\tau$ can be defined as follows.

\begin{definition}[Constructive Abstraction Function]
    \label{def:tau}
    A function $\tau: \cD_{\*V_L} \rightarrow \cD_{\*V_H}$ is said to be a constructive abstraction function w.r.t.~inter/intravariable clusters $\bbC$ and $\bbD$ iff
    \begin{enumerate}
        \item There exists a bijective mapping between $\*V_H$ and $\bbC$ such that each $V_{H, i} \in \*V_H$ corresponds to $\*C_i \in \bbC$;
        \item For each $V_{H, i} \in \*V_H$, there exists a bijective mapping between $\cD_{V_{H, i}}$ and $\bbD_{\*C_i}$ such that each $v_{H, i}^j \in \cD_{V_{H, i}}$ corresponds to $\cD^j_{\*C_i} \in \bbD_{\*C_i}$; and 
        \item $\tau$ is composed of subfunctions $\tau_{\*C_i}$ for each $\*C_i \in \bbC$ such that $\*v_H = \tau(\*v_L) = (\tau_{\*C_i}(\*c_i) : \*C_i \in \bbC)$, where $\tau_{\*C_i}(\*c_i) = v^j_{H,i}$ if and only if $\*c_i \in \cD^{j}_{\*C_i}$. We also apply the same notation for any $\*W_L \subseteq \*V_L$ such that $\*W_L$ is a union of clusters in $\bbC$ (i.e. $\tau(\*w_L) = (\tau_{\*C_i}(\*c_i) : \*C_i \in \bbC, \*C_i \subseteq \*W_L)$).
    \hfill $\blacksquare$
    \end{enumerate}
\end{definition}

In words, through the subfunction $\tau_{\*C_i}$, each low level cluster $\*C_i \in \bbC$ maps to a single high level variable $V_{H, i} \in \*V_H$, and the value $\*c_i \in \cD_{\*C_i}$ maps to a corresponding high level value $v_{H, i}^j \in \cD_{V_{H, i}}$. Specifically, $\tau_{\*C_i}(\*c_i)$ maps to $v_{H, i}^j$ if $\*c_i$ is in the intravariable cluster $\cD_{\*C_i}^j$. Then, the overall function $\tau$ is simply composed of the subfunctions $\tau_{\*C_i}$. Intuitively, $\tau$ is a constructive abstraction function if it maps $\*V_L$ to $\*V_H$ by first grouping the variables w.r.t.~their corresponding intervariable cluster in $\bbC$ (red maps to yellow in Fig.~\ref{fig:cluster-example} (top)), followed by assigning each cluster a value based on which intravariable cluster they belong in $\bbD$ (blue maps to green in Fig.~\ref{fig:cluster-example} (bottom)). As a result, $\*V_H$ can be interpreted such that $\*V_H = \bbC$ and $\cD_{V_{H, i}} = \bbD_{\*C_i}$ for each $V_{H, i} \in \*V_H$\footnote{For another example of abstractions constructed from clusters, see App.~\ref{app:examples} Ex.~\ref{ex:clusters}. For examples of abstraction functions that are not constructive, see Ex.~\ref{ex:non-constructive-tau}.}. This construction of $\tau$ means that $\tau$ is unique given the clusters $\bbC$ and $\bbD$ (up to a renaming of the variables $\*V_H$ and its values $\cD_{\*V_H}$).

\begin{example}[Example \ref{ex:bmi} continued]
    Suppose the high level variables are denoted as $\*V_H = \{D_H, Z, B_H\}$, where $D_H$ and $B_H$ represent the high level counterparts of $D$ and $B$ that remain unchanged across the abstraction. Each high-level variable (i.e.~$D_H$, $Z$, $B_H$) corresponds to an intervariable cluster (i.e., $\*C_1$, $\*C_2$, $\*C_3$, respectively), establishing a bijective connection between $\*V_H$ and $\bbC$. Each of their domains also correspond to an intravariable cluster in $\bbD$. For example, each value of $Z = z$ corresponds to the choice of $\cD_{\*C_2}^j$ such that $4c + 9f + 4p = z$. The constructive abstraction function $\tau$ constructed from the clusters $\bbC$ and $\bbD$ would map $(D_H, Z, B_H) \gets \tau(R, D, C, F, P, D)$, which can be decomposed as
    \begin{align}
        & \tau(R, D, C, F, P, D) \nonumber \\
        &= (\tau_{\*C_1}(D), \tau_{\*C_2}(C, F, P), \tau_{\*C_3}(B)) \\
        &= (D, 4C + 9F + 4P, B).
    \end{align}
    Observe that $\tau$ is broken down into $\tau_{\*C_1}$, $\tau_{\*C_2}$, and $\tau_{\*C_3}$, which maps the variables of each intervariable cluster to their corresponding high level variable $D_H$, $Z$, and $B_H$, respectively. $D$ and $B$ are not affected by the abstraction in this example, so $\tau_{\*C_1}$ and $\tau_{\*C_3}$ are the identity function, directly setting $D_H \gets D$ and $B_H \gets B$. However, the calories, $Z$, is computed through $\tau_{\*C_2}(C, F, P) = 4C + 9F + 4P$. This ensures that all values from $\cD_{\*C_2} = \cD_{C} \times \cD_{F} \times \cD_{P}$ that are in the same intravariable cluster are mapped to the same value of $Z$. This mapping is illustrated in Fig.~\ref{fig:cluster-example}.
    \hfill $\blacksquare$
\end{example}

Note that the relationship between $\*V_L$ and $\*V_H$ modeled by $\tau$ is not causal. Rather, the contents of $\*V_L$ \emph{constitute} $\*V_H$\footnote{The distinction between causal and constitutional relationships is important and is explained in detail in Appendix \ref{app:cons-hierarchy}.}. Intuitively, two variables of $\*V_L$ are mapped to the same intervariable cluster if they constitute the same high level variable (e.g. two pixels of the same dog), and two values are mapped to the same intravariable cluster if, from a higher level perspective, they are functionally identical (e.g.~same image of the dog but rotated or cropped). In this sense, intravariable clustering can be thought of as invariances in the data, since downstream functions are \emph{invariant} to values that are in the same intravariable cluster\footnote{This analogy is explored further in Apps.~\ref{app:invariance-condition} and \ref{app:rep-learning}.}.

This paper will focus on abstractions based on constructive abstraction functions $\tau$ created from intervariable and intravariable clusters. This is in contrast with the previous works on causal abstractions discussed in App.~\ref{sec:related-work}, which leave the functional form of $\tau$ implicit.\footnote{In a related setting, there may be cases where information about $\cM_H$ is provided while $\tau$ is unknown. 
In such cases, there are works in the literature that aim to learn $\tau$ \citep{DBLP:conf/clear2/ZennaroDAWD23, felekis:etal24} such that desired abstraction properties hold. Such functions may or may not be constructive, refer to App.~\ref{app:learn-tau-given-mh} for further details.} One benefit of making $\tau$ concrete is that it allows for a rigorous definition of equivalence between the distributions of a low level model and that of a high level model, as will be elaborated next.

\subsection{Layer-Specific Abstractions}

Ultimately, we would like to study causal properties of $\*V_L$ through their higher level counterparts $\*V_H$. A sensible goal is, therefore, to learn an SCM $\cM_H$ over $\*V_H$, which can then be queried for causal inference tasks. Still, even if $\*V_H$ and $\*V_L$ are connected through some function $\tau$, this alone does not imply that $\cM_H$ is an abstraction of $\cM_L$. This is the case since the distributions over $\*V_H$ induced by $\cM_H$ may not have any clear connection with the distributions over $\*V_L$. To explain this point with a simple example, suppose $\*V_L = \{X, Y\}$ and $\*V_H = \*V_L$ (i.e., $\tau$ is the identity function). However, in $\cM_L$, $f_X$ takes $Y$ as an argument, while in $\cM_H$, $f_Y$ takes $X$ as an argument. These two models are obviously unrelated despite sharing the same set of variables.

When two SCMs are defined over the same space of variables, one can verify that they are similar if they induce the same distributions. For example, an SCM $\cM'$ is $\cL_2$-consistent with $\cM$ if $\cL_2(\cM') = \cL_2(\cM)$, that is, $\cM$ and $\cM'$ match in every interventional distribution \citep{bareinboim:etal20, xia:etal21}. However, when two SCMs are defined over different variable spaces, comparing their distributions is no longer well-defined. Hence, a different notion of consistency is needed to compare an SCM over $\*V_L$ with another over $\*V_H$ through $\tau$.

We first note that not all low-level quantities have a clear interpretation in the high-level setting. For instance, in Example \ref{ex:bmi}, the low-level quantity $P(C=5)$ does not have a counterpart in $\*V_H$ because $C$ is clustered together with $F$ and $P$ in the intervariable clusters. As another example, $P(C = 5, F = 0, P = 0)$ also does not have a counterpart in $\*V_H$ because $\tau$ maps $(C = 5, F = 0, P = 0)$ to the same value of $Z$ as other quantities such as $(C = 0, F = 0, P = 5)$. In both cases, $Z = 4C + 9F + 4P = 20$. Hence, $P(C = 5, F = 0, P = 0)$ is not equivalent to $P(Z = 20)$, which, in a sense, represents all values of $(C, F, P)$ such that $4C + 9F + 4P = 20$.

To define the low-level counterfactual quantities that have high-level counterparts through $\tau$, first denote $\*Y_{L,*}$ as a set of counterfactual variables over $\*V_L$. That is,
\begin{equation}
    \label{eq:valid-low-ctf}
    \*Y_{L, *} = \left( \*Y_{L, 1[\*x_{L, 1}]}, \*Y_{L, 2[\*x_{L, 2}]}, \dots\right),
\end{equation}
where each $\*Y_{L, i[\*x_{L, i}]}$ corresponds to the potential outcomes of the variables $\*Y_{L, i}$ under the intervention $\*X_{L, i} = \*x_{L, i}$. Each $\*Y_{L, i}$ and $\*X_{L, i}$ must be unions of clusters from $\bbC$ (i.e.~$\*Y_{L, i} = \bigcup_{\*C \in \bbC'} \*C$ for some $\bbC' \subseteq \bbC$) such that $\tau(\*Y_{L, i})$ and $\tau(\*X_{L, i})$ are well-defined (i.e.~$\tau(\*Y_{L, i}) = \left(\bigwedge_{\*C \in \bbC'} \tau_{\*C}(\*C) \right)$). For instance, from Ex.~\ref{ex:bmi}, one term could be $\*Y_{L, i} = \{D, C, F, P\} = \*C_1 \cup \*C_2$, but $\*Y_{L, i} = \{D, C, F\}$ would be invalid since $C$ and $F$ are clustered together with $P$. For the high-level counterpart, denote
\begin{align}
    & \*Y_{H, *} = \tau(\*Y_{L, *}) \\
    &= \left(\tau(\*Y_{L, 1[\tau(\*x_{L, 1})]}), \tau(\*Y_{L, 2[\tau(\*x_{L, 2})]}), \dots\right). \label{eq:valid-high-ctf}
\end{align}
For any value $\*y_{H, *} \in \cD_{\*Y_{H, *}}$, denote
\begin{equation}
    \cD_{\*Y_{L, *}}(\*y_{H, *}) = \{\*y_{L, *} : \*y_{L, *} \in \cD_{\*Y_{L, *}}, \tau(\*y_{L, *}) = \*y_{H, *}\},
\end{equation}
that is, the set of all values $\*y_{L, *}$ such that $\tau(\*y_{L, *}) = \*y_{H, *}$. Considering again Ex.~\ref{ex:bmi}, if $\*y_{H, *}$ refers to the value of $Z = 20$, then $\cD_{\*Y_{L, *}}(\*y_{H, *})$ is the set of all tuples $(c, f, p) \in \cD_{\*C_2}$ such that $\tau(C = c, F = f, P = p) = (Z = 20)$ (i.e., $4c + 9f + 4p = 20$).

We can now define a notion of consistency relating low level counterfactual quantities to high level counterparts.
\begin{definition}[$Q$-$\tau$ Consistency]
    \label{def:q-tau-consistency}
    Let $\cM_L$ and $\cM_H$ be SCMs defined over variables $\*V_L$ and $\*V_H$, respectively. Let $\tau: \cD_{\*V_L} \rightarrow \cD_{\*V_H}$ be a constructive abstraction function w.r.t.~clusters $\bbC$ and $\bbD$. Let
    \begin{equation}
        \label{eq:q-valid}
        Q = \sum_{\*y_{L, *} \in \cD_{\*Y_{L, *}}(\*y_{H, *})} P(\*Y_{L, *} = \*y_{L, *})
    \end{equation}
    be a low-level Layer 3 quantity of interest (for some $\*y_{H, *} \in \cD_{\*Y_{H, *}}$), as expressed in Eq.~\ref{eq:valid-low-ctf}, and let
    \begin{equation}
        \label{eq:tauq-valid}
        \tau(Q) = P(\*Y_{H, *} = \*y_{H, *})
    \end{equation}
    be its high level counterpart, as expressed in Eq.~\ref{eq:valid-high-ctf}.
    We say that $\cM_H$ is $Q$-$\tau$ consistent with $\cM_L$ if
    \begin{equation}
        \label{eq:q-tau-consistency}
        \begin{split}
            & \sum_{\*y_{L, *} \in \cD_{\*Y_{L, *}}(\*y_{H, *})} P^{\cM_L}(\*Y_{L, *} = \*y_{L, *}) \\
            &= P^{\cM_H}(\*Y_{H, *} = \*y_{H, *}),
        \end{split}
    \end{equation}
    that is, the value of $Q$ induced by $\cM_L$ is equal to the value of $\tau(Q)$ induced by $\cM_H$\footnote{Note that the equality in Eq.~\ref{eq:q-tau-consistency} is consistent with the push-forward measure through $\tau$.}. Furthermore, if $\cM_H$ is $Q$-$\tau$ consistent with $\cM_L$ for all $Q \in \cL_i(\cM_L)$ of the form of Eq.~\ref{eq:q-valid}, then $\cM_H$ is said to be $\cL_i$-$\tau$ consistent with $\cM_L$.
    \hfill $\blacksquare$
\end{definition}

In words, suppose $Q$ is a quantity from $\*V_L$ in the form of Eq.~\ref{eq:q-valid}. That is, it is a counterfactual quantity such that the variables of each term $\*Y_{L, i}$ and each intervention $\*X_{L, i}$ are unions of clusters in $\bbC$, and it is summed over values of $\*y_{L, i}$ that map to one specific set of high level variables $\*y_{H, i}$. Then, a query of this form has a counterpart $\tau(Q)$, obtained by applying $\tau$ on each term, shown in Eq.~\ref{eq:tauq-valid}. We say that $\cM_H$ is $Q$-$\tau$ consistent with $\cM_L$ if the value of $\tau(Q)$, computed from $\cM_H$, is equal to the value of $Q$ computed from $\cM_L$ (i.e.~Eq.~\ref{eq:q-tau-consistency} holds). Note that Def.~\ref{def:q-tau-consistency} naturally applies to the $\cL_2$ case (i.e.~all $\*x_{L, i}$ are identical) and the $\cL_1$ case (i.e.~all $\*X_{L, i} = \emptyset$).

Def.~\ref{def:q-tau-consistency} delineates the formal connection between quantities of $\cM_L$ and $\cM_H$. Intuitively, $\cM_H$ can only be viewed as an abstraction of $\cM_L$ for the quantities in which they are $\tau$-consistent. Consider the following example to ground the discussion.

\begin{figure*}
    \centering
    \begin{tabular}{l|llll|llll|ll|l}
    \hline \hline
       & $U_{RY}$ & $U_A$ & $U_B$ & $U_Y$ & $R$ & $A$ & $B$ & $Y$ & $Y_{A=0, B=0}$ & $Y_{A=1, B=1}$ & $P$   \\ \hline \hline
    0  & 0        & 0     & 0     & 0     & 0   & 0   & 0   & 0   & 0              & 0              & $p_0 = 0.288$ \\ \hline
    1  & 0        & 0     & 0     & 1     & 0   & 0   & 0   & 1   & 1              & 1              & $p_1 = 0.032$ \\ \hline
    2  & 0        & 0     & 1     & 0     & 0   & 0   & 1   & 0   & 0              & 0              & $p_2 = 0.072$ \\ \hline
    3  & 0        & 0     & 1     & 1     & 0   & 0   & 1   & 1   & 1              & 1              & $p_3 = 0.008$ \\ \hline
    4  & 0        & 1     & 0     & 0     & 0   & 1   & 0   & 0   & 0              & 0              & $p_4 = 0.072$ \\ \hline
    5  & 0        & 1     & 0     & 1     & 0   & 1   & 0   & 1   & 1              & 1              & $p_5 = 0.008$ \\ \hline
    6  & 0        & 1     & 1     & 0     & 0   & 1   & 1   & 0   & 0              & 0              & $p_6 = 0.018$ \\ \hline
    7  & 0        & 1     & 1     & 1     & 0   & 1   & 1   & 1   & 1              & 1              & $p_7 = 0.002$ \\ \hline
    8  & 1        & 0     & 0     & 0     & 1   & 1   & 1   & 1   & 0              & 1              & $p_8 = 0.288$ \\ \hline
    9  & 1        & 0     & 0     & 1     & 1   & 1   & 1   & 0   & 1              & 0              & $p_9 = 0.032$ \\ \hline
    10 & 1        & 0     & 1     & 0     & 1   & 1   & 0   & 0   & 0              & 1              & $p_{10} = 0.072$ \\ \hline
    11 & 1        & 0     & 1     & 1     & 1   & 1   & 0   & 1   & 1              & 0              & $p_{11} = 0.008$ \\ \hline
    12 & 1        & 1     & 0     & 0     & 1   & 0   & 0   & 0   & 0              & 1              & $p_{12} = 0.072$ \\ \hline
    13 & 1        & 1     & 0     & 1     & 1   & 0   & 0   & 1   & 1              & 0              & $p_{13} = 0.008$ \\ \hline
    14 & 1        & 1     & 1     & 0     & 1   & 0   & 1   & 0   & 0              & 1              & $p_{14} = 0.018$ \\ \hline
    15 & 1        & 1     & 1     & 1     & 1   & 0   & 1   & 1   & 1              & 0              & $p_{15} = 0.002$ \\ \hline
    \end{tabular}
    \caption{Values computed from $\cM_L$ in Example \ref{ex:drug-tau}.}
    \label{fig:drug-ex-u-tab}
\end{figure*}

\begin{example}
    \label{ex:drug-tau}
    \allowdisplaybreaks
    
    Consider a study on a new cancer drug. The drug is given in two doses, and patients take the second dose a month after the first dose. The variables observed are whether the individual takes the first dose ($A$), whether they take the second dose ($B$), whether they recover ($Y$), and whether they come from a wealthy background ($R$) and therefore have better nutrition and medical care. The SCM $\cM_L$ is as follows:
    \begin{align}
        \*U_L &= \{U_{RY}, U_A, U_B, U_{Y}\} \\
        \*V_L &= \{R, A, B, Y\} \\
        \cF_L &= 
        \begin{cases}
            R \gets f^L_R(u_{RY}) = u_{RY} \\
            A \gets f^L_A(r, u_{A}) = r \oplus u_A \\
            B \gets f^L_B(r, a, u_{B}) = (r \wedge a) \oplus u_B \\
            Y \gets f^L_Y(a, b, u_{RY}, u_Y) \\
            \quad = ((a \wedge b) \wedge u_{RY}) \oplus u_Y
        \end{cases} \\
        P(\*U_L) &= 
        \begin{cases}
            P(U_{RY} = 1) = 0.5 \\
            P(U_A = 1) = P(U_B = 1) = 0.2 \\
            P(U_Y = 1) = 0.1
        \end{cases}
    \end{align}
    In words, people are more likely to take both doses if they are rich, and most people will only take the second dose if they have already taken the first dose. Also, people who take both doses are more likely to recover, but only if they came from a high socioeconomic background ($U_{RY} = 1$). The values computed from $\cM_L$ are shown in Figure \ref{fig:drug-ex-u-tab}.

    The rows of the figure can be used to compute quantities of the PCH from $\cM_L$. Denote $p_{i}$ as the probability of the $i$th row of the table. Then, for example, the quantity $P(Y = 1 \mid A = 1, B = 1)$, or the probability that someone recovers given that they took both doses of the drug is
    \begin{align*}
            &P^{\cM_L}(Y = 1 \mid A = 1, B = 1) \\
            &= \frac{P^{\cM_L}(Y = 1, A = 1, B = 1)}{P^{\cM_L}(A = 1, B = 1)} \\
            &= \frac{p_7 + p_8}{p_6 + p_7 + p_8 + p_9} \\
            &= \frac{0.002 + 0.288}{0.018 + 0.002 + 0.288 + 0.032} \\
            &\approx 0.853. \stepcounter{equation}\tag{\theequation}\label{eq:ex-q-tau-cons-l1q}
    \end{align*}

    The causal quantity $P(Y_{A = 1, B = 1} = 1)$, or the probability that someone recovers when forced to take both doses of the drug, can be computed as
    \begin{align*}
        &P^{\cM_L}(Y_{A = 1, B = 1} = 1) \\
        &= p_1 + p_3 + p_5 + p_7 + p_8 + p_{10} + p_{12} + p_{14}\\
        &= 0.032 + 0.008 + 0.008 + 0.002 \\
        &+ 0.288 + 0.072 + 0.072 + 0.018 \\
        &= 0.5. \stepcounter{equation}\tag{\theequation}\label{eq:ex-q-tau-cons-l2q}
    \end{align*}
    Indeed, one may be misled to think that the drug is extremely effective when only looking at the conditional quantity in Eq.~\ref{eq:ex-q-tau-cons-l1q}, as opposed to the causal effect, as in Eq.~\ref{eq:ex-q-tau-cons-l2q}. In reality, the causal effect of the drug is not as high.

    Suppose the researchers decide that this much detail in the study is unnecessary, and they consider working in a more abstract model. One way to simplify the model is to reduce the amount of variables. Perhaps they decide that wealth ($R$) is irrelevant and can be abstracted away, and the two doses ($A$ and $B$) can simply be abstracted into one variable, treatment ($X$). This can be represented using the intervariable clusters $\bbC = \{\*C_1 = \{A, B\},\*C_2 = \{Y\}\}$, where $A$ and $B$ are in the same cluster, $Y$ is in a separate cluster, and $R$ is not included.

    Further, the treatment $X$ is only considered complete if both doses are taken, so we can further perform an intravariable clustering, where the domains are:
    \begin{equation}
        \bbD_{\*C_1} =
        \begin{cases}
            x_0 =& \{(A = 0, B = 0), (A = 0, B = 1), \\
            & (A = 1, B = 0)\} \\
            x_1 =& \{(A = 1, B = 1)\}
        \end{cases}
    \end{equation}
    We denote $X$ as the higher level variable corresponding to $\*C_1$, and we define its domain $\cD_{X} = \{0, 1\}$ to be binary, where $x_0$ corresponds to $0$ and $x_1$ corresponds to $1$. We leave $Y$ as is in the lower level space.

    We can then define the constructive abstraction function $\tau$ based on $\bbC$ and $\bbD$, where $\*V_H = \{X, Y\}$. For example,
    \begin{equation}
    \tau(R = 1, A = 0, B = 1, Y = 1) = (X = 0, Y = 1),
    \end{equation}
    and
    \begin{equation}
    \tau(R = 1, A = 1, B = 1, Y = 0) = (X = 1, Y = 0).
    \end{equation}
    Now define $\cM_H$ over the newly defined $\*V_H = \tau(\*V_L)$ as follows.
    \begin{align}
        \*U_H &= \{U_X, U_{Y0}, U_{Y1}\} \\
        \*V_H &= \{X, Y\} \\
        \cF_H &=
        \begin{cases}
            X \gets f^H_X(u_X) = u_X \\
            Y \gets f^H_Y(x, u_{Y0}, u_{Y1}) = 
            \begin{cases}
                u_{Y0} & x = 0 \\
                u_{Y1} & x = 1
            \end{cases}
        \end{cases} \\
        P(\*U_H) &=
        \begin{cases}
            P(U_X = 1) = 0.34 \\
            P(U_{Y0} = 1) = 0.1 \\
            P(U_{Y1} = 1) = 0.852941
        \end{cases}
    \end{align}

    Interestingly, note that $P^{\cM_H}(Y = 1 \mid X = 1) = P(U_{Y1} = 1) \approx 0.853$, which is equal to $P^{\cM_L}(Y = 1 \mid A = 1, B = 1)$ computed in Eq.~\ref{eq:ex-q-tau-cons-l1q}. In fact, if $Q = P(Y = 1, A = 1, B = 1)$, then the corresponding $\tau(Q)$ from Def.~\ref{def:q-tau-consistency} is $P(Y = 1, X = 1)$, since $\tau(Y = 1, A = 1, B = 1) = (Y = 1, X = 1)$. Since they are equal, we would say that $\cM_H$ is $Q$-$\tau$ consistent with $\cM_L$.

    Now suppose $Q' = P(Y_{A = 1, B = 1} = 1)$. The corresponding $\tau(Q')$ would be $P(Y_{X = 1} = 1)$. However, note that $P^{\cM_H}(Y_{X = 1} = 1) = P(U_{Y1} = 1) \approx 0.853$, which is not equal to $P^{\cM_L}(Y_{A = 1, B = 1} = 1) = 0.5$ computed from Eq.~\ref{eq:ex-q-tau-cons-l2q}. Then, $\cM_H$ is not $Q'$-$\tau$ consistent with $\cM_L$.
    
    It turns out that $\cM_H$ is $Q$-$\tau$ consistent with $\cM_L$ for every $Q \in \cL_1$, making $\cM_H$ $\cL_1$-$\tau$ consistent with $\cM_L$. On the other hand, this is not the case for $\cL_2$, the interventional layer. In fact, it seems that $\cM_H$ is equating correlation with causation and fails to capture the nuances of interventions in $\cM_L$. Still, such a model could be useful if the queries of interest are on $\cL_1$. One could argue that $\cM_H$ is a suitable abstraction of $\cM_L$ on Layer 1, but not on Layer 2. The concept of $Q$-$\tau$ consistency allows us to define ``partial'' abstractions based on the specific quantities of the PCH that match.
    \hfill $\blacksquare$
\end{example}

It turns out that when $\cM_H$ is $Q$-$\tau$ consistent with $\cM_L$ on all three layers of the PCH (i.e.~$\cL_3$-$\tau$ consistent), then $\cM_H$ can be considered an abstraction of $\cM_L$ on the SCM-level, which coincides with the definition of constructive $\tau$-abstractions (Def.~\ref{def:cons-tau-abs} from App.~\ref{sec:related-work}) from \citet{beckers2019abstracting}, shown below.

\begin{restatable}[Abstraction Connection]{proposition}{absconnect}
    \label{prop:abs-connect}
    Let $\tau: \cD_{\*V_L} \rightarrow \cD_{\*V_H}$  be a constructive abstraction function (Def.~\ref{def:tau}). $\cM_H$ is $\cL_3$-$\tau$ consistent (Def.~\ref{def:q-tau-consistency}) with $\cM_L$ if and only if there exists SCMs $\cM_L'$ and $\cM_H'$ s.t.~$\cL_3(\cM_L') = \cL_3(\cM_L)$, $\cL_3(\cM_H') = \cL_3(\cM_H)$, and $\cM_H'$ is a constructive $\tau$-abstraction of $\cM'_L$.
    
    \hfill $\blacksquare$
\end{restatable}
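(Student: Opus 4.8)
The plan is to prove both directions, using two reductions throughout. First, $Q$-$\tau$ consistency (Def.~\ref{def:q-tau-consistency}) references $\cM_L$ and $\cM_H$ only through their Layer~3 valuations, so ``$\cM_H$ is $\cL_3$-$\tau$ consistent with $\cM_L$'' holds iff ``$\cM_H'$ is $\cL_3$-$\tau$ consistent with $\cM_L'$'' whenever $\cL_3(\cM_L')=\cL_3(\cM_L)$ and $\cL_3(\cM_H')=\cL_3(\cM_H)$. Second, writing $\*W_L:=\bigcup_{\*C\in\bbC}\*C$, the restriction of $\tau$ to $\*W_L$ surjects onto $\cD_{\*V_H}$, so every counterfactual quantity over $\*V_H$ equals the high-level counterpart $\tau(Q)$ (Eq.~\ref{eq:tauq-valid}) of some valid low-level $Q$ (Eq.~\ref{eq:q-valid}), with every high-level intervention lifting to a whole-cluster intervention of that $\tau$-image. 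For the ($\Leftarrow$) direction it then suffices to show that a constructive $\tau$-abstraction $\cM_H'$ of $\cM_L'$ (Def.~\ref{def:cons-tau-abs}) is $\cL_3$-$\tau$ consistent with $\cM_L'$. Fix a valid $Q$; expanding each factor via Def.~\ref{def:l3-semantics}, the event $\{\*Y_{L,i[\*x_{L,i}]}(\*u_L)\in\cD_{\*Y_{L,i}}(\*y_{H,i})\}$ is literally $\{\tau(\*Y_{L,i[\*x_{L,i}]}(\*u_L))=\*y_{H,i}\}$, and since $\*X_{L,i},\*Y_{L,i}$ are unions of clusters, the defining commutative relation of a constructive $\tau$-abstraction (the solution map commutes with $\tau$, and $\omega_\tau$ carries the whole-cluster intervention $\*x_{L,i}$ to the high-level intervention $\tau(\*x_{L,i})$) rewrites this as $\{\*Y_{H,i[\tau(\*x_{L,i})]}(\tau_U(\*u_L))=\*y_{H,i}\}$ with $\*Y_{H,i}=\tau(\*Y_{L,i})$; integrating against $P(\*U_L')$ and changing variables via $P(\*U_H')=(\tau_U)_*P(\*U_L')$ turns $Q$ into $P^{\cM_H'}(\*Y_{H,*}=\*y_{H,*})=\tau(Q)$, i.e.\ Eq.~\ref{eq:q-tau-consistency}.

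For ($\Rightarrow$), assume $\cM_H$ is $\cL_3$-$\tau$ consistent with $\cM_L$. The key step is to upgrade this distributional hypothesis to an almost-sure one: for any two whole-cluster interventions $\*x_L,\*x_L'$ on the same clusters with $\tau(\*x_L)=\tau(\*x_L')$, $\tau(\*W_{L[\*x_L]}(\*u_L))=\tau(\*W_{L[\*x_L']}(\*u_L))$ holds for $P(\*U_L)$-almost every $\*u_L$. Indeed, fix distinct high-level values $\*v_H^1\neq\*v_H^2$ and apply $\cL_3$-$\tau$ consistency to the valid two-world query $Q=P(\*W_{L[\*x_L]}\in\cD_{\*W_L}(\*v_H^1),\,\*W_{L[\*x_L']}\in\cD_{\*W_L}(\*v_H^2))$; its counterpart $\tau(Q)=P^{\cM_H}(\*V_{H[\tau(\*x_L)]}=\*v_H^1,\,\*V_{H[\tau(\*x_L)]}=\*v_H^2)$ vanishes, since a single world cannot realize two distinct values, so $Q=0$, and intersecting the resulting full-measure events over the finitely many pairs $(\*v_H^1,\*v_H^2)$ and interventions yields a set $G$ with $P(G)=1$ on which the identity holds for all such $\*x_L,\*x_L'$ at once.

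Now let $\cM_L'$ be the canonical response-function SCM of $\cM_L$: its exogenous domain is the finite set of tuples $\*r=(r_{V_j})_{V_j\in\*V_L}$ with $r_{V_j}\colon\cD_{\Pai{V_j}}\to\cD_{V_j}$, equipped with the pushforward of $P(\*U_L)$ along $\*u_L\mapsto(f_{V_j}(\,\cdot\,,\*u_L))_j$ and mechanisms $V_j\gets r_{V_j}(\Pai{V_j})$. Standard arguments (the joint law of response functions determines all counterfactual distributions) give $\cL_3(\cM_L')=\cL_3(\cM_L)$ and that $\cM_L'$ keeps the directed edges of $\cM_L$, so $\bbC$ stays admissible. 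Each $\*r$ in the finite support of $\cM_L'$ is realized by some $\*u_L\in G$, so the key step makes $\tau_U(\*r):=\big(\*i_H\mapsto\tau(\*W_{L[\*i_L]}(\*r))\text{, for any }\*i_L\text{ with }\tau(\*i_L)=\*i_H\big)$ well defined on the support. Take $\cM_H'$ to be the SCM over $\*V_H$ with exogenous variable $R_H:=\tau_U(R_L)$ under the induced pushforward, and mechanisms reading off $R_H$ along the cluster-quotient directed structure, the variables of $\*V_L\setminus\*W_L$ being solved out into $R_H$; admissibility makes this acyclic and $\tau_U(\*r)$ inherits the response-function consistency of $\*r$, so $\cM_H'$ is a genuine recursive SCM. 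By construction $(\cM_L',\cM_H')$ meets every clause of Def.~\ref{def:cons-tau-abs}: the partition is $\bbC$ together with $\*V_L\setminus\*W_L$; $\omega_\tau$ maps a whole-cluster intervention to its $\tau$-image and is onto all interventions of $\*V_H$ since each $\tau_{\*C_i}$ is surjective; the commutative relation is the definition of $\tau_U$; and $P(\*U_H')=(\tau_U)_*P(\*U_L')$ by construction. Finally, any counterfactual quantity over $\*V_H$, written $\tau(Q)$ for a valid $Q$, has value in $\cM_H'$ equal to the value of $Q$ in $\cM_L'$ (by the construction of $\cM_H'$), hence in $\cM_L$ ($\cL_3$-equivalence), hence equal to the value of $\tau(Q)$ in $\cM_H$ by the hypothesis; so $\cL_3(\cM_H')=\cL_3(\cM_H)$ and the construction is complete.

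I expect the crux to be the ``key step'': promoting the equality of aggregated probabilities in Def.~\ref{def:q-tau-consistency} to an almost-sure pointwise agreement of the $\tau$-projected potential responses under $\tau$-equivalent interventions, which is exactly what lets the high-level mechanism be well defined; the device that makes it go through is the contradictory two-world query. The remaining work is routine bookkeeping --- checking that the $\tau$-projection of the canonical $\cM_L'$ is literally a recursive SCM over $\*V_H$ (correctly folding the discarded cell into the exogenous and using admissibility for acyclicity of the quotient) and that the constructed $\omega_\tau$ and $\tau_U$ satisfy the precise clauses of Def.~\ref{def:cons-tau-abs}.
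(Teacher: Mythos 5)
Your backward direction is the same argument as the paper's (commutation plus the push-forward assumption $P(\*U'_H)=P(\tau_{\*U}(\*U'_L))$), but your forward direction is a genuinely different construction. The paper builds $\cM_H'$ out of the functional counterfactuals of $\cM_H$, gluing $\tau_{\*U}$ along the level sets $\cD_{\*U_L}(\*f_H)$ and $\cD_{\*U_H}(\*f_H)$ after trimming a null set of $\*u_L$; you instead (i) upgrade $\cL_3$-$\tau$ consistency to an almost-sure invariance via the two-world query whose high-level counterpart is identically zero, and (ii) quotient the canonical response-function SCM of $\cM_L$ through $\tau$. The two-world device is legitimate under Def.~\ref{def:q-tau-consistency} (repeating the outcome set $\*W_L$ in two worlds under cluster-level interventions with the same $\tau$-image is a valid $Q$, and no single high-level world can realize two distinct values), and it buys an explicit pointwise statement essentially equivalent to the AIC that the paper only surfaces later in Prop.~\ref{prop:inv-cond-completeness}; the response-function canonicalization plays the role the paper's Lemma on functional counterfactuals plays for it.

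Two concrete gaps remain. First, as literally written $\tau_{\*U}(\*r)$ is degenerate: if $\*i_L$ intervenes on all of $\*W_L$, effectiveness gives $\*W_{L[\*i_L]}(\*r)=\*i_L$, so your recorded map is just $\*i_H\mapsto\*i_H$ and carries no information about $\cM_L$. What you need to record is $\tau(\*W_{L[\*x_L]}(\*r))$ for \emph{partial} union-of-clusters interventions $\*x_L$ (parent-cluster interventions suffice for reading off each $f^H_{V_{H,i}}$), and your almost-sure step does cover those, but the definition of $\tau_{\*U}$ and of the high-level mechanisms must be restated accordingly. Second, the claim that the commutation clause of Def.~\ref{def:cons-tau-abs} holds ``by construction'' hides the substantive verification: that the recursive solution of the quotient SCM under $\tau(\*x_L)$, computed mechanism by mechanism from $\tau_{\*U}(\*r)$, reproduces $\tau(\cM'_{L[\*x_L]}(\*r))$ for every allowed intervention. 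This requires an induction along a topological order of the clusters using composition (Corol.~\ref{corol:ctf-parent-form}), together with your almost-sure invariance to justify the arbitrary choice of preimage parent values -- precisely where the paper's proof spends its effort, so it should not be dismissed as bookkeeping. A minor repair: $\tau_{\*U}$ is only well defined on the support of the response-function distribution, so restrict $\cD_{\*U'_L}$ to that support (the analogue of the paper's null-set trimming) so that $\tau_{\*U}$ is total and surjective. With these repairs your route goes through.
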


All proofs are provided in Appendix \ref{app:proofs}. This proposition provides the connection between the abstractions defined in this work and established definitions from previous works\footnote{Note that one subtlety of this result is that it is not $\cM_H$ that is directly a constructive $\tau$-abstraction of $\cM_L$, but rather their $\cL_3$-equivalent counterparts, $\cM_H'$ and $\cM_L'$. Indeed, the definition of constructive $\tau$-abstractions is stronger than $\cL_3$-$\tau$ consistency (see proof for more details), but in tasks where we are only concerned with the layers of the PCH, this distinction is inconsequential.}.

\subsection{Algorithmic Abstraction Construction}

With the abstraction function $\tau$ defined, the notion of $Q$-$\tau$ consistency allows for comparisons of distributions between the low level model $\cM_L$ and the abstraction $\cM_H$. Still, it would be desirable to be able to systematically construct $\cM_H$ given $\cM_L$ and $\tau$ such that $\cM_H$ is $Q$-$\tau$ consistent with $\cM_L$ for as many queries $Q$ as possible. Moving in this direction, we first note that as a subtlety, for some cases of $\cM_L$, there are certain choices of $\bbC$ and $\bbD$ (and corresponding $\tau$) for which $Q$-$\tau$ consistency (for some queries $Q$) is impossible to achieve in any choice of $\cM_H$. This impossibility is illustrated in the following example, inspired by \citet{spirtes_cholesterol}.

\begin{example}
    \label{ex:cholesterol-inv-cond}
    Consider a study that aims to understand the effects of diet on heart disease. Having a poor diet ($X$) is known to cause heart disease ($Y$) depending on its cholesterol content. Cholesterol comes in two forms, called high-density and low-density lipoproteins (HDL and LDL, respectively). The HDL is believed to lower heart disease risk while LDL increases it \citep{steinberg2007, truswell2010}. Suppose the study is simplified to binary variables, and the true model $\cM_L$ is:
    \begin{align}
        \*U_L &= \{U_X, U_{C1}, U_{C2}, U_Y\} \\
        \*V_L &= \{X, HDL, LDL, Y\} \\
        \cF_L &=
        \begin{cases}
            X \gets f^L_X(u_X) = u_X \\
            HDL \gets f^L_{HDL}(x, u_{C1}) = x \oplus u_{C1} \\
            LDL \gets f^L_{LDL}(x, u_{C2}) = x \oplus u_{C2} \\
            Y \gets f^L_Y(hdl, ldl, u_Y) = (ldl \wedge \neg hdl) \oplus u_Y 
        \end{cases} \label{eq:ex-cholesterol-F} \\
        P(\*U_L) &=
        \begin{cases}
            P(U_X = 1) = 0.5 \\
            P(U_{C1} = 1) = 0.1 \\
            P(U_{C2} = 1) = 0.1 \\
            P(U_Y = 1) = 0.1
        \end{cases}
    \end{align}
    As $\cM_L$ indicates, a person is more likely to get heart disease if their diet consists of high LDL levels but low HDL levels. For example, note that $P^{\cM_L}(Y_{LDL = 1, HDL = 0} = 1) = 0.9$ while $P^{\cM_L}(Y_{LDL = 0, HDL = 1} = 1) = 0.1$.
    
    Now, suppose a data scientist decides to abstract HDL and LDL together into a variable called ``total cholesterol'' (TC). Say that TC is defined as
    \begin{equation}
        TC = HDL + LDL.
    \end{equation}
    In fact, this leads to a choice of intervariable clusters
    \begin{equation}
        \bbC = \{\*C_1 = \{X\}, \*C_2 = \{HDL, LDL\}, \*C_3 = \{Y\}\},
    \end{equation}
    and then for intravariable clusters, they would choose
    \begin{equation}
    \bbD_{\*C_2} = 
    \begin{cases}
        tc_0 &= \{(HDL = 0, LDL = 0)\} \\
        tc_1 &= \{(HDL = 0, LDL = 1), \\
        & (HDL = 1, LDL = 0)\} \\
        tc_2 &= \{(HDL = 1, LDL = 1)\}.
    \end{cases}
    \end{equation}For the other clusters, simply use the same variables. Let $\tau$ be the constructive abstraction function defined with this choice of $\bbC$ and $\bbD$ (i.e.~$\tau_{\*C_2}(hdl, ldl) = hdl + ldl$).

    An issue arises due to the grouping of values $(HDL = 0, LDL = 1)$ and $(HDL = 1, LDL = 0)$ into the same intravariable cluster. To witness, note that $\tau_{\*C_1}(HDL = 0, LDL = 1) = \tau_{\*C_2}(HDL = 1, LDL = 0) = (TC = 1)$. Now, consider two queries $Q_1 = P(Y_{HDL = 0, LDL = 1} = 1)$ and $Q_2 = P(Y_{HDL = 1, LDL = 0} = 1)$, and observe that
    \begin{align}
        P^{\cM_L}(Y_{HDL = 0, LDL = 1} = 1) &= P(U_Y = 0) = 0.9, \\
        P^{\cM_L}(Y_{HDL = 1, LDL = 0} = 1) &= P(U_Y = 1) = 0.1.
    \end{align}
    However, since $\tau_{\*C_1}(HDL = 0, LDL = 1) = \tau_{\*C_2}(HDL = 1, LDL = 0) = (TC = 1)$, both $Q_1$ and $Q_2$ have the same high-level counterpart. That is, $\tau(Q_1) = \tau(Q_2) = P(Y_{TC = 1} = 1)$. No choice of $\cM_H$ over $\*V_H$ can be both $Q_1$-$\tau$ consistent and $Q_2$-$\tau$ consistent with $\cM_L$ because $P^{\cM_H}(Y_{TC = 1} = 1)$ cannot both be equal to $0.9$ and $0.1$.
    \hfill $\blacksquare$
\end{example}

Intuitively, Ex.~\ref{ex:cholesterol-inv-cond} shows two values that cannot be grouped into the same intravariable cluster because the function $f^L_Y$ (from Eq.~\ref{eq:ex-cholesterol-F}) produces different results depending on which value is used. Grouping the two values in the same cluster would imply that the two values are ``equivalent'' and hence, there would be an inevitable loss of information. Indeed, real-world studies that consider total cholesterol instead of separating it into LDL and HDL often have conflicting results, indicating an invalid abstraction. This phenomenon can be described formally through the following condition.

\begin{restatable}[Abstract Invariance Condition (AIC)]{definition}{aicdef}
    \label{def:invariance-condition}
    Let $\cM_L = \langle \*U_L, \*V_L, \cF_L, P(\*U_L) \rangle$ be an SCM and $\tau: \cD_{\*V_L} \rightarrow \cD_{\*V_H}$ be a constructive abstraction function relative to $\bbC$ and $\bbD$. The SCM $\cM_L$ is said to satisfy the abstract invariance condition (AIC, for short) with respect to $\tau$ if, for all $\*v_1, \*v_2 \in \cD_{\*V_L}$ such that $\tau(\*v_1) = \tau(\*v_2)$, $\forall \*u \in \cD_{\*U_L}, \*C_i \in \bbC$, the following holds:
    \begin{equation}
        \label{eq:aic}
        \begin{split}
            & \tau_{\*C_i} \left( \left ( f^L_V(\pai{V}^{(1)}, \*u_V): V \in \*C_i \right ) \right) \\
            &= \tau_{\*C_i} \left( \left ( f^L_V(\pai{V}^{(2)}, \*u_V): V \in \*C_i \right ) \right),
        \end{split}
    \end{equation}
    where $\pai{V}^{(1)}$ and $\pai{V}^{(2)}$ are the values corresponding to $\*v_1$ and $\*v_2$. Then, $\tildepai{V}$ is used to denote any arbitrary value s.t.~$\tau(\tildepai{V}) = \tau(\pai{V}^{(1)}) = \tau(\pai{V}^{(2)})$.
    \hfill $\blacksquare$
\end{restatable}

In words, the AIC enforces that if two low level values $\*v_1, \*v_2 \in \cD_{\*V_L}$ map to the same high level value (i.e.~$\tau(\*v_1) = \tau(\*v_2)$), then for each cluster $\*C_i \in \bbC$, the functions of those clusters should map to the same value regardless of $\*U_L$ (i.e.~the outputs of $f_{V}^L(\pai{V}^{(1)}, \*u_V)$ for each $V \in \*C_i$ should map to the same result as the outputs of $f_{V}^L(\pai{V}^{(2)}, \*u_V)$ when passed through $\tau_{\*C_i}$). Intuitively, this implies that two values in the same intravariable cluster have the same functional effect in the higher level setting.

In Ex.~\ref{ex:cholesterol-inv-cond}, the AIC is not satisfied since $(HDL = 0, LDL = 1)$ and $(HDL = 1, LDL = 0)$ cannot be grouped into the same intravariable cluster. As established, $\tau_{\*C_1}(HDL = 0, LDL = 1) = \tau_{\*C_2}(HDL = 1, LDL = 0) = 1$. However, observing $f^L_Y$ (from Eq.~\ref{eq:ex-cholesterol-F}) given these inputs, note that $f^L_Y(HDL = 0, LDL = 1, u_Y) = \neg u_Y$, while $f^L_Y(HDL = 1, LDL = 0, u_Y) = u_Y$, which give opposite results for either choice of $u_Y$.

In contrast, we consider a different abstraction that does satisfy the AIC next.

\begin{example}[Example \ref{ex:cholesterol-inv-cond} continued]
    \label{ex:inv-cond-satisfied}
    Consider a different choice of clusters that does not violate the AIC. Although somewhat unintuitive, we can actually abstract the cholesterol values in the given $\cM_L$ by taking their difference instead of their sum!\footnote{This is true given the simple definition of $\cM_L$ in this example. In more complex descriptions, such as in cases where the variables are continuous, the clusters would have to be chosen more carefully to avoid violations of the AIC.} For example, define $Z = LDL - HDL$. In terms of clusters we would keep the same intervariable clusters $\bbC = \{\*C_1 = \{X\}, \*C_2 = \{HDL, LDL\}, \*C_3 = \{Y\}\}$, but for intravariable clusters, we choose $\bbD_{\*C_2} = \{z_{-1} = \{(HDL = 1, LDL = 0)\}, z_0 = \{(HDL = 0, LDL = 0), (HDL = 1, LDL = 1)\}, z_1 = \{(HDL = 0, LDL = 1)\}\}$. Define $\tau$ as the constructive abstraction function defined over $\bbC$ and $\bbD$ (i.e.~$\tau_{\*C_2}(hdl, ldl) = ldl - hdl$).

    The difference now is that instead of clustering together the values $(HDL = 1, LDL = 0)$ and $(HDL = 0, LDL = 1)$, it is the values $(HDL = 0, LDL = 0)$ and $(HDL = 1, LDL = 1)$ that are clustered together. When looking at $f^L_Y$ (Eq.~\ref{eq:ex-cholesterol-F}) under these two values, we see that $f^L_Y(HDL = 0, LDL = 0, u_Y) = f^L_Y(HDL = 1, LDL = 1, u_Y) = u_Y$ for any choice of $u_Y$, satisfying the AIC.

    Intuitively, $f_Y$ no longer changes behavior between these two values, or in other words, $f_Y$ is \emph{invariant} between these two values. With no other downstream variables to consider, this implies that these two values are functionally identical in the model and can be abstracted together into a single value without loss of information.
    \hfill $\blacksquare$
\end{example}

It turns out that the AIC describes precisely when an appropriate $\cM_H$ exists as an abstraction of the low level model $\cM_L$, as shown by the following result.

\begin{restatable}[Abstraction Conditions]{proposition}{absconditions}
    \label{prop:inv-cond-completeness}
    For any SCM $\cM_L$ and constructive abstraction function $\tau$ relative to $\bbC$ and $\bbD$, there exists an SCM $\cM_H$ over variables $\*V_H = \tau(\*V_L)$ such that $\cM_H$ is $\cL_3$-$\tau$ consistent with $\cM_L$ if and only if there exists $\cM_L'$ such that $\cL_3(\cM_L) = \cL_3(\cM_L')$ and $\cM_L'$ satisfies the abstract invariance condition with respect to $\tau$.
    \hfill $\blacksquare$
\end{restatable}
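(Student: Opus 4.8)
The plan is to prove the biconditional by treating the two directions separately. For \emph{sufficiency of the AIC}, suppose $\cM_L'$ satisfies $\cL_3(\cM_L') = \cL_3(\cM_L)$ and obeys the AIC (Def.~\ref{def:invariance-condition}). Since $\cL_3$-$\tau$ consistency (Def.~\ref{def:q-tau-consistency}) depends only on the $\cL_3$ distributions, it is enough to construct an $\cM_H$ over $\*V_H = \tau(\*V_L)$ that is $\cL_3$-$\tau$ consistent with $\cM_L'$, and I would do so by \emph{pushing the mechanisms of $\cM_L'$ forward through $\tau$}. Keep the exogenous variables and their distribution; for each cluster $\*C_i \in \bbC$ give the high-level variable $V_{H,i}$ the exogenous parents $\bigcup_{V \in \*C_i}\Ui{V}$ and, as endogenous parents, the $V_{H,j}$ whose cluster $\*C_j$ contains a parent (in $\cM_L'$) of some $V \in \*C_i$ --- admissibility of $\bbC$ makes this high-level diagram acyclic. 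Define $\hat f_{V_{H,i}}(\,\cdot\,,\*u_{\*C_i}) = \tau_{\*C_i}\!\big((f^{L'}_V(\tildepai{V},\*u_V) : V \in \*C_i)\big)$, where $\tildepai{V}$ is any low-level parent assignment consistent with the given high-level parent values, intra-cluster edges being resolved in the cluster's internal topological order. The AIC (Eq.~\ref{eq:aic}) is exactly what makes $\hat f_{V_{H,i}}$ well defined, i.e.\ independent of the representative $\tildepai{V}$.

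To finish this direction I would show $\cM_H$ is $\cL_3$-$\tau$ consistent with $\cM_L'$ by induction on a topological ordering of the clusters: for every $\*u$ and every family of cluster-respecting interventions, $\tau(\*Y_{L,*}(\*u)) = \*Y_{H,*}(\*u)$. An intervened cluster emits a constant whose $\tau$-image is the high-level intervention value; a free cluster computes its output from parent values that, by the inductive hypothesis, already carry the correct $\tau$-images, so the AIC lets me swap in a $\tildepai{\,\cdot\,}$ and recover $\hat f_{V_{H,i}}$. Integrating $\mathbf 1[\tau(\*Y_{L,*}(\*u)) = \*y_{H,*}]$ against $P(\*u)$ then yields Eq.~\ref{eq:q-tau-consistency} for every $Q$ of the form of Eq.~\ref{eq:q-valid}, so $\cM_H$ is $\cL_3$-$\tau$ consistent with $\cM_L'$, hence with $\cM_L$.

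For \emph{necessity of the AIC}, suppose some $\cM_H$ over $\tau(\*V_L)$ is $\cL_3$-$\tau$ consistent with $\cM_L$. By the Abstraction Connection (Prop.~\ref{prop:abs-connect}) there are SCMs $\cM_L', \cM_H'$ with $\cL_3(\cM_L') = \cL_3(\cM_L)$, $\cL_3(\cM_H') = \cL_3(\cM_H)$, and $\cM_H'$ a constructive $\tau$-abstraction of $\cM_L'$ (Def.~\ref{def:cons-tau-abs}). The remaining task is to read Eq.~\ref{eq:aic} off that abstraction relation: for any intervention on $\cM_L'$ fixing whole clusters --- in particular one pinning all of $\*C_i$'s parents lying outside $\*C_i$ to chosen low-level values --- the abstraction forces the $\tau$-image of the resulting internally-solved value of $\*C_i$ to equal $\cM_H'$'s solution under the $\tau$-image of that intervention; applying this to two parent assignments with equal $\tau$-image (which give the same high-level intervention) shows that $\tau_{\*C_i}$ of the cluster output depends on the parents only through $\tau$, for every $\*u$ in the support of $P(\*U_{L'})$ --- i.e.\ the AIC on that support. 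Deleting, if necessary, the probability-zero values from $\cD_{\*U_{L'}}$ then gives an $\cL_3$-equivalent SCM satisfying the AIC everywhere.

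I expect the main obstacle to be in the sufficiency direction: checking that the \emph{one-step} form of the AIC in Def.~\ref{def:invariance-condition} suffices to make the \emph{recursively} solved mechanisms $\hat f_{V_{H,i}}$ well defined. This requires a short sub-argument that the AIC iterates along a cluster's internal topological order --- updating the cluster's variables synchronously for enough rounds to reach the solution, with the AIC preserved each round --- and careful bookkeeping of how cluster-respecting interventions interact with that internal resolution. The necessity direction is comparatively routine given Prop.~\ref{prop:abs-connect}; its only delicate point is turning the interventional commuting property of a constructive $\tau$-abstraction into the mechanism-level identity Eq.~\ref{eq:aic}, and absorbing probability-zero exogenous values via an $\cL_3$-equivalent SCM.
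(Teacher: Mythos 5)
Your plan follows the paper's proof essentially verbatim: the sufficiency direction is exactly the construction of Alg.~\ref{alg:map-abstraction} together with the cluster-by-cluster topological induction already established as Prop.~\ref{prop:map-abstraction}, and the necessity direction invokes Prop.~\ref{prop:abs-connect} and extracts Eq.~\ref{eq:aic} from the interventional commuting property of the constructive $\tau$-abstraction, just as the paper does (there phrased as a proof by contradiction). The obstacle you flag about iterating the AIC along a cluster's internal order is dissolved in the paper's argument for Prop.~\ref{prop:map-abstraction} by evaluating each cluster's low-level mechanisms at the solved counterfactual parent values via composition (Corol.~\ref{corol:ctf-parent-form}), so only a single application of the AIC per cluster is needed.
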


This critical property guarantees the existence of a higher level SCM $\cM_H$ such that $\cL_3$-$\tau$ consistency holds, so we will assume that the AIC holds for the rest of this work. Still, see App.~\ref{app:invariance-condition} for further discussion on its implications and for possible relaxations in cases where $\cL_3$-$\tau$ consistency is not required.

\begin{algorithm}[t]
    
    \DontPrintSemicolon
    \SetKwFunction{absfunc}{AbsFunc}
    \SetKwInOut{Input}{Input}
    \SetKwInOut{Output}{Output}
    
    \Input{ SCM $\cM_L = \langle \*U_L, \*V_L, \cF_L, P(\*U_L) \rangle$, admissible inter/intravariable clusters $\bbC$ and $\bbD$ satisfying abstract invariance condition}
    \Output{ SCM $\cM_H$ and $\tau: \cD_{\*V_H} \rightarrow \cD_{\*V_L}$ s.t.~$\cM_H$ is $\cL_3$-$\tau$ consistent with $\cM_L$}
    \BlankLine
    \textls[-20]{
    $\*U_H \gets \*U_L, P(\*U_H) \gets P(\*U_L)$\;
    $\*V_H \gets \bbC, \cD_{\*V_H} \gets \bbD$\;
    $\tau \gets \absfunc(\bbC, \bbD)$ \tcp*{from Def.\ \ref{def:tau}}
    \For{$\*C_i \in \bbC$}{
        $f^H_i \gets \tau \left( f^L_V(\tildepai{V}, \*u_V) : V \in \*C_i \right)$\;
    }
    $\cF_H \gets \{f^H_i : \*C_i \in \bbC\}$\;
    \Return $\tau$, $\cM_H = \langle \*U_H, \*V_H, \cF_H, P(\*U_H) \rangle$\;
    }
    \caption{Constructing $\cM_H$ from $\cM_L$.}
    \label{alg:map-abstraction}
\end{algorithm}

With the notion of abstractions well-defined, we study how $\cM_H$ can be obtained from $\cM_L$. Interestingly, when given the admissible clusterings $\bbC$ and $\bbD$, the procedure for recovering $\tau$ and converting $\cM_L$ to $\cM_H$ can be done as shown in Alg.~\ref{alg:map-abstraction}. Intuitively, one can obtain an abstraction $\cM_H$ of $\cM_L$ by first constructing the abstraction function $\tau$ using the clusterings $\bbC$ and $\bbD$ (lines 2-3), followed by designing the functions of $\cM_H$ to wrap the original functions of $\cM_L$ with $\tau$ (lines 4-6). This can be verified using the following result.

\begin{restatable}{proposition}{mapabstractioncorrect}
\label{prop:map-abstraction}
Let $\tau$ and $\cM_H$ be the function and SCM obtained from running Alg.~\ref{alg:map-abstraction} on inputs $\cM_L$, $\bbC$, and $\bbD$. Then, $\cM_H$ is $\cL_3$-$\tau$ consistent with $\cM_L$.
\hfill $\blacksquare$
\end{restatable}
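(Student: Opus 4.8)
The plan is to leverage the fact that Alg.~\ref{alg:map-abstraction} builds $\cM_H$ on exactly the same exogenous space and distribution as $\cM_L$, so the two models can be coupled through a single draw $\*u \sim P(\*U_L)$, and $\cL_3$-$\tau$ consistency collapses to a deterministic, pointwise identity between the two submodels. Concretely, fix an arbitrary valid low-level Layer~3 query $Q$ of the form in Eq.~\ref{eq:q-valid}, with terms $\*Y_{L, i[\*x_{L, i}]}$ whose variable sets $\*Y_{L,i}$ and intervention sets $\*X_{L,i}$ are unions of clusters of $\bbC$. By Def.~\ref{def:l3-semantics}, $P^{\cM_L}(\*Y_{L,*}=\*y_{L,*}) = \int_{\cD_{\*U_L}} \mathbf{1}[\*Y_{L, i[\*x_{L, i}]}(\*u)=\*y_{L,i}\ \forall i]\,dP(\*u)$; summing over the fiber $\cD_{\*Y_{L, *}}(\*y_{H, *})$ and pulling the finite sum inside the integral collapses the (pairwise disjoint) indicators into $\mathbf{1}[\tau(\*Y_{L, i[\*x_{L, i}]}(\*u)) = \*y_{H,i}\ \forall i]$, where $\tau$ acts cluster-wise (well defined since each $\*Y_{L,i}$ is a union of clusters). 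Symmetrically, $P^{\cM_H}(\*Y_{H,*}=\*y_{H,*}) = \int_{\cD_{\*U_L}} \mathbf{1}[\*Y_{H, i[\tau(\*x_{L, i})]}(\*u) = \*y_{H,i}\ \forall i]\,dP(\*u)$. Hence Eq.~\ref{eq:q-tau-consistency} follows once we show, for every $\*u$ and every term,
\[
  \*Y_{H, i[\tau(\*x_{L, i})]}(\*u) \;=\; \tau\!\left(\*Y_{L, i[\*x_{L, i}]}(\*u)\right).
\]

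The core is the following claim, proved by induction on a topological ordering of the clusters $\bbC$. Such an ordering exists precisely because $\bbC$ is admissible (Def.~\ref{def:var-clusterings}): admissibility is exactly the statement that the graph induced on the clusters is acyclic, and its edges $\*C_j \to \*C_i$ are exactly those for which $\*C_j$ contains a low-level parent of a member of $\*C_i$. Fix $\*u$ and a valid intervention $\*X_L = \*x_L$ (a union of clusters); since $\bbC$ partitions its domain, each cluster $\*C_i$ is either contained in $\*X_L$ or disjoint from it. Claim: for every $\*C_i$, the value assigned to $V_{H,i}$ in the submodel of $\cM_H$ under $\tau(\*x_L)$ evaluated at $\*u$ equals $\tau_{\*C_i}$ applied to the tuple of values assigned to the variables of $\*C_i$ in the submodel of $\cM_L$ under $\*x_L$ evaluated at $\*u$. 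If $\*C_i \subseteq \*X_L$, both sides equal $\tau_{\*C_i}$ of the fixed values by the construction of $\tau(\*x_L)$ (Def.~\ref{def:tau}). If $\*C_i \cap \*X_L = \emptyset$, then $V_{H,i}$ is computed by $f^H_i$, which by the assignment $f^H_i \gets \tau_{\*C_i}\big((f^L_V(\tildepai{V},\*u_V) : V \in \*C_i)\big)$ in Alg.~\ref{alg:map-abstraction} equals $\tau_{\*C_i}$ of that tuple for \emph{any} low-level parent configuration $\tildepai{V}$ compatible with the high-level parent values fed into $f^H_i$. By the induction hypothesis those high-level parent values are exactly $\tau$ (cluster-wise) of the already-determined low-level values of the parent clusters of $\*C_i$ in the submodel of $\cM_L$; therefore the genuinely propagated low-level parent values $\pai{V}(\*u)$ are themselves a compatible choice of $\tildepai{V}$, and the abstract invariance condition (Def.~\ref{def:invariance-condition}, assumed throughout via Prop.~\ref{prop:inv-cond-completeness}) guarantees that \emph{every} compatible choice yields the same output. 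So $f^H_i$ returns $\tau_{\*C_i}\big((f^L_V(\pai{V}(\*u),\*u_V):V\in\*C_i)\big)$, which is $\tau_{\*C_i}$ of the value of $\*C_i$ in the submodel of $\cM_L$ under $\*x_L$ at $\*u$. This proves the claim; applying it to all clusters comprising a term $\*Y_{L,i}$ yields the displayed pointwise identity.

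Integrating that identity against $dP(\*u)$ gives $Q$-$\tau$ consistency for the arbitrary $Q$; since $Q$ ranged over all queries of the form in Eq.~\ref{eq:q-valid} lying in $\cL_3(\cM_L)$, $\cM_H$ is $\cL_3$-$\tau$ consistent with $\cM_L$ (hence also $\cL_2$- and $\cL_1$-$\tau$ consistent, being sub-cases per Def.~\ref{def:q-tau-consistency}). The step demanding the most care is the inductive case of a non-intervened cluster: one must (i) know the cluster-level diagram is acyclic so the induction is well-founded, which is exactly what admissibility delivers; (ii) handle the mutilation bookkeeping, using that $\*X_L$ being a union of clusters forces each cluster to be wholly fixed or wholly free; and (iii) justify that the ``arbitrary'' preimage $\tildepai{V}$ in the definition of $f^H_i$ may be taken to be the genuine propagated configuration — this is where the AIC is indispensable, since without it $f^H_i$ is not even well defined, let alone in agreement with the pushforward of $\cM_L$'s mechanisms. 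A final bookkeeping point is the treatment of low-level variables projected away by $\bbC$: these never occur in valid queries, and folding them into the exogenous terms (equivalently, a latent projection retaining all of $\*U_L$, which preserves $\cL_3$ restricted to unions of clusters) lets us assume every low-level variable lies in some cluster.
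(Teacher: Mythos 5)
Your proof is correct, and its core engine is the same as the paper's: an induction along the topological ordering of clusters (guaranteed by admissibility) showing the pointwise identity $V_{H,i[\tau(\*x_L)]}(\*u) = \tau_{\*C_i}(\*C_{i[\*x_L]}(\*u))$, with the AIC doing exactly the work you identify — making line 5's arbitrary preimage $\tildepai{V}$ interchangeable with the genuinely propagated parent values, and Alg.~\ref{alg:map-abstraction}'s choice $\*U_H = \*U_L$, $P(\*U_H)=P(\*U_L)$ supplying the coupling. Where you genuinely diverge is at the top level. The paper does not verify Def.~\ref{def:q-tau-consistency} directly; it shows that the algorithm's output is a constructive $\tau$-abstraction in the sense of Def.~\ref{def:cons-tau-abs} (checking surjectivity of $\tau$, taking $\tau_{\*U}$ to be the identity, and verifying Eq.~\ref{eq:tau-u-compatibility} via the same induction), and then invokes Prop.~\ref{prop:abs-connect} to convert that structural statement into $\cL_3$-$\tau$ consistency. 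You instead write each valid query as an integral of an indicator over the shared exogenous draw, collapse the fiber sum using disjointness of the events $\{\*Y_{L,*}(\*u)=\*y_{L,*}\}$, and integrate the pointwise identity — a more elementary, self-contained route that never touches the $\tau$-abstraction/$\omega_\tau$/allowed-interventions machinery or the exogenous-domain subtleties buried in Prop.~\ref{prop:abs-connect}. What the paper's detour buys is the stronger conclusion (the output is literally a constructive $\tau$-abstraction, a fact reused elsewhere, e.g.\ in Lem.~\ref{lem:abs-map-completeness}); what yours buys is directness and fewer dependencies, and you are also more explicit than the paper about the bookkeeping for variables projected away by $\bbC$ (your latent-projection remark, which is sound because admissibility forbids a projected-away descendant of a cluster from being an ancestor of that cluster). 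One small correction: the AIC enters as an explicit input requirement of Alg.~\ref{alg:map-abstraction}, not ``via Prop.~\ref{prop:inv-cond-completeness}''; that proposition only says such a representation exists, whereas here the given $\cM_L$ itself must satisfy the AIC for $f^H_i$ to be well defined, exactly as you use it.
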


See below for an example of running Alg.~\ref{alg:map-abstraction}

\begin{example}[Example \ref{ex:drug-tau} continued]
    \label{ex:drug-alg-abstraction}
    We will run Alg.~\ref{alg:map-abstraction} on $\cM_L$ and clusters $\bbC$ and $\bbD$ described earlier in Ex.~\ref{ex:drug-tau}. Following the algorithm, we first set $\*U_H = \*U_L$ and $P(\*U_H) = P(\*U_L)$. We construct $\tau$ via Def.~\ref{def:tau}, as shown in the earlier example. Then, we can compute the function $f^H_X$ as follows.
    \begin{align*}
        & f^H_X(u_{RY}, u_A, u_B) \\
        &= \tau_{\*C_1}(f^L_A(r, u_A), f^L_B(r, a, u_B)) \\
        &= \tau_{\*C_1}(r \oplus u_A, (r \wedge (r \oplus u_A)) \oplus u_B) \\
        &= \tau_{\*C_1}(u_{RY} \oplus u_A, (u_{RY} \wedge (u_{RY} \oplus u_A)) \oplus u_B) \stepcounter{equation}\tag{\theequation}
    \end{align*}
    For $f^H_Y$, denote $\tilde{a}$ and $\tilde{b}$ as an arbitrary setting of $A$ and $B$ such that $\tau(\tilde{a}, \tilde{b}) = x$, as indicated in line 5 of the algorithm.
    \begin{align*}
        & f^H_Y(x, u_{RY}, u_Y) \\
        &= \tau_{\*C_2}(f^L_Y(\tilde{a}, \tilde{b}, u_{RY}, u_Y)) \\
        &= \tau_{\*C_2}(((\tilde{a} \wedge \tilde{b}) \wedge u_{RY}) \oplus u_Y) \\
        &= \tau_{\*C_2}((x \wedge u_{RY}) \oplus u_Y) \stepcounter{equation}\tag{\theequation}
    \end{align*}
    
    Putting everything together, we obtain $\cM_H$ as follows.
    \begin{align}
        \*U_H &= \*U_L = \{U_{RY}, U_A, U_B, U_{Y}\} \\
        \*V_H &= \{X, Y\} \\
        \cF_H &= \{ \\
            & X \gets f^H_X(u_{RY}, u_A, u_B) \nonumber \\
            &= \tau_{\*C_1}(u_{RY} \oplus u_A, (u_{RY} \wedge (u_{RY} \oplus u_A)) \oplus u_B) \nonumber \\
            & Y \gets f^H_Y(x, u_{RY}, u_Y) = \tau_{\*C_2}((x \wedge u_{RY}) \oplus u_Y) \nonumber \\
        P(\*U_H) &= P(\*U_L)
    \end{align}

    It is not difficult to see that $\cM_H$ is $\cL_3$-$\tau$ consistent with $\cM_L$. As an example, note that $P^{\cM_H}(Y_{X = 1} = 1) = P(U_{RY} = 1, U_Y = 0) + P(U_{RY} = 0, U_Y = 1) = (0.5)(0.9) + (0.5)(0.1) = 0.5$, which matches $P^{\cM_L}(Y_{A = 1, B = 1} = 1)$ from Eq.~\ref{eq:ex-q-tau-cons-l2q}.
    \hfill $\blacksquare$
\end{example}

Ex.~\ref{ex:drug-alg-abstraction} shows how Alg.~\ref{alg:map-abstraction} can be used to systematically obtain an abstraction $\cM_H$ of the low-level model $\cM_L$, so long as $\cM_L$ is provided alongside the clusters $\bbC$ and $\bbD$. Since $\cM_L$ is almost never available in practice, the following sections show how this requirement can be relaxed.

\section{Inferences Across Abstractions}
\label{sec:learning-abs}

As demonstrated by Alg.~\ref{alg:map-abstraction}, converting a low level model $\cM_L$ to a high level model $\cM_H$ is somewhat immediate when given full observability of the underlying SCM $\cM_L$. However, in real applications, it is rarely the case that the full specification of $\cM_L$ is known. Typically, one will only be given partial information of $\cM_L$ in the form of data, such as samples of the observational distribution $P(\*V_L)$. The question we investigate in this section is: is it still possible to ``learn'' some $\cM_H$ given the observed data?

\begin{figure}
\centering
\includegraphics[width=\linewidth]{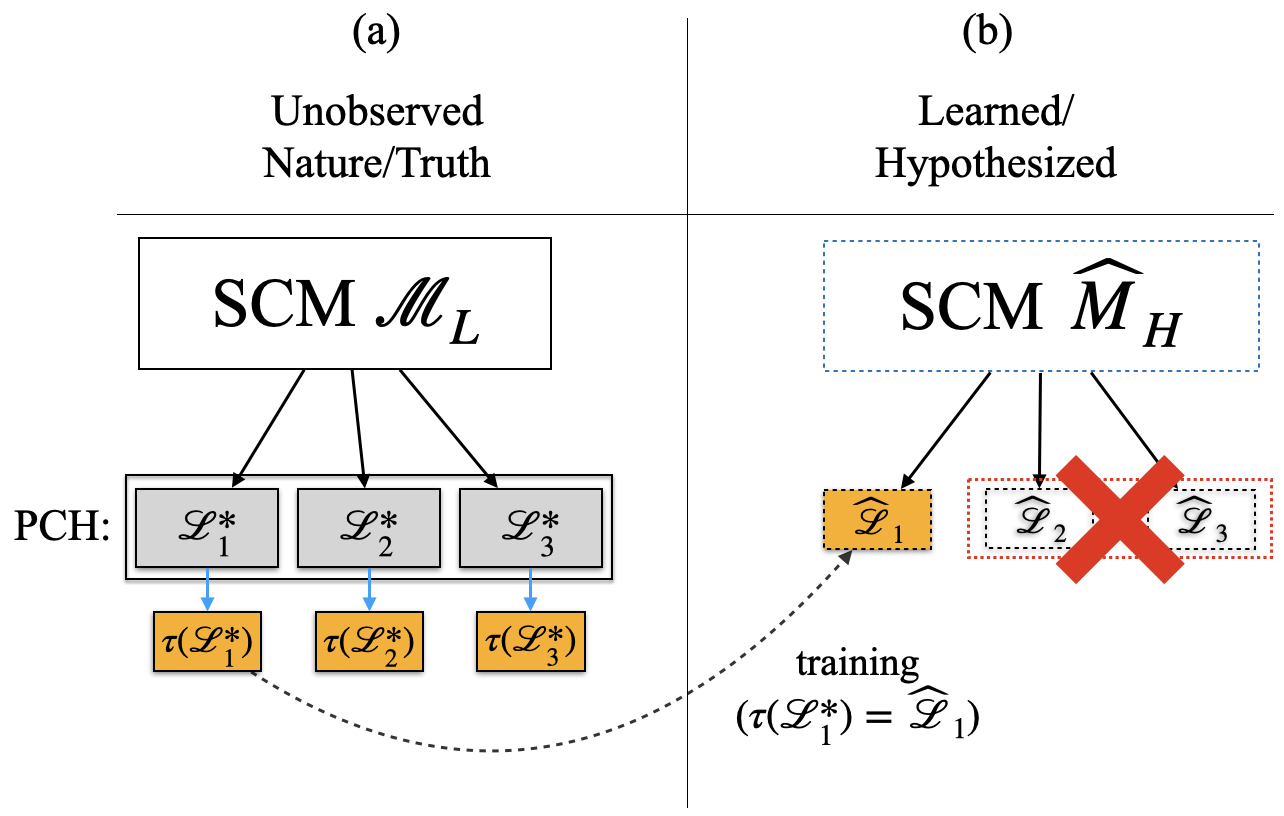}
\caption{Illustration of the Abstract CHT. Without additional information, a high-level model $\hM_H$ trained to be $\cL_1$-$\tau$ consistent with $\cM_L$ is not guaranteed to be $\cL_2$ or $\cL_3$-$\tau$ consistent.}
\label{fig:abstract-cht}
\end{figure}

We first note the impossibility result described by the Causal Hierarchy Theorem (CHT) \citep[Thm.~1]{bareinboim:etal20}, which states that a model trained to match another SCM on lower layers of the causal hierarchy (e.g. $\cL_1$) will likely not match on higher layers (e.g. $\cL_2$ or $\cL_3$). Naturally, the same is true when it comes to inferring causal quantities across abstractions. One may be tempted to believe that $\cM_H$ can be learned given $\cL_1$ data from $\cM_L$ by instantiating some expressive parametric model $\hM_H$ on $\*V_H$, and then training $\widehat{M}_H$ on $P(\*V_H) = P(\tau(\*V_L))$ such that $\widehat{M}_H$ is $\cL_1$-$\tau$ consistent with $\cM_L$. Unfortunately, this strategy will fail in general since even under perfect training, $\hM_H$ is not guaranteed to be $\cL_2$-$\tau$ (or $\cL_3$-$\tau$) consistent with $\cM_L$. This means that any causal quantities induced by $\hM_H$ will likely bear no relationship with causal quantities induced by $\cM_L$. This phenomenon is described by the following proposition.

\begin{proposition}[Abstract Causal Hierarchy Theorem (Informal)]
    \label{prop:abs-cht}
    Given constructive abstraction function $\tau: \cD_{\*V_H} \rightarrow \cD_{\*V_L}$, even if $\cM_H$ is $\cL_i$-$\tau$ consistent with $\cM_L$, $\cM_H$ will almost never be $\cL_j$-$\tau$ consistent with $\cM_L$ for $j > i$.
    \hfill $\blacksquare$
\end{proposition}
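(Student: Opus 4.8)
The plan is to obtain the Abstract CHT as a corollary of the original Causal Hierarchy Theorem of \citet{bareinboim:etal20}, applied at the level of the high-level variables $\*V_H$ and routed through the canonical abstraction produced by Alg.~\ref{alg:map-abstraction}. Since we assume throughout that the AIC holds, Prop.~\ref{prop:map-abstraction} gives a high-level SCM $\cM_H^\star$ over $\*V_H = \tau(\*V_L)$ (with some induced diagram $\cG_H$) that is $\cL_3$-$\tau$ consistent with $\cM_L$. I would let the candidate abstractions $\cM_H$ range over SCMs on $\*V_H$ compatible with the given causal constraints, and read ``almost never'' in the same measure-theoretic sense as in the original CHT: fix the standard Lebesgue-type measure on the parameter space of such SCMs. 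As in the original CHT, degenerate diagrams that force layer collapse are tacitly excluded, which is exactly why the proposition is stated informally.

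The key intermediate step is to show that, for each layer $k$, ``$\cM_H$ is $\cL_k$-$\tau$ consistent with $\cM_L$'' is equivalent to ``$\cL_k(\cM_H) = \cL_k(\cM_H^\star)$''. The forward direction uses that \emph{every} Layer-$k$ counterfactual query over $\*V_H$ is of the valid form of Eq.~\ref{eq:q-valid}: by Def.~\ref{def:tau} the bijection $\*V_H \leftrightarrow \bbC$ sends any subset of $\*V_H$ to a union of clusters, and $\tau$ is surjective onto $\cD_{\*V_H}$, so every high-level intervention value has a low-level preimage. Hence the $\cL_k$-$\tau$-consistency constraints, ranged over all valid $Q \in \cL_k(\cM_L)$ and all fiber values $\*y_{H,*}$, pin down the entire Layer-$k$ footprint of $\cM_H$ over $\*V_H$; since $\cM_H^\star$ (being $\cL_3 \supseteq \cL_k$-$\tau$ consistent) realizes exactly this footprint, the constraint is equivalent to $\cL_k(\cM_H) = \cL_k(\cM_H^\star)$. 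The converse direction is immediate: any such equality implies agreement on every valid query after pushing it forward through $\tau$.

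Given this correspondence, the set $\{\cM_H : \cM_H \text{ is } \cL_i\text{-}\tau\text{ consistent with }\cM_L\}$ equals $\{\cM_H : \cL_i(\cM_H) = \cL_i(\cM_H^\star)\}$, and within it the models failing $\cL_j$-$\tau$ consistency are precisely those with $\cL_j(\cM_H) \neq \cL_j(\cM_H^\star)$. The original CHT (\citet[Thm.~1]{bareinboim:etal20}), instantiated at $\cM_H^\star$ and $\cG_H$, states exactly that the set of SCMs over $\*V_H$ agreeing with $\cM_H^\star$ on $\cL_i$ but also on $\cL_j$ (for $j > i$) has measure zero, which yields the claim.

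The main obstacle is the equivalence in the second step: verifying that $\cL_k$-$\tau$ consistency is genuinely the same as exact Layer-$k$ agreement with $\cM_H^\star$ over $\*V_H$, rather than something strictly weaker. This requires checking that the valid-form queries are rich enough to determine the whole Layer-$k$ behavior over $\*V_H$, that the existence of $\cM_H^\star$ (i.e., the AIC, via Prop.~\ref{prop:inv-cond-completeness}) is precisely what makes the target footprint realizable and unambiguous even though several low-level interventions may map to one high-level intervention value, and that surjectivity of $\tau$ together with the cluster bijection cleanly handle the intervention arguments. A secondary bookkeeping task is fixing the measure on the space of high-level SCMs so that ``almost never'' transfers verbatim from the CHT, along with the same degenerate-diagram caveat that makes both statements informal.
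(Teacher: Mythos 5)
Your first half matches the paper's strategy: the paper also routes the argument through the canonical abstraction $\psi_{\tau}(\cM_L)$ produced by Alg.~\ref{alg:map-abstraction} (Prop.~\ref{prop:map-abstraction}) and shows, exactly as in your ``key intermediate step,'' that $\cL_k$-$\tau$ consistency with $\cM_L$ is interchangeable with ordinary $\cL_k$-consistency with that canonical high-level model. The gap is in your final step. The CHT of \citet[Thm.~1]{bareinboim:etal20} does \emph{not} say that, for a fixed reference model, the set of SCMs agreeing with it on $\cL_i$ \emph{and} on $\cL_j$ has measure zero; it says that the set of \emph{reference} SCMs relative to which Layer $j$ collapses to Layer $i$ (i.e., for which every $\cL_i$-consistent model is automatically $\cL_j$-consistent) has measure zero. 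Your reading places the measure on the candidate abstractions $\cM_H$ inside a fixed $\cL_i$-consistency fiber, which is a different (and, under a Lebesgue-type parameterization, typically vacuous) statement, since the fiber itself is already a measure-zero constraint surface; nothing in the CHT controls the relative size of the non-$\cL_j$-consistent models within it.

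Correspondingly, the paper's formal version of Prop.~\ref{prop:abs-cht} puts the ``almost never'' over the low-level models $\cM_L \in \Omega'_L$ (mirroring the CHT's quantification over the true SCM), not over $\cM_H$. Its proof first shows that $\tau$-collapse relative to $\cM_L$ holds iff ordinary collapse holds relative to $\psi_{\tau}(\cM_L) \in \Omega_H$, then invokes the CHT on $\Omega_H$, and finally pulls the measure-zero statement back to $\Omega'_L$; this last transfer is exactly what Lem.~\ref{lem:abs-map-completeness} (surjectivity of $\psi_{\tau}$ onto $\Omega_H$) is for, and it is absent from your proposal. To repair your argument you would need to (i) restate the conclusion with the measure over the reference model (equivalently over $\cM_L$ via a suitable encoding), and (ii) prove the surjectivity of the map $\cM_L \mapsto \psi_{\tau}(\cM_L)$ so that the measure-zero set of ``collapsing'' high-level reference models corresponds to a measure-zero set of low-level models satisfying the AIC.
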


In words, matching across abstractions on lower layers does not guarantee the same will hold for higher layers. This idea is illustrated in Fig.~\ref{fig:abstract-cht}. The left of the figure shows the unobserved true SCM $\cM_L$, which induces distributions from the three layers of the PCH. Observational data from $\cL_1$ is provided, and one may train a high-level model $\hM_H$ (right) such that it is $\cL_1$-$\tau$ consistent with $\cM_L$. However, even if training is perfect, it is not guaranteed that $\hM_H$ is $\cL_2$ or $\cL_3$-$\tau$ consistent with $\cM_L$. That is, queries from the $\cL_2$ or $\cL_3$ distributions of $\hM_H$ are not expected to match the equivalent queries in $\cM_L$ in general. (See further details in App.~\ref{app:proofs}.) 

The consequence of this result is that causal assumptions will be necessary to make progress. In particular, the class of plausible models must be constrained through assumptions about the generating model. Given this necessity, one type of assumption prevalent throughout causal inference literature is the availability of a causal diagram \citep{pearl:95a}, a graphical structure that qualitatively describes the functional relationships between variables in a non-parametric manner. This assumption is weaker than having the entire generating SCM, since it only encodes qualitative information of the functional dependences between exogenous and endogenous variables (as in Def.~\ref{def:cg}) rather than full detail of the generating mechanisms and exogenous distributions. Still, it has been shown that having the causal diagram allows certain inferences across layers, determined through the causal identification problem \citep{pearl:2k, bareinboim:pea16}.

In the context of abstractions however, specifying the causal diagram for the true model $\cM_L$ requires describing the relationships between every low-level variable in $\*V_L$. This may be unrealistic in many practical settings since there are typically too many low-level variables (e.g.~$128 \times 128$ pixels in an image) to expect a description of the relationship between every pair, and many of these relationships may not be well-defined in a causal manner. Instead, it may be more reasonable to specify a causal diagram over $\*V_H$ (or intervariable clusters $\bbC$). The amount of information required is reduced when $|\*V_H| \ll |\*V_L|$, and the causal relationships between variables may be more clear given that the higher-level variables tend to be more explainable. The causal diagram over $\*V_H$ can be viewed as a graphical abstraction of the causal diagram over $\*V_L$. The relationship can be formalized through the concept of cluster causal diagrams (C-DAGs) \citep{anand:etal23}, as described next.

\begin{figure}
    \centering
    \begin{subfigure}[c]{0.49\linewidth}
        \centering
        \begin{tikzpicture}[xscale=1.4, yscale=0.7]
            \tiny
            \node[draw, circle] (R) at (0, 1) {$R$};
            \node[draw, circle] (F) at (-1, 0) {$D$};
            \node[draw, circle] (N1) at (0, 0.2) {$C$};
            \node[draw, circle] (N2) at (0, -0.5) {$F$};
            \node[draw, circle] (N3) at (0, -1.2) {$P$};
            \node[draw, circle] (B) at (1, 0) {$B$};

            \node[draw,densely dotted,color=blue,rounded corners=1mm, fit=(F), line width=0.8pt, inner sep=1.2mm] {};
            \node[draw,densely dotted,color=blue,rounded corners=1mm, fit=(B), line width=0.8pt, inner sep=1.2mm] {};
            \node[draw,densely dotted,color=blue,rounded corners=1mm, fit=(N1) (N2) (N3), line width=0.8pt, inner sep=1.2mm] {};
            
            \path [-{Latex[length=1mm]}] (R) edge (F);
            \path [-{Latex[length=1mm]}] (F) edge (N1);
            \path [-{Latex[length=1mm]}] (F) edge (N2);
            \path [-{Latex[length=1mm]}] (F) edge (N3);
            \path [-{Latex[length=1mm]}] (N1) edge (B);
            \path [-{Latex[length=1mm]}] (N2) edge (B);
            \path [-{Latex[length=1mm]}] (N3) edge (B);
            \path [{Latex[length=1mm]}-{Latex[length=1mm]}, dashed, bend left] (R) edge (B);
            \path [{Latex[length=1mm]}-{Latex[length=1mm]}, dashed, bend left, out=45, in=135] (N1) edge (N2);
            \path [{Latex[length=1mm]}-{Latex[length=1mm]}, dashed, bend left, out=45, in=135] (N1) edge (N3);
            \path [{Latex[length=1mm]}-{Latex[length=1mm]}, dashed, bend left, out=45, in=135] (N2) edge (N3);
        \end{tikzpicture}
    \end{subfigure}%
    \hfill
    \begin{subfigure}[c]{0.49\linewidth}
        \centering
        \begin{tikzpicture}[xscale=1, yscale=1.4]
            \tiny
            \node[draw, circle] (F) at (-1, 0) {$D_H$};
            \node[draw, circle] (N) at (0, 0) {$Z$};
            \node[draw, circle] (B) at (1, 0) {$B_H$};
            
            \path [-{Latex[length=1mm]}] (F) edge (N);
            \path [-{Latex[length=1mm]}] (N) edge (B);
            \path [{Latex[length=1mm]}-{Latex[length=1mm]}, dashed, bend left] (F) edge (B);
        \end{tikzpicture}
    \end{subfigure}
    \caption{The causal diagram $\cG$ over variables $\*V_L$ for the nutrition study in Ex.~\ref{ex:bmi} is on the left. Clusters $\bbC = \{D_H = \{D\}, Z = \{C, F, P\}, B_H = \{B\}\}$ are outlined in blue. The corresponding C-DAG $\cG_{\bbC}$ is on the right.}
    \label{fig:cdag-examples}
\end{figure}
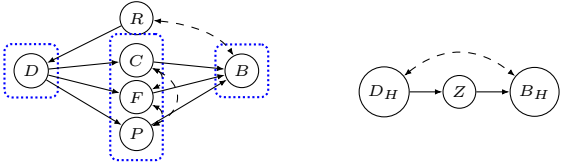

\begin{definition}[Cluster Causal Diagram (C-DAG) {\citep[Def.~1]{anand:etal23}}]
    \label{def:cdag}
    Given a causal diagram $\cG = \langle \*V, \*E \rangle$ and an admissible clustering $\bbC = \{\*C_1, \dots, \*C_k\}$ of $\*V$, construct a graph $\cG_{\bbC} = \langle \bbC, \*E_{\bbC}\rangle$ over $\bbC$ with a set of edges $\*E_{\bbC}$ defined as follows:
    \begin{enumerate}
        \item A directed edge $\*C_i \rightarrow \*C_j$ is in $\*E_{\bbC}$ if there exists some $V_i \in \*C_i$ and $V_j \in \*C_j$ such that $V_i \rightarrow V_j$ is an edge in $\*E$.

        \item A dashed bidirected edge $\*C_i \leftrightarrow \*C_j$ is in $\*E_{\bbC}$ if there exists some $V_i \in \*C_i$ and $V_j \in \*C_j$ such that $V_i \leftrightarrow V_j$ is an edge in $\*E$.
        \hfill $\blacksquare$
    \end{enumerate}
\end{definition}

In words, the nodes of the C-DAG $\cG_{\bbC}$ simply correspond to the clusters of $\bbC$, and edges connect clusters $\*C_i$ and $\*C_j$ if they connect some $V_i \in \*C_i$ and $V_j \in \*C_j$ in the original causal diagram $\cG$. Interestingly, the C-DAG definition aligns with the concept of intervariable clusters, providing a way for encoding constraints in the smaller space of $\*V_H$. Revisiting the nutrition study in Ex.~\ref{ex:bmi}, Fig.~\ref{fig:cdag-examples} shows the corresponding causal diagram $\cG$ (left) and the simpler C-DAG $\cG_{\bbC}$ (right). With the constraints of $\cG_{\bbC}$, we now introduce a notion of identification across abstractions to determine precisely which queries can be inferred.

\begin{definition}[Abstract Identification]
    \label{def:abs-id}
    Let $\tau: \cD_{\*V_H} \rightarrow \cD_{\*V_L}$ be a constructive abstraction function. Consider C-DAG $\cG_{\bbC}$, and let $\bbZ = \{P(\*V_{L[\*z_k]})\}_{k=1}^{\ell}$ be a collection of available interventional (or observational if $\*Z_k = \emptyset$) distributions over $\*V_L$. Let $\Omega_L$ and $\Omega_H$ be the space of SCMs defined over $\*V_L$ and $\*V_H$, respectively, and let $\Omega_L(\cG_{\bbC})$ and $\Omega_H(\cG_{\bbC})$ be their corresponding subsets that induce C-DAG $\cG_{\bbC}$. A query $Q$ is said to be $\tau$-ID from $\cG_{\bbC}$ and $\bbZ$ iff for every $\cM_L \in \Omega_L(\cG_{\bbC}), \cM_H \in \Omega_H(\cG_{\bbC})$ such that $\cM_H$ is $\bbZ$-$\tau$ consistent with $\cM_L$, $\cM_H$ is also $Q$-$\tau$ consistent with $\cM_L$.
    \hfill $\blacksquare$
\end{definition}

\begin{figure}
    \centering
    \begin{subfigure}{\linewidth}
        \centering
        \includegraphics[width=0.68\linewidth]{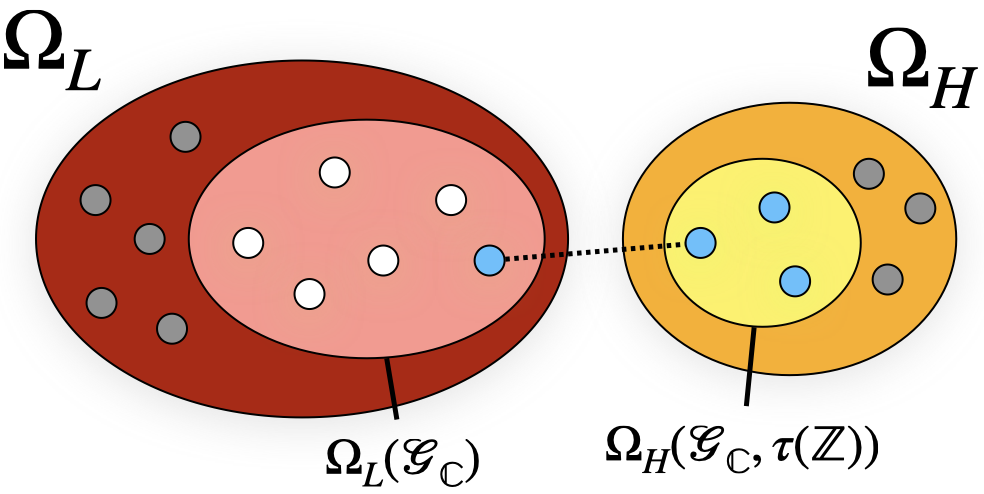}
        \caption{When $Q$ is $\tau$-ID from $\cG_{\bbC}$ and $\bbZ$, all SCMs over $\*V_H$ that induce $\cG_{\bbC}$ and is $\bbZ$-$\tau$ consistent with $\cM_L$ are also $Q$-$\tau$ consistent with $\cM_L$ for any choice of $\cM_L$.}
        \label{fig:tau-id-id}
    \end{subfigure}
    \begin{subfigure}{\linewidth}
        \centering
        \includegraphics[width=0.68\linewidth]{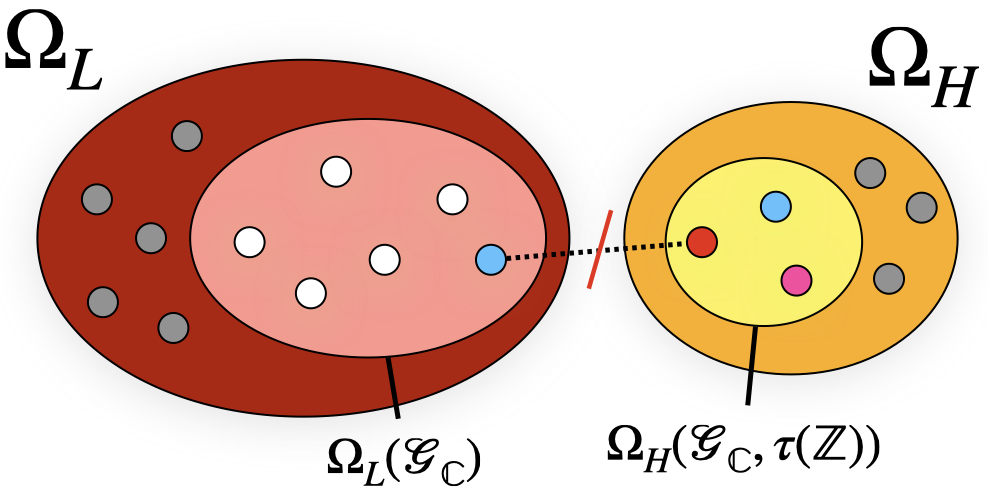}
        \caption{When $Q$ is not $\tau$-ID from $\cG_{\bbC}$ and $\bbZ$, then there exist choices of $\cM_L$ such that some SCM $\cM_H$ over $\*V_H$ that induces $\cG_{\bbC}$ and is $\bbZ$-$\tau$ consistent with $\cM_L$ is not $Q$-$\tau$ consistent with $\cM_L$.}
        \label{fig:tau-id-nonid}
    \end{subfigure}
    \caption{Examples of $\tau$-ID and $\tau$-nonID cases. The space of models over $\*V_L$, $\Omega_L$, is shown in dark red (left), and the subspace that induces C-DAG $\cG_{\bbC}$ is shown in light red. The blue dot within this space is an arbitrary choice of $\cM_L$. The space of models over $\*V_H$, $\Omega_H$, is shown in dark yellow (right), and the subspace that induces the causal graph $\cG_{\bbC}$ and is also $\bbZ$-$\tau$ consistent with $\cM_L$ is shown in light yellow.
    }
    \label{fig:tau-id}
\end{figure}

\begin{figure*}
    \begin{minipage}[c]{0.28\linewidth}
        \centering
        \begin{tabular}{llll|l}
        \hline \hline
        $R$ & $A$ & $B$ & $Y$ & $P$   \\ \hline \hline
        0   & 0   & 0   & 0   & 0.288 \\ \hline
        0   & 0   & 0   & 1   & 0.032 \\ \hline
        0   & 0   & 1   & 0   & 0.072 \\ \hline
        0   & 0   & 1   & 1   & 0.008 \\ \hline
        0   & 1   & 0   & 0   & 0.072 \\ \hline
        0   & 1   & 0   & 1   & 0.008 \\ \hline
        0   & 1   & 1   & 0   & 0.018 \\ \hline
        0   & 1   & 1   & 1   & 0.002 \\ \hline
        1   & 0   & 0   & 0   & 0.072 \\ \hline
        1   & 0   & 0   & 1   & 0.008 \\ \hline
        1   & 0   & 1   & 0   & 0.018 \\ \hline
        1   & 0   & 1   & 1   & 0.002 \\ \hline
        1   & 1   & 0   & 0   & 0.072 \\ \hline
        1   & 1   & 0   & 1   & 0.008 \\ \hline
        1   & 1   & 1   & 0   & 0.032 \\ \hline
        1   & 1   & 1   & 1   & 0.288 \\ \hline
        \end{tabular}
        \subcaption{$P^{\cM_L}(\*V_L)$.}
        \label{fig:drug-ex-pv}
    \end{minipage}
    \hfill
    \begin{minipage}[c]{0.71\linewidth}
        \begin{subfigure}{.6\linewidth}
            \centering
            \begin{tikzpicture}[xscale=1.5, yscale=1]
                \node[draw, circle] (R) at (0, 1.5) {$R$};
                \node[draw, circle] (A) at (-1, 0) {$A$};
                \node[draw, circle] (B) at (-1, -1) {$B$};
                \node[draw, circle] (Y) at (1, 0) {$Y$};
    
                \node[draw,densely dotted,color=blue,rounded corners=2mm, fit=(A) (B), line width=1pt, inner sep=1.2mm] {};
                \node[draw,densely dotted,color=blue,rounded corners=2mm, fit=(Y), line width=1pt, inner sep=1.2mm] {};
                
                \path [-{Latex[length=2mm]}] (R) edge (A);
                \path [-{Latex[length=2mm]}] (R) edge (B);
                \path [-{Latex[length=2mm]}] (A) edge (B);
                \path [-{Latex[length=2mm]}] (A) edge (Y);
                \path [-{Latex[length=2mm]}] (B) edge (Y);
                \path [{Latex[length=2mm]}-{Latex[length=2mm]}, dashed, bend left] (R) edge (Y);
            \end{tikzpicture}
            \subcaption{Causal diagram $\cG$. Intervariable clusters $\bbC$ are grouped by the dashed-blue lines.}
            \label{fig:ex-drug-dag}
        \end{subfigure}
        \hfill
        \begin{subfigure}{.39\linewidth}
            \centering
            \begin{tikzpicture}[xscale=1, yscale=1.5]
                \node[draw, circle] (X) at (-1, 0) {$X$};
                \node[draw, circle] (Y) at (1, 0) {$Y$};
                
                \path [-{Latex[length=2mm]}] (X) edge (Y);
                \path [{Latex[length=2mm]}-{Latex[length=2mm]}, dashed, bend left] (X) edge (Y);
            \end{tikzpicture}
            \subcaption{C-DAG $\cG_{\bbC}$.}
            \label{fig:ex-drug-cdag}
        \end{subfigure}
        \begin{subfigure}{\linewidth}
            \centering
            \vspace{0.4cm}
            \begin{tabular}{l|l|l}
            \hline \hline
            $\*V_H$ term      & Corresponding $\*V_L$ term                                               & $P$   \\ \hline \hline
            $P(X = 0, Y = 0)$ & $\sum_{r} \sum_{a, b : (a, b) \neq (1, 1)}P(R = r, A = a, B = b, Y = 0)$ & 0.594 \\ \hline
            $P(X = 0, Y = 1)$ & $\sum_{r} \sum_{a, b : (a, b) \neq (1, 1)}P(R = r, A = a, B = b, Y = 1)$ & 0.066 \\ \hline
            $P(X = 1, Y = 0)$ & $\sum_{r}P(R = r, A = 1, B = 1, Y = 0)$                                  & 0.05               \\ \hline
            $P(X = 1, Y = 1)$ & $\sum_{r}P(R = r, A = 1, B = 1, Y = 1)$                                  & 0.29  \\ \hline
            \end{tabular}
            \caption{$\tau(P^{\cM_L}(\*V_L))$.}
            \label{fig:drug-ex-taupv}
        \end{subfigure}
    \end{minipage}
    \caption{Dataset quantities computed from $\cM_L$ and graphs for Example \ref{ex:drug-nonid}.}
\end{figure*}

This definition establishes a notion of identification between two different spaces of SCMs, $\Omega_L$ and $\Omega_H$, that are connected through $\tau$. The concept of $\tau$-ID is illustrated in Fig.~\ref{fig:tau-id}. In words, $\tau$-identifiability implies that in every pair of SCMs $\cM_L$ over $\*V_L$ and $\cM_H$ over $\*V_H$, ``matching'' in graph $\cG_{\bbC}$ and data $\bbZ$ implies a match in query $Q$. Since $\cM_L$ and $\cM_H$ are defined over different spaces of variables, the term ``match'' has some nuance. Specifically, ``matching'' in $\cG_{\bbC}$ implies that $\cG_{\bbC}$ is a C-DAG for $\cM_L$ and is a causal diagram for $\cM_H$. ``Matching'' in $\bbZ$ (resp.~$Q$) implies that $\cM_H$ is $\bbZ$-$\tau$ consistent (resp.~$Q$-$\tau$ consistent) with $\cM_L$. As shown in Fig.~\ref{fig:tau-id-id}, $\tau$-ID implies that for any choice of $\cM_L$ over $\*V_L$ that induces C-DAG $\cG_{\bbC}$ (blue dot in light red space), all SCMs $\cM_H$ over $\*V_H$ that induce $\cG_{\bbC}$ and are $\bbZ$-$\tau$ consistent with $\cM_L$ (light yellow space) are also $Q$-$\tau$ consistent with $\cM_L$. As a consequence, $\tau(Q)$ can be evaluated from any of these choices of $\cM_H$ to compute $Q$.

On the other hand, $\tau$-nonidentifiability implies that there exist a pair of models $\cM_L$ over $\*V_L$ and $\cM_H$ over $\*V_H$ such that $\cM_L$ and $\cM_H$ match in both $\cG_{\bbC}$ and $\bbZ$ yet still do not match in $Q$. As shown in Fig.~\ref{fig:tau-id-nonid}, $\tau$-non-ID implies that there exists some $\cM_L$ over $\*V_L$ that induces C-DAG $\cG_{\bbC}$ (blue dot in light red space) such that there is some $\cM_H$ over $\*V_H$ that induces $\cG_{\bbC}$ and is $\bbZ$-$\tau$ consistent with $\cM_L$ (light yellow space) but is not $Q$-$\tau$ consistent with $\cM_L$. This means that despite the constraints added through the C-DAG $\cG_{\bbC}$, there are still queries that cannot be inferred across $\tau$ due to nonidentifiability. This is more acute when there is a large amount of unobserved confounding. Consider the following example where this is the case.

\begin{figure*}
    \begin{subfigure}{.25\linewidth}
        \centering
        \begin{tikzpicture}[xscale=1, yscale=1]
            \node[draw, circle] (R) at (0, 1) {$R$};
            \node[draw, circle] (X) at (-1, 0) {$X$};
            \node[draw, circle] (Y) at (1, 0) {$Y$};
            
            \path [-{Latex[length=2mm]}] (R) edge (X);
            \path [-{Latex[length=2mm]}] (X) edge (Y);
            \path [{Latex[length=2mm]}-{Latex[length=2mm]}, dashed, bend left] (R) edge (Y);
        \end{tikzpicture}
        \caption{C-DAG $\cG_{\bbC}$.}
        \label{fig:ex-drug-newcdag}
    \end{subfigure}
    \begin{subfigure}{.74\linewidth}
        \begin{tabular}{l|l|l}
        \hline \hline
        $\*V_H$ term             & Corresponding $\*V_L$ term                                      & $P$   \\ \hline \hline
        $P(R = 0, X = 0, Y = 0)$ & $\sum_{a, b : (a, b) \neq (1, 1)}P(R = 0, A = a, B = b, Y = 0)$ & 0.432 \\ \hline
        $P(R = 0, X = 0, Y = 1)$ & $\sum_{a, b : (a, b) \neq (1, 1)}P(R = 0, A = a, B = b, Y = 1)$ & 0.048 \\ \hline
        $P(R = 0, X = 1, Y = 0)$ & $P(R = 0, A = 1, B = 1, Y = 0)$                                 & 0.018 \\ \hline
        $P(R = 0, X = 1, Y = 1)$ & $P(R = 0, A = 1, B = 1, Y = 1)$                                 & 0.002 \\ \hline
        $P(R = 1, X = 0, Y = 0)$ & $\sum_{a, b : (a, b) \neq (1, 1)}P(R = 1, A = a, B = b, Y = 0)$ & 0.162 \\ \hline
        $P(R = 1, X = 0, Y = 1)$ & $\sum_{a, b : (a, b) \neq (1, 1)}P(R = 1, A = a, B = b, Y = 1)$ & 0.018 \\ \hline
        $P(R = 1, X = 1, Y = 0)$ & $P(R = 1, A = 1, B = 1, Y = 0)$                                 & 0.032 \\ \hline
        $P(R = 1, X = 1, Y = 1)$ & $P(R = 1, A = 1, B = 1, Y = 1)$                                 & 0.288 \\ \hline
        \end{tabular}
        \subcaption{$\tau(P^{\cM_L}(\*V_L))$.}
        \label{fig:ex-drug-newtaupv}
    \end{subfigure}
    \caption{Updated items given the new choice of $\bbC$ in Example \ref{ex:drug-id}}
\end{figure*}

\begin{example}[Example \ref{ex:drug-tau} continued]
    \label{ex:drug-nonid}
    The data scientist team may be interested in computing the causal effect of taking both stages of the drug on the recovery rate of the disease, $Q = P(Y_{A = 1, B = 1} = 1)$. However, $\cM_L$ is not observed, and instead, the observational data from $P(\*V_L)$ ($\bbZ = \{P(\*V_L)\}$) and the C-DAG $\cG_{\bbC}$ from Fig.~\ref{fig:ex-drug-cdag} are given. Is $Q$ $\tau$-ID from $\bbZ$ and $\cG_{\bbC}$? It turns out the answer is no. To witness, consider the following pair of models.
    {
    \allowdisplaybreaks
    \begin{equation}
    \cM_1 = \!
    \begin{cases}
        \*U_1 = \{U_{XY}, U_{Y0}, U_{Y1}\} \\
        \*V_H = \{X, Y\} \\
        \cF_1 =
        \begin{cases}
            & X \gets f^1_X(u_{XY}) = u_{XY} \\
            & Y \gets f^1_Y(x, u_{XY}, u_{Y0}, u_{Y1}) \\
            &= 
            \begin{cases}
                x \vee u_{Y0} & u_{XY} = 0 \\
                u_{Y1} & u_{XY} = 1
            \end{cases}
        \end{cases} \\
        P(\*U_1) =
        \begin{cases}
            P(U_{XY} = 1) = 0.34 \\
            P(U_{Y0} = 1) = 0.1 \\
            P(U_{Y1} = 1) = 0.852941
        \end{cases}
    \end{cases}
    \end{equation}

    \begin{equation}
    \cM_2 = \!
    \begin{cases}
        \*U_2 = \{U_{XY}, U_{Y0}, U_{Y1}\} \\
        \*V_H = \{X, Y\} \\
        \cF_2 =
        \begin{cases}
            & X \gets f^2_X(u_{XY}) = u_{XY} \\
            & Y \gets f^2_Y(x, u_{XY}, u_{Y0}, u_{Y1}) \\
            &= 
            \begin{cases}
                \neg x \wedge u_{Y0} & u_{XY} = 0 \\
                u_{Y1} & u_{XY} = 1
            \end{cases}
        \end{cases} \\
        P(\*U_2) =
        \begin{cases}
            P(U_{XY} = 1) = 0.34 \\
            P(U_{Y0} = 1) = 0.1 \\
            P(U_{Y1} = 1) = 0.852941
        \end{cases}
    \end{cases}
    \end{equation}
    }

    Every value of $P^{\cM_L}(\*V_L)$ can be computed using the table in Fig.~\ref{fig:drug-ex-u-tab}, producing the table in Fig.~\ref{fig:drug-ex-pv}. The corresponding values of $\tau(P^{\cM_L}(\*V_L))$ can then be computed as shown in Fig.~\ref{fig:drug-ex-taupv} via Eq.~\ref{eq:q-tau-consistency}. One can verify that both $P^{\cM_1}(\*V_H)$ and $P^{\cM_2}(\*V_H)$ match the values in Fig.~\ref{fig:drug-ex-taupv}, implying that both $\cM_1$ and $\cM_2$ are $P(\*V_L)$-$\tau$ consistent with $\cM_L$. Moreover, $\cM_1$ and $\cM_2$ also induce $\cG_{\bbC}$ from Fig.~\ref{fig:ex-drug-cdag}: $f_Y$ takes $X$ as input, and $f_X$ and $f_Y$ share $U_{XY}$ as a confounding variable. Also, $f_X$ does not contain $Y$ as an input.

    However, computing $\tau(Q) = P(Y_{X = 1} = 1)$ leads to two different answers, i.e.:
    \begin{align*}
        & P^{\cM_1}(Y_{X = 1} = 1) \\
        &= P(U_{XY} = 0) + P(U_{XY} = 1, U_{Y1} = 1) \\
        &= 0.66 + (0.34)(0.852941) = 0.95, \stepcounter{equation}\tag{\theequation}
    \end{align*}
    and
    \begin{align*}
        & P^{\cM_2}(Y_{X = 1} = 1) \\
        &= P(U_{XY} = 1, U_{Y1} = 1) \\
        &= (0.34)(0.852941) = 0.29. \stepcounter{equation}\tag{\theequation}
    \end{align*}
    These values are not only different, but also neither are equal to the true value $P^{\cM_L}(Y_{A = 1, B = 1} = 1) = 0.5$, as computed in Eq.~\ref{eq:ex-q-tau-cons-l2q}. A scientist using model $\cM_1$ may conclude that the treatment is extremely effective, while a scientist using model $\cM_2$ may conclude the opposite: the treatment is not only ineffective, it is even harmful. The query is not $\tau$-ID in this case, and no further inferences of this query should be made at this stage.
    \hfill $\blacksquare$
\end{example}

Now consider the following $\tau$-ID example.

\begin{example}[Example \ref{ex:drug-tau} continued]
    \label{ex:drug-id}
    The data scientist team is studying causal inference and notes that Example \ref{ex:drug-nonid} showed that the query $Q = P(Y_{A = 1, B = 1} = 1)$ is non-ID from the available data. However, suppose instead, a different set of clusters is used that includes $R$, i.e.~$\bbC = \{\*C_1 = \{A, B\}, \*C_2 = \{Y\}, \*C_3 = \{R\}\}$. The domain of $R$ would remain the same, so $\tau_{\*C_3}(R) = R$. Constructing $\tau$ with these clusters, the high level variables are revised to $\*V_H = \{R, X, Y\}$, and $\tau(P^{\cM_L}(\*V_L))$ can be computed as shown in the table in Fig.~\ref{fig:ex-drug-newtaupv}. The C-DAG $\cG_{\bbC}$ would be updated to the one in Fig.~\ref{fig:ex-drug-newcdag}.

    It turns out that now, $Q = P(Y_{A = 1, B = 1} = 1)$ is $\tau$-ID, and this can be shown by applying the backdoor-criterion \citep[Thm.~3.3.2]{pearl:2k}, adjusting over the variable $R$ as follows.
    \begin{align*}
        &P(Y_{A = 1, B = 1} = 1) \\
        &= \sum_{r}P(Y = 1 \mid r, A = 1, B = 1)P(r) \\
        &= \sum_{r}P(Y = 1 \mid r, X = 1)P(r). \stepcounter{equation}\tag{\theequation}
    \end{align*}
    
    Then $\sum_{r}P(Y = 1 \mid r, X = 1)P(r)$ could immediately be computed from $P(\*V_H)$.

    As an example, consider the following choice of $\cM_H$, designed in a systematic way to match $\tau(P^{\cM_L}(\*V_L))$.
    {
    \allowdisplaybreaks
    \begin{align}
        \*U_H &= \{U_{RY}, U_{X0}, U_{X1}, U_{Y0}, U_{Y1}, U_{Y2}, U_{Y3}\} \\
        \*V_H &= \{R, X, Y\} \\
        \cF_H &=
        \begin{cases}
            & R \gets f_R(u_{RY}) = u_{RY} \\
            & X \gets f_X(r, u_{X0}, u_{X1}) =
            \begin{cases}
                u_{X0} & r = 0 \\
                u_{X1} & r = 1
            \end{cases} \\
            & Y \gets f_Y(x, u_{RY}, u_{Y0}, u_{Y1}, u_{Y2}, u_{Y3}) \\
            &=
            \begin{cases}
                u_{Y0} & x = 0, u_{RY} = 0 \\
                u_{Y1} & x = 0, u_{RY} = 1 \\
                u_{Y2} & x = 1, u_{RY} = 0 \\
                u_{Y3} & x = 1, u_{RY} = 1
            \end{cases}
        \end{cases} \\
        P(\*U_H) &=
        \begin{cases}
            P(U_{RY} = 1) &= 0.5 \\
            P(U_{X0} = 1) &= 0.04 \\
            P(U_{X1} = 1) &= 0.64 \\
            P(U_{Y0} = 1) &= 0.1 \\
            P(U_{Y1} = 1) &= 0.1 \\
            P(U_{Y2} = 1) &= 0.1 \\
            P(U_{Y3} = 1) &= 0.9
        \end{cases}
    \end{align}
    }
    One can verify that, indeed, $P^{\cM_H}(\*V_H)$ matches $\tau(P^{\cM_L}(\*V_L))$ from Fig.~\ref{fig:ex-drug-newtaupv}. It is also clear that $\cM_H$ induces $\cG_{\bbC}$. Further, we can compute the query to find that
    \begin{align*}
        & P^{\cM_H}(Y_{X = 1} = 1) \\
        &= P(U_{RY} = 0, U_{Y2} = 1) + P(U_{RY} = 1, U_{Y3} = 1) \\
        &= (0.5)(0.1) + (0.5)(0.9) \\
        &= 0.5, \stepcounter{equation}\tag{\theequation}
    \end{align*}
    which matches the true value $P^{\cM_L}(Y_{A = 1, B = 1} = 1) = 0.5$ from Eq.~\ref{eq:ex-q-tau-cons-l2q}.
    
    \hfill $\blacksquare$
\end{example}

The definition of $\tau$-ID provides rigorous semantics to answer whether a query can be inferred across abstractions. The next step is to establish an approach to determine $\tau$-ID when given the available data and graph. For this purpose, one fundamental result is that the notion of $\tau$-ID is actually equivalent to classical identification in the higher level space, as shown by the following proposition.

\begin{restatable}[Dual Abstract ID]{theorem}{dualabsid}
    \label{thm:dual-abs-id}
    Consider a counterfactual query $Q$ over $\*V_L$, a constructive abstraction function $\tau$ w.r.t.~clusters $\bbC$ and $\bbD$, a C-DAG $\cG_{\bbC}$, and data $\bbZ$ from $\*V_L$. $Q$ is $\tau$-ID from $\cG_{\bbC}$ and $\bbZ$ if and only if $\tau(Q)$ is ID from $\cG_{\bbC}$ and $\tau(\bbZ)$.
    \hfill $\blacksquare$
\end{restatable}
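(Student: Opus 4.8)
The plan is to prove the two directions by moving models back and forth between $\Omega_L$ and $\Omega_H$ with two transport operations: Alg.~\ref{alg:map-abstraction}, which by Prop.~\ref{prop:map-abstraction} turns any $\cM_L$ into an $\cM_H^{*}$ that is $\cL_3$-$\tau$ consistent with it (and, as one checks directly from the construction, induces $\cG_{\bbC}$); and a reverse \emph{lifting} construction that turns any high-level $\cM_H$ into a low-level $\cM_L\in\Omega_L(\cG_{\bbC})$ with which $\cM_H$ is $\cL_3$-$\tau$ consistent. Throughout, recall that classical identifiability of a (counterfactual) query from data means: any two SCMs inducing $\cG_{\bbC}$ that agree on the input distributions also agree on the query; and that $Q$-$\tau$ consistency of $\cM_H$ with $\cM_L$ is the equality $Q(\cM_L)=\tau(Q)(\cM_H)$ of Eq.~\ref{eq:q-tau-consistency}. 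We assume, as elsewhere in the paper, that $\bbZ$ contains only interventional/observational distributions whose outcome and intervention sets are unions of clusters of $\bbC$, so that $\tau(\bbZ)$ and $\tau(Q)$ are well-defined, and that all $\cM_L\in\Omega_L(\cG_{\bbC})$ satisfy the AIC (Def.~\ref{def:invariance-condition}).

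($\Leftarrow$) Assume $\tau(Q)$ is ID from $\cG_{\bbC}$ and $\tau(\bbZ)$. Fix any $\cM_L\in\Omega_L(\cG_{\bbC})$ and any $\cM_H\in\Omega_H(\cG_{\bbC})$ that is $\bbZ$-$\tau$ consistent with $\cM_L$; I must show $\cM_H$ is $Q$-$\tau$ consistent with $\cM_L$. Run Alg.~\ref{alg:map-abstraction} on $\cM_L$ to get $\cM_H^{*}$. By Prop.~\ref{prop:map-abstraction}, $\cM_H^{*}$ is $\cL_3$-$\tau$ consistent with $\cM_L$, hence $\bbZ$-$\tau$ consistent and $Q$-$\tau$ consistent, and it induces $\cG_{\bbC}$ (the arguments of each $f^H_i$ reduce, by the AIC, to the $\tau$-images of parent clusters, and $\*U_H=\*U_L$ preserves the confounding structure). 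Since both $\cM_H$ and $\cM_H^{*}$ reproduce $\bbZ(\cM_L)$ through $\tau$, they assign equal values to every atom of every distribution in $\tau(\bbZ)$, i.e.\ they agree on $\tau(\bbZ)$, and both lie in $\Omega_H(\cG_{\bbC})$. Identifiability of $\tau(Q)$ gives $\tau(Q)(\cM_H)=\tau(Q)(\cM_H^{*})$, and $\cM_H^{*}$ being $Q$-$\tau$ consistent gives $\tau(Q)(\cM_H^{*})=Q(\cM_L)$. Hence $\tau(Q)(\cM_H)=Q(\cM_L)$, as required.

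($\Rightarrow$) Contrapositive. Suppose $\tau(Q)$ is not ID from $\cG_{\bbC}$ and $\tau(\bbZ)$: there exist $\cM_H^{(1)},\cM_H^{(2)}\in\Omega_H(\cG_{\bbC})$ with $\tau(\bbZ)(\cM_H^{(1)})=\tau(\bbZ)(\cM_H^{(2)})$ but $\tau(Q)(\cM_H^{(1)})\neq\tau(Q)(\cM_H^{(2)})$. Lift $\cM_H^{(1)}$ to a low-level model $\cM_L$ as follows: for each cluster $\*C_i$ pick a right inverse $s_i:\cD_{V_{H,i}}\to\cD_{\*C_i}$ of the surjection $\tau_{\*C_i}$ (a representative of each intravariable cell); set $\*U_L:=\*U_H^{(1)}$, $P(\*U_L):=P(\*U_H^{(1)})$, assign to every $V\in\*C_i$ the same exogenous parents that $\cM_H^{(1)}$ gives $V_{H,i}$, and let $f^L_V$ map the values of each parent cluster through $\tau$, apply $f^{H,(1)}_{V_{H,i}}$ to obtain some $v_{H,i}$, and output the $V$-coordinate of $s_i(v_{H,i})$. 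One checks: (i) the induced diagram of $\cM_L$ has C-DAG exactly $\cG_{\bbC}$ and $\bbC$ is admissible w.r.t.\ $\cM_L$ (inter-cluster directed edges appear precisely for parent clusters of $\cG_{\bbC}$, bidirected edges match because $\*U_L=\*U_H^{(1)}$, and the cluster order of $\cG_{\bbC}$ witnesses admissibility), so $\cM_L\in\Omega_L(\cG_{\bbC})$; and (ii) running Alg.~\ref{alg:map-abstraction} on $\cM_L$ returns, up to $\cL_3$-equivalence, $\cM_H^{(1)}$ itself, since $f^H_i=\tau\bigl(s_i(f^{H,(1)}_{V_{H,i}}(\cdot))\bigr)=f^{H,(1)}_{V_{H,i}}(\cdot)$ as $\tau_{\*C_i}\circ s_i=\mathrm{id}$ — hence by Prop.~\ref{prop:map-abstraction}, $\cM_H^{(1)}$ is $\cL_3$-$\tau$ consistent with $\cM_L$. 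From (ii), $\cM_H^{(1)}$ is $\bbZ$-$\tau$ and $Q$-$\tau$ consistent with $\cM_L$, so $Q(\cM_L)=\tau(Q)(\cM_H^{(1)})$ and $\bbZ(\cM_L)=\tau(\bbZ)(\cM_H^{(1)})=\tau(\bbZ)(\cM_H^{(2)})$; thus $\cM_H^{(2)}$ is also $\bbZ$-$\tau$ consistent with $\cM_L$. But $Q(\cM_L)=\tau(Q)(\cM_H^{(1)})\neq\tau(Q)(\cM_H^{(2)})$, so $\cM_H^{(2)}$ is not $Q$-$\tau$ consistent with $\cM_L$, and $(\cM_L,\cM_H^{(2)})$ witnesses that $Q$ is not $\tau$-ID from $\cG_{\bbC}$ and $\bbZ$.

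\textbf{Main obstacle.} The $(\Leftarrow)$ direction is bookkeeping once Prop.~\ref{prop:map-abstraction} is in hand. The real work is the lifting construction in $(\Rightarrow)$: verifying that the lift $\cM_L$ is a legitimate element of $\Omega_L(\cG_{\bbC})$ — in particular that its induced causal diagram has $\cG_{\bbC}$ as its C-DAG, with directed vs.\ bidirected edges, ``ignored'' function arguments, and admissibility all handled carefully — and that $\cM_H^{(1)}$ is exactly $\cL_3$-$\tau$ consistent with it (for which tying the construction to Alg.~\ref{alg:map-abstraction} via $\tau_{\*C_i}\circ s_i=\mathrm{id}$ is the cleanest route, but the counterfactual identities still need to be checked as in the proof of Prop.~\ref{prop:map-abstraction}). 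A secondary point is pinning down the semantics of ``$\tau(Q)$ (a counterfactual query) is ID from data $\tau(\bbZ)$'' and confirming that ``agreeing on $\tau(\bbZ)$'' coincides with equality of the atomic quantities appearing in Def.~\ref{def:q-tau-consistency}.
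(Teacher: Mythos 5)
Your proposal is correct and follows essentially the same route as the paper: the $(\Leftarrow)$ direction pins the value of $\tau(Q)$ to $Q(\cM_L)$ via Alg.~\ref{alg:map-abstraction} and Prop.~\ref{prop:map-abstraction}, exactly as the paper's proof does. Your explicit lifting construction in the $(\Rightarrow)$ direction is precisely the content of the paper's Lemma~\ref{lem:abs-map-completeness} (surjectivity of $\psi_{\tau}$), which the paper proves in the appendix for the Abstract CHT and relies on only implicitly in its terser proof of this theorem, so your write-up is, if anything, more explicit on that step.
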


In words, $\tau$-ID and classical ID on the high level space are equivalent. This is a powerful result since it implies that inferences can be made about the low level space by using existing results in the high level space. Our goal is to learn a higher level SCM $\cM_H$ to make inferences about $\cM_L$, and we build on the machinery of Neural Causal Models (NCMs) \citep{xia:etal21} toward this goal. NCMs allow one to take the graph $\cG_{\bbC}$ as an inductive bias (a $\cG_{\bbC}$-NCM as described in Def.~\ref{def:gncm}) and leverage gradient-based methods to fit any SCM within the constrained space. Indeed, identification in NCMs can be shown to be equivalent to classical identification when considering models of the same granularity \citep[Thm.~3]{xia:etal23}. When combined with Thm.~\ref{thm:dual-abs-id}, this implies the following result.

\begin{restatable}[Abstract ID with NCMs]{corollary}{absidncm}
    \label{corol:abs-id-ncm}
    $Q$ is $\tau$-ID from $\cG_{\bbC}$ and $\bbZ$ iff $\tau(Q)$ is Neural-ID from $\widehat{\Omega}(\cG_{\bbC})$ and $\tau(\bbZ)$. Moreover, if it is ID, then $Q$ can be computed by $\tau(Q)$ from any $\cG_{\bbC}$-NCM $\widehat{M}$ that is $\tau(\bbZ)$-consistent.
    \hfill $\blacksquare$
\end{restatable}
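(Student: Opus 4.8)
The plan is to derive the corollary by composing two equivalences that are already available: the Dual Abstract ID theorem (Thm.~\ref{thm:dual-abs-id}) and the equivalence between classical identification and Neural-ID over models of matching granularity \citep[Thm.~3]{xia:etal23}. For the first ``iff'', I would chain the biconditionals: $Q$ is $\tau$-ID from $\cG_{\bbC}$ and $\bbZ$ iff $\tau(Q)$ is ID from $\cG_{\bbC}$ and $\tau(\bbZ)$, by Thm.~\ref{thm:dual-abs-id}; and $\tau(Q)$ is ID from $\cG_{\bbC}$ and $\tau(\bbZ)$ iff $\tau(Q)$ is Neural-ID from $\widehat{\Omega}(\cG_{\bbC})$ and $\tau(\bbZ)$, by \citep[Thm.~3]{xia:etal23}, noting that the $\cG_{\bbC}$-NCMs in $\widehat{\Omega}(\cG_{\bbC})$ and the SCMs quantified over in the classical ID statement both live over the same variable set $\*V_H$ (the nodes of $\cG_{\bbC}$ are the clusters $\bbC$, which are in bijection with $\*V_H$), so the two notions are ``of the same granularity'' as required by that theorem. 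Composing gives the first claim.

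For the ``moreover'' part I would argue directly from the definition of $\tau$-ID (Def.~\ref{def:abs-id}). Assume $Q$ is $\tau$-ID from $\cG_{\bbC}$ and $\bbZ$, and let $\widehat{M}$ be any $\cG_{\bbC}$-NCM that is $\tau(\bbZ)$-consistent. The steps: (i) since a $\cG_{\bbC}$-NCM is by Def.~\ref{def:gncm} an SCM over $\*V_H$ that induces $\cG_{\bbC}$, we have $\widehat{\Omega}(\cG_{\bbC}) \subseteq \Omega_H(\cG_{\bbC})$, so $\widehat{M} \in \Omega_H(\cG_{\bbC})$; (ii) $\cG_{\bbC}$ is by construction a C-DAG of $\cM_L$ (Def.~\ref{def:cdag}), so $\cM_L \in \Omega_L(\cG_{\bbC})$; (iii) unwinding Def.~\ref{def:q-tau-consistency}, each interventional distribution $P(\*V_{L[\*z_k]}) \in \bbZ$ decomposes into valid low-level quantities $Q_{\*v_H} = \sum_{\*v_L : \tau(\*v_L)=\*v_H} P(\*V_{L[\*z_k]} = \*v_L)$ with $\tau(Q_{\*v_H}) = P(\*V_{H[\tau(\*z_k)]} = \*v_H)$, so the statement ``$\bbZ(\widehat{M}) = \tau(\bbZ) = \tau(\bbZ(\cM_L))$'' is exactly ``$\widehat{M}$ is $Q_{\*v_H}$-$\tau$ consistent with $\cM_L$ for all $k$ and all $\*v_H$'', i.e.\ $\widehat{M}$ is $\bbZ$-$\tau$ consistent with $\cM_L$; (iv) by Def.~\ref{def:abs-id} applied to this pair $(\cM_L,\widehat{M})$, $\widehat{M}$ is $Q$-$\tau$ consistent with $\cM_L$; (v) by Def.~\ref{def:q-tau-consistency}, the value of $Q$ induced by $\cM_L$ equals the value of $\tau(Q)$ evaluated on $\widehat{M}$ — which is the desired conclusion. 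To see the statement is non-vacuous, existence of at least one such $\widehat{M}$ follows from the construction of a high-level SCM in $\Omega_H(\cG_{\bbC})$ that is $\bbZ$-$\tau$ consistent with $\cM_L$ (Alg.~\ref{alg:map-abstraction} / Prop.~\ref{prop:inv-cond-completeness}, under the AIC) together with the expressiveness of NCMs \citep{xia:etal21}, which lets a $\cG_{\bbC}$-NCM match that SCM on $\bbZ$.

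The one place that needs genuine care is the first equivalence, and both directions of it are already isolated in \citep[Thm.~3]{xia:etal23}: the ``$\Leftarrow$'' direction needs that restricting from all SCMs over $\*V_H$ to the NCM subclass cannot turn a non-identifiable query into an identifiable one, and the ``$\Rightarrow$'' direction needs that the NCM subclass is expressive enough to realize any counterexample pair witnessing non-identifiability over $\Omega_H(\cG_{\bbC})$; neither requires a new argument here. A secondary bookkeeping point is that $Q$ is a counterfactual ($\cL_3$) query while $\bbZ$ is interventional, but Thm.~\ref{thm:dual-abs-id} is already stated for counterfactual $Q$ and the NCM identification machinery of \citet{xia:etal23} likewise covers $\cL_3$, so the composition goes through verbatim. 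Beyond these, the proof is a short composition of the two cited results with the definitions of $\tau$-ID and $Q$-$\tau$ consistency, so I do not anticipate any substantive obstacle.
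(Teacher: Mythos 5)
Your proof is correct, and for the first biconditional it is exactly the paper's route: the paper's proof of Corollary~\ref{corol:abs-id-ncm} is a one-line composition of Thm.~\ref{thm:dual-abs-id} with Fact~\ref{fact:ncm-ctfid-equivalence} (\citep[Thm.~3]{xia:etal23}), which is precisely your chaining of the two equivalences over the common variable set $\*V_H$. Where you diverge is the ``moreover'' clause: the paper obtains it by additionally citing Fact~\ref{fact:neur-op-id} (the operational/mutilation result, \citep[Corol.~1]{xia:etal23}), which is stated directly for $\cG$-constrained NCMs, whereas you argue it from first principles by instantiating Def.~\ref{def:abs-id} with the pair $(\cM_L,\widehat{M})$ after translating $\tau(\bbZ)$-consistency into $\bbZ$-$\tau$ consistency via Def.~\ref{def:q-tau-consistency}. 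Your route is more self-contained (no appeal to the operational NCM fact, and your non-vacuity remark via Prop.~\ref{prop:map-abstraction} plus NCM expressiveness is a nice touch), but it leans on the identification $\widehat{\Omega}(\cG_{\bbC})\subseteq\Omega_H(\cG_{\bbC})$ in step (i): strictly, a $\cG_{\bbC}$-NCM is \emph{constrained by} $\cG_{\bbC}$ but may induce a proper subgraph of it (a network can ignore an input), so it need not literally ``induce'' $\cG_{\bbC}$ as Def.~\ref{def:abs-id} is phrased. This is the standard technicality that the NCM machinery of \citet{xia:etal23} already absorbs, and the paper's citation of Fact~\ref{fact:neur-op-id} sidesteps it by never leaving the NCM class; alternatively, you could close it within your own argument by noting that neural identifiability (your first claim) forces all $\tau(\bbZ)$-consistent $\cG_{\bbC}$-NCMs to agree on $\tau(Q)$, and at least one of them attains the value $Q(\cM_L)$, so all do. Either patch is minor; the substance of your proof matches the paper's.
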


In words, determining $\tau$-ID is equivalent to determining neural identification (identification in the space of NCMs) on the higher level space. Further, to evaluate $Q$ in identifiable cases, $\tau(Q)$ can be queried from any $\cG_{\bbC}$-NCM $\hM$ that is $\tau(\bbZ)$-consistent. Corol.~\ref{corol:abs-id-ncm} implies that we can perform causal identification and estimation across abstractions using the NeuralID algorithm \citep[Alg.~1]{xia:etal23} on the high level space. This procedure is shown in Alg.~\ref{alg:ncm-solve-absid}. First, $\tau$ is constructed as described in Def.~\ref{def:tau} given the clusters. Then, a $\cG_{\bbC}$-NCM is constructed over high-level variables $\*V_H$. Two parameterizations of the NCM are created. Both are optimized to fit the transformed data $\tau(\bbZ)$, but one is optimized to maximize the transformed query $\tau(Q)$ while the other is optimized to minimize it. If both parameterizations return the same result, then it must be the true value of the query; otherwise, the query is not identifiable.

To implement this algorithm in practice, we leverage the GAN-NCM approach introduced in \citet{xia:etal23}; see details in Appendix \ref{app:experiments}. Alg.~\ref{alg:ncm-solve-absid} is sound and complete for solving the abstract identification problem, as shown below.

\begin{restatable}[Soundness and Completeness]{corollary}{absidcomplete}
    \label{corol:completeness}
    Let $\cM_L$ be the low-level SCM, $\bbC$ and $\bbD$ be inter/intravariable clusters of $\*V_L$, $\cG_{\bbC}$ be a C-DAG, $Q$ be a query, and $\widehat{Q}$ be the result from running Alg.~\ref{alg:ncm-solve-absid} with inputs $\bbZ(\cM_L) > 0$, $\bbC$, $\bbD$, $\cG_{\bbC}$, and $Q$. Then, $Q$ is $\tau$-ID from $\cG_{\bbC}$ and $\bbZ$ if and only if $\widehat{Q}$ is not \texttt{FAIL}. Moreover, if $\widehat{Q}$ is not \texttt{FAIL}, then $\widehat{Q} = Q(\cM_L)$.
    \hfill $\blacksquare$
\end{restatable}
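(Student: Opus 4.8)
The plan is to recognize Alg.~\ref{alg:ncm-solve-absid} as the NeuralID procedure of \citet{xia:etal23} applied to the high-level instance $(\cG_{\bbC},\,\tau(\bbZ),\,\tau(Q))$ after a $\tau$-preprocessing step, and then to chain three ingredients: (i) the soundness and completeness of that procedure; (ii) Corol.~\ref{corol:abs-id-ncm}, which already equates ``$\tau(Q)$ is Neural-ID from $\widehat{\Omega}(\cG_{\bbC})$ and $\tau(\bbZ)$'' with ``$Q$ is $\tau$-ID from $\cG_{\bbC}$ and $\bbZ$'' and pins down the value in the identifiable case; and (iii) the observation that $\tau$ transports the positivity hypothesis $\bbZ(\cM_L)>0$ to $\tau(\bbZ(\cM_L))>0$, which is what NeuralID needs.

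First I would unpack Alg.~\ref{alg:ncm-solve-absid}: it constructs $\tau$ from $\bbC,\bbD$ via Def.~\ref{def:tau}, forms $\tau(\bbZ)$ and $\tau(Q)$ (well-defined since $Q$ has the admissible form of Eq.~\ref{eq:q-valid}), instantiates two $\cG_{\bbC}$-NCMs over $\*V_H$, fits both to $\tau(\bbZ)$, maximizes $\tau(Q)$ in one and minimizes it in the other, and returns the common optimum if they agree and \texttt{FAIL} otherwise. This is exactly NeuralID \citep[Alg.~1]{xia:etal23} on graph $\cG_{\bbC}$, data $\tau(\bbZ)$, query $\tau(Q)$, whose soundness/completeness guarantee \citep{xia:etal23} states that, for strictly positive input data, the algorithm returns \texttt{FAIL} iff $\tau(Q)$ is not Neural-ID from $\widehat{\Omega}(\cG_{\bbC})$ and $\tau(\bbZ)$, and otherwise returns the unique value that $\tau(Q)$ takes across all $\cG_{\bbC}$-NCMs that are $\tau(\bbZ)$-consistent. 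The positivity hypothesis is met because $\tau$ is surjective onto $\cD_{\*V_H}$ (each intravariable cluster is nonempty, Def.~\ref{def:var-clusterings}), so each coordinate of $\tau(\bbZ(\cM_L))$ is a sum over a nonempty $\tau$-preimage of strictly positive coordinates of $\bbZ(\cM_L)$ (cf.~Eq.~\ref{eq:q-tau-consistency}), hence positive. Combining the guarantee with Corol.~\ref{corol:abs-id-ncm} yields the biconditional: $\widehat{Q}\neq\texttt{FAIL}$ $\iff$ $\tau(Q)$ is Neural-ID $\iff$ $Q$ is $\tau$-ID from $\cG_{\bbC}$ and $\bbZ$.

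For the value claim, suppose $\widehat{Q}\neq\texttt{FAIL}$. Then $Q$ is $\tau$-ID and, by the cited guarantee, $\widehat{Q}$ equals the value of $\tau(Q)$ in any $\cG_{\bbC}$-NCM that is $\tau(\bbZ(\cM_L))$-consistent; the algorithm exhibits such an NCM, since both fitted parameterizations are $\tau(\bbZ(\cM_L))$-consistent by construction and agree on $\tau(Q)$. By the ``if ID'' clause of Corol.~\ref{corol:abs-id-ncm}, $\tau(Q)$ evaluated at such an NCM equals $Q(\cM_L)$ (this uses that $\cM_L$ induces $\cG_{\bbC}$ and, under the standing AIC assumption via Prop.~\ref{prop:inv-cond-completeness}, that a consistent high-level model for $\cM_L$ exists). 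Hence $\widehat{Q}=Q(\cM_L)$, which completes the proof.

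The main obstacle is not a deep argument but purely definitional bookkeeping: reconciling the several notions of ``consistency'' — a high-level $\cG_{\bbC}$-NCM being $\tau(\bbZ(\cM_L))$-consistent (the phrasing of the NeuralID guarantee) must be seen to coincide with being $\bbZ$-$\tau$ consistent with $\cM_L$ in the sense of Def.~\ref{def:q-tau-consistency} and Corol.~\ref{corol:abs-id-ncm} — and verifying that the NCM output by the algorithm genuinely lies in the class of ``$\tau(\bbZ)$-consistent $\cG_{\bbC}$-NCMs'' over which Corol.~\ref{corol:abs-id-ncm} quantifies. Once these identifications are in place, no causal machinery beyond Thm.~\ref{thm:dual-abs-id}, Corol.~\ref{corol:abs-id-ncm}, and the cited NCM results is required.
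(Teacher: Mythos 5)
Your proposal is correct and follows essentially the same route as the paper's proof: interpret lines 1--2 as constructing $\tau$ and lines 3--9 as the high-level min/max NeuralID check whose agreement certifies neural identifiability of $\tau(Q)$ from $\cG_{\bbC}$ and $\tau(\bbZ)$ (and whose disagreement exhibits a counterexample), then invoke Corol.~\ref{corol:abs-id-ncm} to translate this into $\tau$-ID of $Q$ and to conclude $\widehat{Q}=Q(\cM_L)$ in the identifiable case. Your added remarks on transporting positivity through $\tau$ and on reconciling the consistency notions are harmless bookkeeping that the paper leaves implicit.
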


While Alg.~\ref{alg:ncm-solve-absid} solves the abstract ID problem, the consequences of the results in this section are more general. Notably, if $Q$ is indeed $\tau$-ID (which can be verified through Alg.~\ref{alg:ncm-solve-absid}), the algorithm produces a neural model $\widehat{M}$ that serves as a proxy SCM that is $Q$-$\tau$ consistent with the true model $\cM_L$. Such an SCM could serve as a generative model of the distribution $Q$, which has many uses. The samples generated from such a model could be used to estimate the query, or, in more complex settings such as with image data, it may be desirable to simply have novel generated samples consistent with the causal invariances embedded in the system.

\begin{algorithm}[t]
    
    \DontPrintSemicolon
    \SetKwData{ncmdata}{$\widehat{M}$}
    \SetKwData{graphdata}{$\cG_{\bbC}$}
    \SetKwData{variabledata}{$\*V_H$}
    \SetKwData{pvdata}{$P(\*v)$}
    \SetKwData{thetamin}{$\bm{\theta}_{\min}^*$}
    \SetKwData{thetamax}{$\bm{\theta}_{\max}^*$}
    \SetKwFunction{ncmfunc}{NCM}
    \SetKwInOut{Input}{Input}
    \SetKwInOut{Output}{Output}
    
    \Input{ query $Q$, $\cL_2$ datasets $\bbZ(\cM_L)$, C-DAG \graphdata, and admissible inter/intravariable clusters $\bbC$ and $\bbD$ satisfying invariance condition}
    \Output{ $Q(\cM_L)$ if identifiable, \texttt{FAIL} otherwise.}
    \BlankLine
    \textls[-20]{
    $\*V_H \gets \bbC, \cD_{\*V_H} \gets \bbD$\;
    $\tau \gets \absfunc(\bbC, \bbD)$ \tcp*{from Def.\ \ref{def:tau}}
    $\ncmdata \gets \ncmfunc{\variabledata, \graphdata}$ \tcp*{from Def.\ \ref{def:gncm}}
    $\thetamin \! \gets \! \arg \min_{\bm{\theta}} \tau(Q)(\ncmdata(\bm{\theta}))$ s.t. $\tau(\bbZ)(\ncmdata(\bm{\theta})) \! = \! \tau(\bbZ(\cM_L))$\;
    $\thetamax \! \gets \! \arg \max_{\bm{\theta}} \tau(Q)(\ncmdata(\bm{\theta}))$ s.t. $\tau(\bbZ)(\ncmdata(\bm{\theta})) \! = \! \tau(\bbZ(\cM_L))$\;
    \eIf{$\tau(Q)(\ncmdata(\thetamin)) \neq \tau(Q)(\ncmdata(\thetamax))$} {
        \Return \texttt{FAIL}
    }{
        \Return $\tau(Q)(\ncmdata(\thetamin))$ \tcp*{choose min or max arbitrarily} 
    }
    }
    \caption{\textbf{NeuralAbstractID} -- Identifying and estimating queries across abstractions using NCMs.}
    \label{alg:ncm-solve-absid}
\end{algorithm}

\section{Representations in Learning Abstractions} \label{sec:applications}

In many applications, the choice of intervariable clusters $\bbC$ is natural and can be made in tandem when deciding the assumptions of the C-DAG $\cG_{\bbC}$\footnote{Please refer to App.~\ref{app:cons-hierarchy} for best practices on how to choose or learn intervariable clusters when they are not given.}. On the other hand, fully specifying the intravariable clusters $\bbD$ is usually challenging when working with high-dimensional data like images. If an intervariable cluster contained three binary variables from $\*V_L$, then specifying its intravariable cluster would require specifying some partition over its eight values, $(0, 0, 0), (0, 0, 1), \dots, (1, 1, 1)$, which is not difficult. However, if an intervariable cluster contained, for example, $128 \times 128$ pixels, each with 256 possible values, then the size of the domain of this cluster would be $256^{128 \times 128}$. Specifying an arbitrary partition over this many values is infeasible, as doing so would require an enumeration of every possible image along with some label designating each one to a cluster. In this section, we investigate the problem of learning abstractions when the intravariable clusters $\bbD$ are left unspecified.

While coarser clusters tend to be better in practice due to the dimensionality reduction, the theory in this paper can be applied for any choice of $\bbD$ so long as the AIC (Def.~\ref{def:invariance-condition}) holds. Hence, a possible constraint when learning $\bbD$ is to find a set of clusters such that the AIC is not violated. To this effect, the following result can be leveraged.

\begin{restatable}{proposition}{fullintracluster}
    \label{prop:full-intra-cluster}
    Consider a low level SCM $\cM_L$ and constructive abstraction function $\tau$ w.r.t.~clusters $\bbC$ and $\bbD$. $\cM_L$ is guaranteed to satisfy the AIC w.r.t.~$\tau$ if and only if $\bbD_{\*C_i} = \{\{\*c_i\} : \*c_i \in \cD_{\*C_i}\}$ for all $\*C_i \in \bbC$.
    \hfill $\blacksquare$
\end{restatable}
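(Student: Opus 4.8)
I prove the two implications separately; the pivot is the elementary fact that, by Definition~\ref{def:tau}(2)--(3), the fibers of each subfunction $\tau_{\*C_i}$ are exactly the blocks of $\bbD_{\*C_i}$: for $\*c,\*c'\in\cD_{\*C_i}$ one has $\tau_{\*C_i}(\*c)=\tau_{\*C_i}(\*c')$ iff $\*c$ and $\*c'$ lie in a common block. Hence ``$\bbD_{\*C_i}=\{\{\*c_i\}:\*c_i\in\cD_{\*C_i}\}$ for every $\*C_i$'' is equivalent to ``every $\tau_{\*C_i}$ is injective,'' and this is the property I really manipulate. When the $\tau_{\*C_i}$ are injective there is nothing for the AIC (Eq.~\ref{eq:aic}) to constrain; when some $\tau_{\*C_i}$ collapses two values, a suitable mechanism can expose the collapsed distinction downstream.

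\emph{If direction.} Suppose every $\bbD_{\*C_i}$ is the discrete partition, so every $\tau_{\*C_i}$ is injective. Fix any low-level SCM $\cM_L$ and any $\*v_1,\*v_2\in\cD_{\*V_L}$ with $\tau(\*v_1)=\tau(\*v_2)$. Applying $\tau$ cluster by cluster and using injectivity of each $\tau_{\*C_i}$, $\*v_1$ and $\*v_2$ must agree on every variable that appears in some cluster; consequently, for each $\*C_i$ and each $V\in\*C_i$ the parent settings $\pai{V}^{(1)}$ and $\pai{V}^{(2)}$ coincide (projected-away coordinates affect neither $\tau$ nor the two sides of Eq.~\ref{eq:aic}). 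Then the left- and right-hand sides of Eq.~\ref{eq:aic} are literally the same expression, for every $\*u$ and every $\*C_i$, so $\cM_L$ satisfies the AIC; since $\cM_L$ was arbitrary, the AIC is \emph{guaranteed}.

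\emph{Only-if direction.} I argue the contrapositive: if some $\bbD_{\*C_i}$ is not discrete, I build one SCM $\cM_L$ (admissible w.r.t.\ $\bbC$) that violates the AIC, so the AIC is not guaranteed. Pick distinct $\*c^{(1)},\*c^{(2)}$ in a common block $\cD^{j}_{\*C_i}$, so $\tau_{\*C_i}(\*c^{(1)})=\tau_{\*C_i}(\*c^{(2)})$, and let $V^{*}\in\*C_i$ be a coordinate on which they disagree. In the spirit of the cholesterol SCM of Example~\ref{ex:cholesterol-inv-cond}, I give $\cM_L$ a causal diagram in which $V^{*}$ feeds a downstream ``detector'' $W$ --- chosen inside $\*C_i$ itself when possible, otherwise inside another cluster whose intravariable partition is fine enough to register a change in $W$ --- and set $f^L_W$ to an injective function of $V^{*}$. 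Taking $\*v_1,\*v_2$ that realize $\*c^{(1)},\*c^{(2)}$ on $\*C_i$ and agree on all remaining coordinates yields $\tau(\*v_1)=\tau(\*v_2)$, while re-evaluating the mechanisms of $W$'s cluster from $\*v_1$ versus $\*v_2$ produces $\tau$-inequivalent tuples, contradicting Eq.~\ref{eq:aic}.

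\emph{Expected main obstacle.} Essentially all the work is in the only-if construction. One must argue that such a detector can always be placed within the \emph{given} clusters and domains, and that adjoining the edge $V^{*}\to W$ keeps $\bbC$ admissible (easy: take the remaining diagram edgeless and make $W$ a sink, so the cluster DAG is acyclic). This calls for a short case split on whether some differing coordinate of $\*c^{(1)},\*c^{(2)}$ can be routed into a cluster carrying a non-trivial intravariable partition; in the only leftover case every cluster's partition is trivial, the abstraction collapses essentially all information, and the claim is vacuous. The if direction, by contrast, is immediate once the injectivity characterization is in place.
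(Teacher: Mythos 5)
Your proof is correct and takes essentially the same route as the paper's: the ``if'' direction is the same observation that singleton intravariable clusters make each $\tau_{\*C_i}$ injective, so the AIC premise forces the relevant parent settings to coincide and Eq.~\ref{eq:aic} becomes an identity (the paper packages this via its refinement lemma, Lem.~\ref{lem:coarse-intra}, but the content is identical), while the ``only-if'' direction is the same adversarial construction that routes a pair of values collapsed by $\bbD$ into a downstream mechanism whose cluster carries a non-trivial intravariable partition, with the same caveat about degenerate high-level domains that the paper dispatches with its ``provided the domains of $\*V_H$ are nontrivial'' proviso. Your explicit check that the detector can be made a sink so that admissibility of $\bbC$ is preserved is a small addition the paper leaves implicit.
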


In words, the AIC is satisfied when the intravariable clusters are maximal (i.e.~each value is assigned to its own cluster). Consequently, this means that Alg.~\ref{alg:ncm-solve-absid} can be applied in any case where $\tau_{\*C_i}$ is a bijective mapping between $\cD_{\*C_i}$ and $\cD_{V_{H, i}}$. Also implied by this result is that, without additional information, one cannot choose any coarser clustering without potentially violating the AIC\footnote{In many cases, there may be additional information in the form of invariances (e.g.~rotational invariance in image data). In such cases, this information can be leveraged to learn coarser clusters. See Appendix \ref{app:rep-learning} for more details.}. For some intuition on why this is the case, consider the following example.

\begin{example}[Example \ref{ex:bmi} continued]
    Consider once again the nutrition example where carbohydrates ($C$), fat ($F$), and protein ($P$) were combined to form the high-level variable, calories ($Z$). Calories was defined to be $Z = 4C + 9F + 4P$, resulting in a set of intravariable clusters where $(c_1, f_1, p_1)$ and $(c_2, f_2, p_2)$ were clustered together if $4c_1 + 9f_1 + 4p_1 = 4c_2 + 9f_2 + 4p_2$.

    It turns out that, without additional information, it is possible that this choice of clustering violates the AIC. For example, it could be the case that in the true model $\cM_L$, the function for BMI $f_B$ depends heavily on protein, since muscle density may affect BMI more than fat. In an extreme example, suppose
    \begin{equation}
        B \gets f_B(c, f, p, u_B) = p + u_B,
    \end{equation}
    that is, $f_B$ only depends on protein out of the three macronutrients. In this case, $(C = 0, F = 0, P = 20)$ and $(C = 20, F = 0, P = 0)$ would result in different values of $B$ despite being clustered together in the same intravarable cluster, therefore, violating the AIC. Without information about $f_B$, it is not known whether any given two tuples, $(c_1, f_1, p_1)$ and $(c_2, f_2, p_2)$, would witness this violation, so the only option is to leave all values of $(C, F, P)$ in their own intravariable clusters.
    \hfill $\blacksquare$
\end{example}
This example illustrates that without further information about the functions of the underlying SCM, it is, in general, impossible to cluster two intravariable values together without potentially violating the AIC. This is also the case with the cholesterol discussion in Ex.~\ref{ex:cholesterol-inv-cond} and \ref{ex:inv-cond-satisfied}. Without knowing $f_Y$, it would be impossible to determine whether the cluster choice in Ex.~\ref{ex:inv-cond-satisfied} would work better than the one in Ex.~\ref{ex:cholesterol-inv-cond}.

Still, Prop.~\ref{prop:full-intra-cluster} states that the intravariable clustering which leaves every value in its own cluster will always satsify the AIC. While this choice of $\bbD$ does not reduce the dimensionality of the abstracted space, this means that we are not restricted to the original space of $\*V_L$ and can choose any $\*V_H$ with the same cardinality. In practice, this means that we can choose the option for $\*V_H$ that is the most beneficial for a given task. For example, some choices of $\*V_H$ and corresponding domains $\cD_{\*V_H}$ may have desirable properties such as simpler gradient computation, disentangled variables, or compatibility with arithmetic operations. In order to leverage this insight, we introduce the representational NCM.

\begin{definition}[Representational NCM (RNCM)]
    \label{def:rncm}
    A representational NCM (RNCM) is a tuple $\langle \widehat{\tau}, \widehat{M} \rangle$, where $\widehat{\tau}(\*v_L; \bm{\theta}_{\tau})$ is a function parameterized by $\bm{\theta}_{\tau}$ mapping from $\*V_L$ to $\*V_H$, and $\widehat{M}$ is an NCM defined over $\*V_H$. A $\cG_{\bbC}$-constrained RNCM ($\cG_{\bbC}$-RNCM) is an RNCM $\langle \widehat{\tau}, \widehat{M} \rangle$ such that $\widehat{\tau}$ is composed of subfunctions $\widehat{\tau}_{\*C_i}$ for each $\*C_i \in \bbC$ (each with its own parameters $\bm{\theta}_{\tau_{\*C_i}}$), and $\widehat{M}$ is a $\cG_{\bbC}$-NCM (Def.~\ref{def:gncm}).
    \hfill $\blacksquare$
\end{definition}

\begin{figure}
    \centering
\includegraphics[width=0.9\linewidth]{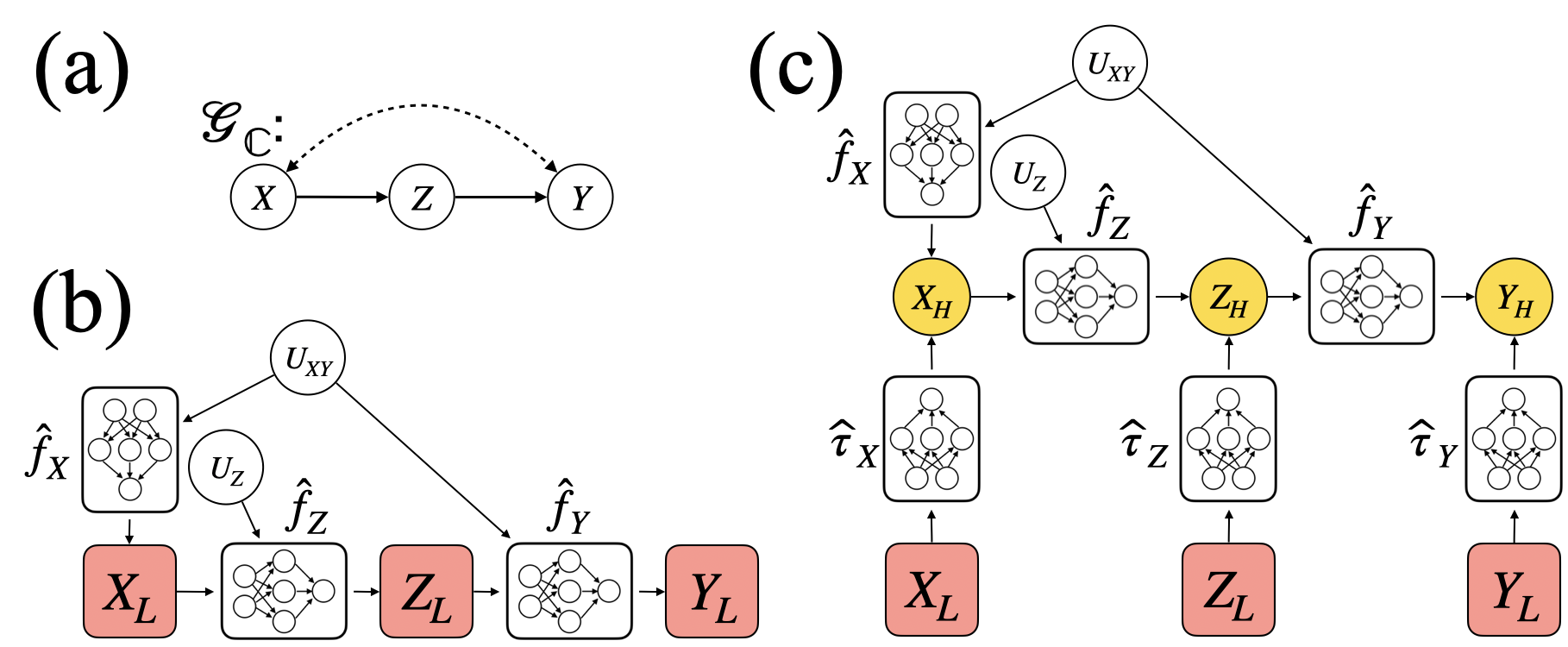}
\caption{Example using graph $\cG_{\bbC}$ shown in (a) to compare (b) the $\cG_{\bbC}$-NCM and (c) $\cG_{\bbC}$-RNCM. Functions of the NCM directly output values of the lower level variables (grouped by clusters in $\bbC$), while functions of the RNCM output values of their higher level counterparts, mapped by $\widehat{\tau}$.}
\label{fig:rncm-example}
\end{figure}

In words, an RNCM is a pair of a parameterized abstraction function $\widehat{\tau}$ and an NCM $\hM$ defined over the space of high level variables $\*V_H$ obtained from $\widehat{\tau}(\*V_L)$. The $\cG_{\bbC}$-RNCM is simply an RNCM constrained over the C-DAG $\cG_{\bbC}$, where $\widehat{\tau}$ must use the intervariable clusters specified by $\bbC$. Fig.~\ref{fig:rncm-example} illustrates the difference between a $\cG_{\bbC}$-RNCM and a standard $\cG_{\bbC}$-NCM. Given $\cG_{\bbC}$ in (a), the $\cG_{\bbC}$-NCM (b) directly defines the domains of the inputs and outputs of the functions to be the space of the variables $\*V_L$. On the other hand, the $\cG_{\bbC}$-RNCM (c) defines the domains over $\*V_H = \widehat{\tau}(\*V_L)$ instead.

Training can be done as a two-step process, where first $\widehat{\tau}$ is trained to map to an optimal task-specific space, and then $\hM$ can be trained on $\widehat{\tau}(\*V_L)$ (e.g., through Alg.~\ref{alg:ncm-solve-absid}). In the first step, $\widehat{\tau}$ (parameterized by $\theta_{\tau}$) can be trained according to a loss such as
\begin{align}
    L_{\widehat{\tau}}(\*v_L) &= \norm{\widehat{\tau}^{-1}(\widehat{\tau}(\*v_L, \theta_{\tau}); \theta_{\tau^{-1}}) - \*v_L}^2 \nonumber \\
    &+ \lambda_r L_r(\*v_L), \label{eq:rncm-loss-general}
\end{align}
where $\widehat{\tau}^{-1}$ is a neural network parameterized by $\theta_{\tau^{-1}}$ that attempts to invert $\widehat{\tau}$ and recover the original $\*v_L$, $L_r$ is a separate representation regularizer, and $\lambda_r$ is its regularization strength. The first term in Eq.~\ref{eq:rncm-loss-general} is used to enforce bijectivity between $\cD_{\*C_i}$ and $\cD_{V_{H, i}}$, as required by Prop.~\ref{prop:full-intra-cluster}. In words, one can train $\widehat{\tau}$ in an autoencoder-like setup \citep{kramer:91, KingmaW13} with a reconstruction loss. The second term is left open-ended and can be used to impose a desired form on the output of $\widehat{\tau}$. The following example helps to ground this point. 

\begin{example}
    Consider a simple example with $\*V_L = \*X \cup \{Y\}$, where $\*X$ is the collection of pixels representing an image of a cat or dog, and $Y$ is a binary label (caused by $\*X$) predicting whether the animal depicted in $\*X$ is aggressive or docile. Suppose researchers are studying the relationship between $\*X$ and $Y$ and would like to work in a more abstract space. Denote $X_H$ as the high-level counterpart of $\*X$. Moreover, suppose the researchers are given another set of labels of $\*X$, say $Z$, which state whether the animal in the image is a cat or dog. $Z$ is not included in the study with $\*V_L$, but it could potentially be used for learning the space of $X_H$.
    
    Specifically, the researchers would like to train $\widehat{\tau}_X: \cD_{\*X} \rightarrow \cD_{X_H}$ such that the AIC is satisfied from Prop.~\ref{prop:full-intra-cluster}, and $X_H$ additionally encodes some information about $Z$. One clever approach is to introduce another neural network $g(x_H; \theta_g)$ parameterized by $\theta_g$ that aims to classify $Z$ from $X_H$. The researchers could train $\widehat{\tau}_X$ with the loss $L_{\widehat{\tau}}$ from Eq.~\ref{eq:rncm-loss-general} such that $L_r$ is the classification loss of $g$. This would result in $X_H$ encoding both enough information to reconstruct $\*X$ and also to classify $Z$. Although $X_H$ may not have lower dimensionality than the original $X$, it may be more useful in a downstream task, such as building a classifier for $Y$ (i.e.~$X_H$ may be a more well-behaved set of features for a classifier of $Y$ than the original set of pixel values $\*X$).
    \hfill $\blacksquare$
\end{example}

In general, $\widehat{\tau}$ can be thought of as a function mapping to a representation space, where the second term in Eq.~\ref{eq:rncm-loss-general} can be used to regularize the representation space for a desired task. The flexibility of this approach makes it amenable to the wide developments of the representation learning literature \citep{10.1109/TPAMI.2013.50}. We empirically demonstrate this approach below in the experiment of Sec.~\ref{sec:exp-mnist}.

\section{Experiments} \label{sec:experiments}

In this section, we empirically evaluate the effects of utilizing abstractions in causal inference tasks. More details on the data-generating models and architectures can be found in Appendix \ref{app:experiments}. Implementation code is publicly available at \url{https://github.com/CausalAILab/NeuralCausalAbstractions}.

\subsection{Nutritional Study}
\label{sec:exp-nutrition}

We follow up on the nutrition study discussed in Ex.~\ref{ex:bmi}. Since a BMI of 25 or over is considered overweight, the goal is to identify and estimate the query $Q = P(B_{D=d} \geq 25)$, the causal effect of diet on weight, given the available graphical constraints and observational data $P(\*V_L)$ using Alg.~\ref{alg:ncm-solve-absid}. $R$ and $D$ are 32-dimensional one-hot vectors, and the others are real-valued, so the query may be difficult to answer given such high-dimensional variables. Instead, it may be more effective to work in an abstract space with the proposed intervariable clusters $\bbC = \{D_H = \{D\}, Z = \{C, F, P\}, B_H = \{B\}\}$. 
The original graph $\cG$ and corresponding C-DAG $\cG_{\bbC}$ are shown in Fig.~\ref{fig:cdag-examples}. We are also given intravariable clusters $\bbD$ such that all values of $D_H$, $Z$, and $B_H$ are clustered into binary categories. Specifically, $D_H = 1$ denotes unhealthy dishes, $Z = 1$ denotes high calorie count, and $B_H = 1$ denotes an overweight BMI ($\geq 25$).

We compare the effectiveness identifying and estimating $Q$ with NCMs in three different settings, with results shown in Fig.~\ref{fig:exp1-results}. The first approach (red) attempts to solve the problem directly in the space of $\*V_L$ by identifying and estimating $Q$ from the original causal diagram $\cG$ and observational dataset from $P(\*V_L)$. The second approach (yellow) solves the same task but first normalizes each variable\footnote{This normalization approach is equivalent to using a constructive abstraction function $\tau$ over the full set of clusters $\bbC = \*V_L$ and $\bbD = \cD_{\*V_L}$ (with a bit of abuse of notation). That is, each variable and value are placed in their own cluster, and all values are simply remapped to different values.} of the data between 0 and 1. The third approach (blue) is the newly proposed approach and leverages the concept of $\tau$-ID, identifying and estimates $Q$ from the C-DAG $\cG_{\bbC}$ and high level data $\tau(P(\mathbf{V}_L))$. The model is trained over the abstract space of $\*V_H$ computed using the constructive abstraction function $\tau$ defined on $\bbC$ and $\bbD$. All three approaches are implemented in the style of GAN-NCM \citep{xia:etal23}. Since $Q$ is identifiable, the gap between the max and min queries computed in Alg.~\ref{alg:ncm-solve-absid} are expected to be as small as possible. As shown in Fig.~\ref{fig:exp1-id-gaps}, the proposed approach (blue) converges quickly while others fail to close the gap between the max and min queries. Fig.~\ref{fig:exp1-est-mae} also shows that the proposed approach can estimate $Q$ with significantly lower error. Furthermore, since the proposed approach uses the C-DAG $\cG_{\bbC}$ instead of the original causal diagram $\cG$, the approach operates under fewer assumptions of domain knowledge.

\begin{figure}
    \centering
    \begin{subfigure}[t]{0.48\linewidth}
        \centering
        \includegraphics[width=\linewidth]{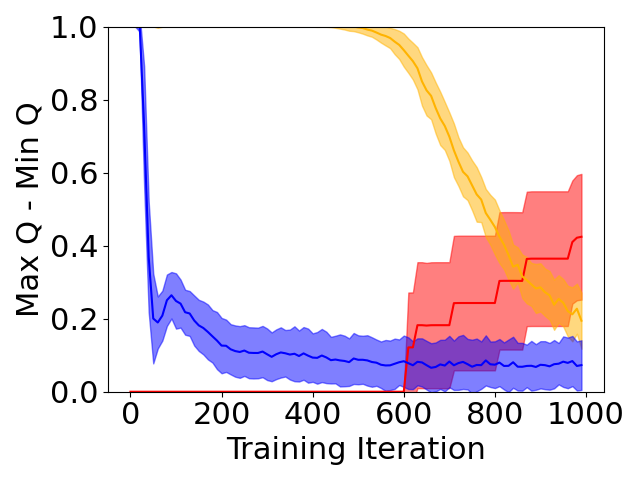}
        \caption{Gaps between max and min query across 1000 training iterations when running Alg.~\ref{alg:ncm-solve-absid}.}
        \label{fig:exp1-id-gaps}
    \end{subfigure}%
    \hfill
    \begin{subfigure}[t]{0.48\linewidth}
        \centering
        \includegraphics[width=\linewidth]{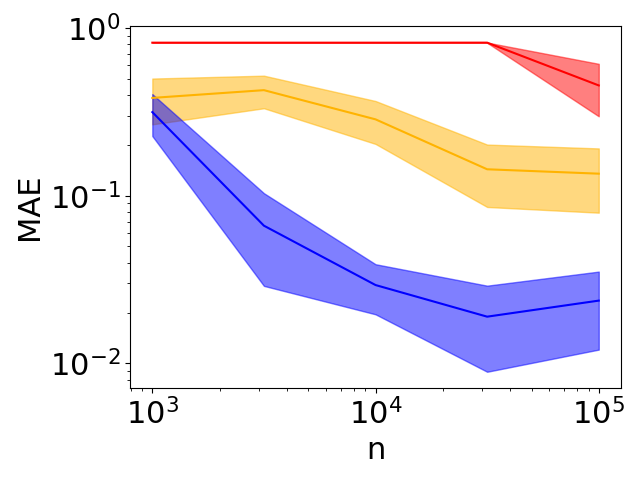}
        \caption{Mean absolute error (MAE) v.~dataset size (in log-log scale) for query estimation.}
        \label{fig:exp1-est-mae}
    \end{subfigure}
    \caption{Results of the nutrition experiment. Our approach (blue) is compared with a GAN-NCM trained on raw data (red) and one trained on normalized data (yellow).}
    \label{fig:exp1-results}
\end{figure}

\begin{figure}
    \centering
    \begin{subfigure}[b]{0.49\linewidth}
        \centering
        \begin{tikzpicture}[xscale=0.7, yscale=1]
            \tiny
            \node[draw, circle] (C) at (-0.5, 1) {$C$};
            \node[draw, circle] (D) at (-1, 0) {$D$};
            \node[draw, circle] (I) at (1, 0) {$I$};
            
            \path [-{Latex[length=1mm]}] (D) edge (I);
            \path [-{Latex[length=1mm]}] (C) edge (I);
            \path [{Latex[length=1mm]}-{Latex[length=1mm]}, dashed, bend left] (D) edge (C);
        \end{tikzpicture}
        \caption{$\cG_{\bbC}$ for Colored MNIST.}
        \label{fig:exp2-graph}
    \end{subfigure}%
    \hfill
    \begin{subfigure}[b]{0.49\linewidth}
        \centering
        \includegraphics[width=\linewidth,keepaspectratio]{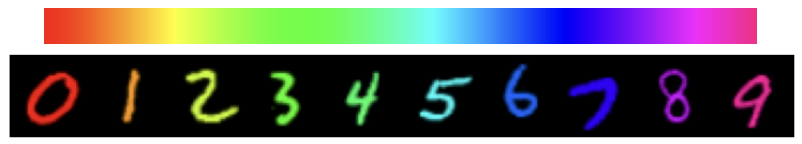}
        \caption{Image samples. Digits are highly correlated with the corresponding gradient color.}
        \label{fig:exp2-legend}
    \end{subfigure}
    \caption{Colored MNIST Experimental Setup}
    \label{fig:exp-mnist-setup}
\end{figure}

\subsection{Colored MNIST Digits}
\label{sec:exp-mnist}

\begin{figure*}
    \begin{center}
    \includegraphics[width=\textwidth,keepaspectratio]{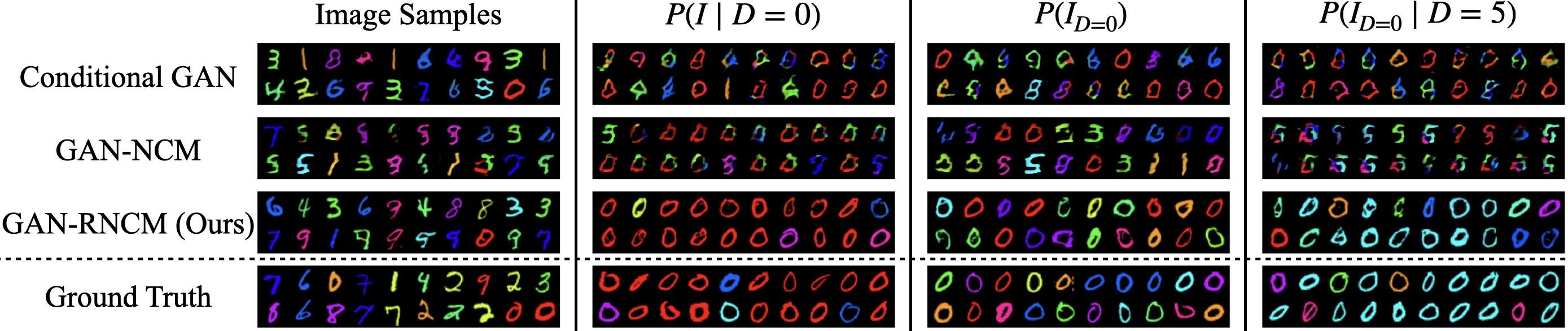}
    \caption{Colored MNIST results. Samples from various causal queries (top) are collected from competing approaches (left), with the ground truth samples from the data generating model shown in the bottom row. The left column simply shows image samples from $P(I)$ from each of the models, while the second, third, and fourth columns show samples generated from an $\cL_1$, $\cL_2$, and $\cL_3$ query, respectively.
    }
    \label{fig:mnist-exp-results}
    \end{center}
\end{figure*}

We evaluate the RNCM in a high-dimensional image dataset of colorized MNIST \citep{deng2012mnist} digits. Each image ($I$) has a corresponding digit ($D$) and color ($C$) label, and their relationships are shown in the C-DAG $\cG_{\bbC}$ in Fig.~\ref{fig:exp2-graph}. Color and digit are highly correlated (e.g.~0s are typically red, while 5s are cyan), as shown in Fig.~\ref{fig:exp2-legend}. We evaluate three approaches in the task of sampling images from causal queries. The first approach is a na\"ive conditional GAN that does not take causality into account. The second is a standard GAN-NCM as described in \citet{xia:etal23}. The third is called a GAN-RNCM, a GAN implementation of the representational NCM following the approach described in Sec.~\ref{sec:applications}.

Samples of the results are shown in Fig.~\ref{fig:mnist-exp-results}. All models are capable of producing digit images, as shown in the first column. The second column illustrates $P(I \mid D=0)$, the images conditioned on digit $=0$. Many red 0s are expected since most 0s are red in the dataset. The third column illustrates the interventional query $P(I_{D=0})$, the images with digits forced to be $0$ through intervention. As interventions ignore the spurious correlations between color and digit, 0s of all colors are expected. Finally, the fourth column illustrates the counterfactual query $P(I_{D=0} \mid D=5)$, indicating what the digits would have looked like had they been 0, given that they were originally 5. Since 5s tend to be cyan, the samples are expected to be 0s that retain the cyan color of the 5s.

In all cases, GAN-RNCM (new approach) produces results closer to the expected outcomes, as shown by the ground truth. The conditional GAN fails to distinguish causal queries from conditional queries, and samples appear similar in the 2nd, 3rd, and 4th columns. 
The standard GAN-NCM faces challenges disentangling color from digit, as shown from the presence of several non-zero digits in the 3rd column and several digits that resemble 5s in the 4th column. 
Further, both the conditional GAN and the GAN-NCM face challenges in capturing the relationship between color and image in more complex distributions, as evident from the mosaic coloring in many of the samples. 
The GAN-RNCM significantly outperforms the other approaches in terms of sample quality for the causal queries.

\section{Conclusions}

\begin{figure*}
    \centering
    \begin{tabular}{c|c|c|c|c|c|c|c|c|c}
    \toprule
    \multirow{2}{*}{Approach} &  \multicolumn{4}{|c|}{SCM $\cM_L$} & \multicolumn{3}{|c|}{Abstraction $\tau$} & SCM $\cM_H$ & \multirow{2}{*}{Output} \\ \cline{2-9}
    {} &  $\cM_L$ & Data $\bbZ$ & Graph $\cG$ & C-DAG $\cG_{\bbC}$ & $\tau$ & $\bbC$ & $\bbD$ & $\cM_H$ & {} \\
    \midrule
    Existing works  & \checkmark   & \textcolor{lightgray}{\checkmark}   & \textcolor{lightgray}{\checkmark}   & \textcolor{lightgray}{\checkmark} & \checkmark & \textcolor{lightgray}{\checkmark} & \textcolor{lightgray}{\checkmark} & \checkmark & Abstraction: Yes/No \\
    Sec.~\ref{sec:abstract-ncm}   & \checkmark   & \textcolor{lightgray}{\checkmark}   & \textcolor{lightgray}{\checkmark}   & \textcolor{lightgray}{\checkmark} & - & \checkmark & \checkmark & - & $\cL_3$-$\tau$ consistent $\cM_H$ \\
    Sec.~\ref{sec:learning-abs}   & -   & \checkmark    & -   & \checkmark & - & \textcolor{lightgray}{\checkmark}  & \checkmark & - & $\bbZ$-$\tau$ / $\cG_{\bbC}$ consistent $\cM_H$ \\
    Sec.~\ref{sec:applications}   & -   & \checkmark    & -   & \checkmark & - & \textcolor{lightgray}{\checkmark}  & - & - & $\bbZ$-$\tau$ / $\cG_{\bbC}$ consistent $\cM_H$ \\
    \bottomrule
    \end{tabular}
    \caption{Summary table of contributions of each section in terms of input assumptions and outputs. A checkmark (\checkmark) indicates that the corresponding information is assumed to be available in that section. A gray checkmark indicates that it is implied by a stronger assumption (e.g. data $\bbZ$ can be sampled from $\cM_L$ if $\cM_L$ is available). A dash (-) indicates that the information is not assumed. An approach that has fewer checkmarks makes fewer assumptions and is therefore more applicable in practice.}
    \label{tab:summary-table}
\end{figure*}

In this paper, we developed a new framework of abstractions based on the PCH's layers with the goal of learning a high-level causal model at a coarser granularity. 
In each section of the paper, we relaxed certain assumptions that are not easily achievable in practice, and showed how to obtain the high-level model under these particular settings. 
These increasingly more refined results are summarized in Fig.~\ref{tab:summary-table}.

We started by noting that previous works on causal abstractions, such as \citet{beckers2019abstracting}, set a strong foundation of defining abstractions in rigorous terms. (A detailed discussion is provided in Appendix \ref{sec:related-work}.) 
The main drawback is that these definitions are declarative, meaning that given a low level SCM $\cM_L$, high level SCM $\cM_H$, and an abstraction function $\tau$, the definitions in these works can be used to decide whether $\cM_H$ is an abstraction of $\cM_L$ or not. 
Still, solving this decision task requires a substantial amount of input information, as highlighted in the first row of Fig.~\ref{tab:summary-table}, and which is unavailable in many practical settings.

We then examined in Sec.~\ref{sec:abstract-ncm}~the challenge of obtaining the abstract model $\cM_H$ when the same is unavailable, as shown in the second row of Fig.~\ref{tab:summary-table}. 
We provided a systematic way of constructing $\tau$ given inter/intravariable clusters (Defs.~\ref{def:var-clusterings} and ~\ref{def:tau}). 
Building on this, we developed Alg.~\ref{alg:map-abstraction}, which allows one to obtain the higher level abstraction $\cM_H$ given $\cM_L$ and the constructive abstraction function $\tau$. 

Still, these results can be hard to apply since $\cM_L$ is not commonly available in some real world settings. Rather, partial information about $\cM_L$ through its data distributions may be available. We then described in Sec.~\ref{sec:learning-abs} how to obtain $\cM_H$ under these restrictions, as shown in the third row of Fig.~\ref{tab:summary-table}. 
Causal inferences on higher levels of the PCH cannot be realized using lower layers alone, as shown by Prop.~\ref{prop:abs-cht}, which means that assumptions are needed. 
The assumptions considered in this paper take the form of C-DAGs (Def.~\ref{def:cdag}), an abstract version of causal diagrams leveraging the intervariable clusters. Then, given the data $\bbZ$, the C-DAG $\cG_{\bbC}$, and the abstraction function $\tau$ constructed from the clusters, Alg.~\ref{alg:ncm-solve-absid} can be used to learn the abstract NCM $\widehat{M}_H$. Then, $\widehat{M}_H$ can be used for tasks such as identification, estimation, or sampling.

We showed in Sec.~\ref{sec:applications} how to learn $\tau$, noting that acquiring intravariable clusters $\bbD$ may be challenging in practice. 
The fourth row of Fig.~\ref{tab:summary-table} highlight this task. 
Specifically, we introduced the representational NCM or RNCM (Def.~\ref{def:rncm}), which parameterizes $\widehat{\tau}$ as a neural network.
We then developed a learning procedure for intravariable clusters,  considering task-specific objectives. 
Whenever additional information about the problem is available in the form of invariances (e.g., ~translational, rotational, permutation), coarser clusters can be learned, as elaborated in Appendix~\ref{app:rep-learning}.

Finally, although not shown in the table, additional technical content can be found in Appendix \ref{app:discussion}. Specifically, Appendix \ref{app:cons-hierarchy} discusses how to make the best choice of intervariable clusters $\bbC$ when it is not given. Appendix \ref{app:invariance-condition} describes possible ways to relax the AIC (Def.~\ref{def:invariance-condition}), which is an assumption that is made throughout the paper. We encourage further research on the topics covered in this paper, such as on the best way to learn $\widehat{\tau}$ in an RNCM.

\section*{Acknowledgements}
This research was supported in part by the NSF, ONR, AFOSR, DARPA, DoE, Amazon, JP Morgan, and The Alfred P. Sloan Foundation.

\bibliography{references}

\clearpage
\appendix
\section{Proofs} \label{app:proofs}

\subsection{Counterfactual Axioms and Properties}

The proofs in this work will rely on the following counterfactual axioms from \citet{galles:pea98} and \citet{halpern:98}:
\begin{fact}[{\citep[Thms.~1,2]{galles:pea98}}]
    The following properties hold in SCMs:
    \begin{enumerate}[label=\arabic*]
        \item (Composition). For any two singleton variables $Y$ and $W$, and any set of variables $\*X$ in a causal model, we have
        \begin{equation*}
            W_{\*x}(\*u) = w \Longrightarrow Y_{\*x, w}(\*u) = Y_{\*x}(\*u)
        \end{equation*}

        \item (Effectiveness) For all variables $\*X$ and $\*W$, $\*X_{\*x \*w}(\*u) = \*x$.
    \end{enumerate}
\hfill $\blacksquare$
\end{fact}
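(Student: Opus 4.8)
The plan is to derive both properties directly from the recursive (Layer~3) valuation of Def.~\ref{def:l3-semantics}, using acyclicity. \textbf{Effectiveness} is immediate: evaluating $\*X_{\*x\*w}(\*u)$ means evaluating the endogenous variables under the mutilated set of functions $\mathcal{F}_{\*x\*w}$, in which every $f_X$ with $X \in \*X$ has been overwritten by the constant function returning the corresponding component of $\*x$. Hence each $X \in \*X$ simply reads off that constant, so $\*X_{\*x\*w}(\*u) = \*x$ regardless of $\*u$ and regardless of the intervention on $\*W$.

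For \textbf{Composition}, I would first make the recursive valuation explicit. Since the SCM is recursive, its causal diagram is acyclic, so fix a topological ordering $V_1, \dots, V_n$ of $\*V$ with $\Pai{V_i} \subseteq \{V_1, \dots, V_{i-1}\}$; because mutilation only deletes arrows, this same ordering remains valid for every submodel considered. For a fixed $\*u$, let $v_1^{*}, \dots, v_n^{*}$ be the values of $V_1, \dots, V_n$ computed along this ordering under $\mathcal{F}_{\*x}$, and let $v_1', \dots, v_n'$ be those computed under $\mathcal{F}_{\*x,w}$. The key claim is that $v_i^{*} = v_i'$ for all $i$, proved by induction on $i$ using the hypothesis $W_{\*x}(\*u)=w$: if $V_i \in \*X$, both equal the $V_i$-component of $\*x$; if $V_i = W$, then $v_i^{*} = W_{\*x}(\*u) = w$ by hypothesis while $v_i' = w$ by the added intervention; and otherwise $V_i$ is governed by the same function $f_{V_i}$ in both submodels, its parents precede it in the ordering, so by the induction hypothesis they (and the exogenous parents, which do not change) take equal values and thus $v_i^{*} = v_i'$. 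Applying the claim at the index of $Y$ — and checking the degenerate cases $Y \in \*X$ and $Y = W$ separately, where the conclusion is trivial — gives $Y_{\*x,w}(\*u) = Y_{\*x}(\*u)$. The one genuinely separate case is $W \in \*X$: then $W_{\*x}(\*u)$ is the $W$-component of $\*x$, the hypothesis forces it to equal $w$, so $\mathcal{F}_{\*x,w}$ coincides with $\mathcal{F}_{\*x}$ and the identity is immediate.

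I expect the main obstacle to be expository rather than mathematical: one must first pin down that in a recursive SCM the counterfactual value of every endogenous variable under an arbitrary intervention is well-defined and obtained by a finite recursion along a topological order, and that the mutilation procedure leaves this order intact. Once that bookkeeping is in place, both statements reduce to the straightforward structural induction above, with no deeper idea required — consistent with these being cited as standard facts from \citet{galles:pea98}.
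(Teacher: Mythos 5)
The paper does not prove this statement at all: it imports it as a Fact from \citet{galles:pea98} (see also \citet{halpern:98}), so there is no in-paper argument to compare yours against. Your proof is correct for the setting the paper actually uses---recursive SCMs with the mutilation semantics of Def.~\ref{def:l3-semantics}: Effectiveness follows immediately from replacing each $f_X$, $X \in \*X$, by a constant, and your structural induction along a topological order (which mutilation preserves), with the case split on $V_i \in \*X$, $V_i = W$, the remaining variables, and the degenerate cases $Y \in \*X$, $Y = W$, $W \in \*X$, is exactly the standard argument from the cited sources; the only caveat is that it establishes the properties for recursive models only, which is all this paper requires since its treatment is explicitly restricted to recursive SCMs.
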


As a consequence of these axioms, we can prove the following result:
\begin{corollary}
    \label{corol:ctf-parent-form}
    For any set of variables $\*Y$ and $\*X$, we have
    \begin{equation*}
        \*Y_{\*x}(\*u) = \*Y_{\pai{\*Y}^{(1)}, \Pai{\*Y[\*x]}^{(2)}(\*u)}(\*u) = \*Y_{\Pai{\*Y[\*x]}(\*u)}(\*u),
    \end{equation*}
    where $\Pai{\*Y}^{(1)} = \Pai{\*Y} \cap \*X$, $\pai{\*Y}^{(1)}$ are its corresponding values of $\*x$, and $\Pai{\*Y}^{(2)} = \Pai{\*Y} \setminus \Pai{\*Y}^{(1)}$.
    \hfill $\blacksquare$

    \begin{proof}
        We have
        \begin{align}
            \*Y_{\*x}(\*u) &= \*Y_{\pai{\*Y}^{(1)}, \*x'}(\*u) \label{eq:ctf-parent-form-1} \\
            &= \*Y_{\pai{\*Y}^{(1)}, \Pai{\*Y[\*x]}^{(2)}(\*u), \*x'}(\*u) \label{eq:ctf-parent-form-2} \\
            &= \*Y_{\pai{\*Y}^{(1)}, \Pai{\*Y[\*x]}^{(2)}(\*u)}(\*u) \label{eq:ctf-parent-form-3} \\
            &= \*Y_{\Pai{\*Y[\*x]}^{(1)}(\*u), \Pai{\*Y[\*x]}^{(2)}(\*u)}(\*u) \label{eq:ctf-parent-form-4} \\
            &= \*Y_{\Pai{\*Y[\*x]}(\*u)}(\*u). \label{eq:ctf-parent-form-5}
        \end{align}
        Here, $\*x'$ denotes the values of $\*x$ corresponding to $\*X \setminus \Pai{\*Y}^{(1)}$. Eq.~\ref{eq:ctf-parent-form-1} holds by definition. Eq.~\ref{eq:ctf-parent-form-2} holds by the composition property, which can be applied iteratively to each variable in $\*Y$ and $\Pai{\*Y}^{(2)}$. Eq.~\ref{eq:ctf-parent-form-3} holds because the value of $\*Y$ is deterministic once $\Pai{\*Y}$ and $\*U$, the inputs to its functions, are fixed. Eq.~\ref{eq:ctf-parent-form-4} holds by the effectiveness property. Finally Eq.~\ref{eq:ctf-parent-form-5} holds by definition.
    \end{proof}
\end{corollary}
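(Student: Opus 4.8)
The plan is to derive this identity purely from the Composition and Effectiveness properties recalled above, together with the basic fact that in a recursive SCM the value of a variable is determined once its parents and its exogenous inputs are fixed. First I would split the intervention: write $\*x = (\pai{\*Y}^{(1)}, \*x')$, where $\pai{\*Y}^{(1)}$ are the components of $\*x$ that set the parents $\Pai{\*Y}^{(1)} = \Pai{\*Y} \cap \*X$ and $\*x'$ sets the remaining targets $\*X' = \*X \setminus \Pai{\*Y}^{(1)}$; this is purely a relabeling, so $\*Y_{\*x}(\*u) = \*Y_{\pai{\*Y}^{(1)}, \*x'}(\*u)$.

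Next I would append to the intervention the ``natural'' values $\Pai{\*Y[\*x]}^{(2)}(\*u)$ taken by the non-intervened parents $\Pai{\*Y}^{(2)}$ under $\*x$, obtaining $\*Y_{\pai{\*Y}^{(1)}, \*x', \Pai{\*Y[\*x]}^{(2)}(\*u)}(\*u)$. Here Composition does the work: read backwards, it says $\*Y_{\*z}(\*u) = \*Y_{\*z, v}(\*u)$ whenever $v$ is the value $V$ takes under the intervention $\*z$, so any such natural-value intervention may be appended without changing $\*Y$. This is the step I expect to require the most care, because Composition is stated for a single added variable, so appending the whole set $\Pai{\*Y}^{(2)}$ requires an induction: add its members one at a time and, at each stage, re-invoke Composition to check that the already-appended natural-value interventions have not disturbed the natural value of the variable currently being added. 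When some members of $\*Y$ are themselves parents of other members of $\*Y$, those variables must be carried along in the bookkeeping as well, so the induction is really run over $\*Y \cup \Pai{\*Y}^{(2)}$.

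Once every parent of $\*Y$ is pinned by the intervention, the value of $\*Y$ is a function of those parent values and of $\*u$ alone, so the leftover targets $\*x'$ — which touch no parent of $\*Y$ — are inert and may be dropped, yielding $\*Y_{\pai{\*Y}^{(1)}, \Pai{\*Y[\*x]}^{(2)}(\*u)}(\*u)$; this is the first claimed equality. For the second, Effectiveness gives $\Pai{\*Y[\*x]}^{(1)}(\*u) = \pai{\*Y}^{(1)}$ (an intervened parent takes exactly its assigned value), so I may replace $\pai{\*Y}^{(1)}$ by $\Pai{\*Y[\*x]}^{(1)}(\*u)$ and then recombine $\Pai{\*Y}^{(1)} \cup \Pai{\*Y}^{(2)} = \Pai{\*Y}$ to obtain $\*Y_{\Pai{\*Y[\*x]}(\*u)}(\*u)$. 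Everything except the inductive Composition bookkeeping is either a direct axiom application or definitional, so that bookkeeping is the only genuinely delicate point.
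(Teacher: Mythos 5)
Your proposal is correct and follows essentially the same route as the paper's proof: relabel $\*x$ as $(\pai{\*Y}^{(1)}, \*x')$, append the natural values $\Pai{\*Y[\*x]}^{(2)}(\*u)$ by iterated Composition, drop $\*x'$ because $\*Y$ is determined by its parents and $\*u$, then apply Effectiveness and recombine. The inductive bookkeeping you flag is exactly the paper's "applied iteratively to each variable in $\*Y$ and $\Pai{\*Y}^{(2)}$" step, so no gap remains.
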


We also leverage the following results from \citet{correa:etal21}:

\begin{definition}[{\citep[Def.~3]{correa:etal21}}]
    The set of (counterfactual) ancestors of $Y_{\*x}$ w.r.t.\ graph $\cG$, denoted $An(Y_{\*x})$, consist of each $W_{\*z}$ such that $W \in An(Y)_{\cG_{\underline{X}}}$ (which includes $Y$ itself), and $\*z = \*x \cap An(W)_{G_{\overline{\*X}}}$.
    \hfill $\blacksquare$
\end{definition}

\begin{fact}[{\citep[Thm.~1]{correa:etal21}}]
    \label{fact:ancestral-factorization}
    Let $\*W_*$ be an ancestral set, that is, $An(\*W_*) = \*W_*$, and let $\*w_*$ be a vector with a value for each variable in $\*W_*$. Then,
    \begin{equation*}
        P(\*W_* = \*w_*) = P\left(\bigwedge_{W_{\*t} \in \*W_*} W_{\pai{W}} = w \right),
    \end{equation*}
    where each $w$ is $w_t$ and $\pai{W}$ is determined for each $W_t \in \*W_*$ as follows:
    \begin{enumerate}[label=(\roman*)]
        \item the values for variables in $\Pai{W} \cap \*T$ are the same as in $\*t$, and
        \item the values for variables in $\Pai{W} \setminus \*T$ are taken from $\*w_*$, corresponding to the parents of $W$.
    \end{enumerate}
    \hfill $\blacksquare$
\end{fact}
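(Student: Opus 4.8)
The plan is to establish the identity pointwise in the exogenous variables and then integrate. Since both sides of the claimed equation have the form $\int_{\cD_{\*U}} \mathbf{1}[\,\cdot\,]\,dP(\*u)$, it suffices to prove that for every $\*u \in \cD_{\*U}$ the two events are equal as subsets of $\cD_{\*U}$, i.e.
\[
\bigwedge_{W_{\*t}\in\*W_*}\big(W_{\*t}(\*u)=w\big)\ \iff\ \bigwedge_{W_{\*t}\in\*W_*}\big(W_{\pai{W}}(\*u)=w\big),
\]
where, for each $W_{\*t}\in\*W_*$, $w=w_t$ is the component of $\*w_*$ assigned to $W_{\*t}$ and $\pai{W}$ is the vector assembled from $\*t$ and $\*w_*$ via rules (i)--(ii) of the statement. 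Equality of the indicator functions then yields equality of the integrals.

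The key tool is Corollary~\ref{corol:ctf-parent-form}, which gives $W_{\*t}(\*u)=W_{\Pai{W[\*t]}(\*u)}(\*u)$: the intervention can always be rewritten as an intervention setting the parents of $W$ to their realized values. So the whole proof reduces to one claim: whenever $\*u$ lies in the event on either side, the realized parent vector $\Pai{W[\*t]}(\*u)$ equals the nominal vector $\pai{W}$. For a parent $Z\in\Pai{W}\cap\*T$ this is immediate from the effectiveness property, since both assign $Z$ its value in $\*t$. For a parent $Z\in\Pai{W}\setminus\*T$, the realized value is $Z_{\*t}(\*u)$; by the standard sub-lemma that interventions on variables which are not ancestors of $Z$ are inert (obtainable by iterating Corollary~\ref{corol:ctf-parent-form} up a topological order, using composition and effectiveness), this equals $Z_{\*t'}(\*u)$ where $\*t' = \*t\cap An(Z)$ taken in the appropriate subgraph. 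By the definition of counterfactual ancestors and the hypothesis $An(\*W_*)=\*W_*$, the term $Z_{\*t'}$ lies in $\*W_*$, and rule (ii) is precisely the assertion that the corresponding entry of $\pai{W}$ is the $\*w_*$-component indexed by $Z_{\*t'}$.

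With this reduction in hand, I would finish by a topological induction on the underlying endogenous variables. For the forward direction, assume $W_{\*t}(\*u)=w_t$ for every $W_{\*t}\in\*W_*$; fixing $W_{\*t}$, each parent term $Z_{\*t'}$ produced above lies in $\*W_*$, so its realized value is its $\*w_*$-component, whence $\Pai{W[\*t]}(\*u)=\pai{W}$ and $W_{\pai{W}}(\*u)=W_{\*t}(\*u)=w_t$. For the converse, assume $W_{\pai{W}}(\*u)=w_t$ for every $W_{\*t}\in\*W_*$ and process the terms in increasing topological order of $W$: each parent term $Z_{\*t'}\in\*W_*$ has its variable $Z$ strictly earlier than $W$, so by the induction hypothesis $Z_{\*t'}(\*u)$ equals its $\*w_*$-component, which again forces $\Pai{W[\*t]}(\*u)=\pai{W}$, whence $W_{\*t}(\*u)=W_{\pai{W}}(\*u)=w_t$. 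Integrating the resulting pointwise event equivalence against $P(\*u)$ gives the claim.

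The main obstacle I anticipate is pinning down the bookkeeping between $\*t$, the reduced intervention $\*t'$, and the index of the matching component of $\*w_*$ -- in particular, verifying that the ancestral-closure hypothesis $An(\*W_*)=\*W_*$ is exactly what guarantees that every parent term generated by Corollary~\ref{corol:ctf-parent-form} already appears in $\*W_*$ with the index demanded by rule (ii) -- together with a clean statement and proof of the ``non-ancestors are inert'' sub-lemma. Everything else is routine manipulation with the composition and effectiveness axioms.
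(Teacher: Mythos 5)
This statement is not proved in the paper at all: it is imported verbatim as a \emph{Fact}, citing Thm.~1 of \citet{correa:etal21}, and is then used as a black box in the proof of Lemma~\ref{lem:f-ctf-completeness}. So there is no in-paper proof to compare against; what you have written is a blind reconstruction of the external result.

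As a reconstruction it is essentially sound and follows the standard route. Reducing both sides to pointwise equality of events in $\*u$ is the right move, since both probabilities are integrals of indicators over $\cD_{\*U}$; Corollary~\ref{corol:ctf-parent-form} correctly reduces everything to showing that the realized parent vector $\Pai{W[\*t]}(\*u)$ coincides with the nominal $\pai{W}$ assembled by rules (i)--(ii); effectiveness handles parents in $\*T$; and your observation that ancestral closure $An(\*W_*)=\*W_*$ is exactly what guarantees each reduced parent term $Z_{\*t'}$ (with $\*t'=\*t\cap An(Z)$ in the mutilated graph, matching Def.~3) already carries an assigned value in $\*w_*$ is the crux of the argument. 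The forward direction indeed needs no induction, and the topological induction for the converse is set up correctly. The one genuine gap is the ``non-ancestor interventions are inert'' sub-lemma, which you assert rather than prove; it does not follow from composition and effectiveness alone as a purely axiomatic manipulation, but it does follow by the same device the paper uses inside the proof of Corollary~\ref{corol:ctf-parent-form} (a topological induction combining composition, effectiveness, and the determinism of each $f_V$ given its parents and $\*u$), so spelling that out would close the argument. A minor bookkeeping point worth stating explicitly is that the same endogenous variable may appear in $\*W_*$ under several different subscripts, so rule (ii) must be read as selecting the component of $\*w_*$ indexed by the specific counterfactual ancestor $Z_{\*t'}$, which is what your reduction delivers.
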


For the proofs in this work, we leverage a key concept of counterfactuals which we define as functional counterfactuals.

\begin{definition}[Functional Counterfactuals]
    \label{def:function-ctf}
    For any SCM $\cM = \langle \*U, \*V, \cF, P(\*U) \rangle$, denote
    \begin{equation}
        \label{eq:function-ctf}
        \*F = \left\{V_{i[\pai{V_i}^{(j)}]}: V_i \in \*V, \pai{V_i}^{(j)} \in \cD_{\Pai{V_i}}\right \}
    \end{equation}
    as the functional counterfactual set of $\cM$, a set of counterfactual variables containing each variable intervened on every possible instantiation of its parents. Denote $\*f$ and $\cD_{\*F}$ as its instantiation and domain respectively\footnote{We intentionally use the notation of $\*F$ because this set of counterfactual quantities is heavily related to the functions of the SCM, $\cF$. Notably, the behavior of a function $f_{V_i} \in \cF$ w.r.t. $\*U$ can be specified fully by the joint counterfactual $P\left(\bigwedge_{\pai{V_i}^{(j)}} V_{i[\pai{V_i}^{(j)}]}\right)$. This property is leveraged in the proof of Lem.~\ref{lem:f-ctf-completeness}.}. If $\Pai{V_i} = \emptyset$, then $V_i \in \*F$ with no intervention.
    \hfill $\blacksquare$
\end{definition}

The idea behind this definition is to establish a standard family of counterfactual quantities which generalizes all other counterfactuals. We will see the power of functional counterfactuals through the following lemmas.

\begin{lemma}[Functional Counterfactual Uniqueness]
    \label{lem:u-f-uniqueness}
    Let $\cM = \langle \*U, \*V, \cF, P(\*U) \rangle$ be an SCM with functional counterfactual set $\*F$. Let $\cD_{\*U}(\*f) \subseteq \cD_{\*U}$ be the set of values of $\*U$ such that for every $\*u \in \cD_{\*U}(\*f)$, we have $\*F = \*f$ when evaluating $\cM$ with $\*U = \*u$. Then, for any $\*u \in \cD_{\*U}$, there exists a unique $\*f \in \cD_{\*F}$ such that $\*u \in \cD_{\*U}(\*f)$.
    \hfill $\blacksquare$

    \begin{proof}
        Note that for any $\*u \in \cD_{\*U}$, we can construct the value of $\*f$ such that $\*u \in \cD_{\*U}(\*f)$ as follows. For every $\*V_i \in \*V$ and $\pai{V_i}^{(j)} \in \cD_{\Pai{V_i}}$, choose $v_i^{(j)} = f_{V_i}(\pai{V_i}^{(j)}, \*u)$. Collectively, these values of $v_i^{(j)}$ can be used to form $\*f$. It is clear that $\*u \in \cD_{\*U}(\*f)$ because $V_{i[\pai{V_i}^{(j)}]}(\*u) = f_{V_i}(\pai{V_i}^{(j)}, \*u) = v_i^{(j)}$ for all $i$ and $j$, implying that $\*F = \*f$. Further, $\*f$ is unique, since $V_{i[\pai{V_i}^{(j)}]}(\*u)$ can only be equal to one unique deterministic value once $\*u$ is fixed.
    \end{proof}
\end{lemma}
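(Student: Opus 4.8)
The plan is to recognize that the claim is essentially a well-definedness statement: for a fixed exogenous state $\*u$, every counterfactual variable in the functional set $\*F$ (Def.~\ref{def:function-ctf}) evaluates to a single deterministic value, so collecting all of these values produces one and only one tuple $\*f \in \cD_{\*F}$ consistent with $\*u$. Existence and uniqueness then both reduce to the fact that a function applied to fixed arguments returns exactly one output.

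For \textbf{existence}, I would fix $\*u \in \cD_{\*U}$ and, for each $V_i \in \*V$ and each $\pai{V_i}^{(j)} \in \cD_{\Pai{V_i}}$, evaluate $V_{i[\pai{V_i}^{(j)}]}(\*u)$ using the mutilation semantics of Def.~\ref{def:l3-semantics}. The key simplification is that the intervention fixes \emph{all} of $\Pai{V_i}$ at once, so in the corresponding submodel $\cF_{\pai{V_i}^{(j)}}$ the value of $V_i$ does not depend on the evaluation of any other variable: it is simply $V_{i[\pai{V_i}^{(j)}]}(\*u) = f_{V_i}(\pai{V_i}^{(j)}, \ui{V_i})$, where $\ui{V_i}$ is the restriction of $\*u$ to $\Ui{V_i}$ (and when $\Pai{V_i} = \emptyset$ the variable enters $\*F$ with no intervention and equals $f_{V_i}(\ui{V_i})$, as in Def.~\ref{def:function-ctf}). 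Setting $v_i^{(j)} := f_{V_i}(\pai{V_i}^{(j)}, \ui{V_i})$ and collecting these over all $i, j$ assembles a tuple $\*f(\*u) \in \cD_{\*F}$, and by construction $\*F = \*f(\*u)$ when $\cM$ is evaluated at $\*u$, i.e.\ $\*u \in \cD_{\*U}(\*f(\*u))$.

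For \textbf{uniqueness}, suppose $\*u \in \cD_{\*U}(\*f)$ and $\*u \in \cD_{\*U}(\*f')$. By the definition of $\cD_{\*U}(\cdot)$, each of $\*f$ and $\*f'$ agrees coordinate-by-coordinate with the valuation of $\*F$ at $\*u$; but each coordinate $V_{i[\pai{V_i}^{(j)}]}(\*u)$ equals the single value $f_{V_i}(\pai{V_i}^{(j)}, \ui{V_i})$ identified above, which is uniquely determined once $\*u$ is fixed. Hence $\*f = \*f' = \*f(\*u)$.

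I do not expect a substantive obstacle here. The two points that need care are: (i) recognizing that intervening on the \emph{entire} parent set of $V_i$ collapses the counterfactual to a single application of $f_{V_i}$ rather than to a recursive unrolling of the structural equations, so that recursivity/acyclicity of $\cM$ is used only implicitly to guarantee the submodels are well-defined; and (ii) handling root variables ($\Pai{V_i} = \emptyset$) on the same footing as the rest. An equivalent and perhaps cleaner write-up defines the map $\*u \mapsto \*f(\*u) := \big( f_{V_i}(\pai{V_i}^{(j)}, \ui{V_i}) : V_i \in \*V,\, \pai{V_i}^{(j)} \in \cD_{\Pai{V_i}} \big)$, observes that it is a single-valued function into $\cD_{\*F}$, and notes that $\cD_{\*U}(\*f) = \{\*u : \*f(\*u) = \*f\}$ partitions $\cD_{\*U}$, which is exactly the assertion of the lemma.
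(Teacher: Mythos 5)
Your proposal is correct and follows essentially the same argument as the paper: existence by constructing $\*f(\*u)$ coordinate-wise as $f_{V_i}(\pai{V_i}^{(j)}, \*u)$ (since intervening on the full parent set reduces each counterfactual to a single function evaluation), and uniqueness from the determinism of each such evaluation once $\*u$ is fixed. The extra care you take with root variables and with framing the result as a well-definedness/partition statement is a fine elaboration of the same idea, not a different route.
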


\begin{lemma}[Functional Counterfactual Completeness]
    \label{lem:f-ctf-completeness}
    Let $\cM = \langle \*U, \*V, \cF, P(\*U) \rangle$, $\cM' = \langle \*U', \*V, \cF', P(\*U') \rangle$ be two SCMs both defined over $\*V$ and with the same functional counterfactual set $\*F$. Then $P^{\cM}(\*F = \*f) = P^{\cM'}(\*F = \*f)$ for all $\*f \in \cD_{\*F}$ if and only if $\cL_3(\cM) = \cL_3(\cM')$.
    \hfill $\blacksquare$

    \begin{proof}
        The backward direction of this proof is trivial since all functional counterfactuals belong to the set of all counterfactuals (i.e.\ $P^{\cM}(\*F = \*f) \in \cL_3(\cM)$ and $P^{\cM'}(\*F = \*f) \in \cL_3(\cM')$ for all $\*f \in \cD_{\*F}$). Therefore, $\cL_3(\cM) = \cL_3(\cM')$ implies $P^{\cM}(\*F = \*f) = P^{\cM'}(\*F = \*f)$ for all $\*f \in \cD_{\*F}$.

        To prove the forward direction, we must show that $P^{\cM}(\*F = \*f) = P^{\cM'}(\*F = \*f)$ for all $\*f \in \cD_{\*F}$ implies $\cL_3(\cM) = \cL_3(\cM')$. Consider any arbitrary $\cL_3$ query from $\cM$,
        \begin{equation*}
            Q = P^{\cM}(\*y_{1[\*x_1]}, \*y_{2[\*x_2]}, \dots) \in \cL_3(\cM).
        \end{equation*}
        Denote $Q'$ as the equivalent value from $\cM'$. Denote $\*Y_*$ as the set of all of the counterfactual terms of $Q$. For each term $\*y_{i[\*x_i]}$, consider its ancestral set $An(\*y_{i[\*x_i]})$, and denote $An(\*Y_*)$ as the union of all of these sets. Then note that
        \begin{equation*}
            Q = \sum_{\*a \in {An(\*Y_*) \setminus \*Y_*}} P^{\cM}(An(\*y_{1[\*x_1]}), An(\*y_{2[\*x_2]}), \dots),
        \end{equation*}
        that is, $Q$ is equal to the joint distribution of its ancestral set with all of the terms not in the original query marginalized away. Then, by Fact \ref{fact:ancestral-factorization}, we have
        \begin{equation}
            \label{eq:f-ctf-factorization}
            Q = \sum_{\*a \in An(\*Y_*) \setminus \*Y_*} P^{\cM} \left(\bigwedge_{W_{\*t} \in An(\*Y_*)} W_{\pai{W}} = w \right),
        \end{equation}
        where $w$ and $\pai{w}$ are defined as specified in Fact \ref{fact:ancestral-factorization}. This ancestral set factorization leaves every term with a single variable under the intervention of its parents. If there are duplicate terms (i.e. $W_{\pai{W}}$ appears multiple times for the same $W$ and $\pai{W}$), then either they can be reduced to a single term if every value of $w$ is equal (since $p \wedge p = p$ for any proposition $p$), or, if $w$ is not equal for every term, then $Q = 0$ for both $\cM$ and $\cM'$ since $W_{\pai{W}}$ cannot be equal to two values at once.

        Finally, we note that the probability term is simply a marginalized quantity from the functional counterfactual, so we see that
        \begin{equation*}
            Q = \sum_{a \in An(\*Y_*) \setminus \*Y_*} \left( \sum_{\*f' \in \*F \setminus \*W_*} P^{\cM}(\*F = \*f) \right),
        \end{equation*}
        where $\*W_*$ refers to the set of $\*W_{\pai{W}}$ terms in Eq.~\ref{eq:f-ctf-factorization}, and $\*f$ is defined for each value of $V_{\pai{V}} \in \*F$ as $w$ if $V_{\pai{V}} \in \*W_*$ or the value from $\*f'$ otherwise.

        Therefore, since $P^{\cM}(\*F = \*f) = P^{\cM'}(\*F = \*f)$, this implies that $Q = Q'$. With this being true for all values of $Q \in \cL_3(\cM)$, this means that $\cL_3(\cM) = \cL_3(\cM')$.
    \end{proof}
\end{lemma}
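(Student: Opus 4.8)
The plan is to prove the two directions separately; the backward implication is essentially a one-line observation and the forward implication carries all the weight. For the backward direction: by Def.~\ref{def:function-ctf} the event $\{\*F = \*f\}$ is, for each fixed $\*f \in \cD_{\*F}$, a conjunction of counterfactual atoms $V_{i[\pai{V_i}^{(j)}]} = v_i^{(j)}$ (one per coordinate of $\*F$), so $P(\*F = \*f)$ is itself an $\cL_3$ quantity; hence $\cL_3(\cM) = \cL_3(\cM')$ immediately forces $P^{\cM}(\*F = \*f) = P^{\cM'}(\*F = \*f)$ for every $\*f$, and nothing further is needed.

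The forward direction is where the real argument lies: I would show that an \emph{arbitrary} $\cL_3$ query $Q = P^{\cM}(\*y_{1[\*x_1]}, \*y_{2[\*x_2]}, \dots)$ is a fixed, model-independent function of the joint law $P(\*F)$, so that the hypothesis $P^{\cM}(\*F) = P^{\cM'}(\*F)$ yields $Q = Q'$ and, $Q$ being arbitrary, $\cL_3(\cM) = \cL_3(\cM')$. The reduction has three steps. First, \textbf{pass to the ancestral set}: let $\*Y_*$ collect all counterfactual terms of $Q$ and let $An(\*Y_*)$ be the union of their counterfactual-ancestor sets; being a union of ancestral sets it is again ancestral, and $Q$ equals the joint law of $An(\*Y_*)$ with the coordinates not appearing in $Q$ marginalized away. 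Second, \textbf{factorize into parent-level atoms}: apply Fact~\ref{fact:ancestral-factorization} to rewrite that joint as $P\left(\bigwedge_{W_{\*t} \in An(\*Y_*)} W_{\pai{W}} = w\right)$, where each conjunct is a \emph{single} endogenous variable intervened on one concrete parent setting $\pai{W}$ (its coordinates taken partly from the ambient interventions $\*t$ and partly from the realized ancestor values) --- i.e.\ exactly one coordinate of $\*F$; the structural backbone here is composition/effectiveness together with Cor.~\ref{corol:ctf-parent-form}, which I would cite rather than re-derive. Third, \textbf{read it off as a marginal of $P(\*F)$}: a conjunction of coordinate-constraints on $\*F$ is obtained by summing $P(\*F = \*f)$ over all $\*f$ consistent with those constraints, so $Q$ is a finite sum of marginals of $P(\*F)$, the same arithmetic in $\cM$ and $\cM'$.

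Two bookkeeping points get dispatched inside the second step: if the same atom $W_{\pai{W}}$ (same variable, same parent instantiation) occurs more than once, collapse the duplicates when their asserted values agree (idempotence of $\wedge$) and note that $Q = Q' = 0$ when they disagree, since a counterfactual variable is single-valued; and when $\Pai{V_i} = \emptyset$ the corresponding coordinate of $\*F$ is simply $V_i$ with no intervention, as in Def.~\ref{def:function-ctf}. The step I expect to be the main obstacle is the first one: one must check carefully that $Q$ is genuinely the stated marginal of the joint over $An(\*Y_*)$ and that $An(\*Y_*)$ is truly closed under counterfactual ancestry, so that Fact~\ref{fact:ancestral-factorization} applies --- this is where interventions interact with ancestry across the distinct terms $\*Y_{i[\*x_i]}$ (the ancestor set of $\*Y_{i[\*x_i]}$ uses $\*x_i$ restricted to ancestors in the appropriate mutilated/overlined graphs) and where different terms may share variables. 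Once that is in place, the remaining two steps are essentially definitional, and equality of the functional-counterfactual law propagates to $Q = Q'$ at once.
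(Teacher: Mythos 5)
Your proposal is correct and mirrors the paper's own argument step for step: the backward direction as an immediate consequence of functional counterfactuals being $\cL_3$ quantities, and the forward direction via the ancestral-set marginalization, the factorization of Fact~\ref{fact:ancestral-factorization} into single-variable parent-intervened atoms (with the same treatment of duplicate atoms and parentless variables), and the final reading of $Q$ as a marginal of $P(\*F)$. No substantive differences from the paper's proof.
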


\subsection{Proofs of Sec.~\ref{sec:abstract-ncm}}

The abstractions in this work follow the theory developed by \citet{beckers2019abstracting}. We first note that \citet{beckers2019abstracting} utilizes the idea of ``allowed interventions''. Specifically, for an SCM $\cM_L$ over variables $\*V_L$, the notation $\cI_L$ is used to indicate a set of interventions on $\*V_L$ that are ``allowed'' in $\cM_L$. This is relevant when defining interventions across abstractions, since not all interventions on the lower level will have a corresponding intervention on the higher level, as will be shown. Further, the notation $\cI_L^*$ is used to define the set of all possible interventions over $\*V_L$.

We use the following definitions, translated to use the notation in our work\footnote{Note that there can be at most one such possible intervention $\*X_H \gets \*x_H$ such that $\omega_{\tau}(\*X_L \gets \*x_L) = \*X_H \gets \*x_H$. It is possible that no such intervention exists, but for this work, we only consider cases where there $\omega_{\tau}(\*X_L \gets \*x_L)$ exists for all $\*X_L \gets \*x_L \in \cI_L$.}.

\begin{restatable}[{\citep[Def.~3.12]{beckers2019abstracting}}]{definition}{beckersnotation}
    Some relevant notation is defined as follows:
    \begin{itemize}
        \item Given a set of variables $\*V$, $\*X \subseteq \*V$, and $\*x \in \cD_{\*X}$, let $\rst(\*V, \*x) = \{\*v \in \cD_{\*V}: \*v \text{ is consistent with } \*x\}$.

        \item Given variables $\*V_L$ and $\*V_H$, mapping $\tau: \cD_{\*V_L} \rightarrow \cD_{\*V_H}$, and value set $\*T \subseteq \cD_{\*V_L}$, denote $\tau(\*T) = \{\tau(\*v_L) : \*v_L \in \*T\}$.
        
        \item Given allowed interventions $\cI_L$ and $\cI_H$ over $\*V_L$ and $\*V_H$ respectively, define $\omega_{\tau}: \cI_L \rightarrow \cI_H$ such that $\omega_{\tau}(\*X_L \gets \*x_L) = \*X_H \gets \*x_H$, where $\tau(\rst(\*V_L, \*x_L)) = \rst(\*V_H, \*x_H)$.
    \end{itemize}
\end{restatable}

\begin{restatable}[$\tau$-Abstraction {\citep[Def.~3.13]{beckers2019abstracting}}]{definition}{beckerstauabs}
    \label{def:tau-abs}
    Let $\cM_L = \langle \*U_L, \*V_L, \cF_L, P(\*U_L) \rangle$ and $\cM_H = \langle \*U_H, \*V_H, \cF_H, P(\*U_H)\rangle$ be two SCMs. Let $\cI_L$ and $\cI_H$ be the sets of allowed interventions respectively. Given $\tau: \cD_{\*V_L} \rightarrow \cD_{\*V_H}$, we say that $(\cM_H, \cI_H)$ is a $\tau$-abstraction of $(\cM_L, \cI_L)$ if:
    \begin{enumerate}
        \item $\tau$ is surjective;
        \item There exists surjective $\tau_{\*U}: \cD_{\*U_L} \rightarrow \cD_{\*U_H}$ that is compatible with $\tau$, i.e.
        \begin{equation}
            \label{eq:tau-u-compatibility}
            \tau(\cM_{L[\*X_L \gets \*x_L]}(\*u_L)) = \cM_{H[\omega_{\tau}(\*X_L \gets \*x_L)]}(\tau_{\*U}(\*u_L)),
        \end{equation}
        for all $\*u_L \in \cD_{\*U_L}$ and all $(\*X_L \gets \*x_L) \in \cI_L$;
        \item $\cI_H = \omega_{\tau}(\cI_L)$.
    \end{enumerate}
    \hfill $\blacksquare$
\end{restatable}

Further, we will assume that if $(\cM_H, \cI_H)$ is a $\tau$-abstraction of $(\cM_L, \cI_L)$, then $P(\*U_H) = \tau_{\*U}(P(\*U_L)) = P(\tau_{\*U}(\*U_L))$, that is, the distribution of $P(\*U_H)$ can be obtained from $P(\*U_L)$ via the push-forward measure through $\tau_{\*U}$. While it is not explicitly stated in the definition, this property aligns with the intention of linking the spaces of $\*U_L$ and $\*U_H$ through $\tau_{\*U}$.

\begin{restatable}[Strong $\tau$-Abstraction {\citep[Def.~3.15]{beckers2019abstracting}}]{definition}{beckersstrongtauabs}
    \label{def:strong-tau-abs}
    We say that $\cM_H$ is a strong $\tau$-abstraction of $\cM_L$ if $(\cM_H, \cI_H)$ is a $\tau$-abstraction of $(\cM_L, \cI_L)$ and $\cI_H = \cI_H^*$.
    \hfill $\blacksquare$
\end{restatable}

\begin{restatable}[Constructive $\tau$-Abstraction {\citep[Def.~3.19]{beckers2019abstracting}}]{definition}{beckersconstauabs}
    \label{def:cons-tau-abs}
    $\cM_H$ is a constructive $\tau$-abstraction of $\cM_L$ if $\cM_H$ is a strong $\tau$-abstraction of $\cM_L$, and there exists a partition of $\*V_L$, $\bbC = \{\*C_1, \*C_2, \dots, \*C_{n+1}\}$ (where $n = |\*V_H|$) with nonempty $\*C_1$ to $\*C_n$, such that $\tau$ can be decomposed as $\tau = (\tau_{\*C_1}, \tau_{\*C_2}, \dots, \tau_{\*C_n})$, where each $\tau_{\*C_i} : \cD_{\*C_i} \rightarrow \cD_{V_{H, i}}$ maps the $i$th partition to the $i$th variable of $\*V_H$.
    \hfill $\blacksquare$
\end{restatable}

In typical causal inference settings where nature is modeled by an SCM, every possible intervention is well-defined. In practice, some interventions may not intuitively correspond to an explicit action. For example, in a medical dataset, perhaps cholesterol level is measured from each person in the dataset. It may not make sense to consider interventions on cholesterol level, since it is not clear how, in practice, one would fix or change someone's cholesterol level to specific values. Nonetheless, it is still possible to theoretically study the effects of such an intervention through the semantics of SCMs. Therefore, the notion of ``allowed interventions'' is not typically discussed outside of works of abstractions as every intervention is well-defined.

This no longer holds true in the discussion of abstractions, which is why definitions of abstractions like Def.~\ref{def:tau-abs} explicitly require the allowed interventions to be specified. Notably, since every intervention in an SCM involves the creation of a submodel through the mutilation procedure, an intervention on a lower level SCM may not have an obvious counterpart in the higher level SCM. For an in depth understanding of the intricacies of this, we defer readers to read \citet{beckers2019abstracting}. In this work, given that one would like to use the higher level SCM $\cM_H$ for downstream inference tasks, we provide the maximum possible flexibility and assume that $\cI_H = \cI_H^*$. 

Defining $\cI_L$ is somewhat trickier, since in any nontrivial abstraction, there are cases where an intervention on the lower level does not have an equivalent intervention on the higher level. To decide on a criteria of which interventions are allowed on the lower level model, consider the following lemmas.

\begin{lemma}
    \label{lem:omega-tau-connection}
    Let $\tau$ be a constructive abstraction function w.r.t.~$\bbC$ and $\bbD$. If $\*X_L$ is a union of clusters in $\bbC$ (that is, there exists $\bbC' \subseteq \bbC$ such that $\*X_L = \bigcup_{\*C_i \in \bbC'} \*C_i$), then $\omega_{\tau}(\*X_L \gets \*x_L)$ exists and is equal to $\tau(\*X_L) \gets \tau(\*x_L)$.
    \hfill $\blacksquare$

    \begin{proof}
        If $\*X_L$ is a union of clusters in $\bbC$, then without loss of generality, suppose it can be decomposed as $\*X_L = (\*C_1, \*C_2, \dots, \*C_k)$, and $\*V_L \setminus \*X_L = (\*C_{k + 1}, \*C_{k+2}, \dots, \*C_n)$. By Def.~\ref{def:tau}, we can then compute, for any $\*v_L \in \rst(\*V_L, \*x_L)$,
        \begin{align*}
            \tau(\*v_L) &= \tau((\*x_L, \*v_L \setminus \*x_L)) \\
            &= \tau((\*c_1, \dots, \*c_{k}, \*c_{k+1}, \dots, \*c_n)) \\
            &= (\tau_{\*C_1}(\*c_1), \dots, \tau_{\*C_k}(\*c_k), \tau_{\*C_{k+1}}(\*c_{k+1}), \dots, \tau_{\*C_n}(\*c_n)) \\
            &= (\tau(\*x_L), \tau(\*v_L \setminus \*x_L)) \in \rst(\tau(\*V_L), \tau(\*x_L)).
        \end{align*}
        Moreover, for any $\*v_H \in \rst(\tau(\*V_L), \tau(\*x_L))$, there exists $\*v_L \in \rst(\*V_L, \*x_L)$ such that $\tau(\*v_L) = \*v_H$. Specifically, if $\*v_H = (\cD^{j_1}_{\*C_1}, \cD^{j_2}_{\*C_2}\dots, \cD_{\*C_n}^{j_n})$, then any $\*v_L = (\*c_1, \*c_2, \dots, \*c_n)$ satisfies this relationship if $\*c_i \in \cD_{\*C_i}^{j_i}$ for all $i$. Hence, by definition, $\omega_{\tau}(\*X_L \gets \*x_L)$ exists and is equal to $\tau(\*X_L) \gets \tau(\*x_L)$.
    \end{proof}
\end{lemma}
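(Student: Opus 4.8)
The plan is to unpack the definition of $\omega_{\tau}$ and verify the set equation $\tau(\rst(\*V_L, \*x_L)) = \rst(\*V_H, \tau(\*x_L))$ directly, exploiting the fact that a constructive abstraction function decomposes coordinate-wise over the clusters of $\bbC$. First I would observe that because $\*X_L$ is a union of clusters of $\bbC$, the expressions $\tau(\*X_L)$ and $\tau(\*x_L)$ are well-defined by the notational extension in item~3 of Def.~\ref{def:tau}, so the candidate high-level intervention $\tau(\*X_L) \gets \tau(\*x_L)$ makes sense. It then suffices to show this candidate is the $\*X_H \gets \*x_H$ demanded by the definition of $\omega_{\tau}$, and that it is the unique such intervention (uniqueness being the footnote remark accompanying that definition).

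For the set equation I would reorder the clusters so that $\*X_L = \*C_1 \cup \cdots \cup \*C_k$ and $\*V_L \setminus \*X_L = \*C_{k+1} \cup \cdots \cup \*C_n$. The forward inclusion $\tau(\rst(\*V_L, \*x_L)) \subseteq \rst(\*V_H, \tau(\*x_L))$ is essentially immediate: any $\*v_L$ consistent with $\*x_L$ has the form $(\*c_1,\dots,\*c_n)$ with $(\*c_1,\dots,\*c_k)$ equal to $\*x_L$, so by the coordinate-wise decomposition $\tau(\*v_L) = (\tau_{\*C_1}(\*c_1),\dots,\tau_{\*C_n}(\*c_n))$ agrees with $\tau(\*x_L)$ on the $\tau(\*X_L)$-coordinates, hence lies in $\rst(\*V_H, \tau(\*x_L))$.

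The reverse inclusion is where the only genuine work lies. Given $\*v_H \in \rst(\*V_H, \tau(\*x_L))$, I would write it as a tuple of intravariable-cluster blocks $(\cD^{j_1}_{\*C_1},\dots,\cD^{j_n}_{\*C_n})$ via the bijections of Def.~\ref{def:tau}; consistency with $\tau(\*x_L)$ forces the first $k$ blocks to be exactly the blocks containing the corresponding components of $\*x_L$. I would then construct a preimage $\*v_L = (\*c_1,\dots,\*c_n)$ by setting $\*c_i$ to the $\*x_L$-component on $\*C_i$ for $i \le k$, and choosing $\*c_i$ to be any element of the block $\cD^{j_i}_{\*C_i}$ for $i > k$ — which is possible because each block of the partition $\bbD_{\*C_i}$ is nonempty. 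This $\*v_L$ is consistent with $\*x_L$ and satisfies $\tau(\*v_L) = \*v_H$, giving the reverse inclusion and hence equality; plugging into the definition of $\omega_{\tau}$ then yields $\omega_{\tau}(\*X_L \gets \*x_L) = \tau(\*X_L) \gets \tau(\*x_L)$.

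The main obstacle is simply the bookkeeping in the reverse inclusion — ensuring the chosen preimage simultaneously respects the ``consistent with $\*x_L$'' constraint on the intervened clusters and lands in the prescribed block on the non-intervened clusters — together with the small but essential observation that this construction succeeds precisely because every intravariable-cluster block is nonempty (part of what it means for $\bbD_{\*C_i}$ to be a partition of $\cD_{\*C_i}$). Everything else is a direct consequence of the clusterwise form of $\tau$ and the hypothesis that $\*X_L$ is a union of clusters.
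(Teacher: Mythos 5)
Your proposal is correct and follows essentially the same route as the paper's proof: verify the set equality $\tau(\rst(\*V_L,\*x_L)) = \rst(\tau(\*V_L),\tau(\*x_L))$ via the clusterwise decomposition of $\tau$, with the forward inclusion immediate and the reverse inclusion obtained by building a preimage from nonempty intravariable blocks. If anything, your bookkeeping in the reverse inclusion (forcing the intervened coordinates to equal the components of $\*x_L$ rather than merely lie in the same block) is stated a touch more carefully than the paper's own wording, but the argument is the same.
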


\begin{lemma}
    \label{lem:omega-surjectivity}
    Let $\tau$ be a constructive abstraction function over $\bbC$ and $\bbD$. For any intervention $\*X_H \gets \*x_H \in \cI_H^*$, there exists $\*X_L$ such that $\*X_L$ is a union of clusters of $\bbC$, and $\omega_{\tau}(\*X_L \gets \*x_L) = \*X_H \gets \*x_H$.
    \hfill $\blacksquare$

    \begin{proof}
        Without loss of generality, suppose that $\*X_H = \{X_1, X_2, \dots, X_k\}$, corresponding to clusters $\{\*C_1, \*C_2, \dots, \*C_k\}$ respectively. Then, choose $\*X_L = \bigcup_{i=1}^k \*C_i$, a union of clusters. The proof holds if for each $\*C_i$, there exist values $\*c_i \in \cD_{\*C_i}$ such that $\tau_{\*C_i}(\*c_i) = x_i$. Note that by definition, $x_i$ corresponds to some $\cD^j_{\*C_i}$ such that $\tau_{\*C_i}(\*c_i) = x_i$ for all $\*c_i \in \cD^j_{\*C_i}$. As $\cD^j_{\*C_i}$ is an element of a partition of $\cD_{\*C_i}$, it must be nonempty. Hence, the claim holds, and $\*x_L$ can be constructed by taking the one such value $\*c_i$ for each $i$. Lemma \ref{lem:omega-tau-connection} can then be used to show that $\omega_{\tau}(\*X_L \gets \*x_L) = \tau(\*X_L) \gets \tau(\*x_L)$, concluding the proof.
    \end{proof}
\end{lemma}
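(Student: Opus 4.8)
The plan is to construct an explicit preimage under $\omega_{\tau}$, relying on Lemma~\ref{lem:omega-tau-connection} to discharge the restriction-set condition in the definition of $\omega_{\tau}$. Since $\tau$ is a constructive abstraction function w.r.t.\ $\bbC$ and $\bbD$, item~1 of Def.~\ref{def:tau} furnishes a bijection between $\*V_H$ and $\bbC$; write $\*X_H = \{X_1, \dots, X_k\}$ and let $\*C_1, \dots, \*C_k$ be the corresponding clusters. The natural candidate is $\*X_L := \bigcup_{i=1}^{k} \*C_i$, which is a union of clusters of $\bbC$ by construction, so the only remaining work is to exhibit a value $\*x_L \in \cD_{\*X_L}$ for which $\omega_{\tau}(\*X_L \gets \*x_L) = \*X_H \gets \*x_H$.

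To build $\*x_L$ I would proceed cluster by cluster. For each $i$, item~2 of Def.~\ref{def:tau} identifies the value $x_i$ of $X_i$ with a block $\cD^{j_i}_{\*C_i}$ of the intravariable partition $\bbD_{\*C_i}$; because $\bbD_{\*C_i}$ is a partition of $\cD_{\*C_i}$, that block is nonempty, so I may pick some $\*c_i \in \cD^{j_i}_{\*C_i}$, and then $\tau_{\*C_i}(\*c_i) = x_i$ by item~3 of Def.~\ref{def:tau}. Taking $\*x_L = (\*c_i : i = 1, \dots, k)$ yields $\tau(\*X_L) = \*X_H$ and $\tau(\*x_L) = (\tau_{\*C_i}(\*c_i))_{i} = \*x_H$.

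Finally I would invoke Lemma~\ref{lem:omega-tau-connection}: since $\*X_L$ is a union of clusters, $\omega_{\tau}(\*X_L \gets \*x_L)$ exists and equals $\tau(\*X_L) \gets \tau(\*x_L) = \*X_H \gets \*x_H$, which is exactly the claim. There is no real obstacle here: the only points requiring attention are that the witnesses $\*c_i$ actually exist, which is immediate from the blocks of $\bbD_{\*C_i}$ being nonempty, and that the chosen $\*x_L$ indeed maps correctly under $\omega_{\tau}$, which is precisely the content handed to us by Lemma~\ref{lem:omega-tau-connection}. Conceptually, the lemma just says that surjectivity of each $\tau_{\*C_i}$ onto $\cD_{V_{H,i}}$ lifts to surjectivity of $\omega_{\tau}$ onto $\cI_H^{*}$.
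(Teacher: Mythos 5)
Your proposal is correct and follows essentially the same route as the paper's proof: choose $\*X_L$ as the union of the clusters corresponding to $\*X_H$, pick a representative $\*c_i$ from the nonempty partition block $\cD^{j_i}_{\*C_i}$ associated with each $x_i$, and conclude via Lemma~\ref{lem:omega-tau-connection} that $\omega_{\tau}(\*X_L \gets \*x_L) = \tau(\*X_L) \gets \tau(\*x_L) = \*X_H \gets \*x_H$. No gaps.
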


In other words, by Lemma \ref{lem:omega-tau-connection}, an intervention on a union of intervariable clusters will always have an intuitive well-defined corresponding intervention. Moreover, Lemma \ref{lem:omega-surjectivity} shows that all high level interventions are accounted for on the lower level model\footnote{There are some contrived settings in which $\omega_{\tau}(\*X_L \gets \*x_L)$ still exists even when $\*X_L$ is not a union of clusters, but it is inconsequential to omit these cases since all possible high level interventions are covered by Lemma \ref{lem:omega-surjectivity}.}. Hence, we make the assumption that $\cI_L$ is defined such that $\*X_L \gets \*x_L \in \cI_L$ if and only if $\*X_L$ is a union of clusters of $\bbC$. If, for some reason, this choice of $\cI_L$ is not desirable for some application, it may indicate that the choice of clusters, $\bbC$ and $\bbD$, should be revised. Under these assumptions, the work in the main text can be presented without the need to explicitly consider allowed interventions, simplifying the discussion.

We now show the main connection between this work and established works by proving Prop.~\ref{prop:abs-connect}.

\absconnect*

\begin{proof}
    For this proof, define $\cM_L = \langle \*U_L, \*V_L, \cF_L, P(\*U_L) \rangle$ and $\cM_H = \langle \*U_H, \*V_H, \cF_H, P(\*U_H) \rangle$.

    We first show the forward direction: if $\cM_H$ is $\cL_3$-$\tau$ consistent with $\cM_L$, then there exists SCM $\cM_H'$ such that $\cL_3(\cM_H') = \cL_3(\cM_H)$ and $\cM_H'$ is a constructive $\tau$-abstraction of $\cM_L$.

    We will start this proof by first attempting to show that $\cM_H$ itself is a constructive $\tau$-abstraction of $\cM_L$. We first note that the intervariable clustering $\bbC$ is defined to be a partition of a subset of $\*V_L$, and $\tau$ is constructed by Def.~\ref{def:tau}, which is specifically defined to be decomposed into subfunctions $\tau_{\*C_i}$ mapping cluster $\*C_i$ to a corresponding $V_{H, i}$ for each $i \in \{1, \dots, n\}$. If we further define $\*C_{n+1} = \*V_L \setminus \bigcup_{\*C_i \in \bbC} \*C_i$ (variables that are projected out through the abstraction), then we see that $\{\*C_1, \*C_2, \dots, \*C_{n+1}\}$ forms a partition over $\*V_L$.

    What remains is to show that $\cM_H$ is a strong $\tau$-abstraction of $\cM_L$. As $\cI_H = \cI_H^*$ by assumption, this reduces to showing that it is a $\tau$-abstraction. We show that the three requirements of Def.~\ref{def:tau-abs} hold:

    \begin{enumerate}
        \item Consider any $\*v_H \in \cD_{\*V_H}$. For all $V_{H, i} \in \*V_H$, note that $V_{H, i}$ must correspond to some $\*C_i$, and $v_{H, i}$ must correspond to some $\cD^j_{\*C_i}$ by definition. Since $\cD^j_{\*C_i}$ is part of a partition, it must be nonempty, so $\tau_{\*C_i}(\*c_i) = v_{H, i}$ for any choice of $\*c_i \in \cD^j_{\*C_i}$. Hence, $\tau_{\*C_i}$ is surjective for all $i$, implying that $\tau$ is surjective as it is simply a collection of all $\tau_{\*C_i}$.

        \item Consider the functional counterfactual set $\*F_H$ of $\cM_H$ (as defined in Eq.~\ref{eq:function-ctf}). For every $\*f_H \in \cD_{\*F_H}$, consider the counterfactual quantity
        \begin{equation*}
            \begin{split}
                & P(\*F_H = \*f_H) = \\
                & P\left(\bigwedge_{V_{H,i} \in \*V_H} \bigwedge_{\pai{V_{H,i}}^{(j)} \in \cD_{\Pai{V_{H,i}}}} V_{H,i\left[\pai{V_{H_i}}^{(j)}\right]} = v_{H,i}^{(j)}\right).
            \end{split}
        \end{equation*}
        By Eq.~\ref{eq:q-tau-consistency}, this is equal to
        \begin{equation*}
            \begin{split}
                & P(\*F_H = \*f_H) = \\
                & \sum_{\forall i, j \*c_{L, i}^{(j)} \in \cD_{\*V_L} : \tau(\*c_{L, i}^{(j)}) = v_{H, i}^{(j)}} P\left(\bigwedge_{i, j} \*C_{L, i\left[\*x_{L, i}^{(j)}\right]} = \*c_{L, i}^{(j)}\right)
            \end{split}
        \end{equation*}
        for all choices of $\*x_{L, i}^{(j)}$ such that $\tau(\*x_{L, i}^{(j)}) = \pai{V_{H, i}}^{(j)}$. Recall that $\cD_{\*U_H}(\*f_H) \subseteq \cD_{\*U_H}$ is the set of values of $\*U_H$ such that $\*u_H \in \cD_{\*U_H}(\*f_H)$ if and only if $\*F_H = \*f_H$ when $\*U_H = \*u_H$. Similarly, define $\cD_{\*U_L}(\*f_H) \subseteq \cD_{\*U_L}$ as the set of values of $\*U_L$ such that $\*u_L \in \cD_{\*U_L}(\*f_H)$ if and only if $\bigwedge_{i,j}\*C_{L, i\left[ \*x_{L, i}^{(j)}\right]} = \*c_{L, i}^{(j)}$ when $\*U_L = \*u_L$.

        Note that there may exist $\*u_L \in \cD_{\*U_L}$ that do not belong to $\cD_{\*U_L}(\*f_H)$ for any choice of $\*f_H$. However, the total probability measure of all such cases must be 0, since $P(\*U_L \in \cD_{\*U_L}(\*f_H)) = P(\*U_H \in \cD_{\*U_H}(\*f_H))$, and $\{\cD_{\*U_H}(\*f_H) : \*f_H \in \cD_{\*F_H}, \cD_{\*U_L}(\*f_H) \neq \emptyset\}$ forms a partition over $\cD_{\*U_H}$ by Lemma~\ref{lem:u-f-uniqueness} and must therefore have probability 1. Hence, we can define $\*U'_L$ with domain $\cD_{\*U'_L}$ that is equivalent to $\cD_{\*U_L}$ but with these cases omitted. Correspondingly, we can define $\cF'$ as $\cF$ but excluding the outputs when $\*U_L$ takes a value not in $\cD_{\*U'_L}$. We can then define $\cM'_L = \langle \*U'_L, \*V_L, \cF'_L, P(\*U'_L) \rangle$, where $\cL_3(\cM'_L) = \cL_3(\cM_L)$ because only a measure zero portion of $P(\*U_L)$ was removed.

        Define $U'$ as a random variable with domain $\cD_{U'} = \{u'_{\*f_H} : \*f_H \in \cD_{\*F_H}, \cD_{\*U_L}(\*f_H) = \emptyset\}$, that is, a value for each choice of $\*f_H$ with a nonempty $\cD_{\*U_L}(\*f_H)$. Define $P(U' = u'_{\*f_H}) = P(\*F_H = \*f_H)$. Choose $\tau_{\*U}: \cD_{\*U'_L} \rightarrow \cD_{\*U'}$ such that $\tau_{\*U}(\*u'_L) = u'_{\*f_H}$ if and only if $\*u'_L \in \cD_{\*U_L}(\*f_H)$. Note that this function is surjective because $u'_{\*f_H} \in \cD_{U'}$ only if $\cD_{\*U_L}(\*f_H)$ is nonempty. Furthermore, $P(U' = u'_{\*f_H}) = P(\*F_H = \*f_H) = P(\*u_L \in \cD_{\*U_L}(\*f_H))$, so the probability distributions are consistent.

        It is not necessarily the case that property 2 of $\tau$-abstractions holds between $\cM_H$ and $\cM_L$, but we can create a new SCM $\cM_H' = \langle \*U'_H = \{U'\}, \*V_H, \cF'_H, P(\*U'_H) \rangle$ such that the property holds between $\cM'_H$ and $\cM'_L$. Define $\cF'_H = \{f'^H_{V_{H, i}} : V_{H, i} \in \*V_H\}$ such that each $f'^H_{V_{H, i}}(\pai{V_{H, i}}, u'_{\*f_H}) = f^H_{V_{H, i}}(\pai{V_{H, i}}, \*u_H)$ if and only if $\*u_H \in \cD_{\*U_H}(\*f_H)$. Given that $P(U' = u'_{\*f_H}) = P(\*F_H = \*f_H) = P(\*u_H \in \cD_{\*U_H}(\*f_H))$, this implies that $P^{\cM'_H}(\*F_H = \*f_H) = P^{\cM_H}(\*F_H = \*f_H)$, further implying that $\cL_3(\cM'_H) = \cL_3(\cM_H)$ by Lemma \ref{lem:f-ctf-completeness}.


        We now show that $\tau(\cM'_{L[\*X_L \gets \*x_L]}(\*u'_L)) = \cM'_{H[\omega_{\tau}(\*X_L \gets \*x_L)]}(\tau_{\*U}(\*u'_L))$, for all $\*u'_L \in \cD_{\*U'_L}$ and all $(\*X_L \gets \*x_L) \in \cI_L$. For the rest of the proof, assume that any notation involving subscripts $L$ and $H$ refers to $\cM'_L$ and $\cM'_H$ rather than $\cM_L$ and $\cM_H$. By Lemma \ref{lem:omega-tau-connection}, we know that $\omega(\*X_L \gets \*x_L) = \tau(\*X_L) \gets \tau(\*x_L)$, so let $\*x_H = \tau(\*x_L)$. Assume on the contrary that there exists $\*u'_L \in \cD_{\*U'_L}$ and $(\*X_L \gets \*x_L) \in \cI_L$ such that this claim does not hold. Then there must exist $\*C_{L, i}$ (and $V_{H, i} = \tau(\*C_{L, i})$) such that $\tau(\*C_{L, i[\*x_L]}(\*u'_L)) \neq V_{H, i[\*x_H]}(\tau_{\*U}(\*u'_L)))$. Let $\*f_H$ be the value of $\*F_H$ such that this value of $\*u'_L \in \cD_{\*U_L}(\*f_H)$. We show using proof by induction that this poses a contradiction.

        There must be a topological ordering to $\*V_H$ as $\cM'_H$ is recursive. In the base case, assume that $\Pai{V_{H, i}} = \emptyset$. This means that $\tau(\*C_{L, i[\*x_L]}(\*u'_L)) = \tau(\*C_{L, i[\emptyset]}(\*u'_L))$ and $V_{H, i[\*x_H]}(\tau_{\*U}(\*u'_L))) = V_{H, i[\emptyset]}(\tau_{\*U}(\*u'_L))))$. Note that $\tau(\*C_{L,i[\emptyset]}(\*u'_L)) = \tau(\*c_{L, i}^{(j)}) = v_{H, i}^{(j)} = V_{H, i[\emptyset]}(\tau_{\*U}(\*u'_L))$ by definition of $\*f_H$, contradicting the claim that $\tau(\*C_{L, i[\*x_L]}(\*u'_L)) \neq V_{H, i[\*x_H]}(\tau_{\*U}(\*u'_L)))$.

        Now assume for the sake of induction that $\tau(\*C_{L, i'[\*x_L]}(\*u'_L)) = V_{H, i'[\*x_H]}(\tau_{\*U}(\*u'_L)))$ for all $V_{H, i'} \in \Pai{V_{H, i}}$. Note that $\tau(\*C_{L, i[\*x_L]}(\*u'_L)) = \tau(\*C_{L, i[\Pai{\*C_{L, i}[\*x_L]}(\*u'_L)]}(\*u'_L))$ and $V_{H, i[\*x_H]}(\tau_{\*U}(\*u'_L))) = V_{H, i[\Pai{V_{H, i}[\*x_H]}(\tau_{\*U}(\*u'_L))]}(\tau_{\*U}(\*u'_L))))$ due to Corol.~\ref{corol:ctf-parent-form}. However, $\tau(\Pai{\*C_{L, i}[\*x_L]}(\*u'_L)) = \Pai{V_{H, i}[\*x_H]}(\tau_{\*U}(\*u'_L))$ by the inductive hypothesis. This means that $\tau(\*C_{L, i[\Pai{\*C_{L, i}[\*x_L]}(\*u'_L)]}(\*u'_L)) = \tau(\*c_{L, i}^{(j)}) = v_{H, i}^{(j)} = V_{H, i[\Pai{V_{H, i}[\*x_H]}(\tau_{\*U}(\*u'_L))]}(\tau_{\*U}(\*u'_L))))$ from $\*f_H$, once again contradicting the claim that $\tau(\*C_{L, i[\*x_L]}(\*u'_L)) \neq V_{H, i[\*x_H]}(\tau_{\*U}(\*u'_L)))$.

        Therefore, it must be the case that $\tau(\cM'_{L[\*X_L \gets \*x_L]}(\*u'_L)) = \cM'_{H[\omega_{\tau}(\*X_L \gets \*x_L)]}(\tau_{\*U}(\*u'_L))$, for all $\*u'_L \in \cD_{\*U'_L}$ and all $(\*X_L \gets \*x_L) \in \cI_L$.

        \item $\cI_H = \omega_\tau(\cI_L)$ holds by the assumption of $\cI_L$ and Lemma \ref{lem:omega-surjectivity}.
    \end{enumerate}

    This completes the forward direction of the proof.

    We now show the backward direction: if there exists SCMs $\cM'_L$ and $\cM_H'$ such that $\cL_3(\cM'_L) = \cL_3(\cM_L)$, $\cL_3(\cM_H') = \cL_3(\cM_H)$, and $\cM_H'$ is a constructive $\tau$-abstraction of $\cM'_L$, then $\cM_H$ is $\cL_3$-$\tau$ consistent with $\cM_L$. It is sufficient to simply show that $\cM'_H$ is $\cL_3$-$\tau$ consistent with $\cM'_L$, since if $\cL_3(\cM'_L) = \cL_3(\cM_L)$ and $\cL_3(\cM_H') = \cL_3(\cM_H)$, this would also imply that $\cM_H$ is $\cL_3$-$\tau$ consistent with $\cM_L$.

    This can be proven by showing that Eq.~\ref{eq:q-tau-consistency} holds, that is
    \begin{equation*}
        \begin{split}
            & \sum_{\forall i \*y_{L, i} \in \cD_{\*Y_{L, i}} : \tau(\*y_{L, i}) = \*y_{H, i}} \hspace{-15mm} P^{\cM'_L}(\*y_{L,1[\*x_{L, 1}]}, \*y_{L,2[\*x_{L, 2}]}, \dots) \\
            &= P^{\cM'_H}(\*y_{H,1[\tau(\*x_{H, 1})]}, \*y_{H,2[\tau(\*x_{L, 2})]}, \dots)
        \end{split}
    \end{equation*}
    for all choices of $\*y_{H, i}$ and $\*x_{H, i}$. Denote $Q$ as the l.h.s.\ of the equation and $\tau(Q)$ as the r.h.s. Denote $\cD_Q \subset \cD_{\*U'_L}$ as the set of values of $\*u'_L$ such that $\bigwedge_i \*Y_{L, i[\*x_{L, i}]} = \*y_{L, i[\*x_{L, i}]}$ for all $\*y_{L, i}$ such that $\tau(\*y_{L, i}) = \*y_{H, i}$. Similarly, denote $\cD_{\tau(Q)} \subseteq \cD_{\*U'_H}$ as the set of values of $\*u'_H$ such that $\bigwedge_i \*Y_{H, i[\tau(\*x_{L, i})]} = \*y_{H, i[\tau(\*x_{L, i})]}$. Note that $Q = P(\*U'_L \in \cD_{Q})$ and $\tau(Q) = P(\*U'_H \in \cD_{\tau(Q)})$. We claim that $\*u'_L \in \cD_Q$ if and only if $\tau_{\*U}(\*u'_L) \in \cD_{\tau(Q)}$.

    By definition of constructive $\tau$-abstractions, there must exist $\tau_{\*U}$ such that $\tau(\cM_{L[\*X_L \gets \*x_L]}(\*u'_L)) = \cM_{H[\omega_{\tau}(\*X_L \gets \*x_L)]}(\tau_{\*U}(\*u'_L))$, for all $\*u'_L \in \cD_{\*U'_L}$ and all $(\*X_L \gets \*x_L) \in \cI_L$, and further that $P(\*U'_H) = P(\tau(\*U'_L))$ by assumption. This implies that $\tau(\*Y_{L, i[\*x_{L, i}]}(\*u'_L)) = \*Y_{H, i[\omega_{\tau}(\*x_{L, i})]}(\tau_{\*U}(\*u'_L)) = \*Y_{H, i[\tau(\*x_L)]}(\tau_{\*U}(\*u'_L))$ for all $i$ by Lemma \ref{lem:omega-tau-connection}. Hence, if $\*Y_{L, i[\*x_i]}(\*u'_L) = \*y_{L, i}$, where $\tau(\*y_{L, i}) = \*y_{H, i}$, then $\*Y_{H, i[\tau(\*x_L)]}(\tau_{\*U}(\*u'_L)) = \*y_{H, i}$. Considering this for all values of $i$, it must be the case that $\*u'_L \in \cD_Q$ if and only if $\tau_{\*U}(\*u'_L) \in \cD_{\tau(Q)}$.

    Since $P(\*U'_H) = P(\tau(\*U'_L))$, this implies that $\tau(Q) = P(\*U'_H \in \cD_{\tau(Q)}) = P(\*U'_L \in \cD_{Q}) = Q$, concluding the proof.
\end{proof}

We note that the theorem does not claim $\cL_3$-$\tau$ consistency is equivalent to constructive $\tau$-abstractions, rather making a weaker claim that there must exist a pair of (potentially different) SCMs that are $\cL_3$-equivalent and fit the definition of a constructive $\tau$-abstraction. The reason is that, in fact, the definition for constructive $\tau$-abstraction is stricter than $\cL_3$-$\tau$ consistency, but as evident in the proof, the only restriction is on the domains of $\*U_L$ and $\*U_H$. In the case of $\*U_L$, there may be a measure zero portion of the domain $\cD_{\*U_L}$ that do not translate to the higher level functional counterfactuals, and in the case of $\*U_H$, it is possible that the space of $\cD_{\*U_H}$ may not allow for $\tau_{\*U}$ to be surjective. However, the theorem still essentially states that they are equivalent, at least on the three levels of the PCH. We therefore argue that, given the unobserved nature of the exogenous variables and the generating SCM, these two concepts are equivalent on a practical level.

We now prove that Alg.~\ref{alg:map-abstraction} successfully returns an $\cL_3$-$\tau$ consistent model.

\mapabstractioncorrect*

\begin{proof}
    We can show this result by first showing that the output of Alg.~\ref{alg:map-abstraction}, $\cM_H$, is a constructive $\tau$-abstraction of $\cM_L$. By Def.~\ref{def:tau}, it is clear by construction that $\tau$ can be decomposed as $\tau = (\tau_{\*C_1}, \tau_{\*C_2}, \dots, \tau_{\*C_n})$, and each $\tau_{\*C_i}$ maps the $i$th partition to the $i$th variable of $\*V_H$, as established on line 2. Hence, we must simply show that $\cM_H$ is a $\tau$-abstraction of $\cM_L$. Out of the three properties of Def.~\ref{def:tau-abs}, properties 1 and 3 can be proven similarly to how it is done in the forward direction of Prop.~\ref{prop:abs-connect}. Then, to prove property 2, we must simply show that there exists surjective $\tau_{\*U}: \cD_{\*U_L} \rightarrow \cD_{\*U_H}$ such that $P(\*U_H) = \tau_{\*U}(P(\*U_L)) = P(\tau_{\*U}(\*U_L))$, and $\tau(\cM_{L[\*X_L \gets \*x_L]}(\*u_L)) = \cM_{H[\omega_{\tau}(\*X_L \gets \*x_L)]}(\tau_{\*U}(\*u_L))$, for all $\*u_L \in \cD_{\*U_L}$ and all $(\*X_L \gets \*x_L) \in \cI_L$.

    The choice of $\tau_{\*U}$ is simple---we can use the identity function as by construction in line 1, $\*U_H = \*U_L$ and $P(\*U_H) = P(\*U_L)$. Further, note that $\omega_{\tau}(\*X_L \gets \*x_L) = \tau(\*X_L) \gets \tau(\*x_L)$ by Lemma \ref{lem:omega-tau-connection}. Hence, we must simply show that $\tau(\cM_{L[\*X_L \gets \*x_L]}(\*u_L)) = \cM_{H[\tau(\*X_L) \gets \tau(\*x_L)]}(\*u_L)$.
    
    We start by showing that $V_{i[\tau(\*x_L)]}(\*u_L) = \tau(\*C_{i[\*x_L]}(\*u_L))$ for every $V_i \in \*V_H$ and corresponding $\*C_i \in \bbC$. Note that for every $V_i \in \*V_H$, line 5 dictates that
    \begin{equation*}
        V_i \gets f_i^H(\pai{V_i}, \ui{V_i}) = \tau \left(f_V^L(\tildepai{V}, \ui{V}) : V \in \*C_i \right),
    \end{equation*}
    which implies that
    \begin{equation}
        \label{eq:abs-cons-alg-line5}
        V_{i[\tau(\*x_L)]}(\*u_L) = \tau \left(f_V^L \left(\tildepai{V}, \ui{V} \right) : V \in \*C_i \right),
    \end{equation}
    where $\*u_V$ is compatible with $\*u_L$ and $\tildepai{V}$ refers to any value $\pai{V} \in \cD_{\Pai{V}}$ such that $\tau(\pai{V}) = \Pai{V_i[\tau(\*x_L)]}(\*u_L)$. Note that since $\cM_L$ satisfies the abstract invariance condition (AIC) w.r.t.\ $\tau$, the value of $f^L_V$ will not change based on this choice of $\pai{V}$.

    We continue using proof by induction. Since $\cM_H$ is recursive, this implies there is a topological ordering of the functions of $\cM_H$. In the base case, if $\Pai{V_i} = \emptyset$, then 
    \begin{equation*}
        V_{i[\tau(\*x_L)]}(\*u_L) = V_{i[\emptyset]}(\*u_L) = \tau \left(f_V^L(\ui{V}) : V \in \*C_i \right) = \tau(\*C_i(\ui{L})),
    \end{equation*}
    which aligns with the claim. For the inductive hypothesis, assume that $V_{j[\tau(\*x_L)]}(\*u_L) = \tau(\*C_{j[\*x_L]}(\*u_L))$ for every $V_j \in \Pai{V_i}$. Then we have
    \begin{align*}
        V_{i[\tau(\*x_L)]}(\*u_L) &= V_{i[\Pai{V_i [\tau(\*x_L)]}(\*u_L)]}(\*u_L) \\
        & \text{ by Corol.~\ref{corol:ctf-parent-form}} \\
        &= V_{i[\tau(\Pai{\*C_i[\*x_L]}(\*u_L))]}(\*u_L) \\
        & \text{ by inductive hypothesis} \\
        &= \tau \left(f_V^L(\Pai{\*C_i[\*x_L]}(\*u_L), \ui{V}) : V \in \*C_i \right) \\
        & \text{ by Eq.~\ref{eq:abs-cons-alg-line5}} \\
        &= \tau(\*C_{i[\*x_L]}(\ui{L})) \\
        & \text{ simplification,}
    \end{align*}
    proving the claim. Since $V_{i[\tau(\*x_L)]}(\*u_L) = \tau(\*C_{i[\*x_L]}(\*u_L))$ for every $V_i \in \*V_H$, this implies that $\tau(\cM_{L[\*X_L \gets \*x_L]}(\*u_L)) = \cM_{H[\omega_{\tau}(\*X_L \gets \*x_L)]}(\tau_{\*U}(\*u_L))$, proving that $\cM_H$ is a constructive $\tau$-abstraction of $\cM_L$. Finally, Prop.~\ref{prop:abs-connect} proves that $\cM_H$ must therefore be an $\cL_3$-$\tau$ abstraction of $\cM_L$.
\end{proof}

We leverage this property to prove that the abstract invariance condition (AIC) is necessary and sufficient for the existence of an abstraction.

\absconditions*

\begin{proof}
    If $\cM'_L$ satisfies the AIC w.r.t.~$\tau$, then one can use Alg.~\ref{alg:map-abstraction} to obtain an example of a model that is $\cL_3$-$\tau$ consistent with $\cM'_L$ (and therefore with $\cM_L$), as proven in Prop.~\ref{prop:map-abstraction}.

    We now consider the other direction. If $\cM_H$ is $\cL_3$-$\tau$ consistent with $\cM_L$, then by Prop.~\ref{prop:abs-connect}, there must exist some $\cM'_L$ and $\cM'_H$ such that $\cL_3(\cM'_L) = \cL_3(\cM_L)$, $\cL_3(\cM'_H) = \cL_3(\cM_H)$ and $\cM'_H$ is a constructive $\tau$-abstraction of $\cM'_L$. For the rest of this proof, assume that any terms with a subscripts of $L$ or $H$ refer to $\cM'_L$ or $\cM'_H$ instead of $\cM_L$ and $\cM_H$. The constructive $\tau$-abstraction property implies that there exists surjective $\tau_{\*U}: \cD_{\*U'_L} \rightarrow \cD_{\*U'_H}$ such that $\tau(\cM'_{L[\*X_L \gets \*x_L]}(\*u'_L)) = \cM'_{H[\omega_{\tau}(\*X_L \gets \*x_L)]}(\tau_{\*U}(\*u'_L))$, for all $\*u'_L \in \cD_{\*U'_L}$ and all $(\*X_L \gets \*x_L) \in \cI_L$. By Lemma \ref{lem:omega-tau-connection}, this means $\tau(\cM'_{L[\*X_L \gets \*x_L]}(\*u'_L)) = \cM'_{H[\tau(\*X_L) \gets \tau(\*x_L)]}(\tau_{\*U}(\*u'_L))$. 

    Assume for the sake of contradiction that it is not the case that $\cM'_L$ satisfies the AIC w.r.t.~$\tau$. Then there must exist $\*v_1, \*v_2 \in \cD_{\*V_L}$ such that $\tau(\*v_1) = \tau(\*v_2)$, yet
    \begin{equation*}
        \begin{split}
            & \tau \left( \left( f^L_V(\pai{V}^{(1)}, \*u'_V): V \in \*C_i \right) \right) \\
            &\neq \tau \left( \left( f^L_V(\pai{V}^{(2)}, \*u'_V): V \in \*C_i \right) \right)
        \end{split}
    \end{equation*}
    for some value of $\*u'_L \in \cD_{\*U'_L}$ and $\*C_i \in \bbC$.

    This would imply that $\tau \left(\*C_{i[\pai{\*C_i}^{(1)}]}(\*u'_L)\right) \neq \tau \left(\*C_{i[\pai{\*C_i}^{(2)}]}(\*u'_L)\right)$, so $\tau \left(\cM'_{L[\pai{\*C_i}^{(1)}]}(\*u'_L)\right) \neq \tau \left(\cM'_{L[\pai{\*C_i}^{(2)}]}(\*u'_L)\right)$. By the $\tau$-abstraction definition, we have $\tau \left(\cM'_{L[\pai{\*C_i}^{(1)}]}(\*u'_L)\right) = \cM'_{H[\tau(\pai{\*C_i}^{(1)})]}(\tau_{\*U}(\*u'_L))$ and $\tau \left(\cM'_{L[\pai{\*C_i}^{(2)}]}(\*u'_L)\right) = \cM'_{H[\tau(\pai{\*C_i}^{(2)})]}(\tau_{\*U}(\*u'_L))$. However, since $\tau(\pai{\*C_i}^{(1)}) = \tau(\pai{\*C_i}^{(2)})$, this implies that $\tau \left(\cM'_{L[\pai{\*C_i}^{(1)}]}(\*u'_L)\right) = \tau \left(\cM'_{L[\pai{\*C_i}^{(2)}]}(\*u'_L)\right)$, contradicting the earlier statement. Therefore, $\cM'_L$ must satisfy the AIC w.r.t.\ $\tau$, completing the proof.
\end{proof}

\subsection{Proofs of Sec.~\ref{sec:learning-abs}}

We start by noting the impossibility of performing causal inferences without additional assumptions, as implied by the Causal Hierarchy Theorem.

\begin{fact}[Causal Hierarchy Theorem (CHT) {\citep[Thm.~1]{bareinboim:etal20}}]
\label{fact:cht}
    Let $\Omega^*$ be the set of all SCMs. We say that Layer $j$ of the causal hierarchy for SCMs \emph{collapses} to Layer $i$ ($i < j$) relative to $\cM^* \in \Omega^*$ if $L_i(\cM^*) = L_i(\cM)$ implies that $L_j(\cM^*) = L_j(\cM)$ for all $\cM \in \Omega^*$. Then, with respect to the Lebesgue measure over (a suitable encoding of $L_3$-equivalence classes of) SCMs, the subset in which Layer $j$ of SCMs collapses to Layer $i$ is measure zero.
    \hfill $\blacksquare$ 
\end{fact}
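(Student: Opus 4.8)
The plan is to prove the measure-zero collapse statement by reducing each $L_3$-equivalence class to a canonical finite-dimensional parameter, showing that every layer is a \emph{linear} read-out of that parameter, and then arguing that collapse can occur only on the relative boundary of the parameter polytope. First I would fix a causal diagram $\cG$ over the (finite, discrete) variables $\*V$; since there are finitely many such diagrams and a finite union of null sets is null, it suffices to treat each $\cG$ separately. For the encoding of $L_3$-classes I would use the functional counterfactual set $\*F$ of Def.~\ref{def:function-ctf}: by Lem.~\ref{lem:f-ctf-completeness}, the joint distribution $\theta := P(\*F)$ over the finite domain $\cD_{\*F}$ is a \emph{complete invariant} of the $L_3$-equivalence class, so it is exactly the kind of ``suitable encoding'' the theorem refers to. The parameter space is then the polytope $\Delta$ of distributions over $\cD_{\*F}$ that respect the confounded-clique (c-component) independences of $\cG$ — a product of simplices carrying a natural relative Lebesgue measure, which I take as the reference measure.

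The second step is to observe that every quantity in $\cL_1$, $\cL_2$, and $\cL_3$ is a \emph{linear} function of $\theta$. Indeed, by Fact~\ref{fact:ancestral-factorization} (as already exploited in the proof of Lem.~\ref{lem:f-ctf-completeness}), any counterfactual probability is obtained from $\theta$ by summing its coordinates over an appropriate subset of $\cD_{\*F}$. Hence I can write $L_i$ and $L_j$ as the restrictions to $\Delta$ of linear maps $\pi_i,\pi_j$ defined on the ambient space of signed measures on $\cD_{\*F}$.

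With this linear structure, collapse becomes a statement about fibers: the SCM encoded by $\theta^*$ exhibits collapse of layer $j$ to layer $i$ exactly when $\pi_j$ is constant on the feasible fiber $(\theta^*+\ker\pi_i)\cap\Delta$. The key linear-algebraic fact I would then establish is the existence of a \emph{separating direction} — a tangent vector $\delta$ with $\sum_{\*f}\delta_{\*f}=0$, $\pi_i\delta=0$, and $\pi_j\delta\neq 0$. Given such a $\delta$, for any $\theta^*$ in the relative interior of $\Delta$ the points $\theta^*\pm t\delta$ lie in $\Delta$ for all small $t$, agree with $\theta^*$ on $L_i$, yet differ from it on $L_j$; thus collapse \emph{fails} at every interior point. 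Consequently the collapse set is contained in the relative boundary $\partial\Delta$, a finite union of lower-dimensional faces and therefore measure zero in $\Delta$; summing over the finitely many diagrams yields the theorem.

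The substantive, graph-dependent heart of the argument — and the step I expect to be the main obstacle — is the existence of the separating direction $\delta$, i.e.\ that $\ker\pi_i\not\subseteq\ker\pi_j$. This is precisely the claim that some pair of SCMs agree on the lower layer but disagree on the higher one: for $i=1,j=2$ it is the underdetermination of interventions by observations under confounding, and for $i=2,j=3$ the underdetermination of counterfactuals by interventions. I would supply it by an explicit canonical two-model construction in the spirit of Ex.~\ref{ex:drug-nonid}, choosing response-function distributions that coincide on the relevant lower-layer marginals but differ on a higher-layer marginal. The only diagrams admitting no such $\delta$ are the degenerate ones in which $L_i$ already equals $L_j$ as a map; there layer $j$ does not genuinely separate from layer $i$, so the collapse claim is vacuous. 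Once the separating direction is secured, the confinement-to-the-boundary argument is routine.
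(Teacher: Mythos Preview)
The paper does not prove this statement. It is presented as a \textbf{Fact}, cited verbatim from \citet[Thm.~1]{bareinboim:etal20}, and is used only as a black box in the proof of Prop.~\ref{prop:abs-cht}. There is therefore no ``paper's own proof'' to compare your attempt against.

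On its own merits, your plan --- encode each $L_3$-class by $\theta=P(\*F)$, observe that every PCH quantity is a linear marginal of $\theta$, exhibit a separating direction $\delta\in\ker\pi_i\setminus\ker\pi_j$, and conclude that collapse is confined to the relative boundary of the parameter polytope --- is indeed the geometric idea behind the original proof in \citet{bareinboim:etal20}. However, your graph-by-graph reduction has a genuine gap. The collapse condition quantifies over \emph{all} $\cM\in\Omega^*$, not only over $\Omega(\cG)$. For any Markovian diagram $\cG$ with at least one edge, every $L_2$ quantity is identified from $L_1$ (truncated factorization), so $\ker\pi_1\subseteq\ker\pi_2$ \emph{within} $\Delta_{\cG}$ and no within-graph separating direction exists. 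But this case is not ``vacuous'': for a generic Markovian $\cM^*$ one can still find a confounded $\cM\in\Omega^*\setminus\Omega(\cG)$ with $L_1(\cM)=L_1(\cM^*)$ yet $L_2(\cM)\neq L_2(\cM^*)$, so collapse still fails. Your escape clause (``$L_i$ already equals $L_j$ as a map'') conflates equality on $\Omega(\cG)$ with equality on $\Omega^*$.

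The clean fix is to drop the stratification by $\cG$ and parameterize once, taking $\*F$ relative to the \emph{complete} diagram (all directed edges in a fixed order plus all bidirected edges). Every SCM over $\*V$ then embeds into a single product-of-simplices $\Delta$, the separating direction needs to be produced only once, and your interior/boundary argument goes through without the casework.
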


Given this result, we note that the same principle applies to performing causal inferences across abstractions.

\begin{lemma}
    \label{lem:abs-map-completeness}
    Let $\Omega_L$ and $\Omega_H$ be the space of SCMs defined over $\*V_L$ and $\*V_H$ respectively, and let $\tau: \cD_{\*V_H} \rightarrow \cD_{\*V_L}$ be a constructive abstraction function defined over clusters $\bbC$ and $\bbD$. Let $\Omega'_L$ be the subset of $\Omega_L$ that satisfies the AIC. Define $\psi_{\tau}: \Omega'_L \rightarrow \Omega_H$ such that $\psi_{\tau}(\cM_L) = \cM_H$, where $\cM_H \in \Omega_H$ is $\cL_3$-$\tau$ consistent with $\cM_L$ (while there could be many such SCMs, they are all $\cL_3$-equivalent, so we can arbitrarily choose the output of Alg.~\ref{alg:map-abstraction} on inputs $\cM_L$ and $\tau$, which must exist due to Prop.~\ref{prop:map-abstraction}). Then, $\psi_{\tau}$ is surjective (i.e.~$\{\psi(\cM_L) : \cM_L \in \Omega'_L\} = \Omega_H$).
    \hfill $\blacksquare$

    \begin{proof}
        For any SCM $\cM_H = \langle \*U_H, \*V_H, \cF_H, P(\*U_H) \rangle \in \Omega_H$, one can construct SCM $\cM_L = \langle \*U_L, \*V_L, \cF_L, P(\*U_L) \rangle \in \Omega'_L$ such that $\cM_H$ is $\cL_3$-$\tau$ consistent as $\cM_L$ as follows:
        \begin{enumerate}
            \item Choose $\*U_L = \*U_H$ and $P(\*U_L) = P(\*U_H)$.
            \item For each $V_L \in \*V_L$, let $\*C \in \bbC$ be the intervariable cluster such that $V_L \in \*C$, and let $V_H = \tau(\*C)$. Define $\Pai{V_L} \subseteq \*V_L$ as the set of variables such that $\tau(\Pai{V_L}) = \Pai{V_H}$. Define $\Ui{V_L} \subseteq \*U_L$ as $\Ui{V_H}$.
            \item For all $\*C \in \bbC$, note that for any pair $V_1, V_2 \in \*C$, $\Pai{V_1} = \Pai{V_2}$ and $\Ui{V_1} = \Ui{V_2}$. Denote $V_H = \tau(\*C)$. For each $V \in \*C$, choose $f_{V_L}^L \in \cF_L$ arbitrarily such that $\tau(\{f^{L}_{V_L}(\pai{V}, \ui{V}) : V_L \in \*C\}) = f^{H}_{V_H}(\tau(\pai{V_L}), \ui{V_L})$. There must exist at least one such setting since for any possible input $\pai{V_H}, \ui{V_H}$ to $f_{V_H}^H$, there is at least one set of inputs $\pai{V_L}, \ui{V_L}$ to $f_{V_L}^L$ such that $\tau(\pai{V_L}) = \pai{V_H}$ (due to surjectivity of $\tau$) and $\ui{V_L} = \ui{V_H}$.
        \end{enumerate}

        One can easily verify that running Alg.~\ref{alg:map-abstraction} on this choice of $\cM_L$ will return $\cM_H$, implying that $\cM_H$ is $\cL_3$-$\tau$ consistent with $\cM_L$ and that $\psi_{\tau}$ is surjective.
    \end{proof}
\end{lemma}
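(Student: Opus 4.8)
The plan is to prove surjectivity directly: given an arbitrary $\cM_H = \langle \*U_H, \*V_H, \cF_H, P(\*U_H)\rangle \in \Omega_H$, I would \emph{unfold} it into a low-level SCM $\cM_L \in \Omega'_L$ whose image under Alg.~\ref{alg:map-abstraction} is exactly $\cM_H$, so that $\psi_\tau(\cM_L) = \cM_H$. First, copy the exogenous side verbatim, taking $\*U_L := \*U_H$ and $P(\*U_L) := P(\*U_H)$ (matching line~1 of the algorithm). For the endogenous side, use the bijection between $\*V_H$ and $\bbC$ from Def.~\ref{def:tau}: for each $V_{H,i} \in \*V_H$ with associated cluster $\*C_i$, give \emph{every} $V \in \*C_i$ the same endogenous parents $\Pai{V} := \bigcup\{\*C_k \in \bbC : V_{H,k} \in \Pai{V_{H,i}}\}$ and the same exogenous parents $\Ui{V} := \Ui{V_{H,i}}$ (variables of $\*V_L$ outside $\bbC$, if any, get trivial functions and affect neither the algorithm's output nor the AIC, since both range only over $\bbC$). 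Since $\cM_H$ is recursive, listing the variables of $\cM_L$ cluster by cluster along a topological order of $\cM_H$ gives a topological order for $\cM_L$, so $\cM_L$ is recursive and $\bbC$ is admissible w.r.t.\ it (Def.~\ref{def:var-clusterings}).

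Next I would define the functions so that each cluster imitates its high-level counterpart through $\tau_{\*C_i}$. For any $\pai{V} \in \cD_{\Pai{V}}$ and any $\ui{V}$, the image $\tau(\pai{V})$ is well-defined (as $\Pai{V}$ is a union of clusters), so $f^H_{V_{H,i}}(\tau(\pai{V}), \ui{V})$ names some intravariable cluster $\cD^j_{\*C_i} \in \bbD_{\*C_i}$, which is nonempty since $\bbD_{\*C_i}$ is a partition; pick one representative tuple of $\cD^j_{\*C_i}$ and let $f^L_V(\pai{V}, \ui{V})$ be its $V$-coordinate, doing this coherently across all $V \in \*C_i$. By construction the resulting output tuple lies in $\cD^j_{\*C_i}$, so Def.~\ref{def:tau}(3) yields the key identity $\tau_{\*C_i}\!\bigl((f^L_V(\pai{V}, \ui{V}): V \in \*C_i)\bigr) = f^H_{V_{H,i}}(\tau(\pai{V}), \ui{V})$ for all inputs, independently of which representatives were chosen.

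Then I would run the two checks. (i) $\cM_L \in \Omega'_L$: the key identity shows that the $\tau_{\*C_i}$-value of the cluster-$i$ functions depends on the low-level cluster-parents only through their $\tau$-image, so for any $\*v_1, \*v_2 \in \cD_{\*V_L}$ with $\tau(\*v_1) = \tau(\*v_2)$ the induced parent values coincide after $\tau$ and hence yield the same output under $\tau_{\*C_i}$ --- which is precisely the abstract invariance condition (Def.~\ref{def:invariance-condition}). (ii) Alg.~\ref{alg:map-abstraction} on input $(\cM_L, \bbC, \bbD)$ returns $\cM_H$: line~1 recovers $\*U_H, P(\*U_H)$; line~2 sets $\*V_H = \bbC$ with domains $\bbD$; line~3 rebuilds $\tau$; and line~5 sets each $f^H_i \gets \tau\bigl(f^L_V(\tildepai{V}, \ui{V}): V \in \*C_i\bigr)$, which by the key identity equals $f^H_{V_{H,i}}$, the AIC guaranteeing that the choice of $\tildepai{V}$ is immaterial (exactly as used in the proof of Prop.~\ref{prop:map-abstraction}). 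Hence $\psi_\tau(\cM_L) = \cM_H$, and since $\cM_H$ was arbitrary, $\psi_\tau$ is surjective.

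The real content here is the unfolding construction; the one point that needs care is that $\psi_\tau(\cM_L)$ must equal $\cM_H$ \emph{exactly} rather than merely up to $\cL_3$-equivalence, which forces the coherent choice of cluster representatives so the cluster-wise identity holds on the nose, and which makes the AIC check and the acyclicity of $\cM_L$ part of the obligation. Both follow immediately from having copied $\cM_H$'s parent structure through $\tau$, so beyond this bookkeeping I expect no genuine obstacle.
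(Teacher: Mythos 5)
Your proposal is correct and follows essentially the same route as the paper's proof: copy $\*U_H$ and $P(\*U_H)$ verbatim, pull back the high-level parent structure through the clusters, and define each cluster's functions by choosing representatives inside the (nonempty) intravariable cells so that their $\tau_{\*C_i}$-image reproduces $f^H_{V_{H,i}}$. Your extra bookkeeping (explicitly checking the AIC, recursivity, and that Alg.~\ref{alg:map-abstraction} returns $\cM_H$ on the nose) only makes explicit what the paper's proof asserts with ``one can easily verify.''
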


\begin{customprop}{\ref{prop:abs-cht}}[Abstract Causal Hierarchy Theorem (Formal Version)]
    Let $\Omega_L$ and $\Omega_H$ be the space of models defined over $\*V_L$ and $\*V_H$ respectively, and let $\Omega'_L$ be the subset of $\Omega_L$ such that the abstract invariance condition holds. Let $\tau: \cD_{\*V_H} \rightarrow \cD_{\*V_L}$ be a constructive abstraction function. We say that Layer $j$ of the causal hierarchy for $\Omega_H$ $\tau$-collapses to Layer $i$ ($i < j$) relative to $\cM_L \in \Omega_L$ if $\cL_i$-$\tau$ consistency implies $\cL_j$-$\tau$ consistency of $\cM_H$ with $\cM_L$ for all $\cM_H \in \Omega_H$. Then, w.r.t.\ Lebesgue measure over (a suitable encoding of $\cL_3$-equivalence classes of) $\Omega'_L$, the subset in which Layer $j$ of $\Omega_H$ $\tau$-collapses to Layer $i$ has measure zero.
    \hfill $\blacksquare$
\end{customprop}

\begin{proof}
    We first show that Layer $j$ of $\Omega_H$ $\tau$-collapses to Layer $i$ relative to $\cM_L \in \Omega'_L$ if and only if Layer $j$ of $\Omega_H$ collapses to Layer $i$ relative to $\psi_{\tau}(\cM_L) \in \Omega_H$ (as defined in Lem.~\ref{lem:abs-map-completeness}). If Layer $j$ of $\Omega_H$ $\tau$-collapses to Layer $i$ relative to $\cM_L$, then that implies that all SCMs in $\Omega_H$ that are $\cL_i$-$\tau$ consistent with $\cM_L$ are also $\cL_j$-$\tau$ consistent, including $\psi_{\tau}(\cM_L)$. This is only possible if they are all $\cL_i$- and $\cL_j$-consistent with each other, implying regular collapse relative to $\psi_{\tau}(\cM_L)$. Conversely, if Layer $j$ of $\Omega_H$ collapses to Layer $i$ relative to $\psi_{\tau}(\cM_L)$, then all SCMs in $\Omega_H$ that are $\cL_i$-consistent with $\psi_{\tau}(\cM_L)$ must also be $\cL_j$-consistent. By definition, $\psi_{\tau}(\cM_L)$ is $\cL_3$-$\tau$ consistent with $\cM_L$, so this implies that all SCMs in $\Omega_H$ that are $\cL_i$-$\tau$ consistent with $\cM_L$ are also $\cL_j$-$\tau$ consistent.

    Fact \ref{fact:cht} states that the subset of $\Omega_H$ in which Layer $j$ of $\Omega_H$ collapses to Layer $i$ is measure 0. Hence, the subset of $\Omega'_L$ (under the same encoding w.r.t.\ the set $\{\psi(\cM_L) : \cM_L \in \Omega'_L\} = \Omega_H$ (as proven in Lem.~\ref{lem:abs-map-completeness})) in which Layer $j$ of $\Omega_H$ $\tau$-collapses to Layer $i$ is also measure 0.
\end{proof}

As a consequence, causal assumptions are necessary to make causal inferences. For this work, we leverage cluster causal diagrams (C-DAGs), from Def.~\ref{def:cdag}.

For the following proofs, consider the classical definition of identifiability.
\begin{definition}
    \label{def:classic-id}
    Let $\Omega^*$ be the space containing all SCMs defined over endogenous variables $\*V$. We say that a causal query $Q$ is identifiable (ID) from the available data $\bbZ$ and the causal diagram $\cG$ if $Q(\cM_1) = Q(\cM_2)$ for every pair of models $\cM_1, \cM_2 \in \Omega^*$ such that $\cM_1$ and $\cM_2$ both induce $\cG$ and $\bbZ(\cM_1) = \bbZ(\cM_2)$.
    \hfill $\blacksquare$
\end{definition}

We can now prove that abstract identification is equivalent to classical identification on the higher level.

\dualabsid*

\begin{proof}
    Let $\Omega_L$ and $\Omega_H$ be the space of SCMs defined over $\*V_L$ and $\*V_H$ respectively, and let $\Omega_L(\cG_{\bbC})$ and $\Omega_H(\cG_{\bbC})$ be their corresponding subsets that induce graph $\cG_{\bbC}$. If $Q$ is $\tau$-ID from $\cG_{\bbC}$ and $\bbZ$, then every pair of $\cM_L \in \Omega_L(\cG_{\bbC}), \cM_H \in \Omega_H(\cG_{\bbC})$ such that $\cM_H$ is $\bbZ$-$\tau$ consistent with $\cM_L$ must have $\cM_H$ be $Q$-$\tau$ consistent with $\cM_L$. For all such $\cM_H$, $\bbZ$-$\tau$ consistency and $Q$-$\tau$ consistency with $\cM_L$ implies that $\cM_H$ is $\tau(\bbZ)$-consistent and $\tau(Q)$-consistent by Def.~\ref{def:q-tau-consistency}. For any pair $\cM_1, \cM_2 \in \Omega_H$ that induce $\cG_{\bbC}$, $\tau(\bbZ)(\cM_1) = \tau(\bbZ)(\cM_2)$ therefore implies that both $\cM_1$ and $\cM_2$ must be $\bbZ$-$\tau$ consistent with $\cM_L$ and must therefore both be $Q$-$\tau$ consistent, so $\tau(Q)(\cM_1) = \tau(Q)(\cM_2)$. Hence, $\tau(Q)$ is ID from $\cG_{\bbC}$ and $\tau(\bbZ)$ by Def.~\ref{def:classic-id}.

    Conversely, if $\tau(Q)$ is ID from $\cG_{\bbC}$ and $\tau(\bbZ)$, then for any $\cM_1, \cM_2 \in \Omega_H$ that induces $\cG_{\bbC}$ such that $\tau(\bbZ)(\cM_1) = \tau(\bbZ)(\cM_2)$, it must be the case that $\tau(Q)(\cM_1) = \tau(Q)(\cM_2)$. For every $\cM_L \in \Omega_L(\cG_{\bbC})$, Prop.~\ref{prop:map-abstraction} states that there exists some $\cM_H \in \Omega_H(\cG_{\bbC})$ that is $\cL_3$-$\tau$ consistent with $\cM_L$, implying that $\cM_H$ is both $\bbZ$-$\tau$ consistent and $Q$-$\tau$ consistent with $\cM_L$. Since all $\cM_H \in \Omega_H(\cG_{\bbC})$ that match in $\tau(\bbZ)$ must also match in $\tau(Q)$, it must be the case that all such $\cM_H$ that are $\bbZ$-$\tau$ consistent with $\cM_L$ must also be $Q$-$\tau$ consistent with $\cM_L$. Hence, by definition, $Q$ is $\tau$-ID from $\cG_{\bbC}$ and $\bbZ$.
\end{proof}

We also connect this result to the results of neural identification with NCMs.

\begin{definition}[Neural Counterfactual Identification {\citep[Def.~4]{xia:etal23}}]
    \label{def:ncm-l3-id}
    Consider an SCM $\cM^*$ and the corresponding causal diagram $\cG$. Let $\bbZ = \{P(\*V_{\*z_k})\}_{k=1}^{\ell}$ be a collection of available interventional (or observational if $\*Z_k = \emptyset$) distributions from $\cM^*$.
    The counterfactual query $P(\*Y_* = \*y_* \mid \*X_* = \*x_*)$ is said to be neural identifiable (identifiable, for short) from the set of $\cG$-constrained NCMs $\Omega(\cG)$ and $\bbZ$ if and only if $P^{\widehat{M}_1}(\*y_* \mid \*x_*) = P^{\widehat{M}_2}(\*y_* \mid \*x_*)$ for every pair of models $\widehat{M}_1, \widehat{M}_2 \in \Omega(\cG)$ s.t. they match $\cM^*$ on all distributions in $\bbZ$ (i.e. $\bbZ(\cM^*) = \bbZ(\cM_1) = \bbZ(\cM_2) > 0$).
    \hfill $\blacksquare$
\end{definition}

\begin{fact}[Counterfactual Graphical-Neural Equivalence (Dual ID) {\citep[Thm.~3]{xia:etal23}}]
    \label{fact:ncm-ctfid-equivalence}
    Let $\Omega^*, \Omega$ be the spaces including all SCMs and NCMs, respectively. Consider the true SCM $\cM^*$ and the corresponding causal diagram $\cG$. Let $Q = P(\*y_* \mid \*x_*)$ be the target query and $\bbZ$ the set of observational and interventional distributions available. Then, $Q$ is neural identifiable from $\Omega(\cG)$ and $\bbZ$ if and only if it is identifiable from $\cG$ and $\bbZ$.
    \hfill $\blacksquare$
\end{fact}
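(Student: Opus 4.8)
The plan is to reduce the whole biconditional to a single expressiveness statement: every SCM inducing $\cG$ admits an $\cL_3$-equivalent $\cG$-NCM. Note first that the target $Q = P(\*y_* \mid \*x_*)$ and every datum in $\bbZ$ are $\cL_3$ quantities (the conditional being a ratio of two $\cL_3$ joints, well-defined under the positivity $\bbZ > 0$), hence preserved by any $\cL_3$-equivalence. The direction classical $\Rightarrow$ neural is then immediate: every $\cG$-NCM (Def.~\ref{def:gncm}) is in particular an SCM inducing $\cG$, so any pair $\widehat M_1, \widehat M_2 \in \Omega(\cG)$ matching $\cM^*$ on $\bbZ$ is a pair of $\cG$-inducing SCMs agreeing on $\bbZ$, and classical identifiability (Def.~\ref{def:classic-id}) forces $Q(\widehat M_1) = Q(\widehat M_2)$, which is exactly neural identifiability (Def.~\ref{def:ncm-l3-id}); no expressiveness is needed here. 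For neural $\Rightarrow$ classical I would prove the contrapositive: from a failure of classical ID I obtain SCMs $\cM_1, \cM_2$ inducing $\cG$ that agree with the true model on all of $\bbZ$ (taking one member of the non-ID witness to serve as $\cM^*$, so the shared data value is $\bbZ(\cM^*)$) yet differ on $Q$; replacing each by an $\cL_3$-equivalent $\cG$-NCM yields $\widehat M_1, \widehat M_2 \in \Omega(\cG)$ that still match $\cM^*$ on $\bbZ \subseteq \cL_3$ but differ on $Q \in \cL_3$, contradicting neural ID.

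Everything therefore hinges on the expressiveness lemma, which I would establish using the functional-counterfactual machinery already developed. By Lemma~\ref{lem:f-ctf-completeness}, two SCMs over $\*V$ that share the functional-counterfactual set $\*{F}$ (Def.~\ref{def:function-ctf}) are $\cL_3$-equivalent precisely when they induce the same distribution $P(\*{F})$; since the endogenous domains are finite, $\*{F}$ ranges over a finite set and $P(\*{F})$ is a finite-dimensional object, so realizing all of $\cL_3$ reduces to realizing a single finite distribution. The directed part of $\cG$ settles the ``shape'' of $\*{F}$ automatically: each $\hat f_{V_i}$ in a $\cG$-NCM reads exactly $\Pai{V_i}$ together with its clique-exogenous variables, so it is a response function of the correct signature, and by universal approximation a feedforward net can implement any map from parent values and (discretized) uniform inputs into $V_i$.

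The substantive part is the joint law over the per-variable response functions, and I expect this to be the main obstacle. I must show that the NCM's exogenous design --- one uniform $\widehat U_{\*C}$ per maximal clique of the bidirected graph, shared by exactly the variables of $\*C$ --- can generate every $P(\*{F})$ whose independence structure is compatible with $\cG$'s bidirected edges, by routing each clique's uniform through a partition of $[0,1]$ that samples the required within-clique joint response-function distribution while leaving non-adjacent variables independent. Verifying that covering the bidirected graph by its maximal cliques is rich enough to reproduce \emph{every} admissible confounding pattern, and that the uniform-to-finite push-forward can be realized (exactly at the level of $\cL_3$-equivalence classes, or to arbitrary precision and then closed up), is the delicate step; once it is in place, Lemma~\ref{lem:f-ctf-completeness} converts the matched $P(\*{F})$ into the $\cL_3$-equivalence needed in the first paragraph, and the theorem follows.
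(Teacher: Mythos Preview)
The paper does not prove this statement at all: it is stated as a \emph{Fact}, explicitly imported from \citet[Thm.~3]{xia:etal23}, and is used as a black box to derive Corollary~\ref{corol:abs-id-ncm}. There is therefore no in-paper proof to compare against.

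On the merits of your outline, the high-level decomposition is the standard one and matches what the cited source does: the classical $\Rightarrow$ neural direction is immediate because $\Omega(\cG) \subseteq \Omega^*(\cG)$, and the converse is reduced via contrapositive to an $\cL_3$-expressiveness lemma (every $\cG$-inducing SCM has an $\cL_3$-equivalent $\cG$-NCM). Your use of functional counterfactuals (Def.~\ref{def:function-ctf}) and Lemma~\ref{lem:f-ctf-completeness} to compress $\cL_3$-equivalence down to a single finite joint $P(\*F)$ is exactly the right leverage point and aligns with the machinery this paper develops for its own results.

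The step you correctly flag as delicate---that the clique-indexed uniforms $\{\widehat U_{\*C} : \*C \in \bbC(\cG)\}$ are rich enough to realize every $P(\*F)$ compatible with $\cG$'s bidirected structure---is indeed the crux, and your sketch does not settle it. The issue is that the bidirected part of a causal diagram constrains the exogenous dependence only up to c-component membership, not literal clique membership, so one must argue that sharing a uniform per \emph{maximal} bidirected clique (rather than, say, one per c-component or one per bidirected edge) still suffices to reproduce arbitrary joint response-function laws within each c-component. This is where the actual work in \citet{xia:etal23} lives; your proposal identifies the obstacle but would need that argument filled in to be complete.
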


\begin{fact}[Neural Counterfactual Mutilation (Operational ID) {\citep[Corol.~1]{xia:etal23}}]
    \label{fact:neur-op-id}
    Consider the true SCM $\cM^* \in \Omega^*$, causal diagram $\cG$, a set of available distributions $\bbZ$, and a target query $Q$ equal to $P^{\cM^*}(\*y_* \mid \*x_*)$. Let $\hM \in \Omega(\cG)$ be a $\cG$-constrained NCM such that $\bbZ(\hM) = \bbZ(\cM^*)$. If $Q$ is identifiable from $\cG$ and $\bbZ$, then $Q$ is computable via Eq.~\ref{eq:def:l3-semantics} from $\hM$.
    \hfill $\blacksquare$
\end{fact}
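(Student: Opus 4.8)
The plan is to obtain this operational guarantee as a consequence of the graphical--neural identification equivalence (Fact~\ref{fact:ncm-ctfid-equivalence}) together with the expressiveness of the neural model class. It helps to separate the claim into two logically distinct parts: \emph{(i)} every $\cG$-NCM that fits the data $\bbZ$ assigns the same value to $Q$, and \emph{(ii)} that common value is in fact the \emph{true} value $Q(\cM^*)$. The definition of neural identifiability (Def.~\ref{def:ncm-l3-id}) only supplies part \emph{(i)} --- mutual agreement among data-consistent NCMs --- so the crux is manufacturing a witness NCM whose Layer~3 distributions coincide with those of $\cM^*$, which is what pins the common value to nature.

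First I would convert the hypothesis. Since $Q$ is identifiable from $\cG$ and $\bbZ$ in the classical sense (Def.~\ref{def:classic-id}), Fact~\ref{fact:ncm-ctfid-equivalence} yields that $Q$ is neural identifiable from $\Omega(\cG)$ and $\bbZ$. Unpacking Def.~\ref{def:ncm-l3-id}, this means $Q(\hM_1) = Q(\hM_2)$ for every pair $\hM_1, \hM_2 \in \Omega(\cG)$ that match $\cM^*$ on all distributions in $\bbZ$. The model $\hM$ in the statement is, by assumption, one such data-consistent member of $\Omega(\cG)$.

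Next I would construct the witness. By the expressiveness of $\cG$-constrained NCMs \citep{xia:etal21}, for the true SCM $\cM^*$ (which induces $\cG$) there exists a $\cG$-NCM $\hM^{\dagger} \in \Omega(\cG)$ with $\cL_3(\hM^{\dagger}) = \cL_3(\cM^*)$. Because the available distributions and the target query all live in Layer~3 (i.e.\ $\bbZ \subseteq \cL_2 \subseteq \cL_3$ and $Q \in \cL_3$), this witness satisfies both $\bbZ(\hM^{\dagger}) = \bbZ(\cM^*) = \bbZ(\hM)$ and $Q(\hM^{\dagger}) = Q(\cM^*)$. Applying the neural-identifiability conclusion to the pair $(\hM, \hM^{\dagger})$ --- both lie in $\Omega(\cG)$ and both agree with $\cM^*$ on $\bbZ$ --- gives $Q(\hM) = Q(\hM^{\dagger}) = Q(\cM^*)$. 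Finally, since $\hM$ is itself an SCM, the quantity $Q(\hM)$ is by definition read off by evaluating the Layer~3 valuation of Eq.~\ref{eq:def:l3-semantics} in $\hM$; hence $Q$ is computable from $\hM$ via that equation and equals the true value, as claimed.

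The main obstacle is precisely part \emph{(ii)}: guaranteeing the existence of the $\cL_3$-equivalent witness $\hM^{\dagger}$ inside the constrained class $\Omega(\cG)$. This is the expressiveness property of the neural family, and it is exactly what upgrades the mere mutual agreement of data-consistent NCMs to agreement with nature's SCM; without it, neural identifiability alone would leave the common value floating. A secondary technical point is the strict-positivity convention, since Def.~\ref{def:ncm-l3-id} is phrased under $\bbZ(\cM^*) > 0$: I would either carry a positivity assumption on the available distributions through the argument or note that it is inherited from the setting in which the cited equivalence results are established, so that the neural-identifiability hypothesis applies verbatim.
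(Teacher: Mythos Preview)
The paper does not actually prove this statement: it is presented as a \texttt{Fact} cited from \citet[Corol.~1]{xia:etal23} and is invoked without proof (e.g., in the one-line proof of Corol.~\ref{corol:abs-id-ncm}). So there is no ``paper's own proof'' to compare against.

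That said, your reconstruction is correct and is precisely the standard argument behind this result. The two-step decomposition---(\emph{i}) neural identifiability gives agreement among all data-consistent $\cG$-NCMs, and (\emph{ii}) the $\cL_3$-expressiveness of the NCM class supplies a witness $\hM^{\dagger}$ with $\cL_3(\hM^{\dagger}) = \cL_3(\cM^*)$, pinning the common value to $Q(\cM^*)$---is exactly how the cited corollary is derived in \citet{xia:etal23}. You also correctly flag the only nontrivial ingredient: the existence of the $\cL_3$-equivalent witness inside $\Omega(\cG)$, which is the $\cL_3$-$\cG$-expressiveness theorem from that same line of work. Your remark on the positivity convention is appropriate; it is indeed inherited from the setting of Def.~\ref{def:ncm-l3-id} and Fact~\ref{fact:ncm-ctfid-equivalence}.
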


The connection between abstract identification and neural identification follows naturally.

\absidncm*
\begin{proof}
    This is a direct consequence of Thm.~\ref{thm:dual-abs-id}, Fact \ref{fact:ncm-ctfid-equivalence}, and Fact \ref{fact:neur-op-id}.
\end{proof}

\absidcomplete*
\begin{proof}
    Lines 1-2 of Alg.~\ref{alg:ncm-solve-absid} constructs $\tau$ given $\bbC$ and $\bbD$. Lines 3-9 checks that $\tau(Q)$ is neural identifiable from $\widehat{\Omega(\cG_{\bbC})}$ and $\tau(\bbZ)$. Lines 4 and 5 find the two parameterizations $\theta^*_{\min}$ and $\theta^*_{\max}$ that minimize and maximize $\tau(Q)$ while simultaneously guaranteeing $\tau(\bbZ)$-consistency. Hence, if the two parameterizations result in the same value for $\tau(Q)$, then all such NCMs must match in $\tau(Q)$, guaranteeing neural identifiability. Otherwise, the two parameterizations provide the counterexample for two NCMs that do not match in $\tau(Q)$. Finally, Corol.~\ref{corol:abs-id-ncm} states that neural identifiability implies abstract identifiability.
\end{proof}

\subsection{Proofs of Sec.~\ref{sec:applications}}

We first start by showing the following result.

\begin{lemma}
    \label{lem:coarse-intra}
    For any choice of intravariable clusters $\bbD$ such that $\cM_L$ satisfies the AIC w.r.t.~the corresponding $\tau$, $\cM_L$ will also satisfy the AIC w.r.t.~any finer clustering $\bbD'$ (i.e.~for all $\bbD_{\*C_i} \in \bbD$ and all $\cD_{\*C_i}^{(j)} \in \bbD_{\*C_i}$, $\cD_{\*C_i}^{(j)}$ is a subset of some $\cD_{\*C_i}^{(j')} \in \bbD'_{\*C_i}$).
    \hfill $\blacksquare$

    \begin{proof}
        Fix intervariable clusters $\bbC$ and denote $\tau$ and $\tau'$ as the constructive abstraction function defined w.r.t.~$(\bbC, \bbD)$ and $(\bbC, \bbD')$ respectively. If $\cM_L$ satisfies that AIC w.r.t.~$\tau$, that implies that for all $\*v_1, \*v_2 \in \cD_{\*V_L}$ such that $\tau(\*v_1) = \tau(\*v_2)$, all $\*u \in \cD_{\*U_L}$, and all $\*C_i \in \bbC$,
        \begin{equation*}
            \begin{split}
                & \tau \left( \left( f^L_V(\pai{V}^{(1)}, \*u_V): V \in \*C_i \right) \right) \\
                &= \tau \left( \left( f^L_V(\pai{V}^{(2)}, \*u_V): V \in \*C_i \right) \right),
            \end{split}
        \end{equation*}
         where $\pai{V}^{(1)}$ and $\pai{V}^{(2)}$ are the values corresponding to $\*v_1$ and $\*v_2$ respectively. If $\bbD'$ is a finer clustering than $\bbD$, then that means that $\tau(\*v_1) = \tau(\*v_2)$ implies $\tau'(\*v_1) = \tau'(\*v_2)$ for all values $\*v_1, \*v_2 \in \cD_{\*V_L}$, implying that the above must also hold for $\tau'$.
    \end{proof}
\end{lemma}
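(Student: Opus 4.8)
The plan is to prove the statement by directly reducing the AIC for $\tau'$ (the constructive abstraction function built from the clusters $\bbC$ and $\bbD'$) to the AIC for $\tau$ (built from $\bbC$ and $\bbD$), which holds by hypothesis. Throughout I keep the intervariable clusters $\bbC$ fixed, so $\tau$ and $\tau'$ differ only through their intravariable partitions, and both decompose into subfunctions $\tau_{\*C_i}$ and $\tau'_{\*C_i}$ as in Def.~\ref{def:tau}. The whole argument is an unfolding of Def.~\ref{def:invariance-condition} for $\tau'$, using AIC($\tau$) as the engine.

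First I would record the single structural consequence of the block-inclusion hypothesis. Since every block $\cD^{(j)}_{\*C_i} \in \bbD_{\*C_i}$ is contained in one block of $\bbD'_{\*C_i}$, the equivalence relation that $\tau$ induces on $\cD_{\*V_L}$ is contained in the one induced by $\tau'$; equivalently, $\tau(\*v_1) = \tau(\*v_2)$ implies $\tau'(\*v_1) = \tau'(\*v_2)$ for all $\*v_1, \*v_2 \in \cD_{\*V_L}$. Read componentwise, the same inclusion gives a well-defined surjection $g_i$ on each cluster domain $\cD_{\*C_i}$ with $\tau'_{\*C_i} = g_i \circ \tau_{\*C_i}$. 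These two facts — containment of the global equivalence relation and factorization of the per-cluster quotient maps — are the only properties of $\bbD'$ I would use.

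Next I would verify the AIC inequality of Eq.~\ref{eq:aic} for $\tau'$. Fix $\*v_1, \*v_2$ with $\tau'(\*v_1) = \tau'(\*v_2)$, an exogenous value $\*u \in \cD_{\*U_L}$, and a cluster $\*C_i \in \bbC$, and write $\pai{V}^{(1)}, \pai{V}^{(2)}$ for the parent values of each $V \in \*C_i$ read off from $\*v_1, \*v_2$. I would bring this pair under the control of AIC($\tau$): invoking the invariance assumed for $\tau$, the two cluster-output tuples $(f^L_V(\pai{V}^{(1)}, \*u_V) : V \in \*C_i)$ and $(f^L_V(\pai{V}^{(2)}, \*u_V) : V \in \*C_i)$ lie in the same $\tau_{\*C_i}$-block; composing with $g_i$ then places them in the same $\tau'_{\*C_i}$-block, which is exactly Eq.~\ref{eq:aic} written for $\tau'$. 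Since $\*v_1, \*v_2, \*u, \*C_i$ were arbitrary, $\cM_L$ satisfies the AIC with respect to $\tau'$.

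The step I expect to be the main obstacle is precisely the reduction in the previous paragraph: the hypothesis of AIC quantifies over the \emph{same} partition that governs its conclusion, so I must use the block-inclusion in a way that simultaneously controls every pair flagged by $\tau'$ in the hypothesis and transports the output-equality through the factorization $\tau'_{\*C_i} = g_i \circ \tau_{\*C_i}$ in the conclusion. Making these two uses of the refinement relation point consistently in the same direction — rather than one helping the hypothesis while fighting the conclusion — is the delicate part of the argument. Once that alignment is pinned down, the rest is the routine bookkeeping of reading off parent values from $\*v_1, \*v_2$ and invoking Def.~\ref{def:invariance-condition} verbatim.
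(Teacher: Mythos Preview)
Your overall strategy matches the paper's: both record that the block-inclusion gives $\tau(\*v_1)=\tau(\*v_2)\Rightarrow\tau'(\*v_1)=\tau'(\*v_2)$ and then try to push AIC from $\tau$ to $\tau'$; the paper is even terser, simply asserting that this ``impl[ies] that the above must also hold for $\tau'$.''

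There is, however, a genuine gap at precisely the spot you flag as ``the main obstacle,'' and your sketch does not close it. To verify Eq.~\ref{eq:aic} for $\tau'$ you fix $\*v_1,\*v_2$ with $\tau'(\*v_1)=\tau'(\*v_2)$ and then ``invok[e] the invariance assumed for $\tau$.'' But the hypothesis of AIC($\tau$) is $\tau(\*v_1)=\tau(\*v_2)$, and under the block-inclusion you recorded this is \emph{strictly stronger} than $\tau'(\*v_1)=\tau'(\*v_2)$: your own factorization $\tau'_{\*C_i}=g_i\circ\tau_{\*C_i}$ gives the implication only in the direction $\tau$-equal $\Rightarrow$ $\tau'$-equal, not the reverse. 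So you are not entitled to apply AIC($\tau$) to this pair. Composing with $g_i$ handles the \emph{conclusion} side cleanly, but the \emph{hypothesis} side pulls the opposite way---exactly the tension you name, and it is not dissolved by the argument as written. Concretely, take $\*V_L=\{X,Y\}$ binary with $f_Y(x,u_Y)=x$; let $\bbD$ be the singleton partition on both clusters (so AIC($\tau$) holds trivially) and let $\bbD'$ merge $\{0,1\}$ into one $X$-block while keeping $Y$ singleton. Every $\bbD$-block sits inside a $\bbD'$-block, yet $\*v_1=(0,0)$ and $\*v_2=(1,0)$ satisfy $\tau'(\*v_1)=\tau'(\*v_2)$ while $\tau'_{\{Y\}}(f_Y(0,\cdot))\neq\tau'_{\{Y\}}(f_Y(1,\cdot))$, so AIC($\tau'$) fails. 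The paper's one-line proof glosses over the same point.
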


This property implies that the constraints of $\bbD$ are one-sided, and although finding the most coarse set of clusters may be impossible, any finer set will also work. In the worst case, choosing $\bbD$ such that $\bbD_{\*C_i} = \cD_{\*C_i}$ (i.e.~every value in their own cluster) would still result in a valid abstraction, as shown below.

\fullintracluster*

\begin{proof}
    The first claim is directly implied by Lemma \ref{lem:coarse-intra}, since this choice of $\bbD$ simply clusters each value to its own cluster, resulting in a finer clustering than any other clustering.

    Without any additional information about $\cM_L$, it is possible for any other choice of clustering to result in $\cM_L$ failing to satisfy the AIC. This can be shown by constructing an adversarial example of $\cM_L$ for any other choice of intravariable clustering $\bbD'$. Since $\bbD' \neq \bbD$, this implies that there exists at least one pair of $\*c_1, \*c_2 \in \*C_i$ for some $\*C_i \in \bbC$ such that $\*c_1$ and $\*c_2$ are in the same cluster $\cD_{\*C_i}^{(j)} \in \bbD_{\*C_i}$ for some $j$.

    Let $\*C_k \in \bbC$ be a cluster such that there exists at least one pair $V_1 \in \*C_i, V_2 \in \*C_k$ such that $V_1 \in \Pai{V_2}$. Since there are no restrictions on $\cM_L$ aside from basic assumptions like recursiveness, we can construct one such that such a $\*C_k$ exists. Consider the set of functions $\mathcal{F}_{\*C_k} = \{f^{L}_{V} : V \in \*C_k\} \subseteq \cF_L$, from $\cM_L$, and denote $\cF_{\*C_k}(\*c_k, \*u_L) = (f^L_{V}(\*c_k, \*u_L) : V \in \*C_k)$. Provided that the domains of $\*V_H$ are nontrivial (each variable can take at least two values), there must exist $v_1, v_2 \in \cD_{V_k}$, where $V_k = \tau_{\*C_k}(\*C_k)$, such that $v_1 \neq v_2$, and there exists $\*c_{k, 1}, \*c_{k, 2} \in \cD_{\*C_k}$ such that $\tau_{\*C_k}(\*c_{k, 1}) = v_1$ and $\tau_{\*C_k}(\*c_{k, 2}) = v_2$. Hence, we can construct each function of $\cF_{\*C_k}$ such that for some setting of $\*U_L = \*u_L$, $\tau_{\*C_k}(\cF_{\*C_k}(\*c_1, \*u_L)) \neq \tau_{\*C_k}(\cF_{\*C_k}(\*c_2, \*u_L))$, violating the AIC.
\end{proof}

\section{Background on Causal Abstractions}
\label{sec:related-work}

In this section, we discuss some of the prior works in causal abstractions \citep{rubenstein:etal17-causalsem, beckers2019abstracting, Beckers2019-BECACA-8, geiger2023causal, pmlr-v213-massidda23a}. In many established causal inference tasks, it is typically assumed that there is a well-specified and known set of variables of interest $\*V$, and nature is modeled by a collection of mechanisms that assign values to each of these variables. However, the definition of $\*V$ may not always be clear in practice. In particular, the variables of interest may not align with the features of the data. For example, in an economic system, perhaps data on each individual consumer is collected, but the variable of interest is an aggregate measure like gross domestic product (GDP). In image data, perhaps the pixel values are collected, but the variables of causal interest are related to the objects of the image, not the individual pixels.

Acknowledging that the data is not always provided in the best choice of granularity, existing works of causal abstractions typically define two sets of variables, $\*V_L$ and $\*V_H$, which describe the lower level and higher level settings, respectively. For example, $\*V_L$ might describe the pixels of an image, while $\*V_H$ might describe its structural content. They are typically modeled by corresponding causal models $\cM_L$ and $\cM_H$, respectively. In this section, we describe some relevant works in this context and will employ our notation for consistency purposes when their notation differs.

The connection between $\*V_H$ and $\*V_L$ can be described through a mapping, $\tau: \cD_{\*V_L} \rightarrow \cD_{\*V_H}$, between their domains. However, even if $\tau$ is known, it is not guaranteed that a model over $\*V_H$, $\cM_H$, is an abstraction of a model over $\*V_L$, $\cM_L$. In short, while $\tau$ connects the domains of the variables, there is nothing guaranteeing any kind of connection between the models $\cM_H$ and $\cM_L$, be it the functions, the exogenous noise, or the induced distributions.

One of the earliest works that formally discuss abstractions in the context of causal models is \citet{rubenstein:etal17-causalsem}, which establishes the idea of \emph{exact transformations}, where a high-level SCM $\cM_H$ could be considered an ``abstraction'' of a low-level SCM $\cM_L$ if $\cM_H$ is an exact $\tau$-transformation of $\cM_L$. In addition to $\tau$, which connects the domains of $\*V_H$ and $\*V_L$, exact transformations connect the two models $\cM_L$ and $\cM_H$ through their induced interventional distributions. This requires mapping the set of low level interventions $\cI_L$ (over $\*V_L$) to their corresponding high-level counterparts $\cI_H$ (over $\*V_H$), which is done through another function $\omega: \cI_L \rightarrow \cI_H$. This idea leads to the following definition.

\begin{definition}[Exact Transformation {\citep[Def.~3]{rubenstein:etal17-causalsem}}]
    \label{def:exact-transform}
    Let $\cM_L$ and $\cM_H$ be SCMs and $\tau: \cD_{\*V_L} \rightarrow \cD_{\*V_H}$ be a function. We say that $(\cM_H, \cI_H)$ is an exact $\tau$-transformation of $(\cM_L, \cI_L)$ if there exists a surjective order preserving map $\omega : \cI_L \rightarrow \cI_H$ such that
    \begin{equation}
        \label{eq:abstract-commute}
        P(\tau(\*V_{L[\*X_L = \*x_L]})) = P(\*V_{H[\omega(\*X_L = \*x_L)]}).
    \end{equation}
    \hfill $\blacksquare$
\end{definition}

In this definition, the variables and corresponding distributions of $\cM_L$ are linked to those of $\cM_H$ through the function $\tau$, and corresponding causal interventions are linked through the function $\omega$.

The interventional sets $\cI_L$ and $\cI_H$ are called the ``allowed'' interventions of $\cM_L$ and $\cM_H$, respectively. They can be specified to contain any possible intervention and exclude others. Since Eq.~\ref{eq:abstract-commute} only applies in cases where the intervention $\*X_L = \*x_L$ is contained in $\cI_L$, any intervention that is not in $\cI_L$ or $\cI_H$ is deemed irrelevant in the context of exact transformations, and no restrictions are placed on their corresponding interventional distributions according to the definition. In an extreme case, if $\cI_L$ and $\cI_H$ only contained the empty intervention (i.e. $\cI_L = \cI_H = \{\emptyset\}$), then Eq.~\ref{eq:abstract-commute} would only require that $P(\tau(\*V_{L})) = P(\*V_H)$ and makes no statements about any interventional distributions from $\cL_2$. As opposed to requiring $\cI_L$ and $\cI_H$ to contain all interventions, this flexibility allows one to specify which interventions are well-defined, which is important since not every intervention may translate well across an abstraction. Consider the following example for concreteness.

\begin{example}
    \label{ex:allowed-interventions}
    Suppose a two-branch government is voting on a law, where $Y$ is a binary variable denoting whether the law is enacted, and $X_1$ and $X_2$ are the binary variables representing the votes of the two branches. In this case, $\*V_L = \{X_1, X_2, Y\}$. The law is only considered if both branches vote ``yes'', so instead of representing the two branches' votes separately, one could introduce a new variable
    \begin{equation}
    X_H \gets (X_1 = \text{``yes''}) \wedge (X_2 = \text{``yes''})
    \end{equation}
    as an abstraction of $X_1$ and $X_2$, with $\*V_H = \{X_H, Y\}$.
    
    In this case, the low level intervention $(X_1 \gets \text{``yes''}, X_2 \gets \text{``yes''})$ (simultaneously intervening on both $X_1$ and $X_2$), would map to a high level intervention $(X_H \gets 1)$. Eq.~\ref{eq:abstract-commute} then dictates that
    \begin{equation}
    P(\tau(\*V_{L[X_1 = \text{``yes''}, X_2 = \text{``yes''}]})) = P(\*V_{H[X_H = 1]}).
    \end{equation}
    However, an intervention like $(X_1  \gets \text{``yes''})$, which only intervenes on $X_1$, does not have a corresponding high level counterpart. The value of $X_H$ under this intervention would still depend on $X_2$. Hence, the intervention $(X_1 \gets \text{``yes''})$ should be excluded from $\cI_L$, which implies no restrictions on $P(\tau(\*V_{L[\*X_1 = \text{``yes''}]}))$.
\hfill $\blacksquare$
\end{example}

For any interventional set $\cI$, there exists a natural partial ordering $\leq$ such that $i \leq j$ for $i, j \in \cI$ if and only if the interventional values of $i$ are a subset of those in $j$ (e.g., $(A \gets a, B \gets b) \leq (A \gets a, B \gets b, C \gets c)$). Given the orderings $\leq_L$ and $\leq_H$ of $\cI_L$ and $\cI_H$, respectively, the order preserving property of $\omega$ is defined to mean that $i \leq_{L} j$ for $i, j \in \cI_L$ implies $\omega(i) \leq_{H} \omega(j)$. This property enforces a kind of regularity condition on $\omega$ ensuring that low-level interventions are still related when translated to the higher level. For example, an intervention of $(A \gets a, B \gets b, C \gets c)$ on the low level may map through $\omega$ to an intervention $(X \gets x, Z \gets z)$ on the high level. If we consider the same intervention, but with one more added value, such as $(A \gets a, B \gets b, C \gets c, D \gets d)$, we would expect that $\omega$ would map it to a similar intervention, possible with more values on the high level, such as $(X \gets x, Z \gets z, Y \gets y)$. The order preserving property of $\omega$ prevents it from mapping the intervention to one with fewer values such as $(X \gets x)$ or ones with different values like $(X \gets x, Z \gets z')$.

With all of these properties, exact $\tau$-transformations establish an important foundational property expected from all abstractions, namely, that the abstraction mapping $\tau$ commutes with applied interventions (Eq.~\ref{eq:abstract-commute}), as illustrated in Fig.~\ref{fig:abs-commutativity}.

\begin{example}
\label{ex:related}
Consider a low level SCM  $\cM_L = \langle \*U_L, \*V_L, \cF_L, P(\*U_L) \rangle$ that models an alarm system. Suppose $\*V_L = \{E, S, A\}$, all binary, where the alarm rings ($A = 1$) if either there is an earthquake ($E = 1$) or smoke from a fire ($S = 1$), with some possible noise. Formally, the causal mechanisms are described as follows:

\begin{align}
    \*U_L &= \{U_E, U_S, U_A\} \label{eq:ex-related-ML-U} \\
    \*V_L &= \{E, S, A\} \\
    \cF_L &= \begin{cases}
        E \gets f^{L}_E(u_E) &= u_E \\
        S \gets f^L_S(u_S) &= u_S \\
        A \gets f^L_A(e, s, u_A) &= (e \vee s) \oplus u_A
    \end{cases} \\
    P(\mathbf{U}_L) &: P(U_E \! = \! 1) \! = \! P(U_S \! = \! 1) \! = \! P(U_A \! = \! 1) \! = \! 0.5 \label{eq:ex-related-ML-PU}
\end{align}

Now suppose instead of considering both earthquake and fire individually, we would like to abstract both of these events into a less granular variable $D$, representing whether or not some disaster has occurred. That is, $D = E \vee S$. To be precise, this means the high level variables can be defined as $\*V_H = \{D, A_H\}$, and $\tau$ can be defined such that 
\begin{equation}
(D, A_H) \gets \tau(e, s, a) = (e \vee s, a).
\end{equation}
Now consider the following SCM $\cM_H^{(1)}$ defined over these variables:

\begin{figure}
\centering
\includegraphics[width=0.5\linewidth]{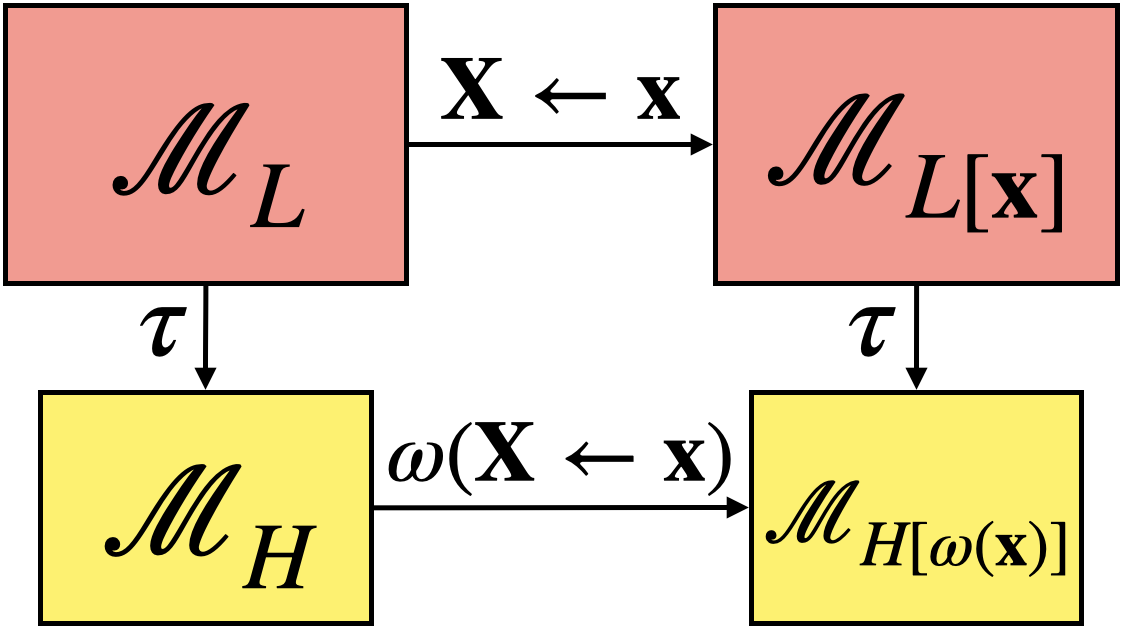}
\caption{Illustration of interventional commutativity. Applying low-level intervention $\*X \gets \*x$ followed by abstraction $\tau$ is equivalent to first applying $\tau$, followed by applying high-level intervention $\omega(\*X \gets \*x)$.} 
\label{fig:abs-commutativity}
\end{figure}

\begin{align}
        \*U_1 &= \{U_D, U_A\} \label{eq:ex-related-MH1-U} \\
        \*V_H &= \{D, A_H\} \\
        \cF_1 &= \begin{cases}
            D \gets f^{1}_D(u_E) &= u_D \\
            A_H \gets f^{1}_{A_H}(d, u_A) &= d \oplus u_A
        \end{cases} \\
        P(\mathbf{U}_1) &: P(U_D = 1) = 0.75, P(U_A = 1) = 0.5 \label{eq:ex-related-MH1-PU}
\end{align}

Further suppose that the lists of allowed interventions are
\begin{equation}
\begin{split}
\cI_L =& \{\emptyset, (E \leftarrow 1), (S \leftarrow 1), (E\leftarrow e, S\leftarrow s), \\
& (A\leftarrow a), (E\leftarrow 1, A\leftarrow a), \\
& (S \leftarrow 1, A\leftarrow a), (E\leftarrow e, S\leftarrow s, A\leftarrow a)\},
\end{split}
\end{equation}
and
\begin{equation}
\cI_H^{(1)} = \{\emptyset, (D\leftarrow d), (A_H\leftarrow a), (D\leftarrow d, A_H\leftarrow a)\},
\end{equation}
for all settings of $e, s, a, d$. The partial ordering of these interventions are in the listed order.

One can verify that $(\cM_H^{(1)}, \cI_H^{(1)})$ is an exact $\tau$-transformation of $(\cM_L, \cI_L)$. For $\omega$, we can choose one that maps any combination of $E \gets e, S \gets s, A \gets a$ to $D \gets e \vee s, A_H \gets a$. Any case with $E \gets 1$ or $S \gets 1$ automatically maps to $D \gets 1$. We note that it is order preserving (e.g., $(E\gets 0, S\gets 0) \leq_L (E\gets 0, S\gets 0, A\gets 0)$ and $(D \gets 0) \leq_H (D \gets 0, A_H \gets 0)$), and one can verify that Eq.~\ref{eq:abstract-commute} holds. For example,
\begin{equation}
P(\tau(E = 1, S = 0)) = 0.75 = P(D = 1)
\end{equation}
and 
\begin{equation}
P(\tau(A_{E=1, S=0}) = 1) = 0.5 = P(A_{H[D=1]} = 1).
\end{equation}
One notable property that allows this to occur is that $E \gets 0$ and $S \gets 0$ are not valid interventions according to $\cI_L$, since the corresponding intervention mapped by $\omega$ is ambiguous as discussed earlier in Ex.~\ref{ex:allowed-interventions}.

Now consider an alternative model $\cM_H^{(2)}$, described as follows:

\begin{align}
        \*U_2 &= \{U_D, U_A\} \\
        \*V_H &= \{D, A_H\} \\
        \cF_2 &= \begin{cases}
            D \gets f^{2}_D(a, u_E) &= u_D \\
            A_H \gets f^{2}_{A_H}(u_A) &= u_A
        \end{cases} \\
        P(\mathbf{U}_2) &: P(U_D = 1) = 0.75, P(U_A = 1) = 0.5.
\end{align}

It turns out that, if $\cI_H^{(2)} = \cI_H^{(1)}$, then $(\cM_H^{(2)}, \cI_H^{(2)})$ is an exact $\tau$-transformation of $(\cM_L, \cI_L)$ for the same choice of $\tau$, even though in $\cM_H^{(2)}$, the causal relationship between $D$ and $A_H$ no longer exists. In fact, consider another case where $\cI_L 
= \cI_H^{(2)} = \{\emptyset\}$, that is, only the empty intervention is allowed on either level, and $\omega(\emptyset) = \emptyset$. Perhaps surprisingly, in this case, $(\cM_H^{(2)}, \cI_H^{(2)})$ is an exact $\tau$-transformation of $(\cM_L, \cI_L)$ for several other counterintuitive choices of $\tau$ as well. For instance, we can choose $(D, A_H) \gets \tau(e, s, a) = (\neg e \vee a, s)$, which does not even map $A$ to $A_H$. We can even choose one that is not consistent across variables, for example $\tau(e, s, a)$ can map $(0, 1, 0), (0, 0, 1), (1, 0, 1)$ to $(1, 0)$; $(0, 0, 0), (0, 1, 1), (1, 1, 0)$ to $(1, 1)$; $(1, 1, 1)$ to $(0, 0)$; and $(1, 0, 0)$ to $(0, 1)$. One can verify that, in both cases, this still results in an exact $\tau$-transformation. Furthermore, one can imagine changing the names of the variables to describe something arbitrarily different. In fact, it seems that $\cM_H^{(2)}$ and $\cM_L$ are completely unrelated.
\hfill $\blacksquare$
\end{example}

The heart of the issue raised in the previous example is that in cases where several choices of $\*v_L$ have the same probability, one can still obtain a valid abstraction without violating Eq.~\ref{eq:abstract-commute} by rearranging values mapped by $\tau$ that have the same probability, even if the resulting rearrangement has no causal interpretation. Further when the allowed interventions are sparse, Eq.~\ref{eq:abstract-commute} is required to hold on fewer distributions, resulting in a weaker connection between the high and low-level models. In these cases, the definition of exact $\tau$-transformations becomes weak and can often no longer be used to define abstractions in any intuitive sense.

Building on the work of \citet{rubenstein:etal17-causalsem}, \citet{beckers2019abstracting} introduced several refined definitions that resolved these issues, including the notion of $\tau$-abstractions. We rewrite the definitions as shown below.

\beckersnotation*

\beckerstauabs*

\beckersstrongtauabs*

We discuss each point of Def.~\ref{def:tau-abs} in detail:
\begin{enumerate}
    \item The surjectivity of $\tau$ does not add any mathematical benefits but is required as a property because $\*V_H$ is expected to be less ``complex'' than $\*V_L$ if $\cM_H$ is to be called an abstraction of $\cM_L$.

    \item The addition of $\tau_{\*U}$ is the major constraint added to $\tau$-abstractions when compared to exact transformations. By establishing a connection between the lower and higher level exogenous variables, $\tau_{\*U}$ ensures that the distributions of $\*V_H$ meaningfully correspond to those of $\*V_L$. This, in fact, fixes the issue described earlier in Ex.~\ref{ex:related}, since even if Eq.~\ref{eq:abstract-commute} holds, it is not necessarily the case that Eq.~\ref{eq:tau-u-compatibility} will hold unless the values of $\*U_L$ that are used to compute the l.h.s.~of Eq.~\ref{eq:abstract-commute} match those of $\tau_{\*U}(\*U_L)$ that are used to compute the r.h.s. This point is illustrated in Ex.~\ref{ex:tau-abs-fix} below.

    \item With $\cI_L$ and $\omega_{\tau}$ fixed, $\cI_H$ should be fixed to $\omega_{\tau}(\cI_H)$ to remain consistent.
\end{enumerate}

Def.~\ref{def:strong-tau-abs} further fixes the issue of $\cI_L$ containing too few interventions. When $\cI_H$ is maximal, then every submodel $\cM_{H[\*x_H]}$ has a corresponding submodel $\cM_{L[\*x_L]}$ such that $\omega_{\tau}(\*X_L \gets \*x_L) = \*X_H \gets \*x_H$, as described in Eq.~\ref{eq:tau-u-compatibility}.

\begin{example}[Example \ref{ex:related} continued]
\label{ex:tau-abs-fix}
Consider the high level models $\cM_H^{(1)}$ and $\cM_H^{(2)}$ from Ex.~\ref{ex:related}. Using the definition of $\tau(e, s, a) = (e \vee s, a)$, $\omega_{\tau}$ is forced to take the mapping as specified earlier where 
\begin{equation}
E \gets e, S \gets s, A \gets a
\end{equation}
maps to 
\begin{equation}
D \gets e \vee s, A_H \gets a.
\end{equation}
Note that in this case, $(\cM_H^{(1)}, \cI_H^{(1)})$ is still a $\tau$-abstraction of $(\cM_L, \cI_L)$. Specifically, choose $\tau_{\*U}$ such that
\begin{equation}
(U_D, U_A) \gets \tau_{\*U}(u_E, u_S, u_A) = (u_E \vee u_S, u_A).
\end{equation}
One can verify, for example, that
\begin{equation}
    \tau(\cM_{L[E=e, S=s]}(\*u_L)) = \cM_{H[D=e \vee s]}^{(1)}(\tau_{\*U}(\*u_L))
\end{equation}
for all values of $e$, $s$, and $\*u_L$, aligning with Eq.~\ref{eq:tau-u-compatibility}.

On the other hand, $(\cM_H^{(2)}, \cI_H^{(2)})$ is not a $\tau$-abstraction of $(\cM_L, \cI_L)$. For example, Eq.~\ref{eq:tau-u-compatibility} states that
\begin{equation}
    \label{eq:tau-incompatility-1}
    \tau(\cM_{L[E=0, S=0]}(\*u_L)) = \cM_{H[D=0]}^{(2)}(\tau_{\*U}(\*u_L))
\end{equation}
and
\begin{equation}
    \label{eq:tau-incompatility-2}
    \tau(\cM_{L[E=1, S=0]}(\*u_L)) = \cM_{H[D=1]}^{(2)}(\tau_{\*U}(\*u_L)).
\end{equation}
However, note that
\begin{equation}
    \tau(A_{E=0, S=0}(u_A)) \neq \tau(A_{E=1, S=0}(u_A)).
\end{equation}
For example, when $E = 0, S = 0, U_A = 0$, then $f_A^L$ will assign $A = 0$, but when $E = 1, S = 0, U_A = 0$, $f_A^L$ will assign $A = 1$.
On the other hand,
\begin{equation}
    A_{H[D=0]}^{(2)}(\tau_{\*U}(u_A)) = A_{H[D=1]}^{(2)}(\tau_{\*U}(u_A))
\end{equation}
for any choice of $\tau_{\*U}$ because $f_{A_H}^2$ does not take $D$ as an input. This contradicts the equalities enforced by Eqs.~\ref{eq:tau-incompatility-1} and \ref{eq:tau-incompatility-2}.

\hfill $\blacksquare$
\end{example}

The definitions introduced so far are effective at describing one SCM as an abstraction of another. For example, if two SCMs $\cM_H$ and $\cM_L$ are provided, as well as the function $\tau$, Def.~\ref{def:tau-abs} can be used to decide whether $\cM_H$ is indeed an abstraction of $\cM_L$. However, this may not be particularly useful in cases where the higher level model $\cM_H$ is not known in advance, and one would like to find or learn such an abstraction. \citet{beckers2019abstracting} makes an important step in the direction of applying such works by defining a more concrete class of abstractions that can be obtained by construction.

\beckersconstauabs*

In this definition, variables of $\*V_L$ are specifically partitioned into clusters $\*C_1, \dots, \*C_{n+1}$, and $\tau$ is defined such that each cluster $\*C_i$ maps to a high level variable $V_i \in \*V_H$. The definition of $\tau$ and corresponding high level space $\*V_H$ are concretely defined in this definition. We leverage a similar concept in this paper, allowing the higher level variables $\*V_H$ (and correspondingly, $\tau$), to be defined by construction based on predetermined clusters of lower level variables.

\subsection{Comparisons with Sec.~\ref{sec:abstract-ncm}}
The approach used in this work leverages similar ideas to constructive $\tau$-abstractions (Def.~\ref{def:cons-tau-abs}) for the purpose of obtaining the high level model $\cM_H$ constructively. Notably, the intervariable clusters in Def.~\ref{def:var-clusterings} partition the variable space in the same way, and the corresponding choice of $\tau$ from Def.~\ref{def:tau} is defined around these clusters, by utilizing a different subfunction $\tau_{\*C_i}$ for each intervariable cluster $\*C_i$, similar to Def.~\ref{def:cons-tau-abs}. For these reasons, any choice of $\tau$ that follows Def.~\ref{def:tau} is called a constructive abstraction function.

Still, the major difference is that Def.~\ref{def:cons-tau-abs} focuses on the relationship between the full models $\cM_L$ and $\cM_H$, while Def.~\ref{def:tau} only defines the mapping $\tau$ that connects the variable spaces $\*V_L$ and $\*V_H$. That is, Def.~\ref{def:tau} by itself makes no claims about how other aspects of $\cM_L$ (such as the functions $\cF_L$ or exogenous noise $P(\*U_L)$) relate to $\cM_H$, other than the variables $\*V_L$ and $\*V_H$. This is a new approach to abstraction work. Note that exact transformations (Def.~\ref{def:exact-transform}) and $\tau$-abstractions (Def.~\ref{def:tau-abs}) place no requirements on the definition of $\tau$, and Def.~\ref{def:cons-tau-abs} only requires that $\tau$ can be decomposed relative to a partition. Indeed, placing requirements on $\tau$ reduces its generality, but ensuring that $\tau$ follows the form illustrated in Def.~\ref{def:tau} has several advantages:

\begin{enumerate}
    \item \textbf{[Query-Specific Abstractions]} The primary purpose of Def.~\ref{def:tau} is to introduced a relaxed notion of abstractions that are defined on specific distributions of the PCH. As opposed to exact transformations and $\tau$-abstractions (including constructive ones), which focus on the entire SCMs $\cM_L$ and $\cM_H$, the concept of $Q$-$\tau$ consistency (Def.~\ref{def:q-tau-consistency}) allows one to define ``partial'' abstractions. For instance, a choice of $\cM_H$ can be considered an abstraction of $\cM_L$ for $Q_1$ but not $Q_2$ if $\cM_H$ is $Q_1$-$\tau$ consistent with $\cM_L$ but not $Q_2$-$\tau$ consistent. This subtlety is lost in $\tau$-abstractions for example, where any mismatch of Eq.~\ref{eq:tau-u-compatibility} disqualifies $\cM_H$ from being considered an abstraction of $\cM_L$. See Example \ref{ex:partial-abstraction} below for a more concrete explanation on this distinction. Indeed, when $\cM_H$ is $\cL_3$-$\tau$ consistent with $\cM_L$, that is, $\cM_H$ is $Q$-$\tau$ consistent with $\cM_L$ on every possible counterfactual query, then it turns out that $\cM_H$ behaves like a constructive-$\tau$ abstraction of $\cM_L$ (see Prop.~\ref{prop:abs-connect}).\footnote{Note that all $\tau$-abstractions are exact $\tau$-transformations, a result from \citet{beckers2019abstracting}. However, exact transformations are not necessarily $\cL_3$-$\tau$ consistent because Eq.~\ref{eq:abstract-commute} is focused on $\cL_2$ and is oblivious to the counterfactual level.}

    Defining abstractions on the level of individual queries enables a more practical approach to learning abstractions. The true model $\cM_L$ is rarely available in practice, and instead, one is often given data from $\cM_L$ from its induced distributions (e.g., the observational distribution $P(\*V_L)$). As discussed in Sec.~\ref{sec:learning-abs}, one would ideally be able to construct a high-level model $\cM_H$ that is $Q$-$\tau$ consistent with the available distributions. Such a model may not be $Q$-$\tau$ consistent with $\cM_L$ on choices of $Q$ that were not provided in the data, but through Alg.~\ref{alg:ncm-solve-absid}, one can determine precisely which choices of $Q$ do indeed match across abstractions. $\cM_H$ can be considered an abstraction of $\cM_L$ for those cases.

    \item \textbf{[Natural Interventional Mapping]} As opposed to exact transformations and $\tau$-abstractions, the concept of a function $\omega$ that maps between intervention spaces $\cI_L$ and $\cI_H$ is no longer required since the corresponding high-level intervention of a low-level one is straightforward. Specifically, the intervention $\*X_L \gets \*x_L$ maps to $\tau(\*X_L) \gets \tau(\*x_L)$ (see Lem.~\ref{lem:omega-tau-connection}). Additionally, the concept of ``allowed interventions'' is no longer needed. One can simply set $\cI_H$ as the set of all high level interventions $\cI_H^*$, and choose $\cI_L$ as the set of corresponding interventions that map to $\cI_H^*$ (interventions of unions of clusters, as shown in Lem.~\ref{lem:omega-surjectivity}). For these reasons, the presentation in Sec.~\ref{sec:abstract-ncm} does not include any references of $\cI_L$, $\cI_H$ or $\omega$, leaving the focus of the discussion on the abstraction of the variables $\*V_L$.

    \item \textbf{[True Constructiveness]} Despite the progress that constructive $\tau$-abstractions (Def.~\ref{def:cons-tau-abs}) make in the direction of constructively building $\cM_H$ from $\cM_L$, the definition alone does not accomplish this task. By defining $\tau$ as a mapping across clusters of both variables and values, one can leverage Alg.~\ref{alg:map-abstraction} from this paper to obtain the high level model $\cM_H$ when given the low level $\cM_L$. When $\cM_L$ is not provided, and data from $\cM_L$ is provided instead, one can use Alg.~\ref{alg:ncm-solve-absid} to obtain a model $\cM_H$ which is still an abstraction of $\cM_L$ on identifiable queries. This approach is implementable in practice leveraging neural optimization approaches, and the experiments provided in Sec.~\ref{sec:experiments} demonstrate their applicability.

    \item \textbf{[Intuitive Abstractions]} Abstractions that are constructed with a choice of $\tau$ that does not follow Def.~\ref{def:tau} can be quite esoteric (see Ex.~\ref{ex:non-constructive-tau} in App.~\ref{app:examples}). The concept of clustering is intuitive, and the relationship between $\*V_H$ and $\*V_L$ when $\tau$ is a constructive abstraction function is straightforward and interpretable. Intervariable clusters can be determined based on the needs of the task or constructed algorithmically via Alg.~\ref{alg:choose-intervariable-clusters} (in App.~\ref{app:cons-hierarchy}). These clusters also have a natural connection with cluster causal diagrams \citep{anand:etal23}, as illustrated in Sec.~\ref{sec:learning-abs}. Intravariable clusters are strongly tied to invariances in the data (two low-level values that are clustered together will map to the same high-level value), leading to a strong connection with representation learning (see Sec.~\ref{sec:applications} and App.~\ref{app:rep-learning}).
\end{enumerate}

Consider the following example comparing constructive $\tau$-abstractions to the concept of $Q$-$\tau$ consistency.

\begin{example}[Example \ref{ex:related} continued]
    \label{ex:partial-abstraction}
    Recall from Example \ref{ex:related} that the SCM $\cM_H^{(1)}$ as described by Eqs.~\ref{eq:ex-related-MH1-U} to \ref{eq:ex-related-MH1-PU} is a $\tau$-abstraction (and therefore also an exact $\tau$-transformation) of $\cM_L$ described by Eqs.~\ref{eq:ex-related-ML-U} to \ref{eq:ex-related-ML-PU}.

    It turns out that this choice of $\tau$, defined such that
    \begin{equation}
    (D, A_H) \gets \tau(e, s, a) = (e \vee s, a),
    \end{equation}
    is actually a constructive abstraction function with the intervariable clusters
    \begin{equation}
    \bbC = \{\*C_1 = \{E, S\}, \*C_2 = \{A\}\}
    \end{equation}
    and intravariable clusters
    \begin{equation}
    \bbD = \{\bbD_{\*C_1}, \bbD_{\*C_2}\},
    \end{equation}
    where
    \begin{equation}
    \bbD_{\*C_1} = 
    \begin{cases}
    d_0 &= \{(E = 0, S = 0)\}, \\
    d_1 &= \{(E = 0, S = 1), (E = 1, S = 0), \\
    & (E = 1, S = 1)\},
    \end{cases}
    \end{equation}and $\bbD_{\*C_2}$ retains the same values of $\cD_{A}$.
    
    It is then easy to verify that $\cM_H^{(1)}$ is also $\cL_3$-$\tau$ consistent with $\cM_L$. For example, $P^{\cM_L}(A_{E = 0, S = 0} = 1 \mid A = 1, E = 0, S = 1) = 0$, which is the counterfactual probability that the alarm would ring had neither earthquake nor fire occurred, given the reality that the alarm indeed rang when there was a fire but no earthquake. It is also true that the equivalent query mapped across $\tau$ (from Def.~\ref{def:q-tau-consistency}) is consistent, that is, $P^{\cM_H^{(1)}}(A_{H[D = 0]} = 1 \mid A = 1, D = 1) = 0$. This $\cL_3$-$\tau$ consistency is a consequence of Prop.~\ref{prop:abs-connect}.

    Now consider another high level model $\cM_H^{(3)}$ also defined over $\*V_H$ from the same $\tau$.
    \begin{align}
        \*U_3 &= \{U_D, U_{A0}, U_{A1}\} \\
        \*V_H &= \{D, A_H\} \\
        \cF_3 &= \begin{cases}
            f^{3}_D(u_E) &= u_D \\
            f^{3}_{A_H}(d, u_{A0}, u_{A1}) &=
            \begin{cases}
                u_{A0} & d = 0 \\
                u_{A1} & d = 1
            \end{cases}
        \end{cases} \\
        P(\mathbf{U}_3) &: P(U_D = 3) = 0.75, \\
        & P(U_{A0} = 1) = P(U_{A1} = 1) = 0.5
    \end{align}

    Note that $P^{\cM_H^{(3)}}(A_{H[D = 0]} = 1 \mid A = 1, D = 1) = P(U_{A0} = 1) = 0.5$, which is inconsistent with the result from $\cM_L$. Hence, $\cM_H^{(3)}$ is not $\cL_3$-$\tau$ consistent with $\cM_L$, nor is it a $\tau$-abstraction. Still, careful analysis of $\cM_H^{(3)}$ reveals that it is still $\cL_2$-$\tau$ consistent with $\cM_L$. For example, $P^{\cM_H^{(3)}}(A_{H[D = 1]} = 1) = 0.5 = P^{\cM_L}(A_{E = 1, S = 0} = 1)$. Therefore, $\cM_H^{(3)}$ may still be a valid abstraction of $\cM_L$ if used to infer layer 2 or interventional quantities. This highlights the limitations of a definition of abstractions that works on the level of the SCM, such as $\tau$-abstractions. If the user of the model is only interested in interventional quantities, it may be premature to discount $\cM_H^{(3)}$ as an invalid abstraction.
    
    \hfill $\blacksquare$
\end{example}

\subsection{Learning Abstraction Functions}
\label{app:learn-tau-given-mh}

The previous section discussed the benefits of defining the abstraction function $\tau$ to be a constructive abstraction function from Def.~\ref{def:tau}, with many of the reasons leading to the ability to learn the higher-level model $\cM_H$. Nonetheless, there are works that solve the inversion version of this problem, namely, where information about $\cM_H$ is given but the function $\tau$ is unknown.

For example, \citet{DBLP:conf/clear2/ZennaroDAWD23} solves the problem where, given both low and high-level models $\cM_L$ and $\cM_H$, as well as intervariable clusters $\bbC$, the goal is to learn subfunctions $\widehat{\tau}_{\*C_i}$ for each $\*C_i \in \bbC$ that satisfies desired properties of abstractions. Notably, the key properties required in this case are the commutativity of interventions, as illustrated in Fig.~\ref{fig:abs-commutativity}, as well as surjectivity of the functions. 
While a blind search over possible choices of $\tau$ is intractable, the paper cleverly models the commutativity property as a continuous error term that decreases as commutativity is closer to being achieved. 
They further model $\widehat{\tau}$ as a neural network and train it to minimize the error term using gradient descent, while regularizing the objective to enforce surjectivity. 
This approach allows for a more robust solution to the issue of intractability of specifying intravariable clusters, as discussed in Sec.~\ref{sec:applications}, albeit with the additional requirement that data is available from the high-level model $\cM_H$.

More recently, \citet{felekis:etal24} solved a similar problem with the requirement of having the full specification of $\cM_L$ and $\cM_H$ relaxed. The proposed approach leverages principles of optimal transport to acquire the abstraction map $\widehat{\tau}$ using only interventional data from the two models.
Since $\cM_L$ and $\cM_H$ are not provided, the optimization procedure instead enforces do-calculus constraints. In this setting, it is desirable to provide as much expressiveness as possible in the modeling of $\widehat{\tau}$ (e.g., with universal approximators such as neural networks), and the optimization may output a result for $\widehat{\tau}$ that is not necessarily a constructive abstraction function.

\section{Experimental Details} \label{app:experiments}

This section provides details about our experimental setup and models. Our pipeline is primarily built with PyTorch \citep{paszke2017automatic}, and training is facilitated using PyTorch Lightning \citep{falcon2020framework}.

\subsection{Nutrition Experimental Setup}
\label{app:nutrition-setup}

The nutrition experiment in Sec.~\ref{sec:exp-nutrition} is a toy study of various individuals and their diets, performed over variables $\*V_L = \{R, D, C, F, P, B\}$, where $R$ is restaurant attended, $D$ is dish ordered at that restaurant, $C$ is carbohydrates of the dish in grams, $F$ is fat of the dish in grams, $P$ is protein of the dish in grams, and $B$ is the BMI of the individual. $R$ and $D$ have domains of size 32, indicating 32 different options of restaurant and dishes. $C$, $F$, $P$, and $B$ are real valued numbers. The data generating model $\cM_L = \langle \*U_L, \*V_L, \cF_L, P(\*U_L) \rangle$ is described below.
{
\allowdisplaybreaks
\begin{align*}
    \*U_L &= \{U_R, U_D, U_{RB}, U_{N1}, U_{N2}, U_{N3}, U_B\} \\
    \*V_L &= \{R, D, C, F, P, B\} \\
    \cF_L &= \{\\
        &f^L_R(u_R, u_{RB}) = (u_R + 16 \cdot u_{RB}) \% 32 \\
        &f^L_D(r, u_D) = (r + u_D) \% 32 \\
        &f^L_C(d, u_{N1}, u_{N2}, u_{N3}) = 216 \cdot u_{N1}[d\%3]\\
        &\cdot \left(0.25 \left(\left\lfloor \frac{f}{16} \right\rfloor \oplus u_{N3}\right) + 1\right) + 9 \cdot u_{N2} \\
        &f^L_P(d, u_{N1}, u_{N2}, u_{N3}) = 216 \cdot u_{N1}[(d + 1)\%3]\\
        &\cdot \left(0.25 \left(\left\lfloor \frac{f}{16} \right\rfloor \oplus u_{N3}\right) + 1\right) + 9 \cdot u_{N2} \\
        &f^L_F(d, u_{N1}, u_{N2}, u_{N3}) = 96 \cdot u_{N1}[(d + 2)\%3]\\
        &\cdot \left(0.25 \left(\left\lfloor \frac{f}{16} \right\rfloor \oplus u_{N3}\right) + 1\right) + 4 \cdot u_{N2} \\
        &f^L_B(c, f, p, u_B, u_{RB}) = \\
        &\left(\left(\frac{c}{9} + \frac{f}{4} + \frac{p}{9} + 3\cdot u_{RB}\right) - 30\right) \cdot (-1)^{u_B} + 25 \\
    P(\*U_L) &=
    \begin{cases}
        &P(U_R = u_R) \\
        &=
        \begin{cases}
            \frac{3}{64} & u_R \in \{0, 1, \dots, 15\} \\
            \frac{1}{64} & u_R \in \{16, 17, \dots, 31\} \\
            0 & \text{ otherwise}
        \end{cases} \\
        &P(U_D = u_D) \\
        &=
        \begin{cases}
            \frac{1}{7} & u_F \in \{-3, -2, -1, 0, 1, 2, 3\} \\
            0 & \text{ otherwise}
        \end{cases} \\
        &U_{RB} \sim \bern(0.25) \\
        &U_{N1} \sim \dirich(4, 1, 1) \\
        &U_{N2} \sim \unif(0, 1) \\
        &U_{N3} \sim \bern(0.1) \\
        &U_B \sim \bern(0.1)
    \end{cases}
\end{align*}
where $\%$ indicates the ``modulo'' operator, $\oplus$ is the binary XOR operator, and $u_{N1}[i]$ denotes the $i$th index of $u_{N1}$, which is a 3-dimensional variable. In the experiments, $R$ and $D$ are formatted as one-hot vectors.
}

For the abstraction of $\cM_L$, we choose intervariable clusters $\bbC = \{D_H = \{D\}, Z = \{C, F, P\}, B_H = \{B\}\}$, where $R$ is abstracted away, $D$ and $B$ are put into their own clusters, and $C$, $F$, and $P$ are clustered into a new variable $Z$, called ``calories''. Intravariable clusters $\bbD$ are chosen such that the values of each intervariable cluster are divided into two sets (i.e.~$D_H$, $Z$, $B_H$ are all binary variables). Specifically,
\begin{align*}
    \tau_{D_H}(d) &=
    \begin{cases}
        0 & d \in \{0, 1, \dots, 15\} \\
        1 & d \in \{16, 17, \dots, 31\}
    \end{cases} \\
    \tau_{Z}(c, f, p) &= \mathbf{1}\{4c + 9f + 4p \geq 1080\} \\
    \tau_{B_H}(b) &= \mathbf{1}\{b \geq 25\}
\end{align*}
For example,
\begin{align*}
    &\tau(R=7, D=24, C=80, F=70, P=40, B=32) \\
    &= (D_H = 1, Z = 1, B_H = 1).
\end{align*}
The high level variables $\*V_H = \tau(\*V_L)$ are defined to be $\{D_H, Z, B_H\}$.

The causal diagram $\cG$ over $\*V_L$ and the corresponding C-DAG $\cG_{\bbC}$ over $\*V_H$ are shown in Fig.~\ref{fig:cdag-examples}. The query of interest is $Q = P(B_{F = f} \geq 25)$, where $f$ is any arbitrary unhealthy food option $f \in \{16, 17, \dots, 31\}$. The query can be interpreted as the probability of someone being overweight if they are forced (intervened) to eat unhealthy food. The corresponding query on the higher level is $\tau(Q) = P(B_{H[D_H = 1]} = 1)$, computed from Eq.~\ref{eq:q-tau-consistency} in Def.~\ref{def:q-tau-consistency}. The task is to identify and estimate the query $Q$ given observational data $P(\*V_L)$ and causal diagram $\cG$. We test three approaches:
\begin{enumerate}
    \item The first approach is to directly identify and estimate $Q$ from $P^{\cM_L}(\*V_L)$ and $\cG$. The NeuralID algorithm \citep[Alg.~1]{xia:etal23} is used on these inputs. In this approach, the $\cG$-NCM $\widehat{M}$ is fitted over the variables $\*V_L$ and graph $\cG$ and is trained on the data from $P^{\cM_L}(\*V_L)$. It is optimized for the identification/estimation tasks, and $Q$ is directly queried from $\widehat{M}$.

    \item The second approach is to identical to the first, except values of $\*V_L$ (specifically $C$, $F$, $P$, and $B$) are all normalized between -1 and 1. The query is reformulated to $Q = P(B_{D = d} \geq 0)$, and the NCM $\widehat{M}$ is modified to use these normalized values\footnote{In fact, this second approach can also be considered abstracting the space of $\*V_L$. Specifically, the intervariable and intravariable clusters are all singleton clusters, but the values of $\*V_H$ are renamed such that optimization is easier (similar to the ideas discussed in Sec.~\ref{sec:applications}).}.

    \item The third approach is the abstraction approach, which instead identifies and estimates $\tau(Q)$ from $\tau(P^{\cM_L}(\*V_L))$ and $\cG_{\bbC}$, running Alg.~\ref{alg:ncm-solve-absid} on these inputs. That is, the $\cG_{\bbC}$-NCM $\hM$ is fitted over variables $\*V_H$ and graph $\cG_{\bbC}$ and is trained on the data of $P^{\cM_L}(\*V_L)$ passed through $\tau$. Note that this approach already requires fewer assumptions by using the C-DAG $\cG_{\bbC}$ instead of the full causal diagram $\cG$. The end result should be theoretically equivalent to identifying and estimating $Q$ due to Corol.~\ref{corol:abs-id-ncm}.
\end{enumerate}

The experimental results are shown in Fig.~\ref{fig:exp1-results}, where the first approach is shown in red, the second in yellow, and the third in blue.

\subsection{Nutrition Models and Hyperparameters}
\label{app:nutrition-hyperparams}

All three models used in the nutrition experiment are GAN-NCMs from \citet{xia:etal23}, which leverage generative adversarial networks (GAN) \citep{NIPS2014_5ca3e9b1}. The GAN-NCMs for the first two approaches without abstractions are SCMs $\widehat{M} = \langle \widehat{\*U}, \*V_L, \widehat{\cF}, P(\widehat{\*U})\rangle$ fitted to the graph $\cG$ (Fig.~\ref{fig:cdag-examples} (left)). Each function $\hat{f}_V \in \widehat{\cF}$ is a feedforward neural network with 3 hidden layers of width 32, with layer normalization applied \citep{ba2016layer}. Each exogenous variable $\widehat{U} \in \widehat{\*U}$ is a 2-dimensional vector, with each dimension sampled independently from a uniform distribution between -1 and 1. The discriminator is a feedforward neural network with 3 hidden layers of width 64. The GAN-NCM for the third approach with abstractions has the exact same parameter settings but is modeled over $\*V_H$ instead. Consequently, the NCM for the third approach has fewer parameters since it only requires three functions for $\*V_H$, compared to the six functions for $\*V_L$.

The GAN-NCMs are trained in the style of Wasserstein GANs \citep{arjovsky2017wasserstein}, where the objective is to minimize the Earth-Mover distance via the Kantorovich-Rubenstein duality \citep{villani2009optimal}:
\begin{equation}
    \label{eq:kanto-ruben-duality}
    \min_G \max_{D \in \cD_D} \bbE_{\*x \sim P_r}[D(\*x)] - \bbE_{\tilde{\*x} \sim P_g}[D(\tilde{\*x})],
\end{equation}
where $G$ is a generating model (e.g.~the NCM $\hM$), $D$ is a discriminatory model, also called a critic (not to be confused with the variable $D \in \*V_L$), $\cD_D$ is the set of 1-Lipshitz functions, $P_r$ is a real distribution (i.e.~from $\cM_L$), and $P_g$ is the distribution induced by $G$.

For identification experiments, models were trained for 1000 epochs on datasets with $n=10^4$ samples. 10 trials were performed with each approach, with 4 reruns for each trial for hypothesis testing purposes. In a single run, two parameterizations of the NCM are initialized with one aiming to minimize the query and one aiming to maximize it. In each iteration, a batch of real data is provided, and a batch of fake data is generated by the NCM. Given these two batches, the discriminator is trained to minimize the loss following Eq.~\ref{eq:kanto-ruben-duality}:
\begin{equation*}
    L_D = \bbE_{\tilde{\*x} \sim P_g}[D(\tilde{\*x})] - \bbE_{\*x \sim P_r}[D(\*x)],
\end{equation*}
where $P_g$ and $P_r$ refer to the fake and real datasets respectively. In words, the loss is computed by taking the expected score of the critic on fake samples subtracted by the expected score of the critic on real samples, indicating better performance if the critic gives higher scores to real samples. After each training iteration, the gradients of the discriminator $D$ are clamped between $[-0.01, 0.01]$ to enforce the Lipschitz constraint.

Following an iteration of the discriminator, another batch of fake data is sampled from the generator (NCM $\hM$), and the weights of the generator are updated with the loss
\begin{equation}
    \label{eq:gen-loss}
    L_G = -\bbE_{\tilde{\*x} \sim P_g}[D(\tilde{\*x})] + \lambda L_Q.
\end{equation}
The first term is the expected score of the critic on the fake samples, which should be maximized by the generator to create as convincing samples as possible. The second term is a query loss, intended to push the model to simultaneously maximize or minimize the query. In practice, this is done by calculating the distance between the intended value of the query and query samples from the generator using some distance function. For example, for the GAN-NCM in the first non-abstraction approach, $L_Q$ is defined as:
\begin{equation*}
    L_Q(\hat{\*b}) = \pm \left(\frac{1}{|\hat{\*b}|}\sum_{\hat{b} \in \hat{\*b}} \hat{b} - 25\right),
\end{equation*}
where $\hat{\*b}$ is a batch of samples from $P^{\hM}(B_{D = d})$ computed from \citet[Alg.~2]{xia:etal23}. If this quantity, which is simply a mean over the batch samples, is maximized (resp.~minimized), then that would also maximize (resp.~minimize) the query $Q = P^{\hM}(B_{D = d} \geq 25)$. For the second approach with normalized data, the 25 is not subtracted as it is already centered around 0. For the third approach working in the abstracted space, the log loss is calculated instead, since values of $B_H$ are binary. $\lambda$ is a hyperparameter to indicate the strength of the query loss term; in our experiments it was set to $10^{-4}$ and decreased logarithmically to $10^{-8}$ by the end of training.

For the visualization of the results in Fig.~\ref{fig:exp1-id-gaps}, the query is estimated from both the model which optimized to maximize it (denote as $Q_{\max}$) and the model which optimized to minimize it ($Q_{\min}$). Since the query is identifiable (see Sec.~\ref{app:id-proofs}), we expect $Q_{\max} - Q_{\min} = 0$ under perfect optimization. However, as optimization is not perfect, a hypothesis testing procedure must be used to check if $Q_{\max} - Q_{\min} < \varepsilon$ for some threshold $\varepsilon$. As suggested by \citet{xia:etal23}, we rerun each trial 4 times and take the upper 95$\%$ confidence bound of the mean of $Q_{\max} - Q_{\min}$ from the 4 reruns. Then the means of these upper confidence bounds across 10 trials are plotted in Fig.~\ref{fig:exp1-id-gaps} with $95\%$ confidence intervals.

For estimation experiments, models were trained for 200 epochs on datasets. 10 trials were performed for each approach and each setting of sample size varying logarithmically from $n=10^3$ to $10^5$ samples. The training procedure is performed identically to the identification experiments, except only one parameterization is trained, and the query loss in Eq.~\ref{eq:gen-loss} is not added. After training, queries are estimated from each of the models using \citet[Eq.~4]{xia:etal23} with $10^5$ Monte Carlo samples, and they are compared with the ground truth value calculated from $\cM_L$ as described in App.~\ref{app:nutrition-setup}. The mean absolute error (MAE) is computed between the two values and plotted in Fig.~\ref{fig:exp1-est-mae} with $95\%$ confidence intervals across the 10 trials for each of the sample size settings.

All NCMs are trained with a learning rate of $10^{-4}$, and discriminators are trained with a learning rate of $2 \times 10^{-4}$. Models are optimized with the RMSProp optimizer \citep{hinton_srivastava_swersky}, recommended by the WGAN paper. Estimation experiments are performed with a batch size of 128, and identification experiments are performed with a batch size of 1000 (larger size for more representative sample is important in identification). All feedforward networks are initialized with Glorot initialization \citep{pmlr-v9-glorot10a}. Hyperparameter tuning was done by hand with the suggestions from referenced sources. Similar hyperparameters did not have noticeable effects on performance, so more rigorous hyperparameter tuning was not conducted.

\subsection{Colored MNIST Experimental Setup}
\label{app:mnist-setup}

The Colored MNIST experiment in Sec.~\ref{sec:exp-mnist} is performed on a modified version of the MNIST dataset of handwritten digits \citep{deng2012mnist}. The setting is modeled over variables $\*V_L$ consisting of a digit label $D$, a color label $C$, and the pixels consisting of a $3 \times 32 \times 32$ MNIST image with color channels. Both $D$ and $C$ take integer values from 0 to 9, formatted as one-hot vectors in the data. However, the mechanisms for which the image is generated is unknown, since we do not know all of the details of how humans handwrite digits. Instead, we directly work on the high level space of variables $\*V_H = \{D, C, I\}$, obtained by clustering all of the pixels into one variable, called image $I$. Samples from the observational distribution $P(\*V_L)$ are generated using the following approach:
\begin{enumerate}
    \item A sample is drawn from exogenous variable $U_{CD}$, which takes values from 0-9, indicating what the intended digit is.
    \item With probability $0.85$, set $C \gets U_{CD}$. Otherwise, choose uniformly at random from the 10 values. Similarly, but independently from $C$, set $D \gets U_{CD}$ with probability $0.85$, otherwise choose uniformly at random.
    \item Given $C$ and $D$, sample an image from the MNIST dataset with label $D$, then color the digit with the color corresponding to $C$ on the gradient in Fig.~\ref{fig:exp2-legend}.
\end{enumerate}
The causal diagram $\cG$ over $\*V_L$ is unknown because it is unclear how individual pixels are related. However, the C-DAG $\cG_{\bbC}$ over $\*V_H$ is shown in Fig.~\ref{fig:exp2-graph}, which is compatible with the data generating process mentioned above. Specifically, image $I$ is caused by color $C$ and digit $D$, which are highly correlated through unobserved confounding.

The task is to train a model $\hM_H$ over variables $\*V_H$ constrained by the graph $\cG_{\bbC}$ such that $\hM_H$ induces the distribution $\tau(P^{\cM_L}(\*V_L))$ (i.e.~it is perfectly trained to match the observational data sampled from $P^{\cM_L}(\*V_L)$), and then use it to produce realistic digit samples from three different causal queries:
\begin{enumerate}
    \item $P(I \mid D = 0)$: the distribution of images conditional on digit $=0$. In the dataset, the digit 0 is highly correlated with the color red, so samples from this distribution should be images of handwritten 0s, most of which are red.

    \item $P(I_{D = 0})$: the distribution of images when intervened on digit $=0$. When an intervention is performed, spurious correlations are ignored. The color is sampled like normal, but then the digit is forced to become 0 regardless of the color. Hence, samples from this distribution should be images of handwritten 0s but with colors evenly distributed.

    \item $P(I_{D = 0} \mid D = 5)$: the counterfactual distribution of images of what they would have been had digit been forced to be $0$ given that the digit was originally 5. When conditioning on $D=5$, the samples are filtered such that only ones with $D=5$ remain, but then these samples are intervened and forced to take the digit 0 instead. Consequently, samples from this distribution should be images of handwritten 0s that retain the color of the 5s, which are typically cyan.
\end{enumerate}

Three different approaches are compared:
\begin{enumerate}
    \item The first is a basic conditional GAN that learns the correlation between digit $D$ and image $I$. The conditional GAN ignores the information in $\cG_{\bbC}$ and therefore is na\"ive to the causal invariances in the data.

    \item The second is a GAN-NCM \citep{xia:etal23} that is constrained by $\cG_{\bbC}$ and is directly fitted on the data $\tau(P^{\cM_L}(\*V_L))$. In this case $\tau$ simply clusters the pixels together into the image $I$, but the mapping between domains of $\*V_L$ and $\*V_H$ is the identity mapping. In other words, the intravariable clusters $\bbD$ can be thought of as the singleton partition of all domains, and as a result the space of $\*V_H$ is identical to $\*V_L$.

    \item The third is a GAN version of the RNCM from Sec.~\ref{sec:applications}, called GAN-RNCM. The GAN-RNCM is also constrained by $\cG_{\bbC}$ but learns its own abstraction function $\widehat{\tau}$. Specifically, in a typical instantiation of the $\cG_{\bbC}$-RNCM in this case, we would have $\widehat{\tau} = (\widehat{\tau}_{C}, \widehat{\tau}_D, \widehat{\tau}_I)$, where each subfunction learns a mapping to a representation space (akin to learning intravariable clusters). For this experiment, we only parameterized $\widehat{\tau}_{I}$, since $C$ and $D$ are low-dimensional and are already easy to learn. $\widehat{\tau}_I$ is trained to map to a space that preserves bijectivity (as demanded by Prop.~\ref{prop:full-intra-cluster}) as well as maximizing information retained about $C$ and $D$. See the next subsection for specific details.
\end{enumerate}

The results are illustrated in Fig.~\ref{sec:exp-mnist}. The GAN-RNCM clearly outperforms the other two approaches, and we even observed a shorter runtime. Although the GAN-NCM is, in theory, supposed to be able to capture the intended distributions, we believe its failure is a result of the difficulty of simultaneously optimizing two different tasks: (1) image generation is already a challenging task with a lot of attention in the deep learning community, and (2) learning a joint distribution with causal constraints is also challenging. The GAN-RNCM breaks the problem into two parts. The representation learning of $\widehat{\tau}$ solves the problem of dealing with high dimensional images, reducing the space to a much simpler space in which the distribution with causal constraints can be learned more easily.

\subsection{Colored MNIST Models and Hyperparameters}

The GAN-NCM used in the Colored MNIST experiment is different from the ones used in the Nutrition experiment since it is fitted on a different graph, specifically $\cG_{\bbC}$ from Fig.~\ref{fig:exp2-graph}. Since the function for the image, $\hat{f}_I$, is responsible for generating an entire image, we leverage the technology of convolutional neural networks to produce higher quality results. Specifically, we use the state-of-the-art research on conditional image generation implemented by \citet{brock2018large}, called BigGAN. $\hat{f}_I$ is designed by first mapping the inputs, color $C$ and digit $D$, through a feedforward neural network to an internal representation, which is then piped into the 32$\times$32 image-size architecture with 64 feature maps implemented by the BigGAN authors. The functions $\hat{f}_C$ and $\hat{f}_D$ are simply feedforward neural networks. In this model, all feedforward nets have 3 hidden layers, with widths that depend on the size of the inputs and outputs using the formula $2 \times i \times o$, where $i$ is the total dimensionality of all endogenous and exogenous inputs, and $o$ is the output dimensionality (number of channels for images). Each exogenous variable $\widehat{U} \in \widehat{\*U}$ is a $\delta$-dimensional vector, where $\delta$ is the sum of the dimensions of all variables in the confounded clique represented by $\widehat{U}$, and each dimension is sampled independently from a uniform distribution between -1 and 1. The discriminator first pipes image inputs through a deconvolutional component like implemented in BigGAN, before combining the internal representation with other variables to pipe through a feedforward neural network with 3 hidden layers of width 128. Layer normalization is applied between layers of feedforward nets, and batch normalization \citep{pmlr-v37-ioffe15} is applied between convolutional layers.

The conditional GAN approach is implemented similarly, but without $\hat{f}_C$. Training is done identically to the Wasserstein GAN approach in the Nutrition experiment (described in App.~\ref{app:nutrition-hyperparams}), but without the query loss in Eq.~\ref{eq:gen-loss}, as identification is not performed.

The GAN-RNCM $\langle \widehat{\tau}, \widehat{M} \rangle$ is trained in a two part procedure, first training $\widehat{\tau}$ and then training $\widehat{M}$ on the space defined by $\widehat{\tau}$. Only the abstraction function for the image, $\widehat{\tau}_{I}$, is trained for this experiment. It is modeled in two parts: (1) a convolutional neural network with three convolutional layers (with 64, 128, and 256 feature maps respectively) mapping the image to a 128-dimensional vector, and (2) a feedforward neural network with 3 hidden layers of width 128 mapping the convolutional output to a 64-dimensional representation space.

$\widehat{\tau}_I$ is trained for 500 epochs. In each epoch, a batch of the colored MNIST digits is sampled and passed through $\widehat{\tau}$ to obtain a representation. Then, $\widehat{\tau}$ is trained with the loss
\begin{align}
    L_{\tau_G}(\hat{i}_H, i_L, c_L, d_L) &= \norm{\tau^{-1}(\hat{i}_H; \theta_{\tau^{-1}}) - i_L}^2 \label{eq:recon-loss} \\
    &+ \lambda_g L_g(g(\hat{i}_H; \theta_g), c_L, d_L), \label{eq:class-loss}
\end{align}
where $\hat{i}_H$ is the 64-dimensional representation output from $\widehat{\tau}_H(i_L)$; and $i_L$, $c_L$, $d_L$ are the original data points\footnote{Although $C$ and $D$ are not part of the cluster with $I$, they can still be used in the training process for $\widehat{\tau}_I$ as long as they are not used as inputs to $\widehat{\tau}_I$.}. The first term (Eq.~\ref{eq:recon-loss}) is a reconstruction loss that ensures that Prop.~\ref{prop:full-intra-cluster} holds. $\tau^{-1}$ is another neural network (parameterized by $\theta_{\tau^{-1}}$) in the style of BigGAN that upscales the representation $\hat{i}_H$ back to an image of size $3 \times 32 \times 32$. The term is simply the MSE of the reconstruction with the original image, ensuring that both the encoder $\tau$ and decoder $\tau^{-1}$ are trained to be able to reproduce the input. Later, when sampling images of $I$, $\tau^{-1}$ is also used to reconstruct image samples. The second term (Eq.~\ref{eq:class-loss}) is a classification loss added to improve the learned representation to differentiate between different values of $c$ and $d$. $g$ is feedforward neural network parameterized by $\theta_g$ with 3 hidden layers of width 128, which outputs a prediction for $c$ and $d$ given the representation $\hat{i}_H$. Any classification loss can be used for $L_g$, and we choose binary cross-entropy loss since $C$ and $D$ are one-hot vectors. $\lambda_g$ is a regularization term which takes a value of 0.1 in our experiments.

After training $\widehat{\tau}_I$, the NCM $\hM$ of the RNCM is trained similarly to the other approaches, but it is instead trained on top of $\widehat{\tau}(\*V_L)$ instead of $\*V_L$. That is, instead of outputting an image from $\hat{f}_I$, it outputs a 64-dimensional real vector, representing $\widehat{\tau}_{I}(I)$. Hence, in the RNCM, $\hat{f}_I$ is replaced with a feedforward neural network with 3 hidden layers of width $2 \times i \times o$, as with the other functions.

For the training of $\widehat{\tau}_I$, as well as the training of all three generative models, $10^5$ samples are provided in the dataset $P(\*V_L)$. All models are trained with a learning rate of $10^{-4}$, and discriminators are trained with a learning rate of $2 \times 10^{-4}$. GAN models are optimized with the RMSProp optimizer, and the training procedure for $\widehat{\tau}_I$ is optimized with Adam \citep{KingmaB14}. All training is performed with a batch size of 128. All feedforward networks are initialized with Glorot initialization. Hyperparameter tuning was done by hand with the suggestions from referenced sources. Similar hyperparameters did not have noticeable effects on performance, so more rigorous hyperparameter tuning was not conducted.

Samples from the three competing approaches, as well as the original data generating ground truth, are collected from the three queries discussed in Sec.~\ref{app:mnist-setup} and compared in Fig.~\ref{fig:mnist-exp-results}. Queries are sampled via \citet[Alg.~1]{xia:etal23}. 

\subsection{Proofs of Identifiability}
\label{app:id-proofs}

In this section, we show proofs that the queries in the experiments are identifiable, leveraging do-calculus (denote R1, R2, and R3 as the three rules) and counterfactual axioms \citep{pearl:2k}.

\begin{proposition}
    $P(B_{D=d} = b)$ is identifiable from $P(\*V_L)$ and $\cG$ from Fig.~\ref{fig:cdag-examples} (left).
    \hfill $\blacksquare$

    \begin{proof}
        \begin{align*}
            & P(B_{D = d} = b) \\
            &= \sum_r P(b_d \mid r_d)P(r_d) \\
            &= \sum_r P(b \mid r, d)P(r_d) & \text{ R2} \\
            &= \sum_r P(b \mid r, d)P(r) & \text{ R3},
        \end{align*}
        and the final result can be computed from observational $P(\*V_H)$ as there are no more interventional terms.
    \end{proof}
\end{proposition}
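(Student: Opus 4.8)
The plan is to prove identifiability by exhibiting an explicit adjustment formula, via the back-door criterion \citep[Thm.~3.3.2]{pearl:2k} applied with the singleton adjustment set $\{R\}$ (mirroring the strategy used for the analogous query in Ex.~\ref{ex:drug-id}). The three steps are: (1) enumerate the back-door paths from $D$ to $B$ in $\mathcal{G}$; (2) verify that $\{R\}$ blocks all of them and contains no descendant of $D$; (3) read off the adjustment formula and observe that it is a functional of $P(\mathbf{V}_L)$ alone.

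For step (1), note that the only edge with an arrowhead into $D$ is $R \to D$, so every back-door path from $D$ to $B$ begins $D \leftarrow R \cdots$. From $R$, the only way to continue toward $B$ without returning to $D$ is the bidirected edge $R \leftrightarrow B$, giving the single back-door path $D \leftarrow R \leftrightarrow B$. The remaining latent structure -- the bidirected edges among $C$, $F$, $P$, together with the fact that $C, F, P$ all point into $B$ -- does not create further back-door paths, since along any path through those edges one of $C$, $F$, $P$, or $B$ occurs as a collider, which is blocked because none of them nor any of their descendants is conditioned on. This collider check is the one place the argument needs a moment of care, and it is the closest thing here to an obstacle; everything else is routine.

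For step (2): $R$ is a parent of $D$, hence not a descendant of $D$; and on the path $D \leftarrow R \leftrightarrow B$ the node $R$ is a non-collider, so conditioning on $R$ blocks it. Thus $\{R\}$ satisfies the back-door criterion relative to $(D, B)$.

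Step (3) then yields
\begin{equation*}
    P(B_{D=d} = b) \;=\; \sum_{r} P(b \mid d, r)\,P(r),
\end{equation*}
and since the right-hand side is computed entirely from the observational distribution $P(\mathbf{V}_L)$, the query is identifiable from $P(\mathbf{V}_L)$ and $\mathcal{G}$. An equivalent derivation in the do-calculus notation used elsewhere in the excerpt would write $P(b_d) = \sum_r P(b_d \mid r_d)P(r_d)$, then apply Rule 2 (valid since $B \perp D \mid R$ in $\mathcal{G}_{\underline{D}}$) to replace $P(b_d \mid r_d)$ by $P(b \mid d, r)$, and Rule 3 (valid since $R \perp D$ in $\mathcal{G}_{\overline{D}}$) to replace $P(r_d)$ by $P(r)$, arriving at the same expression.
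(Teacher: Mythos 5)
Your proof is correct and ultimately the same argument as the paper's: the back-door adjustment with $\{R\}$ is exactly what the paper derives, and your closing do-calculus derivation (conditioning/marginalizing on $R$, then applying Rule 2 via $B \perp D \mid R$ in $\cG_{\underline{D}}$ and Rule 3 via $R \perp D$ in $\cG_{\overline{D}}$) reproduces the paper's proof line for line, with the graphical conditions checked correctly. The extra verification that the latent structure among $C$, $F$, $P$ creates no further back-door paths is sound, so nothing is missing.
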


\begin{proposition}
    $P(B_{H[D_H=d]})$ is identifiable from $P(\*V_H)$ and $\cG_{\bbC}$ from Fig.~\ref{fig:cdag-examples} (right).
    \hfill $\blacksquare$

    \begin{proof}
        \begin{align*}
            & P(B_{H[D_H = d]} = b) \\
            &= \sum_{z} P(z_d)P(b_d \mid z_d) \\
            &= \sum_{z} P(z \mid d)P(b_d \mid z_d) & \text{ R2}\\
            &= \sum_{z} P(z \mid d)P(b_{dz}) & \text{ R2}\\
            &= \sum_{z} P(z \mid d)P(b_{z}) & \text{ R3} \\
            &= \sum_{z} P(z \mid d)\sum_{d'} P(b_{z} \mid d'_{z})P(d'_{z}) \\
            &= \sum_{z} P(z \mid d)\sum_{d'} P(b \mid d', z)P(d'_{z}) & \text{ R2} \\
            &= \sum_{z} P(z \mid d)\sum_{d'} P(b \mid d', z)P(d') & \text{ R3,}
        \end{align*}
        and the final result can be computed from observational $P(\*V_H)$ as there are no more interventional terms.
    \end{proof}
\end{proposition}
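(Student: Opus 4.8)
The plan is to recognize $P(B_{H[D_H = d]})$ as a front-door identification problem in $\cG_{\bbC}$ and to carry out the standard do-calculus derivation. In the C-DAG on the right of Fig.~\ref{fig:cdag-examples} the directed edges are $D_H \to Z \to B_H$ and the only bidirected edge is $D_H \leftrightarrow B_H$; in particular $Z$ is an unconfounded mediator lying on every directed path from $D_H$ to $B_H$. This is exactly the front-door structure, so the target should be identifiable with the front-door estimand $\sum_z P(z \mid d) \sum_{d'} P(b \mid d', z) P(d')$, which only involves the observational distribution $P(\*V_H)$.

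First I would marginalize over the mediator, writing $P(B_{H[d]} = b) = \sum_z P(Z_d = z)\, P(B_{H[d]} = b \mid Z_d = z)$. For the first factor I would apply R2 to replace $P(Z_d = z)$ by $P(z \mid d)$: deleting the arrow out of $D_H$ leaves $Z$ and $D_H$ d-separated, since the only remaining route $Z \to B_H \leftrightarrow D_H$ is blocked at the collider $B_H$. For the second factor I would apply R2 again to turn the conditioning on $Z_d = z$ into an intervention, giving $P(b_{dz})$, then R3 to drop the now-irrelevant intervention on $D_H$, giving $P(b_z)$; each step is licensed by a short d-separation check in the appropriate mutilated graph (with the arrow out of $Z$ deleted, and with the arrows into $Z$ and into $D_H$ deleted, respectively). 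Finally I would expand $P(b_z)$ by conditioning on $D_H$, use R2 to remove the intervention on $Z$ (valid because $B_H$ and $Z$ are d-separated given $D_H$ once the edge $Z \to B_H$ is cut, $D_H$ being a non-collider on the remaining back-door path $B_H \leftrightarrow D_H \to Z$), and R3 to replace $P(D_{H[z]} = d')$ by $P(d')$, at which point no interventional terms remain.

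The main obstacle is the bookkeeping of which arrows to delete and which set to condition on at each of the four do-calculus applications --- for instance, verifying that in $\cG_{\overline{D_H},\underline{Z}}$ the node $B_H$ is separated from $Z$ given $D_H$ (so the intervention on $Z$ may be introduced), while being careful that this relies on $B_H$ being a collider on $Z \to B_H \leftrightarrow D_H$ rather than being conditioned upon. The underlying algebra is routine marginalization; the only delicacy is keeping the graph surgeries and independence conditions straight. Alternatively, one could simply invoke Pearl's front-door criterion \citep[Thm.~3.3.4]{pearl:2k} and read off the same formula.
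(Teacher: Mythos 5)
Your proposal is correct and follows essentially the same route as the paper: the identical four-application do-calculus derivation (marginalize over $Z$, two uses of R2, R3, expand over $D_H$, R2, R3) yielding the front-door estimand $\sum_z P(z \mid d)\sum_{d'} P(b \mid d', z)P(d')$, with the d-separation checks in the mutilated graphs carried out correctly (noting only that in $\cG_{\overline{D_H}\underline{Z}}$ the edge $Z \to B_H$ is already deleted, so no collider argument is needed there). Invoking Pearl's front-door criterion directly, as you mention, would also suffice.
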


\begin{proposition}
    $P(I = i \mid D = d)$ is identifiable from $P(\*V_H)$ and $\cG_{\bbC}$ from Fig.~\ref{fig:exp2-graph}.
    \hfill $\blacksquare$

    \begin{proof}
        This result is trivial, as $P(I = i \mid D = d)$ is an observational quantity and can therefore be computed as
        \begin{align*}
            &P(I = i \mid D = d) \\
            &= \frac{P(i, d)}{P(d)} = \frac{\sum_{c'} P(i, c', d)}{\sum_{i', c'} P(i', c', d)} = \frac{\sum_{c} P(\*v_H)}{\sum_{i, c} P(\*v_H)}.
        \end{align*}
    \end{proof}
\end{proposition}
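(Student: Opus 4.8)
The plan is to observe that, unlike the previous two propositions, this query involves no intervention or counterfactual at all: $P(I = i \mid D = d)$ is a purely observational (Layer 1) quantity over $\*V_H = \{C, D, I\}$. Hence there is nothing to ``identify'' in the causal sense — the statement reduces to showing that the quantity is a function of the joint observational distribution $P(\*V_H)$, which is immediate. Concretely, I would first expand the conditional via the definition of conditional probability, $P(i \mid d) = P(i, d)/P(d)$, and then marginalize the confounder $C$ out of each term, writing $P(i,d) = \sum_{c} P(\*v_H)$ and $P(d) = \sum_{i,c} P(\*v_H)$, so that $P(i \mid d) = \bigl(\sum_{c} P(\*v_H)\bigr) / \bigl(\sum_{i,c} P(\*v_H)\bigr)$.

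The only point worth noting explicitly is that the C-DAG $\cG_{\bbC}$ of Fig.~\ref{fig:exp2-graph} plays no role here beyond fixing the variable set: since the target contains no $do(\cdot)$ operator, Def.~\ref{def:classic-id} is satisfied trivially because any two SCMs inducing the same $P(\*V_H)$ agree on every observational conditional regardless of their structure. So the proof is a single chain of equalities with a one-line justification, and I do not expect any obstacle — this is the degenerate/base case among the experimental identifiability claims, included mainly for completeness alongside the genuinely interventional Propositions for $P(B_{D=d})$ and $P(B_{H[D_H = d]})$.
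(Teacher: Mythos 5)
Your proposal is correct and matches the paper's proof essentially line for line: both note that $P(I = i \mid D = d)$ is a purely observational quantity, expand it as $P(i,d)/P(d)$, and marginalize $C$ (and $I$ in the denominator) to express everything in terms of $P(\*V_H)$. Your extra remark that the graph plays no role for a Layer~1 query is a harmless elaboration of the paper's one-word justification ``trivial.''
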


\begin{proposition}
    $P(I_{D=d} = i)$ is identifiable from $P(\*V_H)$ and $\cG_{\bbC}$ from Fig.~\ref{fig:exp2-graph}.
    \hfill $\blacksquare$

    \begin{proof}
        \begin{align*}
            & P(I_{D = d} = i) \\
            &= \sum_c P(i_d \mid c_d)P(c_d) \\
            &= \sum_c P(i \mid c, d)P(c_d) & \text{ R2} \\
            &= \sum_c P(i \mid c, d)P(c) & \text{ R3},
        \end{align*}
        and the final result can be computed from observational $P(\*V_H)$ as there are no more interventional terms.
    \end{proof}
\end{proposition}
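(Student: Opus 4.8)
The plan is to identify the interventional distribution $P(I_{D=d}=i)$ by a short do-calculus derivation on the C-DAG $\cG_{\bbC}$ of Fig.~\ref{fig:exp2-graph}, which contains the directed edges $D \to I$ and $C \to I$ together with the bidirected edge $D \leftrightarrow C$. The key observation is that although $D$ and $C$ are confounded, $C$ is \emph{not} a descendant of $D$, so intervening on $D$ leaves the distribution of $C$ unchanged, and conditioning on $C$ suffices to block the remaining spurious path from $D$ to $I$.

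First I would expand over $C$ by conditioning: $P(I_{D=d}=i) = \sum_c P(I_{D=d}=i \mid C_{D=d}=c)\,P(C_{D=d}=c)$. For the marginal factor I would apply Rule 3 (insertion/deletion of actions): in $\cG_{\overline{D}}$ the edges into $D$ (including the latent arrow underlying $D \leftrightarrow C$) are removed, so the only path between $D$ and $C$ is $D \to I \leftarrow C$, which is blocked at the collider $I$; hence $(C \indep D)$ holds there and $P(C_{D=d}=c) = P(C=c)$. For the conditional factor I would apply Rule 2 (action/observation exchange): in $\cG_{\underline{D}}$ the only path between $D$ and $I$ is $D \leftrightarrow C \to I$, which is blocked by conditioning on $C$, so $(I \indep D \mid C)$ holds and $P(I_{D=d}=i \mid C_{D=d}=c) = P(I=i \mid D=d, C=c)$. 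Combining the two steps gives $P(I_{D=d}=i) = \sum_c P(i \mid d, c)\,P(c)$, which is a function of the observational distribution $P(\*V_H)$ alone, establishing identifiability.

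I do not expect any real obstacle here: the derivation is the standard pre-treatment adjustment and parallels the identification arguments given for the analogous interventional queries in the other experiments of the paper. The only point requiring a moment of care is verifying the two $d$-separation conditions in the mutilated graphs $\cG_{\overline{D}}$ and $\cG_{\underline{D}}$ --- in particular, noticing that the bidirected edge $D \leftrightarrow C$ is harmless precisely because $C$ is a cause, not an effect, of the intervened variable, so it would have obstructed a backdoor-style argument only if $C$ had been a descendant of $D$. I would close by noting that this identity is exactly what underlies the interventional sampling $P(I_{D=0})$ reported for the Colored MNIST experiment in Sec.~\ref{sec:exp-mnist}.
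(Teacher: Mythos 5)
Your derivation is correct and matches the paper's proof essentially step for step: both decompose over $C$ in the post-intervention world, apply Rule 2 (with $(I \indep D \mid C)$ in $\cG_{\underline{D}}$) to turn $P(i_d \mid c_d)$ into $P(i \mid c, d)$, and Rule 3 (with $(C \indep D)$ in $\cG_{\overline{D}}$) to reduce $P(c_d)$ to $P(c)$, yielding $\sum_c P(i \mid d, c)P(c)$. Your explicit verification of the two $d$-separation conditions is just a more detailed writeup of the same argument.
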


\begin{proposition}
    $P(I_{D=d} \mid D=d')$ is identifiable from $P(\*V_H)$ and $\cG_{\bbC}$ from Fig.~\ref{fig:exp2-graph}.
    \hfill $\blacksquare$

    \begin{proof}
        \begin{align*}
            & P(I_{D = d} = i \mid D=d') \\
            &= \frac{P(i_d, d')}{P(d')} \\
            &= \frac{\sum_c P(i_d, d', c)}{P(d')} \\
            &= \frac{\sum_c P(i_{dc}, d', c)}{P(d')} & \text{ C1} \\
            &= \frac{\sum_c P(i_{dc})P(d', c)}{P(d')} & \text{ C2} \\
            &= \frac{\sum_c P(i \mid d, c)P(d', c)}{P(d')} & \text{ R2} \\
            &= \sum_c P(i \mid d, c)P(c \mid d')
        \end{align*}
        where ``C1'' refers to the counterfactual axiom of composition, and ``C2'' refers to the C-factor decomposition of counterfactual variables \citep{correa:etal21}. The final result can be computed from observational $P(\*V_H)$ as there are no more interventional terms.
    \end{proof}
\end{proposition}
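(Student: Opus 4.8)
The plan is to reduce the cross-world counterfactual $Q = P(I_{D=d}=i \mid D=d')$ — an ``effect of treatment on the treated''-style query — to a purely observational functional of $P(\*V_H)$ by combining the counterfactual axioms with do-calculus on the C-DAG $\cG_{\bbC}$ of Fig.~\ref{fig:exp2-graph}, in which $C \leftrightarrow D$, $D \to I$, and $C \to I$. First I would write $Q = P(I_{D=d}=i,\, D=d')\,/\,P(D=d')$ and expand the numerator by marginalizing over the factual value of $C$, obtaining $\sum_c P(I_{D=d}=i,\, D=d',\, C=c)\,/\,P(D=d')$.

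The key move is to fold the factual $C=c$ into the intervention. Since $C$ has no endogenous parent in $\cG_{\bbC}$ (its only incident edge is the bidirected $C \leftrightarrow D$), we have $C_{D=d} = C$, so on the event $C=c$ the composition axiom gives $I_{D=d} = I_{D=d,\,C=c}$; hence $P(I_{D=d}=i, D=d', C=c) = P(I_{D=d,\,C=c}=i, D=d', C=c)$. Now $I_{D=d,\,C=c}$ is a function of $I$'s exogenous variable alone, and $\{I\}$ is its own C-component in $\cG_{\bbC}$ (no bidirected edge touches $I$), disjoint from the C-component $\{C,D\}$; the C-factor decomposition of \citet{correa:etal21} then yields $I_{D=d,\,C=c} \perp (C,D)$, so the joint splits as $P(I_{D=d,\,C=c}=i)\,P(D=d', C=c)$. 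Rule~2 of do-calculus applies next — in $\cG_{\underline{DC}}$ (edges out of $D$ and $C$ deleted) the node $I$ is d-separated from $\{D,C\}$ — giving $P(I_{D=d,\,C=c}=i) = P(i \mid d, c)$. Recombining $P(D=d', C=c)/P(D=d') = P(c \mid d')$ finally produces $Q = \sum_c P(i \mid d,c)\,P(c \mid d')$, which is computable from $P(\*V_H)$, so $Q$ is identifiable.

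The main obstacle is the cross-world bookkeeping in the second paragraph: (i) arguing cleanly that $\Do{D=d}$ leaves $C$ at its natural value so that composition legitimately merges $C=c$ into the intervention set, and (ii) establishing that the ``unnested'' counterfactual $I_{D=d,\,C=c}$ is statistically independent of the factual pair $(C,D)$, which is exactly where the C-component structure of $\cG_{\bbC}$ — equivalently, the absence of a latent confounder shared between $I$ and $\{C,D\}$ — is used. After these two points, the remainder is a routine application of Rule~2 and elementary probability manipulation.
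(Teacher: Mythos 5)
Your proposal is correct and follows essentially the same route as the paper's proof: expand over the factual $C$, use composition to fold $C=c$ into the intervention (valid since $C$ is not a descendant of $D$), apply the C-factor decomposition of \citet{correa:etal21} to separate $I_{dc}$ from the factual $(D,C)$, and finish with Rule~2 and elementary algebra to get $\sum_c P(i \mid d,c)P(c \mid d')$. The only difference is that you spell out the justifications (why $C_{D=d}=C$ and why the C-components of $\cG_{\bbC}$ license the factorization) that the paper leaves implicit, which is fine.
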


\subsection{Hardware}

All models were trained on NVIDIA Tesla V100 GPUs provided by Amazon Web Services, totalling approximately 2000 GPU hours for the final results.

\section{Further Discussion} \label{app:discussion}

\subsection{The Constitution Hierarchy and Learning Intervariable Clusters}
\label{app:cons-hierarchy}

This section provides a detailed discussion on intervariable clusters $\bbC$, on the relationship of clusters at different levels of granularity, and best practices on how to choose the right level of granularity when several options are available.

On the intervariable level, one could cluster several lower level variables together and call the cluster a variable itself. The low-level variables do not \emph{cause} the high level variable, but rather they \emph{constitute} it (as discussed in \citet{10.5555/3020847.3020867}, and similar to discussions in probabilistics relational models on part-whole relationships \citep{morton:etal87}). In other words, they are two interpretations of the same content. The difference is important: the relationship is bidirectional, and one cannot directly intervene on one without simultaneously intervening on the other.

When comparing variables at different levels of granularity, a hierarchy arises, which we call the \emph{constitutional hierarchy}. That is, any phenomenon can be viewed across another dimension that determines the level of granularity of interpreting the variables. This is illustrated in Fig.~\ref{fig:cons-hierarchy}. Variables can be organized by what constitutes what, and, in a proper abstraction, causal properties should be preserved across different levels of granularity. Generally speaking, most studies focus on one specific level of abstraction, so the task of choosing the most appropriate level of granularity can be important. This is precisely the problem of choosing intervariable clusters, as the coarseness of the clusters induces a natural interpretation of the variables.

If the data scientist finds themselves in a situation where they have to construct the causal diagram themselves, or if a provided causal diagram is not at the right level of abstraction, then refining the abstraction level becomes a nontrivial task. The decision of whether to cluster variables together depend on various factors, which we list and elaborate in the sequel:

\begin{enumerate}[label=\textbf{C\arabic*.}]
    \item \emph{Non-causal} relationships should not be visible.
    \item The resulting clustering should be admissible.
    \item The queries of interest should be answerable.
    \item The queries of interest should be identifiable.
    \item The result should be as coarse as possible.
\end{enumerate}

Each of these conditions can be formalized to enable a systematic discussion on how to choose an appropriate intervariable clustering.

\smallskip

\noindent \textbf{Condition C1.} When considering models at extremely low levels of abstraction, there may be too much detail to properly label every relationship as a causal one.\footnote{We do not provide an exact definition of a causal relationship, as this is a deeply philosophical topic that is out of the scope of this work. Still, we acknowledge that there may be cases where two low-level variables are be related in a way that is not well-defined with respect to interventions in an SCM. We use the term ``non-causal'' as a bucket term for all such cases and make no assumptions about the natures of these relationships.} For example, like in the top level of Fig.~\ref{fig:cons-hierarchy}, at the atom or molecule level, there are interactions between particles that are studied in the physical sciences such as bonds. That is, two particles may have linked behavior, but it is not accurate to call such a relationship causal. As another example, suppose we consider images at the pixel level. In many tasks, one may be interested in the local dependences between pixels, and it may therefore be reasonable to model pixels using an undirected model (e.g., like a Markov random field). Once again, the relationship between the pixels may not necessarily be considered causal.

\begin{figure}
    \centering
    \includegraphics[width=\linewidth]{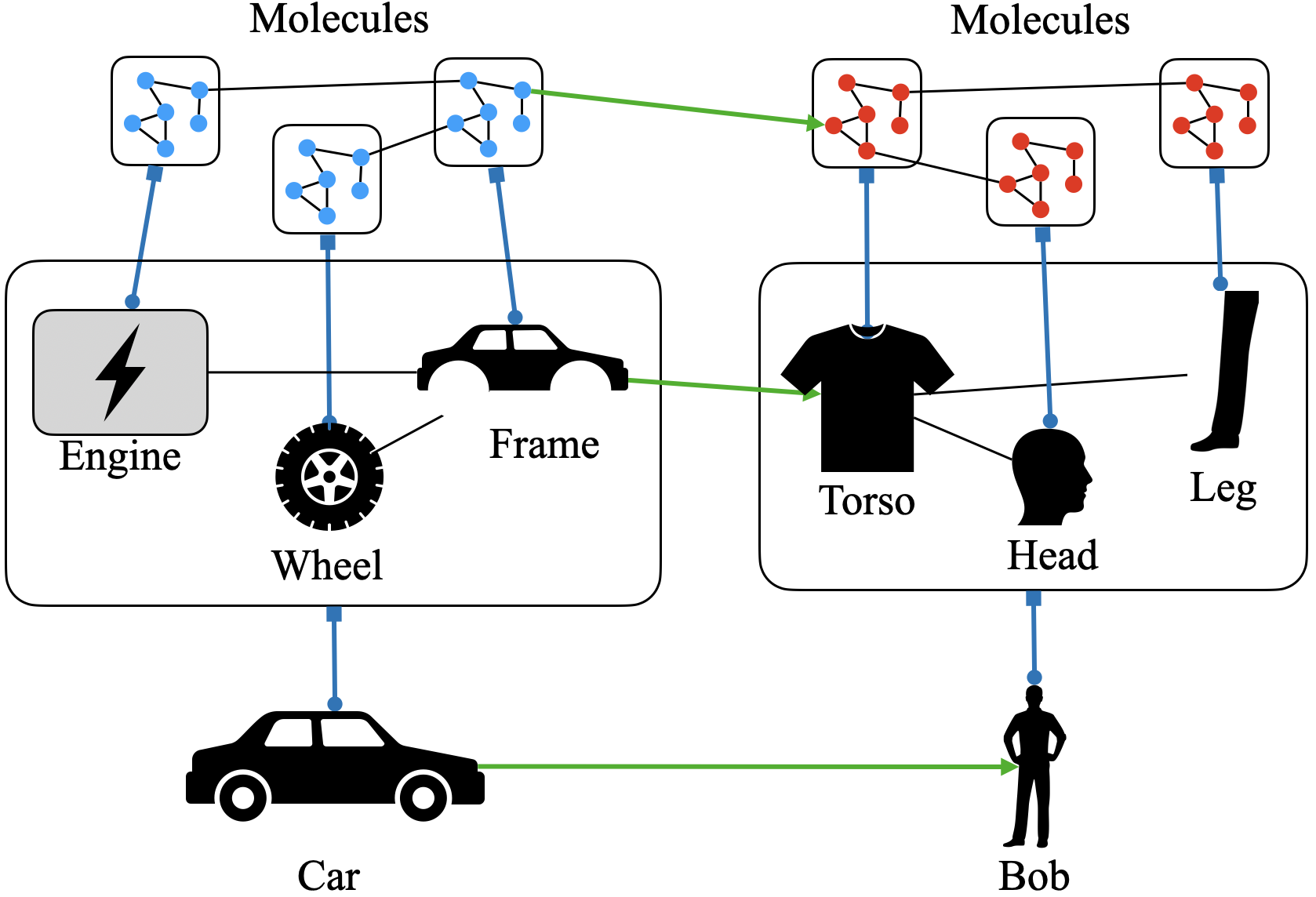}
    \caption{A visual example of the constitution hierarchy. Green edges indicate causal relations, blue edges indicate constitutional relations, and black edges are undefined, low-level relationships. The same objects are shown at three levels of granularity: molecules (top), parts (middle), and complete objects (bottom). A causal relationship might indicate that a car is the cause of Bob's injury, but it would be inaccurate to say that the molecules of a car ``cause'' the car. A particular study should focus on the variables from one specific level of granularity, and choosing the most appropriate one is the goal of the problem of choosing intervariable clusters.}
    \label{fig:cons-hierarchy}
\end{figure}

We note importantly that ``non-causal'' in this context does not refer to spurious effects, nor does it mean anti-causal in the sense that the direction of causality is reversed, as these are still well-defined from a causal perspective. For example, in a scenario modeled by Fig.~\ref{fig:spurious-causal}, the relationship between $X$ and $Y$ is not considered a non-causal relationship. Despite $Y$ not causing $X$, their relationship is still considered causal due to the cause from $X$ from $Y$. Further, although there is spurious correlation through the unobserved confounding, this is still considered a causal type of relationship since there is some causal effect from the unobserved confounder to $X$ and $Y$. ``Non-causal'' relationships refer to relationships between variables that cannot be defined in a causal manner. When considering non-causal relationships, the SCM framework is no longer compatible since it is not defined how to model such relationships. Nonetheless, the framework presented in this paper can still be used as long as C1 is satisfied.

The term ``non-causal'' is very broad and could encompass many different types of relationships. Studying specific types of non-causal relationships is out of the scope of this work, so we will treat these relationships in the same way. For every subset of variables (say $\*Z \subseteq \*V_L$) related through some non-causal relationship, we will assume that there is some function $f_{\*Z}$ that takes as input variables from $\*U_L$ and $\*V_L$ and maps it jointly to the space of $\*Z$. We note that this is a general encoding of such relationships that makes minimal assumptions. For example, perhaps the relationship between three variables, $X$, $Z$, and $Y$ can be described using a Markov random field like shown in Fig.~\ref{fig:noncausal-g}, where $X \indep Y \mid Z$. Our assumption states that the behavior of these three variables can be abstracted into one function $f_{\{X, Z, Y\}}$ which outputs values of $X$, $Z$, and $Y$, therefore losing the independence information. (It is possible, however, that these could be implicitly encoded through the intravariable clusters). These non-causal relationships can be expressed graphically.

\begin{definition}[Non-causal Graph]
    \label{def:non-causal-graph}
    Let $\mathbf{V}_L$ be a set of variables and $\overline{\cG} = \langle \mathbf{V}_L, \mathbf{E} \rangle$ be an undirected graph with nodes representing $\*V_L$. Then, an edge $(V_1, V_2)$ is in $\mathbf{E}$ if and only if there is some non-causal relationship between $V_1$ and $V_2$.
    \hfill $\blacksquare$
\end{definition}

We will assume that, in settings which intervariable clusters must be learned, we are given knowledge of the existence of non-causal relationships through a non-causal graph of the lowest level variables $\*V_L$. This information could be acquired simply as an assumption of the user, as they are typically only present at the lowest levels of abstraction (e.g.\ the relationship between pixels in an image or between atoms of an object). Under this assumption, distributions over $\*V_L$ can be treated as if variables in the same connected component in $\overline{\cG}$ share a common cause, and they can be factorized as such.

\smallskip

\noindent \textbf{Condition C2.}
By definition of admissibility, two variables should not be in the same intervariable cluster if doing so forms a cycle in the order of the functions or graph. This condition can be verified given causal diagram $\cG$ of the low level variables $\*V_L$.

\smallskip

\noindent \textbf{Condition C3.}
C3 and C4 depend on the user's needs based on the queries of interest, $\mathbb{Q}$, in downstream tasks. The ideal level of abstraction can be determined by the groupings of variables in these queries. C3 essentially enforces that the level of abstraction should be kept low enough to retain the ability to differentiate the nuances between individual variables of interest in the queries. For example, if one would like to study the causal effect of a drug $X$ on recovery rate $Y$, but $Y$ was clustered with another variable $Z$ representing blood pressure, then it no longer becomes possible to answer queries specifically about $Y$ without including $Z$. This is formally defined below.

\begin{definition}[Cluster Answerability]
    A counterfactual query $Q = P^{\mathcal{M}_L}(\*Y_{1[\*x_1]}, \*Y_{2[\*x_2]}, \dots)$ is answerable from intervariable clusters $\mathbb{C}$ if and only if for all $\*Y_i$ (and $\mathbf{X}_i$), there exists $\bbC_i \subseteq \bbC$ such that $\*Y_i = \bigcup_{\*C \in \bbC_i} \*C$.
    \hfill $\blacksquare$
\end{definition}

In other words, all queries should be written in terms of unions of clusters.

\smallskip

\noindent \textbf{Condition C4.}
Even if answerability is not violated, clustering variables together or projecting them out will always result in a loss of information. Such a loss may affect the identifiability of the queries. Ideally, one would not want to drop information that would change the status of an already identifiable query, and this is enforced by C4.

\smallskip

\noindent \textbf{Condition C5.}
Finally, C5 is usually desirable following the idea of Occam's razor, i.e., all else being equal, simplicity should be preferred. We define coarseness as follows.

\begin{definition}[Coarseness]
    Let $\mathbb{C}_1$ and $\mathbb{C}_2$ be two intervariable clusterings of $\*V_L$. $\mathbb{C}_1$ is said to be coarser than $\mathbb{C}_2$ (equivalently, $\mathbb{C}_2$ is finer than $\mathbb{C}_1$) if and only if for every $\*C_2 \in \bbC_2$, either $\*C_2 \cap \bigcup_{\*C_1 \in \bbC_1} \*C_1 = \emptyset$, or there exists $\*C_1 \in \bbC_1$ such that $\*C_2 \subseteq \*C_1$.
\end{definition}

With this definition, the goal is to find a maximally coarse clustering $\bbC$ that satisfies conditions 1-4. That is, there should not exist a coarser clustering $\bbC'$ that also satisfies the conditions, although the coarsest clustering is not necessarily unique.

It turns out that the set of all possible maximally course clusterings can be ``bounded'' in a sense using C1, C2, and C3. The ``minimally'' coarse clustering can be described with the following lemma.
\begin{lemma}
    \label{lem:min-clusters}
    Let $\overline{\cG}$ be a non-causal graph over $\*V_L$. Let $\bbC_{\min}$ be the intervariable clustering of $\*V_L$ composed of the connected components of $\overline{\cG}$. Then, any clustering $\bbC'$ violates condition C1 if and only if it is not equal to or coarser than $\bbC_{\min}$.
    \hfill $\blacksquare$
    \begin{proof}
        We first note that $\bbC_{\min}$ does not violate condition C1 because by construction, if there exists a noncausal edge between $V_1$ and $V_2$ in $\overline{\cG}$, then they must be in the same cluster. Any coarser clustering has the same property.
        
        If $\bbC'$ is not equal to or coarser than $\bbC_{\min}$, that means that there exists $\*C \in \bbC_{\min}$ such that $\*C \not \subseteq \*C'$ for all $\*C' \in \bbC'$. This implies that there exists $V_1, V_2 \in \*C$ such that belong to different clusters $\*C_1'$ and $\*C_2'$ in $\bbC'$. However, since $V_1$ and $V_2$ were in the same cluster in $\bbC_{\min}$, there must exist some non-causal path from $V_1$ to $V_2$ in $\overline{\cG}$. This means that there is a noncausal connection between clusters $\*C_1'$ and $\*C_2'$, so $\bbC'$ violates condition C1.
    \end{proof}
\end{lemma}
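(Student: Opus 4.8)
The plan is to prove the biconditional in two directions, taking ``$\bbC'$ violates C1'' $\Rightarrow$ ``$\bbC'$ is not equal to or coarser than $\bbC_{\min}$'' by contraposition. The single structural fact I will use repeatedly is that, by construction, the clusters of $\bbC_{\min}$ are exactly the connected components of $\overline{\cG}$, so every non-causal edge of $\overline{\cG}$ has both of its endpoints inside one cluster of $\bbC_{\min}$. Concretely I read ``$\bbC'$ violates C1'' as: some non-causal relationship is visible under $\bbC'$, i.e.\ there is a non-causal edge $(V_1,V_2)$ whose endpoints lie in two distinct clusters of $\bbC'$, so it would surface as an edge between cluster-nodes and be misread as causal.

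For the direction ``equal to or coarser than $\bbC_{\min}$ $\Rightarrow$ C1 holds'', I would start with the base case $\bbC' = \bbC_{\min}$: every non-causal edge is internal to its component-cluster, so nothing non-causal is visible. I would then note that this ``internal-to-a-single-cluster-or-gone'' property is stable under coarsening: if $\bbC'$ is equal to or coarser than $\bbC_{\min}$, then by the definition of coarseness each component $\*C\in\bbC_{\min}$ is either entirely projected away (so its non-causal edges disappear) or entirely inside some $\*C'\in\bbC'$ (so its non-causal edges stay internal to $\*C'$); either way no non-causal edge joins two distinct clusters of $\bbC'$, so C1 holds.

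For the contrapositive of the other direction, suppose $\bbC'$ is not equal to or coarser than $\bbC_{\min}$. Unpacking coarseness, there is a component $\*C\in\bbC_{\min}$ and a cluster $\*C'_1\in\bbC'$ with $\*C\cap\*C'_1\neq\emptyset$ but $\*C\not\subseteq\*C'_1$; choose $V_1\in\*C\cap\*C'_1$ and $V_2\in\*C\setminus\*C'_1$. Since $\*C$ is a connected component of $\overline{\cG}$, there is a path $V_1=W_0,W_1,\dots,W_k=V_2$ using only non-causal edges, all lying in $\*C$. Because $W_0\in\*C'_1$ and $W_k\notin\*C'_1$, some consecutive pair $(W_i,W_{i+1})$ has $W_i\in\*C'_1$ and $W_{i+1}\notin\*C'_1$, exhibiting a non-causal edge whose endpoints lie in different clusters of $\bbC'$, hence C1 is violated. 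Putting the two directions together proves the lemma.

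The step I expect to be the real obstacle is not the connectivity walk but making ``C1 is violated'' precise enough that \emph{both} implications hold, specifically its interaction with projecting variables away. If $\bbC'$ keeps part of a non-causal component in a single cluster and projects the remainder away, then strictly no non-causal edge has both endpoints in two distinct clusters, yet $\bbC'$ is still ``not coarser than $\bbC_{\min}$'', so the contrapositive step above needs the index $i$ to actually land between two \emph{retained} clusters. The fix is to commit, in the write-up, to the stricter reading that C1 forbids \emph{any} split of a non-causal component (across clusters, or partly into the projected-out region), justified by the fact that the mechanism $f_{\*Z}$ for a non-causal set $\*Z$ is a single joint function over all of $\*Z$; under that reading the contrapositive direction is immediate from the definition of coarseness and the walk only serves to produce an explicit offending edge. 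I would state this reading explicitly up front.
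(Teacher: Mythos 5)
Your proof follows essentially the same route as the paper's: the forward direction observes that every non-causal edge is internal to a cluster of $\bbC_{\min}$ (and remains so, or disappears, under any coarsening), and the converse direction takes a component $\*C$ of $\overline{\cG}$ that is split by $\bbC'$ and walks along a non-causal path inside $\*C$ to exhibit an offending crossing. The obstacle you flag at the end is real, and it is worth noting that the paper's own proof silently makes the very leap you isolate: from ``$\*C \not\subseteq \*C'$ for all $\*C' \in \bbC'$'' it concludes that some $V_1, V_2 \in \*C$ lie in \emph{different} clusters of $\bbC'$, which fails when $\*C$ meets only one retained cluster and the rest of $\*C$ is projected away (and the paper also omits the intersection clause in negating the coarseness definition). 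Your explicit convention -- that C1 forbids any split of a non-causal component, whether across clusters or partly into the projected-out region, justified by the joint mechanism $f_{\*Z}$ -- is exactly what is needed for the ``only if'' direction of the biconditional to hold as stated, so stating it up front as you propose makes your write-up tighter than the paper's; your extraction of a concrete boundary edge along the path is likewise more explicit than the paper's appeal to a ``noncausal connection'' between the two clusters.
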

Intuitively, the minimally coarse clustering must at least cover the connected components of $\overline{\cG}$ to abstract away the non-causal relations and satisfy condition C1. Additionally, admissibility (C2) adds another constraint.
\begin{lemma}
    \label{lem:admissible-clusters}
    Let $\bbC$ be a set of intervariable clusters that are not admissible w.r.t.~$\cG$. Let $\bbC^*$ be the set of clusters that is coarser than $\bbC$ such that $\*C_1$ and $\*C_2$ in $\bbC$ are merged in $\bbC^*$ if and only if $\*C_1$ and $\*C_2$ are in a cycle in $\cG_{\bbC}$. Then, any set of clusters that are coarser than $\bbC$ and are admissible w.r.t.~$\cG$ are equal to or coarser than $\bbC^*$
    \hfill $\blacksquare$
    \begin{proof}
        First, we note that $\bbC^*$ is admissible w.r.t.~$\cG$ by construction, since all cycles have been merged. If $\bbC'$ is a clustering that is coarser than $\bbC$ but not $\bbC^*$, then there must exist $\*C^* \in \bbC^*$ such that $\*C^* \subsetneq \*C'$ for all $\*C' \in \bbC'$. Then, there must exist $V_1, V_2 \in \*C^*$ such that $V_1$ and $V_2$ are in different clusters in $\*C_1, \*C_2 \in \bbC'$. However, this implies that there is still a cycle between $\*C_1$ and $\*C_2$ in $\cG_{\bbC'}$, breaking admissibility.
    \end{proof}
\end{lemma}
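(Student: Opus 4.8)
The plan is to split the argument into two halves: first show that the candidate coarsening $\bbC^*$ is itself admissible w.r.t.~$\cG$, and then show that no admissible coarsening $\bbC'$ of $\bbC$ can be strictly finer than $\bbC^*$, by exhibiting a forbidden cluster-level cycle whenever it tries to be.

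For the first half, I would observe that $\bbC^*$ is precisely the \emph{condensation} of the cluster graph $\cG_{\bbC}$. Declaring two $\bbC$-clusters equivalent whenever they lie on a common directed cycle of $\cG_{\bbC}$, and closing this relation under transitivity, produces exactly the strongly connected components of $\cG_{\bbC}$; merging each component into a single cluster is the condensation. By Def.~\ref{def:cdag}, passing to a coarser clustering commutes with the edge-lifting construction, so the cluster graph $\cG_{\bbC^*}$ induced by $\bbC^*$ is the quotient of $\cG_{\bbC}$ by its strongly connected components, which is acyclic. Since admissibility of a clustering w.r.t.~$\cG$ is equivalent to the existence of a topological ordering of its clusters, i.e.\ to acyclicity of its induced cluster graph (Def.~\ref{def:var-clusterings}(1)), $\bbC^*$ is admissible.

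For the second half, suppose toward a contradiction that $\bbC'$ is coarser than $\bbC$, admissible w.r.t.~$\cG$, yet is neither equal to nor coarser than $\bbC^*$. Then some $\*C^* \in \bbC^*$ meets $\bigcup \bbC'$ but is not contained in a single cluster of $\bbC'$. Because $\bbC'$ is coarser than $\bbC$ and $\*C^*$ is a union of $\bbC$-clusters, two of those $\bbC$-clusters, say $\*C_1, \*C_2 \subseteq \*C^*$, land in distinct clusters $\*C_1', \*C_2' \in \bbC'$. By construction of $\bbC^*$, $\*C_1$ and $\*C_2$ sit in the same strongly connected component of $\cG_{\bbC}$, so there are directed paths $\*C_1 \rightsquigarrow \*C_2$ and $\*C_2 \rightsquigarrow \*C_1$ in $\cG_{\bbC}$. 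Each edge on these paths lifts (Def.~\ref{def:cdag}) to an edge of $\cG$ between the endpoint $\bbC$-clusters; mapping each $\bbC$-cluster along the paths forward to the $\bbC'$-cluster containing it, consecutive clusters either coincide in $\bbC'$ or are joined by that lifted edge in $\cG_{\bbC'}$. This produces directed walks $\*C_1' \rightsquigarrow \*C_2'$ and $\*C_2' \rightsquigarrow \*C_1'$ in $\cG_{\bbC'}$, hence a directed cycle through the distinct clusters $\*C_1' \neq \*C_2'$, contradicting admissibility of $\bbC'$.

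The main obstacle is the ``mapping forward'' step: a $\bbC$-cluster on the cycle may be projected away by the coarser clustering $\bbC'$ (the ``disjoint'' branch of the coarseness relation), which interrupts the lifted chain of $\cG_{\bbC'}$-edges. Handling this carefully is the crux — one must argue that the witnesses $\*C_1,\*C_2$ and the cycle can be chosen so that the relevant portion stays inside clusters $\bbC'$ retains (e.g.\ by taking a minimal cycle, or by using that $\*C_1',\*C_2'$ already witness a cross-dependence at the $\cG$ level), or restrict attention to the intended regime in which $\bbC'$ is obtained from $\bbC$ by merging and therefore introduces no new projections. Everything else — identifying $\bbC^*$ with the condensation, the equivalence of admissibility with cluster-graph acyclicity, and the bookkeeping of lifted edges — is routine.
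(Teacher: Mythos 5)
Your proposal follows the same route as the paper's own proof: establish that $\bbC^*$ (the SCC condensation of $\cG_{\bbC}$) induces an acyclic cluster graph and is hence admissible, then argue that any admissible $\bbC'$ coarser than $\bbC$ that splits a component of $\bbC^*$ would inherit a cluster-level cycle, contradicting admissibility. In the regime where $\bbC'$ is obtained from $\bbC$ purely by merging (every cluster of $\bbC$ is contained in some cluster of $\bbC'$), your argument is complete and correct: the map sending each $\bbC$-cluster to the $\bbC'$-cluster containing it sends every edge of $\cG_{\bbC}$ either to an edge of $\cG_{\bbC'}$ or to a self-identification, so the two directed paths between $\*C_1$ and $\*C_2$ become closed walks through the distinct clusters $\*C_1' \neq \*C_2'$, and any such nontrivial closed walk contains a directed cycle, ruling out a topological ordering of $\bbC'$. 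This is exactly the paper's argument, written with more care about the condensation and the lifting of edges.

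The obstacle you flag is genuine, and you should know that the paper's proof does not resolve it either -- it silently assumes the merging-only regime (its claim that two variables of $\*C^*$ must land in \emph{different clusters} of $\bbC'$, and that the cycle ``is still'' present in $\cG_{\bbC'}$, both presuppose that no cluster of $\bbC$ is projected away). Under the paper's own definition of coarseness, which allows a coarser clustering to drop finer clusters entirely, the lemma as literally stated can fail, so no choice of minimal cycle or alternative witness will close the gap. Concretely: let $\cG$ have edges $a_1 \to b_1$, $b_2 \to c_1$, $c_2 \to a_2$, and let $\bbC = \{\{a_1,a_2\},\{b_1,b_2\},\{c_1,c_2\}\}$, so that $\cG_{\bbC}$ is a directed 3-cycle (hence $\bbC$ is inadmissible in the topological-ordering sense the lemma uses) and $\bbC^*$ is a single cluster of all six variables. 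The clustering $\bbC' = \{\{a_1,a_2,b_1,b_2\}\}$, which merges the first two clusters and projects the third away, is coarser than $\bbC$ and admissible (its induced cluster graph is edgeless, and no descendant of a cluster member outside the cluster is an ancestor of a cluster member), yet it is not equal to or coarser than $\bbC^*$. The cluster-level ``return path'' ran through two unconnected variables of the dropped cluster, so it does not survive projection. Hence the correct repair is the last option you name: restrict attention to coarsenings obtained by merging without new projections, which is what the paper's proof implicitly does. Your write-up is on par with the paper's proof in that regime, and more self-aware about the assumption it rests on.
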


\begin{figure}
    \centering
    \begin{tikzpicture}[xscale=1, yscale=1.0]
        \node[draw, circle] (X) at (-1, 0) {$X$};
        \node[draw, circle] (Y) at (1, 0) {$Y$};
        
        \path [-{Latex[length=2mm]}] (X) edge (Y);
        \path [{Latex[length=2mm]}-{Latex[length=2mm]}, dashed, bend left] (X) edge (Y);
    \end{tikzpicture}
    \caption{Example of spurious relationships.}
    \label{fig:spurious-causal}
\end{figure}
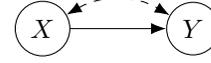

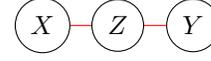
\begin{figure}
    \centering
    \begin{tikzpicture}[xscale=1, yscale=1.0]
        \node[draw, circle] (X) at (-1, 0) {$X$};
        \node[draw, circle] (Z) at (0, 0) {$Z$};
        \node[draw, circle] (Y) at (1, 0) {$Y$};
        
        \path [red, -] (X) edge (Z);
        \path [red, -] (Z) edge (Y);
    \end{tikzpicture}
    \caption{Example of non-causal graph.}
    \label{fig:noncausal-g}
\end{figure}

    \begin{figure*}
        \centering
        \begin{subfigure}{.3\linewidth}
            \centering
            \begin{tikzpicture}[xscale=1, yscale=1]
                \node[draw, circle] (A) at (-1, 2) {$A$};
                \node[draw, circle] (T) at (0.5, 2) {$T$};
                \node[draw, circle] (G) at (-2, 1) {$G$};
                \node[draw, circle] (H1) at (-0.5, 0.5) {$H_1$};
                \node[draw, circle] (H2) at (0.5, 0.5) {$H_2$};
                \node[draw, circle] (O1) at (1.5, 1) {$O_1$};
                \node[draw, circle] (O2) at (2.5, 1) {$O_2$};
                \node[draw, circle] (O3) at (1.5, 0) {$O_3$};
                \node[draw, circle] (O4) at (2.5, 0) {$O_4$};
                \node[draw, circle] (P1) at (-1, -1) {$P_1$};
                \node[draw, circle] (P2) at (0, -1) {$P_2$};
                \node[draw, circle] (P3) at (-1, -2) {$P_3$};
                \node[draw, circle] (P4) at (0, -2) {$P_4$};
    
                \node[draw,densely dotted,color=blue,rounded corners=2mm, fit=(O1) (O2) (O3) (O4), line width=1pt, inner sep=1.2mm] {};
                \node[draw,densely dotted,color=blue,rounded corners=2mm, fit=(P1) (P2) (P3) (P4), line width=1pt, inner sep=1.2mm] {};
                
                \path [-{Latex[length=2mm]}] (A) edge (T);
                \path [-{Latex[length=2mm]}] (G) edge (T);
                \path [-{Latex[length=2mm]}] (T) edge (H1);
                \path [-{Latex[length=2mm]}] (T) edge (H2);
                \path [-{Latex[length=2mm]}] (T) edge (O1);
                \path [-{Latex[length=2mm]}] (G) edge (P1);
                \path [-{Latex[length=2mm]}] (A) edge (P1);
                \path [-{Latex[length=2mm]}] (H1) edge (P1);
                \path [-{Latex[length=2mm]}] (H2) edge (P1);
                \path [-{Latex[length=2mm]}] (H2) edge (P2);
                \path [-{Latex[length=2mm]}] (O3) edge (P4);
                \path [-{Latex[length=2mm]}] (O4) edge (P4);
                \path [{Latex[length=2mm]}-{Latex[length=2mm]}, dashed, bend left] (G) edge (A);
                \path [red, -] (O1) edge (O2);
                \path [red, -] (O1) edge (O3);
                \path [red, -] (O1) edge (O4);
                \path [red, -] (O2) edge (O3);
                \path [red, -] (O2) edge (O4);
                \path [red, -] (O3) edge (O4);
                \path [red, -] (P1) edge (P2);
                \path [red, -] (P1) edge (P3);
                \path [red, -] (P2) edge (P4);
                \path [red, -] (P3) edge (P4);
            \end{tikzpicture}
            \subcaption{The full set of variables without clustering. Not a valid choice because condition C1 is violated (red noncausal relationships between $O$ and $P$ variables.}
            \label{fig:ex-learn-c-g1}
        \end{subfigure}
        \hfill
        \begin{subfigure}{.25\linewidth}
            \centering
            \begin{tikzpicture}[xscale=1, yscale=1]
                \node[draw, circle] (A) at (-1, 2) {$A$};
                \node[draw, circle] (T) at (0.5, 2) {$T$};
                \node[draw, circle] (G) at (-2, 1) {$G$};
                \node[draw, circle] (H1) at (-0.5, 0.5) {$H_1$};
                \node[draw, circle] (H2) at (0.5, 0.5) {$H_2$};
                \node[draw, circle] (H3) at (1.5, 0.5) {$H_3$};
                \node[draw, circle] (I) at (-1, -1) {$I$};
    
                \node[draw,densely dotted,color=blue,rounded corners=2mm, fit=(H1) (H2) (H3), line width=1pt, inner sep=1.2mm] {};
                
                \path [-{Latex[length=2mm]}] (A) edge (T);
                \path [-{Latex[length=2mm]}] (G) edge (T);
                \path [-{Latex[length=2mm]}] (T) edge (H1);
                \path [-{Latex[length=2mm]}] (T) edge (H2);
                \path [-{Latex[length=2mm]}] (T) edge (H3);
                \path [-{Latex[length=2mm]}] (A) edge (I);
                \path [-{Latex[length=2mm]}] (G) edge (I);
                \path [-{Latex[length=2mm]}] (H1) edge (I);
                \path [-{Latex[length=2mm]}] (H2) edge (I);
                \path [-{Latex[length=2mm]}] (H3) edge (I);
                \path [{Latex[length=2mm]}-{Latex[length=2mm]}, dashed, bend left] (G) edge (A);
            \end{tikzpicture}
            \subcaption{Graph with $O_1, O_2, O_3, O_4$ clustered into $H_3$ and $P_1, P_2, P_3, P_4$ clustered into $I$. This cluster is the $\bbC_{\min}$ obtained from Lem.~\ref{lem:min-clusters} and is a valid clustering that satisfies C1-4.}
            \label{fig:ex-learn-c-g2}
        \end{subfigure}
        \hfill
        \begin{subfigure}{.2\linewidth}
            \centering
            \begin{tikzpicture}[xscale=1, yscale=1]
                \node[draw, circle] (A) at (0, 1) {$A$};
                \node[draw, circle] (G) at (-1, 0) {$G$};
                \node[draw, circle] (M) at (1, 0) {$M$};
                \node[draw, circle] (I) at (0, -1.5) {$I$};
    
                \node[draw,densely dotted,color=blue,rounded corners=2mm, fit=(A) (G) (M), line width=1pt, inner sep=1.2mm] {};
                
                \path [-{Latex[length=2mm]}] (A) edge (I);
                \path [-{Latex[length=2mm]}] (G) edge (I);
                \path [-{Latex[length=2mm]}] (A) edge (M);
                \path [-{Latex[length=2mm]}] (G) edge (M);
                \path [-{Latex[length=2mm]}] (M) edge (I);
                \path [{Latex[length=2mm]}-{Latex[length=2mm]}, dashed, bend left] (G) edge (A);
            \end{tikzpicture}
            \subcaption{Graph with $H_1, H_2, H_3$ clustered into $M$. This cluster is also valid and is coarser than the graph to the left.}
            \label{fig:ex-learn-c-g3}
        \end{subfigure}
        \hfill
        \begin{subfigure}{.2\linewidth}
            \centering
            \begin{tikzpicture}[xscale=1, yscale=1]
                \node[draw, circle] (X) at (0, 1) {$X$};
                \node[draw, circle] (I) at (0, -1) {$I$};
                
                \path [-{Latex[length=2mm]}] (X) edge (I);
            \end{tikzpicture}
            \subcaption{Graph with $A, G, M$ clustered into $X$. This cluster is no longer valid because C3 is violated. The queries are no longer answerable.}
            \label{fig:ex-learn-c-g4}
        \end{subfigure}
        \caption{Graphs for Example \ref{ex:learn-intervariable-clusters}. Blue outline illustrates changes in clustering between each graph.}
        \label{fig:ex-learn-c-g}
    \end{figure*}
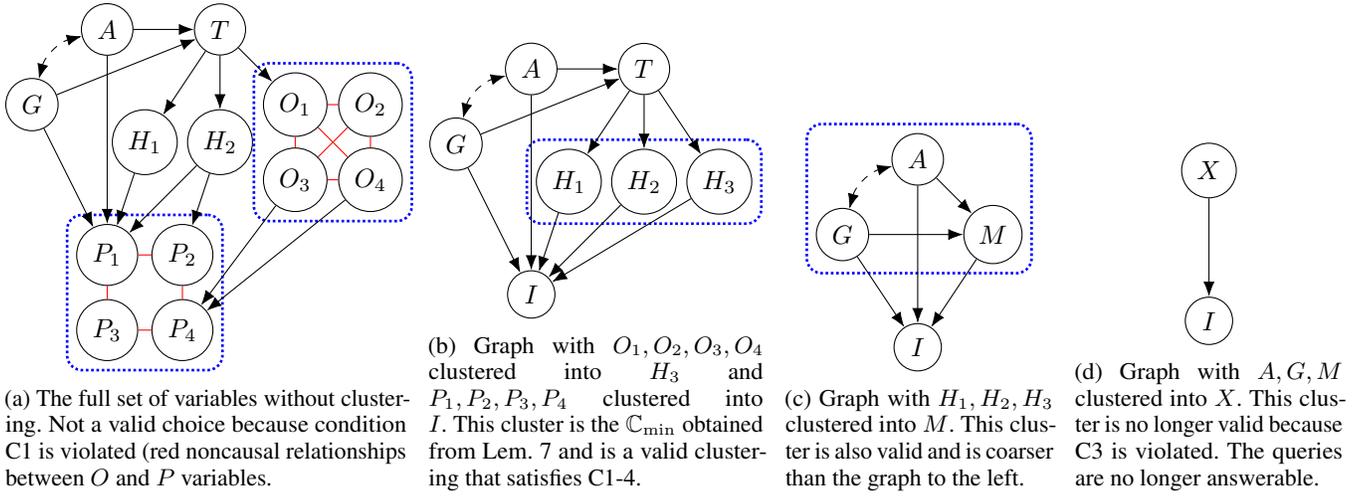

On the other hand, the maximally coarse clustering is related to the concept of maximally answerable clusters, defined below.

\begin{definition}[Maximally Answerable Clusters]
    Let $\bbQ$ be a set of counterfactual queries over variables $\*V_L$, and let $\bbV_{\bbQ}$ be the set of all subsets of $\*V_L$ that are used in a term in $\bbQ$ (i.e.\ for every counterfactual term $\*Y_{\*x}$ in any of the queries, both $\*Y$ and $\*X$ should be included in $\bbV_{\bbQ}$). Denote $\bbV^*_{\bbQ}$ as the extension of $\bbV_{\bbQ}$ closed under intersection, that is, $\bbV_{\bbQ} \subseteq \bbV^*_{\bbQ}$, and $V_1, V_2 \in \bbV^*_{\bbQ} \Rightarrow V_1 \cap V_2 \in \bbV^*_{\bbQ}$. Then, an intervariable clustering $\bbC_{\max}$ is called the maximally answerable cluster w.r.t.\ $\bbQ$ if and only if $\bbC_{\max}$ is the set of atoms of $\bbV^*_{\bbQ}$, or in other words, $\*V_i \in \bbC_{\max}$ if and only if $\*V_i \in \bbV^*_{\bbQ}$ and there exists no $\*V_j \in \bbV^*_{\bbQ}$ such that $\*V_j \subsetneq \*V_i$.
    \hfill $\blacksquare$
\end{definition}

The concept of maximally answerable clusters can be best shown through an example.
\begin{example}
    Suppose
    \begin{equation*}
        \*V_L = \{A, B, C, D, E, F, G, H, I, X, Z\}
    \end{equation*}
    and the queries of interest $\bbQ = \{Q_1, Q_2, Q_3\}$ are as follows:
    \begin{align*}
        Q_1 &= P(\{a, b, c, d, e\}_{x}) \\
        Q_2 &= P(\{c, d, e, f, g, h\}_{x, z}) \\
        Q_3 &= P(\{e, f, h, i, z\}_{x}).
    \end{align*}
    We first dissect the terms in the queries, including the subscripts, to obtain $\bbV_{\bbQ}$. Doing so yields
    \begin{align*}
        \bbV_{\bbQ} &= \{\{A, B, C, D, E\}, \{C, D, E, F, G, H\}, \\
        & \{E, F, H, I, Z\}, \{X\}, \{X, Z\}\}.
    \end{align*}
    These sets are shown at the top of Fig.~\ref{fig:ex-max-answer-cluster}. Note that many of the sets overlap. The set of nonoverlapping subregions becomes the maximally answerable clusters, as shown at the bottom of Fig.~\ref{fig:ex-max-answer-cluster}. In this case, this means
    \begin{align*}
        \bbC_{\max} &= \{\{A, B\}, \{C, D\}, \{E\}, \{F, H\},\\
        &\{G\}, \{I\}, \{X\}, \{Z\}\}.
    \end{align*}
    Note that no set in $\bbC_{\max}$ overlaps with each other, but all sets are subsets of some set in $\bbV_{\bbQ}$.
    \hfill $\blacksquare$
\end{example}

It turns out that the set of maximally answerable clusters is also the maximally coarse clustering that does not violate condition C3, as shown below.

\begin{lemma}
    \label{lem:max-clusters}
    Let $\bbC_{\max}$ be the intervariable clustering of $\*V_L$ that is the maximally answerable cluster with respect to a set of counterfactual queries $\bbQ$. Then, any clustering $\bbC'$ violates condition C3 if and only if it is not equal to or finer than $\bbC_{\max}$.
    \hfill $\blacksquare$
    \begin{proof}
        We first note that $\bbC_{\max}$ does not violate condition C3, since if there existed a query with a term set $\*Z$ such that $\*Z$ is not a union of clusters of $\bbC_{\max}$, then either there exists $Z \in \*Z$ such that $Z$ is not in any set of $\bbC_{\max}$, or $Z \in \*C$ for some $\*C \in \bbC_{\max}$ such that $\*C \subsetneq \*Z$. However, the first case is impossible since if $Z \in \*Z$, then $Z$ must be added to some set in $\bbV_{\bbQ}$. The second case is not possible either because then $\*Z \cap \*C \neq \emptyset$, which means that the clusters of $\bbC_{\max}$ could be divided further, contradicting its definition. If $\bbC_{\max}$ does not violate condition C3, then no finer clustering can violate it either because a union of clusters in $\bbC_{\max}$ can always translate to a union of clusters in a finer clustering.

        If $\bbC'$ is not equal to or finer than $\bbC_{\max}$, then there exists at least one cluster $\*C' \in \bbC'$ such that $\*C' \subsetneq \*C$ for all $\*C \in \bbC_{\max}$. This means that there exist $V_1, V_2 \in \*C'$ such that $V_1$ and $V_2$ are not in the same cluster in $\bbC'$. This implies that there exists at least one query such that one of its terms contains one of $V_1$ or $V_2$ but not the other, in which case, such a term cannot be described as a union of clusters from $\bbC'$, violating condition C3.
    \end{proof}
\end{lemma}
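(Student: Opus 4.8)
The plan is to prove the biconditional in contrapositive form: a clustering $\bbC'$ satisfies condition C3 (cluster answerability) if and only if $\bbC'$ is equal to or finer than $\bbC_{\max}$. The first task is to pin down the combinatorial content of ``$\bbC_{\max}$ is the set of atoms of $\bbV^{*}_{\bbQ}$'': the operative reading, confirmed by the worked example above, is that $\bbC_{\max}$ is the partition of $\bigcup \bbV_{\bbQ}$ in which two variables $V, V'$ lie in the same block precisely when every query term $\*W \in \bbV_{\bbQ}$ contains both or neither of them (equivalently, when the minimal member of $\bbV^{*}_{\bbQ}$ containing $V$ coincides with that containing $V'$). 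From this I would record two facts used by both directions: (i) every $\*W \in \bbV_{\bbQ}$ is a union of blocks of $\bbC_{\max}$, since no block is split by any query term; and (ii) no set obtained by intersecting query terms that each contain both or neither of a pair $V, V'$ can separate $V$ from $V'$, so the blocks of $\bbC_{\max}$ are genuinely ``indivisible'' with respect to $\bbV_{\bbQ}$.

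For the direction ``$\bbC'$ equal to or finer than $\bbC_{\max}$ $\Rightarrow$ C3'', I first check that $\bbC_{\max}$ itself satisfies C3. Suppose toward a contradiction that some query term $\*Z \in \bbV_{\bbQ}$ (a $\*Y_i$ or an $\*X_i$ of some query in $\bbQ$) is not a union of blocks of $\bbC_{\max}$; then some block $\*C \in \bbC_{\max}$ meets $\*Z$ without being contained in it, i.e.\ $\*C \cap \*Z$ is a nonempty proper subset of $\*C$ — but $\*C \cap \*Z$ is the intersection of the query term $\*Z$ with a block all of whose variables lie in the same query terms, so it would separate variables of $\*C$, contradicting fact (ii); and the alternative, that some $Z \in \*Z$ lies in no block of $\bbC_{\max}$, is impossible since $Z \in \*Z \subseteq \bigcup \bbV_{\bbQ}$. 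Now if $\bbC'$ is equal to or finer than $\bbC_{\max}$, the definition of coarseness makes each block $\*C \in \bbC_{\max}$ a union of clusters of $\bbC'$, so any set that is a union of blocks of $\bbC_{\max}$ — in particular each query term, by fact (i) — is a union of clusters of $\bbC'$; hence $\bbC'$ satisfies C3.

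For the converse, ``$\bbC'$ not equal to or finer than $\bbC_{\max}$ $\Rightarrow$ $\bbC'$ violates C3'', I negate the coarseness of $\bbC_{\max}$ relative to $\bbC'$: there is a cluster $\*C' \in \bbC'$ with $\*C' \cap \bigcup \bbC_{\max} \neq \emptyset$ and $\*C' \not\subseteq \*C$ for every $\*C \in \bbC_{\max}$. Pick $V_1 \in \*C'$ lying in some block $\*C_1 \in \bbC_{\max}$ and $V_2 \in \*C' \setminus \*C_1$. Since $V_1$ and $V_2$ lie in different blocks of $\bbC_{\max}$ (or $V_2$ lies in no block, i.e.\ in no query term), some query term $\*Z \in \bbV_{\bbQ}$ contains exactly one of $V_1, V_2$: otherwise every query term, hence every member of $\bbV^{*}_{\bbQ}$, would contain both or neither, so $\*C_1$ — which contains $V_1$ — would contain $V_2$, a contradiction. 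But $V_1, V_2$ sit in the single cluster $\*C' \in \bbC'$, so $\*Z$ cannot be a union of clusters of $\bbC'$: covering whichever of $V_1, V_2$ lies in $\*Z$ forces $\*C' \subseteq \*Z$, yet the other one lies in $\*C' \setminus \*Z$. Thus $\bbC'$ violates C3.

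The step I expect to be the main obstacle is the first one — stating and justifying the correct characterization of $\bbC_{\max}$ and extracting facts (i) and (ii) in a form usable by both directions — together with the bookkeeping around the two definitional subtleties it interacts with: that the coarseness definition lets a cluster of $\bbC'$ be disjoint from $\bigcup \bbC_{\max}$, so in the converse direction I must take care to select a cluster $\*C'$ that actually overlaps $\bigcup \bbC_{\max}$ yet is contained in no block, and that query terms (hence blocks of $\bbC_{\max}$) are assembled from both the $\*Y_i$'s and the $\*X_i$'s of the queries. Once $\bbC_{\max}$ is understood correctly, both implications are short set-theoretic arguments of the kind above.
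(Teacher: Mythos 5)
Your proof is correct and follows essentially the same route as the paper's: first show that $\bbC_{\max}$ itself satisfies C3 and that answerability passes to finer clusterings, then, for the converse, extract a cluster $\*C'$ contained in no block of $\bbC_{\max}$, pick $V_1, V_2 \in \*C'$ separated by $\bbC_{\max}$, and exhibit a query term containing exactly one of them, which then cannot be a union of clusters of $\bbC'$. If anything you are more careful than the paper -- you negate the coarseness definition correctly (handling the disjointness clause and the case where $V_2$ lies in no query term) and you state explicitly the membership-pattern reading of $\bbC_{\max}$ that the paper's argument implicitly uses -- while sharing the paper's own implicit assumption, in the ``finer $\Rightarrow$ C3'' step, that a finer clustering still covers the variables appearing in the queries.
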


\begin{figure}
    \centering
    \includegraphics[width=0.42\linewidth]{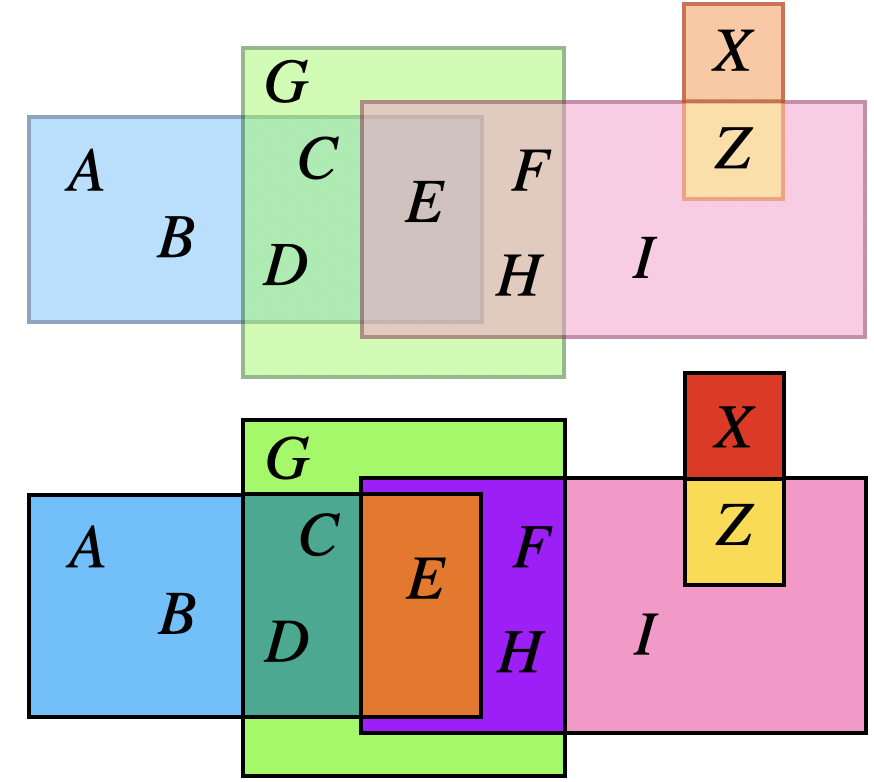}
    \caption{\textbf{Top:} A venn diagram of all of the variables and what set of $\bbV_{\bbQ}$ to which they belong. \textbf{Bottom:} The collection of nonoverlapping intersections of all sets that form the maximum answerable clusters.}
    \label{fig:ex-max-answer-cluster}
\end{figure}

Intuitively, we cannot simply cluster all variables together because we may want to answer queries that require separate consideration of different sets of variables. At best, we can only cluster together terms that always appear together in all queries. Additionally, we note that finding the maximally answerable clusters can be done in polynomial time. One simple approach is to add one set of $\bbV_{\bbQ}$ at a time, repeatedly computing the intersection with the existing clusters. At any point, there can be at most $n = |\*V_L|$ clusters.

With these points in mind, we provide Alg.~\ref{alg:choose-intervariable-clusters} for learning intervariable clusters. We note that Alg.~\ref{alg:choose-intervariable-clusters} returns a maximally course clustering satisfying all conditions.

\begin{proposition}
    Alg.~\ref{alg:choose-intervariable-clusters} returns a maximally course clustering that satisfies C1-4, or returns \texttt{FAIL} if one does not exist.
    \hfill $\blacksquare$
    \begin{proof}
        We first note that if \texttt{FAIL} is returned, then there does not exist a clustering that satisfies all four conditions. Notably, if $\bbC_{\max}$ is not coarser than $\bbC_{\min}$ at the beginning, then \texttt{FAIL} is returned, as a consequence of Lemmas \ref{lem:min-clusters}, \ref{lem:admissible-clusters}, and \ref{lem:max-clusters}. \texttt{FAIL} is also returned if $\bbC_{\min}$ does not satisfy identifiability of all queries, since further clustering will only remove edges and nodes and cannot make a non-ID query turn ID.

        If $\bbC$ is returned at the end, then $\bbC$ must satisfy all four conditions. It satisfies conditions C1 and C3 because it will always be coarser than $\bbC_{\min}$ but finer than $\bbC_{\max}$. It satisfies conditions C2 and C4 because $\bbC$ is never updated unless the ``valid'' subroutine returns \texttt{True}, which implies that C2 and C4 hold.

        Finally, $\bbC$ must be maximal. If a clustering violates C4, then there is no coarser clustering that does not violate C4 since clustering only further reduces information. If a clustering does not violate C2, and a coarser clustering exists that also does not violate C2, then there must exist a way to either merge pairs of clusters or remove variables (namely in reverse topological order) such that each intermediate clustering also does not violate C2. Then, the loop ensures that if a variable could be removed, or two clusters could be merged without violating the conditions, then it will be done.
    \end{proof}
\end{proposition}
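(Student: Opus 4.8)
The plan is to prove the two halves of the claim --- that \texttt{FAIL} is returned exactly when no clustering satisfies C1--C4, and that any returned clustering is valid and maximally coarse --- by first sandwiching every valid clustering between two explicit clusterings and then reading Alg.~\ref{alg:choose-intervariable-clusters} as a greedy walk through this bounded refinement lattice. First I would invoke the three structural lemmas already established: by Lem.~\ref{lem:min-clusters}, a clustering satisfies C1 iff it is equal to or coarser than $\bbC_{\min}$ (the connected components of the non-causal graph $\overline{\cG}$); by Lem.~\ref{lem:max-clusters}, it satisfies C3 iff it is equal to or finer than $\bbC_{\max}$ (the maximally answerable clusters w.r.t.\ $\bbQ$); and by Lem.~\ref{lem:admissible-clusters}, the coarsest admissible refinement of any clustering is obtained by contracting exactly the clusters lying on a common cycle in the induced C-DAG. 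Hence the clusterings satisfying C1 and C3 form precisely the interval $[\bbC_{\min},\bbC_{\max}]$, which is nonempty iff $\bbC_{\max}$ is coarser than (or equal to) $\bbC_{\min}$.

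The \texttt{FAIL} cases then follow cleanly. If $\bbC_{\max}$ is not coarser than $\bbC_{\min}$, the interval is empty, so no clustering meets both C1 and C3 and \texttt{FAIL} is correct. For the second branch I would use a monotonicity observation: coarsening a clustering only deletes nodes and edges from the induced C-DAG, and deleting graphical structure can never turn a non-identifiable query identifiable; so if $\bbC_{\min}$ already fails C4 on some query in $\bbQ$, every clustering in $[\bbC_{\min},\bbC_{\max}]$ fails C4, and \texttt{FAIL} is again correct. Soundness of a returned clustering $\bbC$ is similarly direct: the loop is initialized inside $[\bbC_{\min},\bbC_{\max}]$, every update (a merge of two clusters, or the projection-out of a variable in reverse topological order) is restricted so the clustering stays coarser than $\bbC_{\min}$ and finer than $\bbC_{\max}$, and each update is gated by the ``valid'' subroutine which verifies admissibility (C2) and identifiability of every query (C4); so at termination $\bbC$ satisfies C1, C3 from the interval constraint and C2, C4 from the last accepted check.

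The remaining and, I expect, hardest part is maximality (C5): showing the loop can only halt at a clustering having no coarser valid refinement. I would argue this by a local-move/exchange lemma: if $\bbC$ is the current clustering and some $\bbC'$ strictly coarser than $\bbC$ still satisfies C1--C4, then there is a single elementary move from $\bbC$ toward $\bbC'$ that again satisfies C1--C4 --- namely contract two clusters that $\bbC'$ has merged, performed along a reverse-topological order so that no intermediate C-DAG acquires a cycle (this is the incremental version of Lem.~\ref{lem:admissible-clusters}), or project out a variable that $\bbC'$ projects out, again in reverse topological order. Each such intermediate clustering lies in $[\bbC_{\min},\bbC_{\max}]$, is admissible by the scheduling, and retains identifiability because it is simultaneously a coarsening of $\bbC$ and a refinement of the valid $\bbC'$. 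Since the loop exhaustively tries every such move, it would have taken one, contradicting the assumption that it halted; hence the output is maximally coarse.

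The main obstacle is precisely this ``decomposition of a jump into valid single steps.'' The delicate points are: (i) for C2, a set of merges that is jointly admissible need not be admissible merge-by-merge unless one contracts cycles in a correct order, so I need the incremental cycle-contraction lemma sketched above; and (ii) for C4, identifiability is not obviously preserved by an \emph{arbitrary} intermediate coarsening, so I would have to argue that along the chain $\bbC \succeq \cdots \succeq \bbC'$ each intermediate C-DAG still contains enough of the structure of $\bbC'$'s C-DAG (relative to the already-merged or already-removed parts) that the do-calculus derivations witnessing identifiability in $\bbC'$ transfer upward. Getting these two monotonicity/scheduling facts right --- rather than any of the routine interval and gating bookkeeping --- is where the real work lies.
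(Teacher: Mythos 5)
Your proposal follows essentially the same route as the paper's proof: the same sandwich between $\bbC_{\min}$ and $\bbC_{\max}$ via Lemmas \ref{lem:min-clusters}, \ref{lem:admissible-clusters}, and \ref{lem:max-clusters}, the same monotonicity argument for the two \texttt{FAIL} branches, soundness via the ``valid'' gating plus the interval constraint, and maximality by decomposing a strictly coarser valid clustering into single merges/removals scheduled in reverse topological order. The only substantive difference is that your worry (ii) is unnecessary: identifiability at every intermediate clustering follows directly from the monotonicity fact you already invoked for the \texttt{FAIL} branch (each intermediate is a refinement of the valid coarser clustering $\bbC'$, and coarsening cannot turn a non-ID query ID), which is exactly how the paper handles it, so no transfer-of-do-calculus-derivations argument is needed.
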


Finally, we note that Alg.~\ref{alg:choose-intervariable-clusters} runs in polynomial time in terms of $n = |\*V_L|$ and $|\bbQ|$, as long as checking identification takes polynomial time. Finding $\bbC_{\min}$ from Lem.~\ref{lem:min-clusters} takes polynomial time because finding connected components in an undirected graph takes polynomial time. Checking admissibility simply requires checking cycles, which can also be done in polynomial time. Finding the maximally answerable clusters can also be done in polynomial time, as discussed earlier.

\begin{algorithm}
    \DontPrintSemicolon
    \SetKw{Not}{not}
    \SetKw{Or}{or}
    \SetKw{And}{and}
    \SetKwFunction{noncausal}{nonCausal}
    \SetKwFunction{merge}{merge}
    \SetKwFunction{remove}{remove}
    \SetKwFunction{admissible}{admissible}
    \SetKwFunction{answerable}{answerable}
    \SetKwFunction{id}{ID}
    \SetKwFunction{mincluster}{minCluster}
    \SetKwFunction{mergecycles}{mergeCycles}
    \SetKwFunction{maxcluster}{maxCluster}
    \SetKwFunction{coarserthan}{coarserThan}
    \SetKwFunction{cdag}{CDAG}
    \SetKwInOut{Input}{Input}
    \SetKwInOut{Output}{Output}

    \SetKwFunction{valid}{valid}
      \SetKwProg{Fn}{Function}{:}{}
    
    \Input{Variables $\*V_L$, causal diagram $\cG$, noncausal diagram $\overline{\cG}$, queries of interest $\bbQ$}
    \Output{Maximally coarse clustering $\mathbb{C}$ such that all criteria are satisfied, or \texttt{FAIL} if none exist.}
    \BlankLine
    \textls[-20]{
    \Fn{\valid{$\bbC$, $\cG$, $\bbQ$}}{
             \If{\Not $\admissible(\bbC, \cG)$}{
        \Return \texttt{False} \tcp*{C2 violated}
        }
        \For{$Q \in \bbQ$}{
            $\cG_{\bbC} \gets \cdag(\cG, \bbC)$ \tcp*{from Def.~\ref{def:cdag}}
            \If{\Not $\id(Q, \cG_{\bbC})$}{
                \Return \texttt{False} \tcp*{C4 violated}
            }
        }
        \Return \texttt{True}\;
    }
    $\mathbb{C}_{\min} \gets \mincluster(\*V_L, \overline{\cG})$ \tcp*{from Lem.\ \ref{lem:min-clusters}}
    $\mathbb{C}_{\min} \gets \mergecycles(\bbC_{\min}, \cG)$ \tcp*{from Lem.~\ref{lem:admissible-clusters}}
    $\mathbb{C}_{\max} \gets \maxcluster(\*V_L, \bbQ)$ \tcp*{from Lem.\ \ref{lem:max-clusters}}
    \If{\Not $\bbC_{\max}.\coarserthan(\bbC_{\min})$}{
        \Return \texttt{FAIL} \tcp*{either C1, C2, or C3 violated}
    }
    \If{\Not $\valid(\bbC, \cG, \bbQ)$}{
        \Return \texttt{FAIL} \tcp*{C4 cannot be satisfied}
    }
    $\bbC \gets \bbC_{\min}$\;
    \While{$\bbC$ keeps updating}{
        \For{$V \in \*V_L$}{
            \If{$V \not \in \*C$ for all $\*C \in \bbC_{\max}$}{
                $\bbC' \gets \bbC.\remove(V)$\;
                \If{$\valid(\bbC', \cG, \bbQ)$}{
                    $\bbC \gets \bbC'$\;
                }
            }
        }
        \For{$\*C_1, \*C_2 \in \bbC$}{
            \For{$\*C \in \bbC_{\max}$}{
                \If{$\*C_1, \*C_2 \subseteq \*C$}{
                    $\bbC' \gets \merge(\bbC, \*C_1, \*C_2)$\;
                    \If{$\valid(\bbC', \cG, \bbQ)$}{
                        $\bbC \gets \bbC'$\;
                    }
                }
            }
        }
    }
    \Return $\bbC$
    
    }
    \caption{Choosing intervariable clusters.}
    \label{alg:choose-intervariable-clusters}
\end{algorithm}

Consider the following example for intuition on choosing the best set of clusters.
\begin{example}
    \label{ex:learn-intervariable-clusters}

    Consider a setting of annotated image data where
    \begin{equation*}
        \*V_L = \{A, T, G, H_1, H_2, O_1, O_2, O_3, O_4, P_1, P_2, P_3, P_4\},
    \end{equation*}
    and the causal diagram $\cG$ (and noncausal diagram $\overline{\cG}$) are given in Fig.~\ref{fig:ex-learn-c-g1}. The data is collected from people, and the variables are age ($A$), gender ($G$), testosterone level ($T$), mustache hairs ($H_1$, $H_2$), atoms of other mustache hairs ($O_1, O_2, O_3, O_4$), and pixels of an image of the person $(P_1, P_2, P_3, P_4)$. The queries of interest are $\bbQ = \{Q_1, Q_2, Q_3\}$, where
    \begin{align*}
        Q_1 &= P(\{p_1, p_2, p_3, p_4\}_{h_1, h_2, o_1, o_2, o_3, o_4}) \\
        Q_2 &= P(\{p_1, p_2, p_3, p_4\}_{h_1, h_2, o_1, o_2, o_3, o_4, g}) \\
        Q_3 &= P(\{p_1, p_2, p_3, p_4\}_{a}).
    \end{align*}

    Evidently, the variables depicted in Fig.~\ref{fig:ex-learn-c-g1} are too complex, and some intervariable clustering is needed. For starters, there are interactions between the atom variables ($O$) and the pixel variables ($P$) that are too low-level to be considered causally. They should be clustered together, as described in Lemma \ref{lem:min-clusters}. This provides the graph in Fig.~\ref{fig:ex-learn-c-g2}, in which all of the $O$ variables are clustered into another hair variable $H_3$, and the pixel variables $P$ are clustered into a single image variable $I$. Formally, this clustering is
    \begin{align*}
        \bbC_1 &= \{\{A\}, \{T\}, \{G\}, \{H_1\}, \\
        &\{H_2\}, H_3 = \{O_1, O_2, O_3, O_4\}, I = \{P_1, P_2, P_3, P_4\}\}.
    \end{align*}

    While this choice of $\bbC_1$ is valid in that it satisfies conditions C1-4, there exists a coarser clustering that satisfies the conditions, shown in Fig.~\ref{fig:ex-learn-c-g3}. This one is obtained by further clustering $H_1, H_2, H_3$ into a single ``mustache'' variable $M$. Formally, this clustering is
    \begin{align*}
        \bbC_2 &= \{\{A\}, \{T\}, \{G\}, M = \{H_1, H_2, O_1, O_2, O_3, O_4\}, \\
        &I = \{P_1, P_2, P_3, P_4\}\}.
    \end{align*}
    The corresponding queries of $\tau(\bbQ)$, under this choice of $\bbC$, would be
    \begin{align*}
        Q_1 &= P(\{p_1, p_2, p_3, p_4\}_{h_1, h_2, o_1, o_2, o_3, o_4}) = P(i_{m}) \\
        Q_2 &= P(\{p_1, p_2, p_3, p_4\}_{h_1, h_2, o_1, o_2, o_3, o_4, g}) = P(i_{m,g}) \\
        Q_3 &= P(\{p_1, p_2, p_3, p_4\}_{a}) = P(i_{a}).
    \end{align*}

    If for some reason we decide to try to cluster $\bbC_2$ further, such as by clustering $A, G, M$ into a single variable $X$ (Fig.~\ref{fig:ex-learn-c-g4}), we violate condition C3, the answerability of the queries. How could we distinguish between queries like $P(I_{M=m} = i)$ and $P(I_{A=a} = i)$ if $M$ and $A$ are grouped in the same cluster? For this reason, we cannot proceed any further than $\bbC_2$, and, in fact, $\bbC_2$ is actually the maximally answerable clusters from Lemma \ref{lem:max-clusters}.
    \hfill $\blacksquare$
\end{example}

\subsection{Discussion on the Abstract Invariance Condition}
\label{app:invariance-condition}

Recall the abstract invariance condition (AIC):
\aicdef*

The AIC intuitively ensures that there is no loss of information when clustering two values together in the intravariable clusters $\bbD$. Specifically, it states that clustering two values together is safe whenever these values are interchangeable with respect to the behavior of downstream functions. That is, the functions are \emph{invariant} to changes between values in the same intravariable cluster. This is illustrated in Fig.~\ref{fig:inv-cond}. Examples \ref{ex:cholesterol-inv-cond} and \ref{ex:inv-cond-satisfied} from Sec.~\ref{sec:abstract-ncm} illustrate the subtleties of the AIC both when it holds and when it does not hold.

If the AIC is violated, then two functionally different values were placed in the same cluster. Hence, assuming that the AIC holds is reasonable provided that the intravariable clusters were chosen reasonably. For example, in a study where a drug $X$ is reasonably expected to affect blood pressure levels $Y$, it would not make sense to cluster the values $X = 0$ (drug not taken) and $X = 1$ (drug taken) together since, by design, they have different downstream effects on $Y$ and cannot be treated similarly.

On the other hand, in an image recognition setting, perhaps it does not matter for the task if the image is scaled or rotated, so these invariances can be modeled in the intravariable clustering. For example, if one image is a scaled version of another but they are otherwise the same, the two images might be placed in the same intravariable clustering without violating the AIC. Sec.~\ref{sec:applications} and Appendix \ref{app:rep-learning} discuss this phenomenon in more detail.

If a data scientist finds the AIC too strict for her settings, one option is to revise the intravariable clusters, perhaps making them less coarse. Otherwise, the abstraction may be too strong and some important information is lost.

\begin{figure}
    \centering
    \begin{subfigure}[t]{0.49\linewidth}
        \centering
        \includegraphics[width=\linewidth]{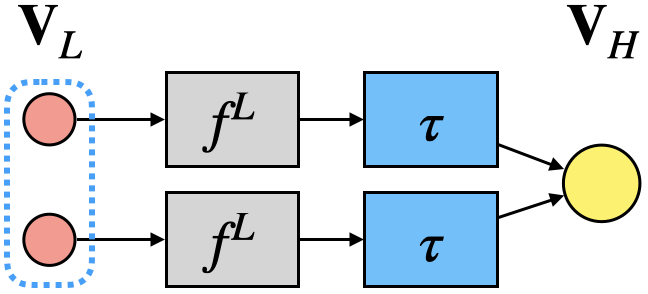}
        \caption{When the AIC holds, two values of $\*V_L$ that are in the same intravariable cluster are mapped to the same output in $\*V_H$ for any downstream function $f^L$ after applying $\tau$.}
    \end{subfigure}%
    \hfill
    \begin{subfigure}[t]{0.49\linewidth}
        \centering
        \includegraphics[width=\linewidth]{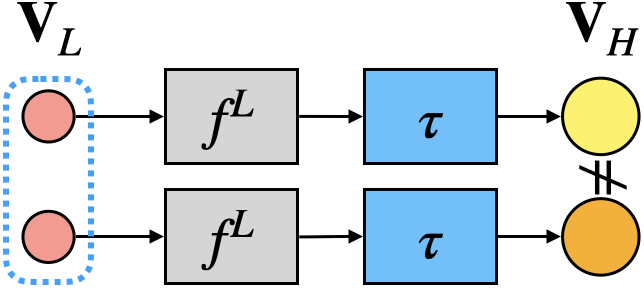}
        \caption{When the AIC does not hold, two values of $\*V_L$ that are in the same intravariable cluster may map to different outputs in $\*V_H$ for some downstream function $f^L$ after applying $\tau$.}
    \end{subfigure}
    \caption{Visualization of the AIC (Def.~\ref{def:invariance-condition}).}
    \label{fig:inv-cond}
\end{figure}

Still, there are alternatives if weaker assumptions are desired. The AIC ensures that an $\cL_3$-$\tau$ consistent model exists, as demonstrated by Alg.~\ref{alg:map-abstraction}, but it is not a necessary condition in applications that do not require full  $\cL_3$-$\tau$ consistency. For example, if one only works on the interventional level ($\cL_2$), one may not necessarily care if consistencies between $\cL_3$ quantities are lost.

In the strictest form, making as minimal assumptions as needed for the task, a query-specific AIC can be assumed. That is,
\begin{definition}
    \label{def:query-invariance-cond}
    Let $\bbQ$ be a set of queries of the form $P^{\cM_H}(\*y_{H,1[\*x_{H, 1}]}, \*y_{H,2[\*x_{H, 2}]}, \dots)$. For all $Q \in \bbQ$, we have
    \begin{align*}
        & \sum_{\forall i \*y_{L, i} \in \cD_{\*Y_{L, i}} : \tau(\*y_{L, i}) = \*y_{H, i}} \hspace{-1cm} P^{\cM_L}(\*y_{L,1[\*x_{L, 1}]}, \*y_{L,2[\*x_{L, 2}]}, \dots) \\
        &= \sum_{\forall i \*y_{L, i} \in \cD_{\*Y_{L, i}} : \tau(\*y_{L, i}) = \*y_{H, i}} \hspace{-1cm} P^{\cM_L}(\*y_{L,1[\*x'_{L, 1}]}, \*y_{L,2[\*x'_{L, 2}]}, \dots),
    \end{align*}
    for all $\*x_{L, i}, \*x'_{L, i}$ such that $\tau(\*x_{L, i}) = \tau(\*x'_{L, i}) = \*x_{H, i}$.
    \hfill $\blacksquare$
\end{definition}

In words, $\tau$ satisfies Def.~\ref{def:query-invariance-cond} if and only if there exists $\cM_H$ that is $Q$-$\tau$ consistent with $\cM_L$ for all $Q \in \bbQ$, since all such variations of $Q$ have the same probability. Still, it may be more useful to have a general criterion that is not query-dependent.

It turns out that the study of weaker forms of the AIC is not new and is sfirst discussed in \citet{10.5555/3020847.3020867}. Broadly, the paper studies a similar abstraction setting in which causal macro-variables are learned from observational data from the low-level micro-variables. Similar to the intravariable learning problem posed in Sec.~\ref{sec:applications}, the paper aims to learn intravariable clusters of a low-level high-dimensional variable $I$ (e.g., an image), which causes some other variable $T$, possibly with unobserved confounding. However, while the results of Sec.~\ref{sec:applications} focus on satisfying the AIC in Def.~\ref{def:invariance-condition}, \citet{10.5555/3020847.3020867} focuses on satisfying weaker conditions that are verifiable through data.

More specifically, consider the setting proposed in \citet{10.5555/3020847.3020867} illustrated by the causal diagram in Fig.~\ref{fig:chalupka-setting}. Note that Eq.~\ref{eq:aic} in Def.~\ref{def:invariance-condition} requires that
\begin{equation}
    \tau(f_T(i_1, \ui{T})) = \tau(f_T(i_2, \ui{T})),
\end{equation}
for any $i_1, i_2 \in \cD_{I}$ that are chosen to be clustered in the same intravariable cluster. This condition is a requirement on the SCM function $f_T$, which, as demonstrated by Lem.~\ref{prop:full-intra-cluster}, implies that one cannot cluster together any two values $i_1, i_2$ without further information about $f_T$, which may be difficult to acquire due to the unobserved nature of the SCM. Instead, the paper proposes two other properties. Notably, two values $i_1$ and $i_2$ are part of the same \emph{observational partition} \citep[Def.~1]{10.5555/3020847.3020867} if
\begin{equation}
    \label{eq:observational-partition}
    P(t \mid i_1) = P(t \mid i_2),
\end{equation}
and they are part of the same \emph{causal partition} \citep[Def.~3]{10.5555/3020847.3020867} if
\begin{equation}
    \label{eq:causal-partition}
    P(t_{i_1}) = P(t_{i_2}).
\end{equation}
Values of $\cD_I$ can easily be clustered according to Eq.~\ref{eq:observational-partition} using the observational data. However, Eq.~\ref{eq:causal-partition} may be a more desirable property if the goal of downstream tasks is to perform causal inferences from $I$ to $T$.

It turns out that a clustering learned according to Eq.~\ref{eq:observational-partition} typically also satisfies Eq.~\ref{eq:causal-partition} according to a major proven result of the paper, the \emph{Causal Coarsening Theorem} \citep[Thm.~5]{10.5555/3020847.3020867}, which states (in informal terms) that the maximally coarse clustering that satisfies Eq.~\ref{eq:causal-partition} is almost always a coarser clustering of the maximally coarse clustering that satisfies Eq.~\ref{eq:observational-partition}. This essentially implies that, under certain faithfulness conditions, one can achieve the property of Eq.~\ref{eq:causal-partition} simply as a consequence of achieving Eq.~\ref{eq:observational-partition}, which can be accomplished with observational data alone.

The concepts of the partitions described by Eqs.~\ref{eq:observational-partition} and \ref{eq:causal-partition} can be generalized to the general SCM/graph case. Suppose we are given the inter/intravariable clusters $\bbC$ and $\bbD$. Consider a variant of the AIC applied on conditional probabilties:
\begin{definition}[Conditional Abstract Invariance Condition]
    Let $P(\*V_L)$ be an observational distribution over $\*V_L$, and let $\tau: \cD_{\*V_L} \rightarrow \cD_{\*V_H}$ be a constructive abstraction function relative to $\bbC$ and $\bbD$. $P(\*V_L)$ is said to satisfy the conditional abstract invariance condition (conditional AIC, for short) w.r.t.~$\tau$ if, for all $\*X_L$ where $\*X_L$ is a union of clusters of $\bbC$, all $\*x_1, \*x_2 \in \cD_{\*X_L}$ such that $\tau(\*x_1) = \tau(\*x_2)$, and all $\*v_H \in \cD_{\*V_H}$, we have
    \begin{equation}
        \sum_{\*v_{L} \in \cD_{\*V_{L}} : \tau(\*v_{L}) = \*v_{H}} \hspace{-1cm} P(\*v_{L} \mid \*x_1) = \sum_{\*v_{L} \in \cD_{\*V_{L}} : \tau(\*v_{L}) = \*v_{H}} \hspace{-1cm} P(\*v_L \mid \*x_2).
    \end{equation}
    \hfill $\blacksquare$
\end{definition}

\begin{figure}
    \centering
    \begin{tikzpicture}[xscale=1, yscale=1.0]
        \node[draw, circle] (I) at (-1, 0) {$I$};
        \node[draw, circle] (T) at (1, 0) {$T$};
        
        \path [-{Latex[length=2mm]}] (I) edge (T);
        \path [{Latex[length=2mm]}-{Latex[length=2mm]}, dashed, bend left] (I) edge (T);
    \end{tikzpicture}
    \caption{Problem setting in \citet{10.5555/3020847.3020867}.}
    \label{fig:chalupka-setting}
\end{figure}

That is, two values can be clustered together as long as their conditional probabilities do not change in the space of $\*V_H$. Note that this condition is entirely focused on the observational distributions $P(\*V_L)$ and $P(\*V_H)$ and make no requirements over any distributions from the higher layers, $\cL_2$ and $\cL_3$.

Now consider an interventional variant:
\begin{definition}[Interventional Abstract Invariance Condition]
    Let $\cL_2(\cM_L)$ be a collection of interventional distributions over $\*V_L$, and let $\tau: \cD_{\*V_L} \rightarrow \cD_{\*V_H}$ be a constructive abstraction function relative to $\bbC$ and $\bbD$. We say that $\cL_2(\cM_L)$ satisfies the interventional abstract invariance condition (interventional AIC, for short) w.r.t.~$\tau$ if, for all $\*X_L$ where $\*X_L$ is a union of clusters of $\bbC$, all $\*x_1, \*x_2 \in \cD_{\*X_L}$ such that $\tau(\*x_1) = \tau(\*x_2)$, and all $\*v_H \in \cD_{\*V_H}$, we have
    \begin{equation}
        \sum_{\*v_{L} \in \cD_{\*V_{L}} : \tau(\*v_{L}) = \*v_{H}} \hspace{-1cm} P(\*v_{L[\*x_1]}) = \sum_{\*v_{L} \in \cD_{\*V_{L}} : \tau(\*v_{L}) = \*v_{H}} \hspace{-1cm} P(\*v_{L[\*x_2]}).
    \end{equation}
    \hfill $\blacksquare$
\end{definition}
This states that two values can be clustered together as long as the corresponding interventional distributions do not change in the space of $\*V_H$.

These two definitions may be useful when looking to find abstractions while working on lower layers of the PCH. When given the C-DAG $\cG_{\bbC}$, these can be refined to specific queries based on parent-child relationships in the graph. Although not shown formally here, it is likely that a more general variant of the Causal Coarsening Theorem could be proven for these two definitions. Nonetheless, using these definitions in place of the full AIC from Def.~\ref{def:invariance-condition} has its limitations; for example, it may no longer be possible to perform counterfactual inferences on the high-level model. We encourage further research on this topic, comparing the relationships between variations of the AIC.

\subsection{Representation Learning with Intravariable Clusters}
\label{app:rep-learning}

Recall the following proposition from Sec.~\ref{sec:applications}:
\fullintracluster*

When learning intravariable clusters, Prop.~\ref{prop:full-intra-cluster} states that without extra information, each value needs to be put in its own cluster. However, if extra information is available for use, this can be leveraged to improve the choice of $\bbD$. In practice, this is common, and is generally provided in the form of invariance assumptions. As a intuitive example, suppose that $X$ is an image of an animal, $Y$ is the corresponding label, and $f_Y(x, u_Y)$ describes the mechanism that humans use to classify $X$ (here, $u_Y$ is the exogenous noise). Suppose that $g(x, \phi)$ is a function that rotates $x$ by $\phi$ degrees. It is the case that the classification of animals is rotationally invariant, that is, $f_Y(g(x, \phi), u_Y) = f_Y(x, u_Y)$ for all $x$, $\phi$, $u_Y$. Then, in fact, $\bbD$ can be defined such that values of $x_1, x_2$ are clustered together if $g(x_1, \phi) = x_2$, as shown by the following result.

\begin{proposition}
    \label{prop:invariance-info}
    For each $\*C_i \in \bbC$, suppose that there exists function $g_{\*C_i}: \cD_{\*C_i} \times \cD_{\phi_{\*C_i}} \rightarrow \cD_{\*C_i}$ (with parameters $\phi_{\*C_i}$) such that
    \begin{equation}
    f^L_V(\*c_i, \*u_V) = f^L_V(g_{\*C_i}(\*c_i, \phi_{\*C_i}), \*u_V)
    \end{equation}
    for all $V$ that are children of $\*C_i$ and all $\phi_{\*C_i} \in \cD_{\phi_{\*C_i}}$. Then, $\cM_L$ satisfies the AIC w.r.t.~any constructive abstraction function with $\bbD$ such that $\*c_i^{(1)}, \*c_i^{(2)}$ being in the same intravariable cluster implies that $g_{\*C_i}(\*c_i^{(1)}, \phi_{\*C_i}) = \*c_i^{(2)}$ for some $\phi_{\*C_i}$.
    \hfill $\blacksquare$

    \begin{proof}
        Consider the proposed set of intravariable clusters $\bbD$. Let $\*v_1, \*v_2 \in \cD_{\*V_L}$ be two values such that $\tau(\*v_1) = \tau(\*v_2)$. The goal is to show that for all $\*u \in \cD_{\*U_L}$ and $\*C_k \in \bbC$, we have
        \begin{equation}
            \label{eq:inv-cond-inv-proof}
            \begin{split}
                & \tau \left( \left( f^L_V(\pai{V}^{(1)}, \*u_V): V \in \*C_k \right) \right) \\
                &= \tau \left( \left( f^L_V(\pai{V}^{(2)}, \*u_V): V \in \*C_k \right) \right),
            \end{split}
        \end{equation}
        where $\pai{V}^{(1)}$ and $\pai{V}^{(2)}$ are the values corresponding to $\*v_1$ and $\*v_2$ respectively. If $\tau(\*v_1) = \tau(\*v_2)$, that implies that for all $\*C_i \in \bbC$ (where $\*c_i^{(1)}$ and $\*c_i^{(2)}$ correspond to the values of $\*v_1$ and $\*v_2$ respectively), $\*c_i^{(1)}$ and $\*c_i^{(2)}$ must be in the same intravariable cluster in $\bbD_{\*C_i}$ (this includes the case where $\*c_i^{(1)} = \*c_i^{(2)}$). By construction of $\bbD$, this is only possible if $g_{\*C_i}(\*c_i^{(1)}, \phi_{\*C_i}) = \*c_i^{(2)}$ for some $\phi_{\*C_i}$.
        
        For $g_{V} = \{g_{\*C_i}: \*C_i \cap \Pai{V} \neq \emptyset\}$, denote $g_{V}(\pai{V}^{(1)}, \phi)$ as the collection of outputs of $g_{\*C_i}(\*c_i^{(1)}, \phi_{\*C_i})$ for the functions of $g_{\*C_i} \in g_{V}$, which, as established earlier, is equal to $\pai{V}^{(2)}$. Then we have
        \begin{align*}
            & \tau \left( \left( f^L_V(\pai{V}^{(1)}, \*u_V): V \in \*C_k \right) \right) \\
            &= \tau \left( \left( f^L_V(g_V(\pai{V}^{(1)}, \phi), \*u_V): V \in \*C_k \right) \right) \\
            &= \tau \left( \left( f^L_V(\pai{V}^{(2)}, \*u_V): V \in \*C_k \right) \right),
        \end{align*}
        completing the proof.
    \end{proof}
\end{proposition}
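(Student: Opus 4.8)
The plan is to verify Definition~\ref{def:invariance-condition} directly, exploiting the fact that a constructive abstraction function acts cluster-by-cluster. First I would fix arbitrary $\*v_1, \*v_2 \in \cD_{\*V_L}$ with $\tau(\*v_1) = \tau(\*v_2)$, an arbitrary $\*u \in \cD_{\*U_L}$, and an arbitrary target cluster $\*C_k \in \bbC$. Since $\tau$ is composed of the subfunctions $\tau_{\*C_j}$ (Def.~\ref{def:tau}), the equality $\tau(\*v_1) = \tau(\*v_2)$ forces $\tau_{\*C_j}(\*c_j^{(1)}) = \tau_{\*C_j}(\*c_j^{(2)})$ for every cluster $\*C_j$, i.e.\ the restrictions $\*c_j^{(1)}$ and $\*c_j^{(2)}$ of $\*v_1, \*v_2$ to $\*C_j$ lie in the same element of $\bbD_{\*C_j}$. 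By the hypothesis placed on $\bbD$, this yields, for each $j$, a parameter $\phi_{\*C_j}$ with $g_{\*C_j}(\*c_j^{(1)}, \phi_{\*C_j}) = \*c_j^{(2)}$ (including the trivial case $\*c_j^{(1)} = \*c_j^{(2)}$).

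Next I would argue that $f^L_V(\pai{V}^{(1)}, \*u_V) = f^L_V(\pai{V}^{(2)}, \*u_V)$ for every $V \in \*C_k$, where $\pai{V}^{(1)}, \pai{V}^{(2)}$ are the parent-values induced by $\*v_1, \*v_2$. The parents $\Pai{V}$ decompose across the clusters $\*C_j$ that meet $\Pai{V}$, and $V$ is a child of each such $\*C_j$, so the assumed invariance $f^L_V(\*c_j, \*u_V) = f^L_V(g_{\*C_j}(\*c_j, \phi_{\*C_j}), \*u_V)$ applies. Applying it one cluster at a time --- each application replacing the $\*C_j$-coordinates of the argument $\pai{V}^{(1)}$ by $g_{\*C_j}(\*c_j^{(1)}, \phi_{\*C_j}) = \*c_j^{(2)}$ without changing the output --- transforms $\pai{V}^{(1)}$ into $\pai{V}^{(2)}$ while leaving $f^L_V$ invariant, giving the claimed equality. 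Equivalently, one can bundle the $g_{\*C_j}$ relevant to $V$ into a single map $g_V$ and note $g_V(\pai{V}^{(1)}, \phi) = \pai{V}^{(2)}$.

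Consequently the tuples $\bigl(f^L_V(\pai{V}^{(1)}, \*u_V) : V \in \*C_k\bigr)$ and $\bigl(f^L_V(\pai{V}^{(2)}, \*u_V) : V \in \*C_k\bigr)$ are in fact identical, so a fortiori they agree after applying $\tau_{\*C_k}$; since $\*v_1, \*v_2, \*u, \*C_k$ were arbitrary, Eq.~\ref{eq:aic} holds and $\cM_L$ satisfies the AIC w.r.t.~$\tau$.

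I expect the only delicate point to be the bookkeeping in the chaining step: one must be careful that ``applying $g_{\*C_j}$ inside $f^L_V$'' touches only the coordinates of $\pai{V}$ living in $\*C_j$, that the order of the cluster-by-cluster substitutions is immaterial, and that the invariance hypothesis --- stated for every child $V$ of $\*C_j$ and for \emph{all} $\phi_{\*C_j}$ --- genuinely licenses substituting the specific $\phi_{\*C_j}$ selected in the first step. Everything else is a routine unwinding of the constructive definition of $\tau$.
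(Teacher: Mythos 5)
Your proposal is correct and follows essentially the same route as the paper's proof: unpack $\tau(\*v_1)=\tau(\*v_2)$ cluster-by-cluster to extract the parameters $\phi_{\*C_j}$ with $g_{\*C_j}(\*c_j^{(1)},\phi_{\*C_j})=\*c_j^{(2)}$, then use the assumed invariance of each $f^L_V$ to show the output tuples over $\*C_k$ coincide before $\tau_{\*C_k}$ is even applied, which gives Eq.~\ref{eq:aic}. Your explicit cluster-by-cluster chaining is just a more careful spelling-out of the paper's one-shot application of the bundled map $g_V$, so no substantive difference.
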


For intuition, consider the following simple example.

\begin{example}
    Consider a situation where $Y$ is some binary variable that is a (noisy) bitwise AND of two other binary variables $X_1$ and $X_2$. For example, perhaps $Y$ denotes whether a law is enacted, and $X_1$ and $X_2$ denotes the votes of the two branches of government, like in Ex.~\ref{ex:allowed-interventions}. Formally, let $\cM_L$ be defined as follows:
    \begin{align*}
        \*U_L &= \{U_{X_1}, U_{X_2}, U_Y\}, \text{ all binary} \\
        \*V_L &= \{X_1, X_2, Y\}, \text{ all binary} \\
        \cF_L &=
        \begin{cases}
            X_1 \gets f^L_{X_1}(u_{X_1}) &= u_{X_1} \\
            X_2 \gets f^L_{X_2}(u_{X_2}) &= u_{X_2} \\
            Y \gets f^L_Y(x_1, x_2, u_Y) &= (x_1 \wedge x_2) \oplus u_Y
        \end{cases} \\
        P(\*U_L) &=
        \begin{cases}
            P(U_{X_1} = 1) &= P(U_{X_2} = 1) = 0.5 \\
            P(U_Y = 1) &= 0.1
        \end{cases}
    \end{align*}
    Now suppose we want to create an abstraction of $\cM_L$ using the intervariable clusters $\bbC = \{\*C_1 = \{X_1, X_2\}, \*C_2 = \{Y\}\}$. The domain $\cD_{\*C_1}$ has four values as $X_1$ and $X_2$ can both be either 0 or 1. However, as emphasized by Prop.~\ref{prop:full-intra-cluster}, the only intravariable clustering we can choose without additional information is the one where each value is in its own cluster. In other words, $\bbD_{\*C_1} = \{\{(X_1 = 0, X_2 = 0)\}, \{(X_1 = 0, X_2 = 1)\}, \{(X_1 = 1, X_2 = 0)\}, \{(X_1 = 1, X_2 = 1)\}\}$.

    However, suppose we are given additional information that $f_Y$ is \emph{permutation invariant} to its endogenous inputs, i.e.,
    \begin{equation}
    f_Y(x_1, x_2, u_Y) = f_Y(x_2, x_1, u_Y).
    \end{equation}
    In the notation of Prop.~\ref{prop:invariance-info}, we can define $\phi_{\*C_1}$ to be a binary variable such that 0 means original order and 1 means reversed. Then we can define $g(x_1, x_2, \phi_{\*C_1}) = (x_1, x_2)$ if $\phi_{\*C_1} = 0$ or $(x_2, x_1)$ if $\phi_{\*C_1} = 1$. Then, this implies that $f_Y(x_1, x_2, u_Y) = f_Y(g(x_1, x_2, \phi_{\*C_1}), u_Y)$ for any choice of $\phi_{\*C_1}$.

    By Prop.~\ref{prop:invariance-info}, this implies that $(X_1 = 0, X_2 = 1)$ and $(X_1 = 1, X_2 = 0)$ can be placed in the same intravariable cluster without violating the AIC. Indeed, the function $f^L_Y(x_1, x_2, u_Y) = (x_1 \wedge x_2) \oplus u_Y$ does not change when $x_1$ and $x_2$ are swapped, so we have
    \begin{equation*}
        \begin{split}
            & \tau \left( \left( f^L_Y(X_1 = 0, X_2 = 1, \*u_V): V \in \*C_k \right) \right) \\
            &= \tau \left( \left( f^L_Y(X_1 = 1, X_2 = 0, \*u_V): V \in \*C_k \right) \right),
        \end{split}
    \end{equation*}
    confirming that the AIC still holds when $(X_1 = 0, X_2 = 1)$ and $(X_1 = 1, X_2 = 0)$ can be placed in the same intravariable cluster.
    \hfill $\blacksquare$
\end{example}

In practice, this invariance information can be incorporated in the process of learning the intravariable clusters while training $\widehat{\tau}$ in an RNCM, as described in Def.~\ref{def:rncm}. To take into account Prop.~\ref{prop:invariance-info}, the training objective could include a term to enforce the invariance specified by $g_{\*C_i}$ (e.g.~through a penalty on $\tau_{\*C_i}(\*c_i, \bm{\theta}_{\*C_i}) - \tau_{\*C_i}(g_{\*C_i}(\*c_i, \phi_{\*C_i}), \bm{\theta}_{\*C_i})$).

In fact, it turns out that the idea of intravariable clusters works in tandem with techniques in deep representation learning when it comes to incorporating invariances in the data. In the case of image data for example, many works in computer vision have already leveraged general patterns found in images (e.g.~rotations, crops, flips, etc.~do not affect classification) to achieve faster training with less data \citep{Shorten2019ASO}. Permutation invariance concepts \citep{10.5555/3294996.3295098, murphy2018janossy} have been used as pooling functions for convolutional neural networks \citep{10.5555/303568.303704}. Existing frameworks for representation learning such as through contrastive methods \citep{DBLP:conf/icml/ChenK0H20} can be used for learning $\tau$.

\section{Additional Examples} \label{app:examples}

This section contains examples to improve the clarity of concepts in the paper.

\subsection{Sec.~\ref{sec:abstract-ncm} Examples}
We provide more examples of the concepts in Sec.~\ref{sec:abstract-ncm}, namely, abstractions constructed through inter/intravariable clusters. In addition to Example \ref{ex:bmi} within the section, we provide a more involved example below.

\begin{example}
    \label{ex:clusters}
    Suppose an economist is studying the effects of implementing a new type of government policy on recession prevention. The economist records data on several variables of interest: whether the policy is implemented ($X$); whether the policy is lobbied ($W$); economic spending in terms of consumption ($C$), investment ($I$), government spending ($G$), imports ($M$), and exports ($E$); and whether or not there is a recession ($Y$). Out of these variables, $Z$, $X$, and $Y$ are binary, and $C$, $I$, $G$, $M$, and $E$ are numerical values representing how much the spending has .changed relative to the previous year (in billions of dollars).
    
    Suppose the true SCM $\cM^* = \cM_L$ is defined as follows: 
    \begin{align*}
        \*U_L &= \{U_W, U_X, U_{XY}, U_C, U_I, U_G, U_M, U_E, U_{IG}, \\
        & U_{CM}, U_{EM}\} \\
        \*V_L &= \{W, X, C, I, G, M, E, Y\} \\
        \cF_L &= \{\\
        & W \gets f_W(u_W) = u_W \\
        & X \gets f_X(w, u_X, u_{XY}) = w \oplus u_X \oplus u_{XY} \\
        & C \gets f_C(x, g, u_C) = 5x - 0.2g + u_C + u_{CM} \\
        & I \gets f_I(x, u_I) = -5x + u_I + u_{IG} \\
        & G \gets f_G(x, u_G) = 10x + u_G + u_{IG} \\
        & E \gets f_E(u_E) = u_E + u_{EM} \\
        & M \gets f_M(c, u_M) = 0.2c + u_M + u_{CM} - u_{EM} \\
        & Y \gets f_Y(c, i, g, e, m, u_{XY}) = \\
        & \quad \mathbf{1}\{c + i + g + e - m \leq 0\} \oplus u_{XY}\\
        P(\*U_L) &=
        \begin{cases}
            U_W, U_X &\sim \bern(0.5) \\
            U_{XY} &\sim \bern(0.1) \\
            U_C, U_I, U_G, U_M, U_E &\sim N(0, 10) \\
            U_{IG}, U_{CM}, U_{EM} &\sim N(0, 2)
        \end{cases}
    \end{align*}
    
    To summarize, the policy has some impact on the consumption, investment, and government spending. Whether or not there is a recession depends on imports and exports in addition to all of these factors. Indeed, despite the fact that this is a toy example, the SCM $\cM^*$ is already quite complex to describe with this many variables and functions, and perhaps this level of detail is not needed to achieve the inference we desire. We will see how utilizing abstractions can help with this.
    
    We first note that perhaps $\*V_L$ contains too many micro-level variables that can be summarized with a smaller set of abstract higher-level variables. Further, perhaps using real-valued variables is overly complex, and the same phenomena can be described without loss of generality using a lower-dimensional space. For entertaining these considerations, we utilize the idea of intervariable and intravariable clusters.

	Suppose we are only interested in the causal effect of the policy $X$ on recession $Y$. We could study the same phenomenon under a simpler set of variables. To do so, we can cluster variables of $\*V_L$ to form a new set of macro-level variables. First, note that although we may have data on $C$, $I$, $G$, $E$, and $M$, it may make more sense to simply cluster them together and consider them as one variable (e.g. GDP). Further, perhaps we may decide that $W$ is not relevant to the analysis and exclude it from the study. We leave these clustering decisions at the discretion of the data scientist (e.g. the economist in this example).
	
	By the definition of intervariable clusters from Def.~\ref{def:var-clusterings}, we can choose clusters $\bbC = \{\*C_1 = \{X\}, \*C_2 = \{C, I, G, E, M\}, \*C_3 = \{Y\}\}$. We leave $X$ and $Y$ in their own clusters while grouping all of $C$, $I$, $G$, $E$, $M$ into one cluster $\*C_2$. $W$ is excluded from all of the clusters and is effectively projected out of the system. We can then treat $\*C_1$, $\*C_2$, and $\*C_3$ as our new variables. Let us relabel them $X$, $Z$, and $Y$ respectively, where $Z$. We can define our higher level variables as $\*V_H = \{X, Z, Y\}$.
	
	Further, it is not immediately clear what the domain of $Z$ is. Certainly, it could be left as a tuple $(C, I, G, E, M) \in \bbR^5$. However, this would be an overly complex representation of $Z$, and we do not need to retain all of the joint information of $C, I, G, E, M$. In other words, $\cD_{Z}$ does not have to be equal to $\cD_{\*C_2}$ and can be represented more compactly. Instead, we may choose to define $Z$ as the GDP, or $Z = C + I + G - E - M$. This can be described through the use of intravariable clusters.
	
	By the definition of intravariable clusters from Def.~\ref{def:var-clusterings},  we must choose $\bbD = \{\bbD_{\*C_1}, \bbD_{\*C_2}, \bbD_{\*C_3}\}$, where $\bbD_{\*C_i}$ is a partitioning of the domain $\cD_{\*C_i}$. Since $X$ and $Y$ are already binary variables, we will not be able to group their values together, so we simply define $\bbD_{\*C_1}$ and $\bbD_{\*C_2}$ as $\{\{0\}, \{1\}\}$. For $\*C_2$ however, we must cluster values of $(C, I, G, E, M)$ such that $(c_1, i_1, g_1, e_1, m_1), (c_2, i_2, g_2, e_2, m_2) \in \cD_{\*C_2}$ are in the same cluster if and only if $c_1 + i_1 + g_1 + e_1 - m_1 = c_2 + i_2 + g_2 + e_2 - m_2$. In other words, we can, for example, define $\bbD_{\*C_2} = \{\cD_{\*C_2}^j : (c, i, g, e, m) \in \cD_{\*C_2}, c + i + g + e - m = j\}$.
	
	Now, with the intravariable clusters $\bbD$ defined, we can choose the domains of $\*V_H$ as simply their corresponding clusters in $\bbD$. That is, each value of $Z$ corresponds to some $\cD_{\*C_2}^j$. In fact, we can simply set $Z = j$, where $j = C + I + G + E - M$, which intuitively corresponds to the idea that $Z$ represents the annual change in GDP. Note that the domain of $Z$, $\cD_{Z}$ becomes smaller than $\cD_{\*C_2}$ in some sense. First, it is lower dimensional ($\bbR$ instead of $\bbR^5$), and second, there are clearly values of $(C, I, G, E, M)$ that are mapped to the same value of $Z$. For example, $(C = 1, I = 2, G = 3, E = 4, M = 2$ maps to the same value as $(C = 3, I = 2, G = 1, E = 5, M = 3)$ because $1 + 2 + 3 + 4 - 2 = 3 + 2 + 1 + 5 - 3 = 8$.
	
    However, perhaps this level of dimensionality reduction is insufficient. In terms of cardinality, $\cD_Z$ is the same size as $\cD_{\*C_2}$ as the cardinality of $\bbR$ and $\bbR^5$ are the same. Perhaps the domain of $Z$ could be compressed further. Indeed, we could turn $Z$ into a binary variable by defining $\bbD_{\*C_2} = \{\cD^0_{\*C_2}, \cD^1_{\*C_2}\}$, where $(c, i, g, e, m) \in \cD^0_{\*C_2}$ if $c + i + g + e - m > 0$ or it is in $\cD^1_{\*C_2}$ otherwise. In other words, we drop all information about the annual change in GDP except whether it is positive or negative. We will use this clustering for the rest of the examples, and we will see in later examples why this clustering is allowed and makes sense. Sec.~\ref{sec:applications} expands on the general discussion of which intravariable cluster choices are allowed.
    
    With the idea of inter/intravariable clusters, the concept of constructive abstraction functions can be established as in Def.~\ref{def:tau}. From Example \ref{ex:clusters}, the function $\tau$ constructed from $\bbC$ and $\bbD$ is clear. For example, consider the value $\*v_L = (W=0, X=1, C=2.6, I=-1.2, G=10.2, E=0.4, M=1.2, Y=0)$. We can compute $\*v_H = \tau(\*v_L)$ and find that $\*v_H = (X = 1, Z = 0, Y = 0)$. In this case, $\tau = (\tau_{\*C_1}, \tau_{\*C_2}, \tau_{\*C_3})$, where $\tau_{\*C_1}$ and $\tau_{\*C_3}$ are the identity function on $X$ and $Y$ respectively, and $\tau_{\*C_{2}}(C=2.6,I=-1.2,G=10.2,E=0.4,M=1.2) = \mathbf{1}\{2.6 - 1.2 + 10.2 + 0.4 - 1.2 \leq 0\} = 0$. That is, for $\*C_2$, the intravariable clusters are defined such that values of $(c, i, g, e, m)$ are divided into two categories depending on whether $c + i + g + e - m \leq 0$, which we can arbitrarily choose as the binary values $0$ and $1$. This provides a mapping from every value of $\cD_{\*V_L}$ to some value of $\cD_{\*V_H}$.
\hfill $\blacksquare$
\end{example}

Note that $\tau$ alone does not define an abstraction. While $\tau$ provides a well-defined mapping from $\*V_L$ to $\*V_H$, not every SCM defined over $\*V_H$ can be considered an abstraction of $\cM^*$. Consider the following example.

\begin{example}
    \label{ex:bad-tau}
    Consider the SCM $\cM_H$ defined as follows:
    \[
    \cM_H = 
    \begin{cases}
    	\*U_H &= \{U_X, U_Z, U_Y\} \\
    	\*V_H &= \{X, Z, Y\} \\
    	\cF_H &=
    	\begin{cases}
    		X \gets f_X(u_X) &= u_X \\
                Z \gets f_Z(u_Z) &= u_Z \\
                Y \gets f_Y(u_Y) &= u_Y
    	\end{cases} \\
    	P(\*U_H) &= U_X, U_Z, U_Y \sim \bern(0.5)
    \end{cases}
    \]
    One could argue that $\cM_H$ is defined over $\*V_H$, constructed via $\tau$ from Example \ref{ex:clusters}. However, it is trivial to see that $\cM_H$ retains none of the meaning of the variables of $\*V_H$ intended by $\tau$ and is clearly oblivious of $\cM_L$. After all, there is not even any causal relationship between the variables defined in $\cM_H$. Intuitively, $\cM_H$ is clearly not an abstraction of $\cM^*$, and this is reinforced by the fact that $\cM_H$ does not match common definitions of abstractions, such as Def.~\ref{def:tau-abs}.

    However, assuming that we do not have access to $\cM_H$ or $\cM^*$, and can only observe them through their distributions of the PCH, how could we tell that $\cM_H$ is not an abstraction of $\cM^*$? Certainly, there are inconsistencies in the distribution too. For example, note that
    \begin{align*}
        & P^{\cM^*}(C + I + G + E - M > 0 \mid X = 1) \\
        &= P(5 - 0.2g + u_C + u_{CM} - 5 + u_I + u_{IG} + 10 + u_G \\
        &+ u_{IG} + u_E + u_{EM} - 0.2c - u_M - u_{CM} - u_{EM} > 0) \\
        &= P(5 - 0.2(10 + u_G + u_{IG}) + u_C + u_{CM} - 5 + u_I \\
        &+ u_{IG} + 10 + u_G + u_{IG} + u_E + u_{EM} \\
        &- 0.2(5 - 0.2(10 + u_G + u_{IG}) + u_C + u_{CM}) \\
        &- u_M - u_{CM} - u_{EM} > 0) \\
        &= P(7.4 + 0.84u_G + 1.8u_{IG} + 0.8u_C -0.2u_{CM} \\ 
        &+ 1.04u_I + u_E - u_M > 0) \\
        &= P(N(0, 7.056) + N(0, 6.48) + N(0, 6.4) + N(0, 0.08) \\
        &+ N(0, 10.816) + N(0, 10) + N(0, 10) > -7.4) \\
        &= P(N(0, 50.832) > -7.4) \approx 0.85.
    \end{align*}
    On the other hand, we see that $P^{\cM_H}(Z = 0 \mid X = 1) = P^{\cM_H}(Z=0) = P(U_Z = 0) = 0.5$. We would expect that if $\cM_H$ were an abstraction of $\cM^*$, then $P^{\cM_H}(Z = 0 \mid X = 1)$ should match $P^{\cM^*}(C + I + G + E - M > 0 \mid X=1)$, which is evidently not the case.
    
    The intuition behind this connection is that $\tau$ directly maps cases of $C + I + G + E - M > 0$ to cases of $Z = 0$, so their corresponding probabilities should be the same. If $Q = P^{\cM^*}(C + I + G + E - M > 0 \mid X=1)$, we would say that $\cM_H$ is not $Q$-$\tau$ consistent with $\cM^*$, via Def.~\ref{def:q-tau-consistency}.
    \hfill $\blacksquare$
\end{example}

Not all functions mapping between two spaces of variables are constructive abstraction functions. Some abstraction functions, while not being constructive, may still have qualities that are expected from abstractions. However, the lack of structure in these functions make certain features of the abstraction models difficult to define, such as the concept of $Q$-$\tau$ consistency. The following example describes a few such cases.

\begin{example}
    \label{ex:non-constructive-tau}
    To start, we note that when fixing the intervariable and intravariable clusters $\bbC$ and $\bbD$, the corresponding constructive abstraction function $\tau$ is deterministic, and the corresponding $\*V_H$ is fixed, up to a bijective mapping of its domain. For example, suppose $\bbC = \{\*A, \*B, \*C\}$, and $\bbD_{\*A} = \{\*a_1, \*a_2\}, \bbD_{\*B} = \{\*b_1, \*b_2\}, \bbD_{\*C} = \{\*c_1, \*c_2\}$. However, let $\*V_H = \{X, Y, Z, W\}$, where $\cD_{X} = \cD_Y = \cD_Z = \{0, 1\}$ and $\cD_{W} = \{0, 1, 2\}$. While a function could be constructed from $\bbD$ and $\bbC$, it could not be a valid mapping to $\*V_H$. First of all, there are four variables in $\*V_H$ but only three clusters in $\bbC$, so the mapping could not be one-to-one.
    
    Even if $X$ were removed, all of the intervariable clusters have values clustered into two sets, but $W$ is a ternary variable, so a one-to-one mapping is not possible with any of the intervariable clusters. However, if $W$ were removed from $\*V_H$, then $\tau$ could be a valid constructive abstraction mapping between $\*V_L$ and $\*V_H$, since the other three variables are binary. The number of variables of $\*V_H$ and their domain sizes match the number of clusters of $\bbC$ and the number within each cluster of $\bbD$, respectively. In fact, at least without considering the model distributions, any cluster of $\bbC$ could map to any of the variables of $\*V_H \setminus \{W\}$ as they are isomorphic.

    In other words, $\tau$ fails to be a constructive tau abstraction for any choice of $\*V_H$ that is incompatible with $\bbC$ and $\bbD$. Intuitively, this means that when given $\bbC$ and $\bbD$, the function $\tau$ and high level variables $\*V_H$ are already fixed, which is how the rest of this paper is framed. However, there can exist cases with some function $\tau$ mapping from $\*V_L$ to $\*V_H$ such that $\tau$ cannot be considered a constructive abstraction function for any choice of $\bbC$ and $\bbD$. Some cases may even appear to be valid abstractions intuitively.

    Consider an example of a company board setting. Suppose in a company, the board consists of the CEO, Alice ($A$), and two vice presidents, Bob ($B$) and Charlie ($C$). When voting on company policies, each board member can choose to vote for ($+1$), vote against ($-1$), or abstain ($0$). In other words, $A$, $B$, and $C$ are ternary variables with domain $\{-1, 0, +1\}$. While data can be collected on each of the members' voting behaviors ($V_L = \{A, B, C\}$), it may be more sensible to aggregate the votes into a more useful quantity. Suppose two high level variables are computed ($\*V_H = \{X, Z\}$) through some abstraction function $\tau = (\tau_X, \tau_Z)$, defined as follows.
    \begin{align*}
        \tau_X(a, b, c) &=
        \begin{cases}
            +1 & a + b + c > 0 \\
            0 & a + b + c = 0 \\
            -1 & a + b + c < 0
        \end{cases} \\
        \tau_Z(a, b, c) &=
        \begin{cases}
            a & a \neq 0 \\
            +1 & a = 0, b + c > 0 \\
            0 & a = 0, b + c = 0 \\
            -1 & a = 0, b + c < 0
        \end{cases}
    \end{align*}
    In words, $X$ is the aggregate vote that is simply the majority vote of all three members. On the other hand $Z$ is an aggregate vote that prioritizes Alice's vote, as she is CEO. If she abstains, then it is an aggregate of the votes of the vice presidents. For example, $\tau(A = +1, B = -1, C = -1) = (X = -1, Z = +1)$.

    Perhaps in company matters, it is more useful to use the variables $X$ and $Z$ over the individual votes of the board members. However, $\tau$ cannot be considered a constructive abstraction function for any choice of clusters $\bbC$ and $\bbD$. This is because both $X$ and $Z$ change values depending on all of the values of $A$, $B$, and $C$, so the variables of $\*V_L$ cannot be cleanly separated into two different clusters.

    Even so, this choice of $\tau$ seems like it could result in a valid abstraction. The difficulty lies with the analysis of causal quantities after $\tau$ is fixed. Without the notion of clusters, the definition of $Q$-$\tau$ consistency fails to work, so it is no longer clear what causal quantities correspond to what. For example, what would an intervention of $A = +1$ imply on the $\*V_H$ level? Or, what would an intervention on $Z = +1$ imply on the $\*V_L$ level? The answer might change depending on the setting, or there may not even be an answer that makes sense at all. This is why prior works like \citet{rubenstein:etal17-causalsem} define the mapping between interventions separately from $\tau$, which maps the variables.

    In this particular case, provided that queries do not require separation of $X$ and $Z$ (e.g.~queries like $P(X_{Z = z} = x)$ are not needed), then it may be sensible to cluster $A$, $B$, and $C$ together into one intervariable cluster and then simply have one variable in $\*V_H$ used for downstream tasks. This allows for the theory in this paper to be applied, significantly reducing the complexity of defining the abstractions.
    \hfill $\blacksquare$
\end{example}

See the following for a negative example of an abstraction of the drug example in Ex.~\ref{ex:drug-tau}.

\begin{example}
    \label{ex:drug-bad-abstraction}
    Continuing Example \ref{ex:drug-tau}, consider the following SCM $\cM_H$.
    \begin{align*}
        \*U_H &= \{U_X, U_Y\} \\
        \*V_H &= \{X, Y\} \\
        \cF_H &=
        \begin{cases}
            X \gets f^H_X(u_X) &= u_X \\
            Y \gets f^H_Y(u_Y) &= u_Y
        \end{cases} \\
        P(\*U_H) &=
        \begin{cases}
            P(U_X = 1) &= 0.5 \\
            P(U_Y = 1) &= 0.2
        \end{cases}
    \end{align*}

    Indeed, this choice of $\cM_H$ is defined over $X$ and $Y$. However, it does not seem like there is any connection between $\cM_H$ and $\cM_L$ from Example \ref{ex:drug-tau}, even if $\*V_H = \tau(\*V_L)$. To verify this, we can compare the distributions induced by the two models. Note that while $P^{\cM_L}(Y = 1 \mid A = 1, B = 1) \approx 0.853$, we see that $P^{\cM_H}(Y = 1 \mid X = 1) = P^{\cM_H}(Y = 1) = P^{\cM_H}(U_Y = 1) = 0.2$. It seems that these two quantities should be related, as $\tau$ maps $(A = 1, B = 1)$ to $X = 1$, yet they are clearly not equal in the two models. This is similar for the causal effect $P^{\cM_L}(Y_{A = 1, B = 1} = 1)$. Computing $P^{\cM_H}(Y_{X = 1} = 1) = P^{\cM_H}(Y = 1) = 0.2$ actually yields the same result, which is clearly incorrect. In fact, it even seems that the causal relations are incorrect, as $f^H_Y$ does not use $X$ as an input.
    
    \hfill $\blacksquare$
\end{example}

Example \ref{ex:drug-bad-abstraction} shows an example of $\cM_H$ that is a poor abstraction of $\cM_L$ despite the fact that it is defined over the space of $\*V_H$ mapped by $\tau$. From the example, intuition tells us that a proper abstraction of $\cM_L$ should match $\cM_L$ in certain quantities, including observational, interventional, and counterfactual quantities. Specifically, there are quantities induced by $\cM_L$ that appear to have matching counterparts in $\cM_H$ based on $\tau$. This notion is made concrete through the concept of $Q$-$\tau$ consistency (Def.~\ref{def:q-tau-consistency}).

\end{document}